\theoremstyle{thmstyleone}%
\newtheorem{theorem}{Theorem}
\newtheorem{proposition}[theorem]{Proposition}%
\newtheorem{lemma}{Lemma}
\theoremstyle{thmstyletwo}%
\theoremstyle{thmstylethree}%
\newcommand{\junkun}[1]{{\color{blue}{(Jun-Kun: #1)}}}
\newcommand{\kfir}[1]{{\color{red}{(Kfir: #1)}}}
\newcommand{\jake}[1]{{\color{green}{(Jake: #1)}}}
\newcommand{\g}{\gamma_{\K}}
\newcommand{\E}{\mathbb{E}}
\newcommand{\K}{\mathcal{K}}
\newcommand{\reals}{\mathbb{R}}
\newcommand{\argmin}{\mathop{\textnormal{argmin}}}
\newcommand{\argmax}{\mathop{\textnormal{argmax}}}
\newcommand{\balpha}{\boldsymbol{\alpha}}
\newcommand{\alg}{\text{OAlg}\xspace}
\newcommand{\BTL}{\textnormal{\textsc{FTL$^+$}}\xspace}
\newcommand{\FTL}{\textnormal{\textsc{FTL}}\xspace}
\newcommand{\FTRL}{\textnormal{\textsc{FTRL}}\xspace}
\newcommand{\BTRL}{\textnormal{\textsc{FTRL$^+$}}\xspace}
\newcommand{\OFTRL}{\textnormal{\textsc{OptimisticFTRL}}\xspace}
\newcommand{\OPTMD}{\textnormal{\textsc{OptimisticMD}}\xspace}
\newcommand{\BR}{\textnormal{\textsc{BestResp$^+$}}\xspace}
\newcommand{\OFTL}{\textnormal{\textsc{OptimisticFTL}}\xspace}
\newcommand{\MD}{\textnormal{\textsc{OMD$^+$}}\xspace}
\newcommand{\OMD}{\textnormal{\textsc{OMD}}\xspace}
\newcommand{\FW}{\textnormal{\textsc{Frank-Wolfe}}\xspace}
\newcommand{\xinit}{z_{\rm init}}
\newcommand{\regret}[1]{\balpha\textsc{-Reg}^{#1}}
\newcommand{\avgregret}[1]{\overline{\balpha\textsc{-Reg}}^{#1}}
\newcommand{\XX}{\mathcal{X}}
\newcommand{\YY}{\mathcal{Y}}
\newcommand{\ZZ}{\mathcal{Z}}
\newcommand{\pr}[1]{\left(#1\right)}
\newcommand{\yof}{y}
\newcommand{\yftl}{\hat{y}}
\newcommand{\xof}{\widetilde{x}}
\newcommand{\xav}{\bar{x}}
\newcommand{\yav}{\bar{y}}
\newcommand{\V}[1]{D_{#1}^{\phi}} 
\begin{document}

\title[No-Regret Dynamics in the Fenchel Game]{No-Regret Dynamics in the Fenchel Game: A Unified Framework for Algorithmic Convex Optimization
}


\author[1]{\fnm{Jun-Kun} \sur{Wang}}\email{jun-kun.wang@yale.edu}

\author[2]{\fnm{Jacob} \sur{Abernethy}}\email{prof@gatech.edu}

\author[3]{\fnm{Kfir Y.} \sur{Levy}}\email{kfirylevy@technion.ac.il}

\affil[1]{\small \orgdiv{Department of Computer Science}, \orgname{Yale University}, \orgaddress{\city{New Haven}, \country{USA}}}

\affil[2]{\small \orgdiv{School of Computer Science}, \orgname{Georgia Institute of Technology}, \orgaddress{\city{Atlanta}, \country{USA}}}

\affil[3]{\small \orgdiv{Department of Electrical \& Computer Engineering}, \orgname{Technion - Israel Institute of Technology}, \orgaddress{\city{Haifa}, \country{Israel}}}


\abstract{We develop an algorithmic framework for solving convex optimization problems using no-regret game dynamics. By converting the problem of minimizing a convex function into an auxiliary problem of solving a min-max game in a sequential fashion, we can consider a range of strategies for each of the two-players who must select their actions one after the other. A common choice for these strategies are so-called no-regret learning algorithms, and we describe a number of such and prove bounds on their regret. We then show that many classical first-order methods for convex optimization---including average-iterate gradient descent, the Frank-Wolfe algorithm, Nesterov's acceleration methods, the accelerated proximal method---can be interpreted as special cases of our framework as long as each player makes the correct choice of no-regret strategy. Proving convergence rates in this framework becomes very straightforward, as they follow from plugging in the appropriate known regret bounds. Our framework also gives rise to a number of new first-order methods for special cases of convex optimization that were not previously known.}

\keywords{
Online learning, No-regret learning, Zero-sum game, Convex optimization,
Frank-Wolfe method, Nesterov's accelerated gradient methods, Momentum methods 
}



\maketitle

\section{Introduction}
\label{intro}
The main goal of this work is to develop a framework for solving convex optimization problems using iterative methods. Given a convex function $f : \reals^d \to \reals$, domain $\K \subseteq \reals^d$, and some tolerance $\epsilon > 0$, we want to find an approximate minimizer $w \in \K$ so that $f(w) - \min_{w' \in \K} f(w') \leq \epsilon$, using a sequence of oracle calls to $f$ and its derivatives. This foundational problem has received attention for decades, and researchers have designed numerous methods for this problem under a range of oracle query models and structural assumptions on $f(\cdot)$. What we aim to show in this paper is that a surprisingly large number of these methods---including those of Nesterov \cite{N83a,N83b,N05,N88,N04}, Frank and Wolfe \cite{frank1956algorithm}, and Beck and Teboulle \cite{BT09}---can all be described and analyzed through a single unified algorithmic framework, which we call the \emph{Fenchel game no-regret dynamics} (FGNRD). We show that several novel methods, with fast rates, emerge from FGNRD as well.


Let us give a short overview before laying out the FGNRD framework more precisely. A family of tools, largely developed by researchers in theoretical machine learning, consider the problem of sequential prediction and decision making in non-stochastic environments, often called \emph{adversarial online learning}. This online learning setting has found numerous applications in several fields beyond machine learning---finance, for example, as well as statistics---but it has also emerged as a surprisingly useful tool in game theory. What we call \emph{no-regret online learning algorithms} are particularly well-suited for computing equilibria in two-player zero-sum games, as well as solving saddle point problems more broadly. If each agent employs a no-regret online learning algorithm to choose their action at each of a sequence of rounds, it can be shown that the agents' choices will converge to a saddle point, and at a rate that depends on their choice of learning algorithm. Thus, if we are able to simulate the two agents' sequential strategies, where each aims to minimize the ``regret'' of their chosen actions, then what emerges from the resulting \emph{no-regret dynamics} (NRD) can be implemented explicitly as an algorithm for solving min-max problems. 

Our goal is to demonstrate how
NRD helps develop and analyze methods for minimizing a closed convex function $f$, i.e. solving $\min_{w\in\K}f(w)$, by solving a particular saddle-point problem
whose two-input ``payoff'' function $g : \reals^d \times \reals^d \to \reals$ is defined by
\[
  g(x,y) := \langle x, y \rangle - f^*(y),
\]
where $f^*(\cdot)$ is the conjugate of $f(\cdot)$.
We view this as a game in the sense that if one player selects an action $x$ and a second player selects action $y$, then $g(x,y)$ is the former's ``cost'' and the latter's ``gain'' associated to their decisions. If the two players continue to update their decisions sequentially, first choosing $x_1$ and $y_1$ then $x_2$ and $y_2$, etc., and each player relies on some no-regret algorithm for this purpose, then one can show that the time-averaged iterates $\bar x, \bar y$ form an approximate equilibrium of the Fenchel game---that is, $g(\bar x, y') - \epsilon \leq g(\bar x, \bar y) \leq g(x', \bar y) + \epsilon$ for any alternative $x',y'$. But indeed, this approximate equilibrium brings us right back to where we started, since using the construction of the Fenchel game, it is easy to show that $w \leftarrow \bar x$ satisfies $f(w) - \min_{w' \in \K} f(w') \leq \epsilon$. The approximation factor $\epsilon$ is important, and we will see that it depends upon the number of iterations of the dynamic and the players' strategies.

What FGNRD gives us is a recipe book for constructing and analyzing iterative algorithms for convex optimization. To simulate a dynamic we still need to make particular choices as for both players' strategies and analyze their performance. We begin in Section~\ref{sec:onlinelearning} by giving a brief overview of tools from adversarial online learning, and we introduce a handful of simple online learning algorithms, including variants of FollowTheLeader and OnlineMirrorDescent, and prove bounds on the \emph{weighted} regret---we generalize slightly the notion of regret by introducing weights $\alpha_t > 0$ for each round. We prove a key result that relates the error $\epsilon$ of the approximate equilibrium pair $\bar x, \bar y$, which are the weighted-average of the iterates of the two players, to the weighted regret of the players' strategies. In Section~\ref{sec:ExistingAlgs} we show how several algorithms, including Frank-Wolfe's method \cite{frank1956algorithm}, several variants of Nesterov Accelerated Gradient Descent \cite{N83a,N83b,N05,N88,N04}, and the accelerated proximal method \cite{BT09}, are all special cases of the FGNRD framework, all with special choices of the learning algorithms for the $x$- and $y$-player, and the weights $\alpha_t$; see Table~\ref{tab:summary_exist} for a summary of these recipes. In addition we provide several new algorithms using FGNRD in Section~\ref{sec:NewAlgs}, summarized in Table~\ref{tab:summary_new}.

The work presented in this manuscript combines results from our previous publications, \cite{AW17,ALLW18,WA18}, with a significantly cleaner and more general analysis. 
But we also include a number of new algorithms
 and convergence guarantees, described in 
Algorithm~\ref{alg:newStoFW}, Algorithm~\ref{alg:GD}, Algorithm~\ref{alg:extra}, Algorithm~\ref{alg:gaugeFW2}, Algorithm~\ref{alg:GD2}, and Algorithm~\ref{alg:OPTACC}, and 
Theorem~\ref{thm:equivStoFW}, Theorem~\ref{thm:GD}, Theorem~\ref{thm:GD02}, Theorem~\ref{thm:extra},  Theorem~\ref{thm:gaugeFW-linear}, Theorem~\ref{thm:GD2}, Theorem~\ref{thm:OPTACC}. 
Moreover, for a family of online learning strategies listed in Algorithm~\ref{alg:oco_batch_family}, we develop a meta lemma in this work, Lemma~\ref{regret:Opt-FTRL}, such that the guarantee of each online algorithm is obtained by an easy application of the meta lemma.



\section{Preliminaries}

\subsection{Convex Analysis}

We summarize some results in convex analysis that will be used in this paper. We also refer the readers to some excellent textbooks
(e.g. \cite{BT01,HL93,R96,N04,B04,BL06}).

\begin{table}[ht!]
\caption{Summary of recovering existing optimization algorithms from \emph{Fenchel Game}.
Here $T$ denotes the total number of iterations, 
$\alpha_t$ are the weights which set the emphasis on iteration $t$, the last two columns on the table indicate the specific strategies of the players in the FGNRD.}
\label{tab:summary_exist}       
\centering
\footnotesize
\begin{tabular}{|c|c|c|c|c|} \hline
\multicolumn{5}{c}{ \shortstack{ Non-smooth convex optimization: $\min_{w \in \K} f(w)$
($G := \max_{w \in \K} \| \partial f(w) \|^{2}$ and $R:=\|w_0-w^*\|$)
 \\ as a game $g(x,y):= \langle x , y \rangle - f^*(y)$.} }  \\ \hline
Algorithm   &  rate &  weights &  $y$-player &  $x$-player \\ \hline
\shortstack{
Gradient Descent  \\ with averaging }
& 
\shortstack{ Thm.~\ref{thm:GD} \\
$O\left(\frac{GR}{\sqrt{T}} \right)$} & $\alpha_t=1$ & \shortstack{ Sec.~\ref{section:BR}\\ \BR } & \shortstack{ Sec.~\ref{section:OMD}\\ \OMD } \\ \hline 
\shortstack{
Cumulative \\ Gradient Descent }
& 
\shortstack{ Thm.~\ref{thm:GD2} \\
$O\left(\frac{GR}{\sqrt{T}} \right)$} & $\alpha_t=1$ & \shortstack{ Sec.~\ref{section:BTL}\\ \BTL } & \shortstack{ Sec.~\ref{section:OMD}\\ \OMD } \\ \hline \hline
\multicolumn{5}{c}{ \shortstack{ $L$-smooth convex optimization: $\min_{w\in \K} f(w)$ \\ as a game $g(x,y):= \langle x , y \rangle - f^*(y)$.} }  \\ \hline
Algorithm   &  rate &  weights &  $y$-player &  $x$-player \\ \hline
\shortstack{
Frank-Wolfe  \\ method \cite{frank1956algorithm}}
& 
\shortstack{Prop.~\ref{thm:equivFW} and~\ref{thm:fwconvergence}   \\
$O\left(\frac{L }{T}\right)$ } & $\alpha_t=t$ & \shortstack{ Sec.~\ref{section:FTL}\\ \FTL } & \shortstack{ Sec.~\ref{section:BR}\\ \BR } \\ \hline
\shortstack{
Linear rate \\ FW \cite{LP66} }
& 
\shortstack{ Thm.~\ref{thm:linearFW} \\
$O\left(\exp(- \frac{\lambda T}{L} )\right)$} & $\alpha_t=\frac{1}{\| \ell_t(x_t) \|^2}$ & \shortstack{ Sec.~\ref{section:FTL}\\ \FTL }  &  \shortstack{ Sec.~\ref{section:BR}\\ \BR } \\ \hline
\shortstack{
Gradient Descent  \\ with averaging }
& 
\shortstack{ Thm.~\ref{thm:GD02} \\
$O\left(\frac{L}{T} \right)$} & $\alpha_t=1$ & \shortstack{ Sec.~\ref{section:BR}\\ \BR } & \shortstack{ Sec.~\ref{section:OMD}\\ \OMD } \\ \hline 
\shortstack{
Single-call \\ extra-gradient \\ with averaging }
& \shortstack{Thm.~\ref{thm:extra}\\
$O\left(\frac{L}{T}\right)$} & $\alpha_t=1$ & \shortstack{ Sec.~\ref{section:BR}\\ \BR } & \shortstack{ Sec.~\ref{section:OPTMD} \\ \OPTMD } \\ \hline 
\shortstack{
Nesterov's  \\  $1$-memory \\ method \cite{N88}}
& \shortstack{ Thm.~\ref{thm:metaAcc}  \\
$O\left(\frac{L}{T^2}\right)$ }& $\alpha_t=t$ & \shortstack{ Sec.~\ref{section:OFTL}\\ \OFTL } &  \shortstack{ Sec.~\ref{section:MD}\\ \MD } \\ \hline
\shortstack{
Nesterov's  \\  $\infty$-memory \\ method \cite{N05}}
& \shortstack{ Thm.~\ref{thm:smoothAcc}\\
$O\left(\frac{L}{T^2}\right)$ } & $\alpha_t=t$ & \shortstack{ Sec.~\ref{section:OFTL}\\ \OFTL } & \shortstack{ Sec.~\ref{section:BTRL}\\ \BTRL } \\ \hline
\shortstack{
Nesterov's  \\ first acceleration \\ method \cite{N83b}}
& 
\shortstack{ Thm.~\ref{thm:metaAcc} \\
$O\left(\frac{L}{T^2}\right)$} & $\alpha_t = t$ & \shortstack{ Sec.~\ref{section:OFTL}\\ \OFTL } & \shortstack{ Sec.~\ref{section:MD}\\  \MD with \\ $\phi(x)=\frac{1}{2} \| x \|^2_2$} \\ \hline
\multicolumn{5}{c}{ \shortstack{ Composite optimization: $\min_{w\in \reals^d} f(w) + \psi(w)$, where $\psi(\cdot)$ is possibly non-differentiable, \\ as a game $g(x,y):= \langle x , y \rangle - f^*(y) + \psi(x)$.} }  \\ \hline \hline
Algorithm   &  rate &  weights &  $y$-player &  $x$-player \\ \hline
\shortstack{
Accelerated  \\  proximal \\  method \cite{BT09}}
& 
\shortstack{ Thm.~\ref{thm:proximal} \\
$O\left(\frac{L}{T^2}\right)$ } & $\alpha_t=t$ & \shortstack{ Sec.~\ref{section:OFTL}\\ \OFTL } &  \shortstack{ Sec.~\ref{section:MD}\\ \MD } \\ \hline \hline
\multicolumn{5}{c}{\shortstack{ $L$-smooth and $\mu$ strongly convex optimization: $\min_{w \in \K} f(w)$ \\ as a game $g(x,y):= \langle x , y \rangle - \tilde{f}^*(y) + \frac{\mu \| x \|^2}{2}$, where $\tilde{f}(\cdot):= f(\cdot) - \frac{\mu}{2}\| \cdot \|^2 $}. }  \\ \hline \hline
Algorithm   &  rate &  $\alpha_t$ & \shortstack{ $y$-player} &  \shortstack{$x$-player} \\ \hline
\shortstack{
Nesterov's  \\ method \cite{N04} }
& 
\shortstack{ Thm.~\ref{thm:acc_linear2} \\
$O\left(\exp\left( - \sqrt{ \frac{\mu}{L} } T \right)\right)$ } & $\alpha_t \propto \exp(t)$ & \shortstack{ Sec.~\ref{section:OFTL}\\ \OFTL } &  \shortstack{ Sec.~\ref{section:BTRL}\\ \BTRL } \\ \hline \hline
\multicolumn{5}{c}{ \shortstack{ $L$-smooth convex finite-sum optimization: $\min_{w \in \K} f(w):= \sum_{i=1}^n f_i(w)$ \\ as a game $g(x,y):= \langle x , y \rangle - f^*(y)$ }}  \\ \hline
Algorithm   & rate  &  weights &  $y$-player's alg. &  $x$-player's alg. \\ \hline
\shortstack{
Incremental \\FW \cite{Netal20} }
& \shortstack{Thm.~\ref{thm:equivStoFW} \\
$O\left(\frac{n}{T}\right)$ } & $\alpha_t=t$ & \shortstack{ Appendix.~\ref{app:equivStoFW} \\ Lazy FTL } & \shortstack{ Sec.~\ref{section:BR} \\ \BR}  \\ \hline
\end{tabular}
\end{table}

\begin{table}[t]
\caption{Summary of \emph{new} optimization algorithms from \emph{Fenchel Game}.
Here $T$ denotes the total number of iterations, 
$\alpha_t$ are the weights which set the emphasis on iteration $t$, the last two columns on the table indicate the specific strategies of the players in the FGNRD.
}
\label{tab:summary_new}       
\footnotesize
\begin{tabular}{|c|c|c|c|c|} \hline
\multicolumn{5}{c}{ \shortstack{ Non-smooth convex optimization: $\min_{w \in \K} f(w)$, where $\K$ is a $\lambda$-strongly convex set
 \\ as a game $g(x,y):= \langle x , y \rangle - f^*(y)$, 
 \\ Assume that the norm of cumulative gradient does not vanish, $\| \frac{1}{t} \sum_{s=1}^t \partial f( x_s ) \| \geq \rho$.} }  \\ \hline
Algorithm   &  rate &  weights &  $y$-player's alg. &  $x$-player's alg. \\ \hline
\shortstack{
Boundary \\ FW }
& 
\shortstack{
Thm.~\ref{thm:equiv2}\\
$O\left(\frac{1}{\lambda \rho T}\right)$ }& $\alpha_t= 1$ & \shortstack{ Sec.~\ref{section:FTL}\\ \FTL } &  \shortstack{ Sec.~\ref{section:BR}\\ \BR } \\ \hline \hline
\multicolumn{5}{c}{ \shortstack{ $L$-smooth convex optimization: $\min_{w \in \K} f(w)$, 
\\ where $\K$ is a $\lambda$-strongly convex set that is centrally symmetric and contains the origin,
 \\ as a game $g(x,y):= \langle x , y \rangle - f^*(y)$ }}  \\ \hline
Algorithm   &  rate &  weights &  $y$-player's alg. &  $x$-player's alg. \\ \hline
\shortstack{
Gauge \\ FW }
& \shortstack{Thm.~\ref{thm:gaugeFW}\\
$O\left(\frac{L}{\lambda T^2}\right)$} & $\alpha_t=t$ & \shortstack{ Sec.~\ref{section:OFTL}\\ \OFTL } & \shortstack{ Sec.~\ref{section:BTRL} \\ \BTRL with \\ gauge function } \\ \hline \hline
\multicolumn{5}{c}{ \shortstack{ $L$-smooth and $\mu$-strongly convex optimization: $\min_{w \in \K} f(w)$, \\ where $\K$ is a $\lambda$-strongly convex set that is centrally symmetric and contains the origin, \\
as a game $g(x,y):= \langle x , y \rangle - \tilde{f}^*(y) + \frac{\mu}{\lambda} \g^2(x)$, where $\tilde{f}(\cdot):= f(\cdot) - \frac{\mu}{\lambda} \g^2(x) $}}  \\ \hline
Algorithm   &  rate &  weights &  $y$-player's alg. &  $x$-player's alg. \\ \hline
\shortstack{
Gauge \\ FW }
& \shortstack{Thm.~\ref{thm:gaugeFW-linear}\\
$O\left(\exp\left( - \sqrt{ \frac{\mu}{L} } T \right)\right)$} & $\alpha_t \propto \exp(t)$ & \shortstack{ Sec.~\ref{section:OFTL}\\ \OFTL } & \shortstack{ Sec.~\ref{section:BTRL} \\  \BTRL with \\ gauge function } \\ \hline \hline
\multicolumn{5}{c}{ \shortstack{ $L$-smooth convex optimization: $\min_{w \in \K} f(w)$ \\ as a game $g(x,y):= \langle x , y \rangle - f^*(y)$ }}  \\ \hline
Algorithm   &  rate &  weights &  $y$-player's alg. &  $x$-player's alg. \\ \hline
\shortstack{
Optimistic \\ Mirror Descent \\ with averaging }
& \shortstack{Thm.~\ref{thm:OPTACC}\\
$O\left(\frac{L}{T^2}\right)$} & $\alpha_t=t$ & \shortstack{ Sec.~\ref{section:BTL}\\ \BTL } & \shortstack{ Sec.~\ref{section:OPTMD} \\ \OPTMD } \\ \hline 
\end{tabular}
\end{table}

\paragraph{Smoothness and strong convexity}


A function $f(\cdot)$ on $\reals^d$ is $L$-smooth with respect to a norm $\| \cdot \|$ if $f(\cdot)$ is everywhere differentiable and
it has Lipschitz continuous gradient
$\| \nabla f(x) - \nabla f(z) \|_* \leq L \| x - z\|$,  
where $\| \cdot \|_{*}$ denotes the dual norm.
A function $f(\cdot)$ is $\mu$-strongly convex  w.r.t. a norm $\| \cdot \|$ if the domain of $f(\cdot)$ is convex and that
$f( \theta x + (1-\theta) z ) \leq \theta f(x) + (1-\theta) f(z)  - \frac{\mu}{2} \theta (1-\theta) \| x- z \|^2$ for all $x,z \in \text{dom}(f)$ and $\theta \in [0,1]$.
If a function is $\mu$-strongly convex, then
$f(z) \geq f(x) + f_x^\top (z-x) + \frac{\mu}{2} \| z - x \|^2$ for 
all $x,z \in \text{dom}(f)$,
where  $f_{x}$ denotes a subgradient of $f$ at $x$.
The subdifferential $\partial f(x)$ is the set of all subgradients of $f$ at $x$,
i.e., $\partial f(x) = \{ f_x: f(z) \geq f(x) + f_x^\top( z -x ), \forall z\}$.
When $f$ is differentiable, we have $\partial f(x) = \{ \nabla f(x) \}$.

In Section~\ref{sec:ExistingAlgs} and Section~\ref{sec:NewAlgs},
we will cover several optimization algorithms. For those that are labeled for minimizing non-smooth functions, their guarantees will not need differentiability of $f(\cdot)$; otherwise, they are labeled for minimizing smooth functions, and they need differentiability of $f(\cdot)$.

\paragraph{Convex function and conjugate}
For any convex function $f(\cdot)$, its Fenchel conjugate is 
\begin{equation}
f^*(y) := \sup_{x \in \text{dom}(f) }  \langle x, y \rangle - f(x).
\end{equation}
It is noted that the conjugate function $f^*(\cdot)$ is convex, as it is a supremum over a linear function.
Furthermore, if the function $f(\cdot)$ is closed and convex,
the following are equivalent:
(I) $y \in \partial f(x)$, (II) $x \in \partial f^*(y)$, and (III)
\begin{equation}
\langle x, y \rangle = f(x) + f^*(y),
\end{equation}
which also implies that the bi-conjugate is equal to the original function, i.e. $f^{**}(\cdot) = f(\cdot)$.
Moreover,
when the function $f(\cdot)$ is differentiable,
we have $\nabla f(x) = \displaystyle \underset{y \in \text{dom}(f^*)}{\arg\sup}  \langle x, y \rangle - f^*(y) $;
on the other hand,
when $f(\cdot)$ is not differentiable,
we have $f_x = \displaystyle \underset{y \in \text{dom}(f^*)}{\arg\sup}  \langle x, y \rangle - f^*(y)$, where $f_{x} \in \partial f(x)$ is a subgradient at $x$. 
We refer to the readers to \cite{BL12,KST09} for more details of Fenchel conjugate.
Throughout this paper, 
we assume that the underlying convex function is closed. 

An important property of a closed and convex function is that
$f(\cdot)$ is $L$-smooth w.r.t. some norm $\| \cdot \|$ if and only if its conjugate $f^*(\cdot)$ is $1/L$-strongly convex w.r.t. the dual norm $\| \cdot\|_*$ (e.g. Theorem~6 in \cite{KST09}).

\paragraph{Bregman Divergence.}

We will denote the Bregman divergence $\V{z}(\cdot)$ centered at a point $z$ with respect to a $\beta$-strongly convex distance generating function $\phi(\cdot)$ as 
\begin{equation}\label{eq:BregmanDef}
\V{z}(x) := \phi(x) - \langle \nabla \phi(z), x - z \rangle  - \phi(z).
\end{equation}

\paragraph{Strongly convex sets.}
A convex set $\K \subseteq \reals^m$ is an \emph{$\lambda$-strongly convex set} w.r.t. a norm $\| \cdot \|$ 
if for any $x, z \in \K$, any $\theta \in [0,1]$, 
the $\| \cdot \|$ ball centered at $ \theta x + ( 1 - \theta) z$ with radius 
$\theta (1 - \theta) \frac{\lambda}{2} \| x - z \|^2$ is included in $\K$ \cite{D15}. Examples of strongly convex sets include 
$\ell_p$ balls: $\| x \|_p \leq r, \forall p \in (1,2]$,
Schatten $p$ balls: $\| \sigma(X) \|_p \leq r$ for $p \in (1,2]$,
where $\sigma(X)$ is the vector consisting of singular values of the matrix $X$,
and Group (s,p) balls: $\| X \|_{s,p}  = \| (\| X_1\|_s, \| X_2\|_s, \dots, \| X_m\|_s)  \|_p \leq r$,
$ \forall s,p \in (1,2]$.
In Appendix~\ref{app:betagauge}, we discuss more about strongly-convex sets.

\subsection{Min-max Problems and the Fenchel Game}

Throughout the rest of the paper we will be focusing on the following problem. For some natural number $d > 0$, have some convex and closed set $\K \subseteq \reals^d$ and a closed and convex function $f : \K \to \reals$. Our goal is to solve the minimization problem $\min_{x \in \K} f(x)$. While we consider both bounded and unbounded sets $\K$ (including $\K = \reals^d$), we assume throughout that a minimizer of $f$ exists in some bounded region of $\K$.

\paragraph{Min-max problems and (approximate) Nash equilibrium}
Given a zero-sum game with \textit{payoff function} $g(x,y)$ which is convex in $x$ and concave in $y$, 
define $V^*=\inf_{x \in \XX} \sup_{y \in \YY} g(x,y)$. An $\epsilon$-\textit{equilibrium} of $g(\cdot, \cdot)$ is a pair $\hat x, \hat y$ such that
\begin{equation} \label{def:equi}
\displaystyle V^* - \epsilon \leq \inf_{x \in \XX} g(x, \hat y)  \leq V^* \leq   \sup_{y \in \YY} g(\hat x, y) \leq V^* + \epsilon,
\end{equation}
where $\XX$ and $\YY$ are convex decision spaces of the $x$-player and the $y$-player respectively.
\noindent



\paragraph{The Fenchel Game.}

One of the core tools of this paper is as follows. In order to solve the problem 
$  \min_{x \in \K} f(x)$,
we instead construct a saddle-point problem which we call the \emph{Fenchel Game}. We
define $g : \K \times \YY$ as follows:
\begin{equation} \label{eq:fenchelgame}
 g(x,y) := \langle x, y \rangle - f^*(y),
\end{equation}
where $\YY$ is the gradient space of $f$ that will be precisely defined later in Section~\ref{sec:fgnrd_framework}.
This payoff function is useful for solving the original optimization problem, since an equilibrium of this game provides us with a solution to $\min_{x \in \K} f(x)$. 
Let $\hat x \in \K$ and $\hat y \in \YY$ be any equilibrium pair of $g$,
which exists even when the convex sets $\K$ and $\YY$ can be unbounded, thanks to the assumption that a minimizer of a closed convex function $f(\cdot)$ is attained in a bounded region.
We have $V^* = \sup_{y \in \YY} g(\hat x, y)$. Furthermore,
\begin{eqnarray*}
  \inf_{x \in \K} f(x) & = & \inf_{x \in \K} \sup_{y \in \YY} \{\langle x, y\rangle - f^*(y)\} = \inf_{x \in \K} \sup_{y \in \YY} g(x,y)\\
    & = & \sup_{y \in \YY} g(\hat{x}, y ) = \sup_{y \in \YY}  \left\{ \langle \hat{x}, y \rangle - f^*(y) \right\} = f(\hat{x}).
\end{eqnarray*}
In other words, given an equilibrium pair $\hat x, \hat y$ of $g(\cdot,\cdot)$, we immediately have a minimizer of $f(\cdot)$. This simple observation can be extended to approximate equilibria as well.
\begin{lemma} \label{lem:fenchelgame}
  If $(\hat x, \hat y)$ is an $\epsilon$-equilibrium of the Fenchel Game \eqref{eq:fenchelgame}, then $f(\hat x) - \min_{x} f(x) \leq \epsilon$.
\end{lemma}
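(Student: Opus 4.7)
The plan is to leverage the Fenchel--Moreau identity $f^{**} = f$ (valid since $f$ is closed and convex) together with the two relevant inequalities in the definition of an $\epsilon$-equilibrium given in \eqref{def:equi}. The key observation is that both $\min_{x \in \K} f(x)$ and $f(\hat{x})$ can be rewritten as quantities involving $\sup_{y \in \YY} g(\cdot, y)$, so the equilibrium bounds translate directly into a bound on the suboptimality of $\hat{x}$.

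First, I would rerun the calculation already displayed just before the lemma to identify $V^* = \inf_{x \in \K} \sup_{y \in \YY} g(x, y)$ with $\min_{x \in \K} f(x)$. This uses only the definition of $g$ and the fact that $f^{**} = f$, which gives $\sup_{y \in \YY}\{\langle x, y\rangle - f^*(y)\} = f(x)$ for every $x \in \K$. In particular, applying this identity at $x = \hat{x}$ yields $\sup_{y \in \YY} g(\hat{x}, y) = f(\hat{x})$.

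Next I would invoke the right half of the $\epsilon$-equilibrium condition, namely $\sup_{y \in \YY} g(\hat{x}, y) \leq V^* + \epsilon$. Combining this with the two identifications from the previous step gives
\[
f(\hat{x}) \;=\; \sup_{y \in \YY} g(\hat{x}, y) \;\leq\; V^* + \epsilon \;=\; \min_{x \in \K} f(x) + \epsilon,
\]
which rearranges to the claimed bound $f(\hat{x}) - \min_{x \in \K} f(x) \leq \epsilon$.

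There is no real obstacle here, as the argument is essentially the approximate version of the equality chain already established in the preceding paragraph; the only subtle point worth flagging in the writeup is that one needs $f$ closed and convex for $f^{**} = f$ to hold, an assumption the paper already makes globally. I would also note that the left half of the equilibrium definition is not needed for this direction, since we only care about bounding $f(\hat{x})$ from above.
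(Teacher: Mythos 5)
Your proof is correct and follows exactly the route the paper intends: the paper gives no explicit proof of this lemma, but the equality chain displayed immediately before it (identifying $V^*$ with $\min_{x\in\K}f(x)$ and $\sup_{y\in\YY}g(\hat x,y)$ with $f(\hat x)$ via $f^{**}=f$) combined with the right-hand inequality of \eqref{def:equi} is precisely your argument. Nothing is missing.
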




Lemma~\ref{lem:fenchelgame} sets us up for the remainder of the paper. The framework, which we lay out precisely in Section~\ref{sec:fgnrd_framework}, will consider two players sequentially playing the Fenchel game, where the $y$-player sequentially outputs iterates $y_1, y_2, \ldots$, while alongside the $x$-player returns iterates $x_1, x_2, \ldots$. Each player may use the previous sequence of actions of their opponent in order to choose their next point $x_t$ or $y_t$, and we will rely heavily on the use of no-regret online learning algorithms described in Section~\ref{sec:onlinelearning}. In addition, we need to select a sequence of weights $\alpha_1, \alpha_2, \ldots > 0$ which determine the ``strength'' of each round, and can affect the players' update rules. What we will be able to show is that the $\alpha$-weighted average iterate pair, defined as
\[
 (\hat x, \hat y) := \left(\frac{\alpha_1 x_1 + \ldots + \alpha_T x_T}{\alpha_1 + \cdots + \alpha_T}, \frac{\alpha_1 y_1 + \cdots + \alpha_Ty_T}{\alpha_1 + \cdots + \alpha_T}\right),
\]
is indeed an $\epsilon$-equilibrium of $g(\cdot, \cdot)$, and thus via Lemma~\ref{lem:fenchelgame} we have that $\hat x$ approximately minimizes $f$. To get a precise estimate of $\epsilon$ requires us to prove a family of regret bounds, which is the focus of the following section.

\section{No-regret learning algorithms} \label{sec:onlinelearning}

An algorithmic framework, often referred to as \emph{no-regret learning} or \emph{online convex optimization}, which has been developed mostly within the machine learning research community, has grown quite popular as it can be used in a broad class of sequential decision problems. As we explain in Section~\ref{sec:noregret}, one imagines an algorithm making repeated decisions by selecting a vector of parameters in a convex set, and on each round is charged according to a varying convex loss function. The algorithm's goal is to minimize an objective known as regret. In Section~\ref{sec:fgnrd_framework} we describe how online convex optimization algorithms with vanishing regret can be implemented in a two-player protocol which sequentially computes an approximate equilibria for a convex-concave payoff function. This is the core tool that allows us to describe a range of known and novel algorithms for convex optimization, by modularly combining pairs of OCO strategies. In Section~\ref{sec:oco_algs} we provide several such OCO algorithms, most of which have been proposed and analyzed over the past 10-20 years.

\renewcommand{\algorithmiccomment}[1]{\hfill \tiny//~#1\normalsize}

\begin{algorithm}[H]
\floatname{algorithm}{Protocol}
\caption{Weighted Online Convex Optimization}
\label{alg:oco_protocol}
\begin{algorithmic}[1] \normalsize
\State \textbf{Input:} convex decision set $\ZZ \subseteq \reals^d$
\State \textbf{Input:} number of rounds $T$
\State \textbf{Input:} weights $\alpha_1, \alpha_2, \ldots, \alpha_T > 0$ \Comment{Weights determined in advance}
\State \textbf{Input:} algorithm $\alg$ \Comment{This implements the learner's update strategy}
\For{$t=1, 2, \ldots, $}
\State \textbf{Return:} $z_t \gets \alg$ \Comment{Alg returns a point $z_t$}
\State \textbf{Receive:} $\alpha_t, \ell_t(\cdot) \to \alg$ \Comment{Alg receives loss fn. and round weight}
\State \textbf{Evaluate:} $\text{Loss} \gets \text{Loss} + \alpha_t \ell_t(z_t)$ \Comment{Alg suffers weighted loss for choice of $z_t$}
\EndFor
\end{algorithmic}
\end{algorithm} 

\subsection{Online Convex Optimization and Regret} \label{sec:noregret}


Here we describe the framework, given precisely in Protocol~\ref{alg:oco_protocol}, for online convex optimization. We assume we have some learning algorithm known as $\alg$ that is tasked with selecting ``actions'' from a convex \emph{decision set} $\ZZ \subseteq \reals^d$. On each round $t=1, \ldots, T$, \alg returns a point $z_t \in \ZZ$, and is then presented with the pair $\alpha_t, \ell_t$, where $\alpha_t > 0$ is a weight for the current round and $\ell_t : \ZZ \to \reals$ is a convex loss function that evaluates the choice $z_t$. 
While \alg is essentially forced to ``pay'' the cost $\alpha_t \ell_t(z_t)$, it can then update its state to provide better choices in future rounds. 

On each round $t$, the learner must select a point $z_t \in \ZZ$, and is then ``charged'' a loss of $\alpha_t \ell_t(z_t)$ for this choice. Typically it is assumed that, when the learner selects $z_t$ on round $t$, she has observed all loss functions $\alpha_1 \ell_1(\cdot), \ldots, \alpha_{t-1} \ell_{t-1}(\cdot)$ up to, but not including, time $t$. However, we will also consider learners that are \emph{prescient}, i.e. that can choose $z_t$ with knowledge of the loss functions up to \emph{and including} time $t$. The objective of interest in most of the online learning literature is the learner's \emph{regret}, defined as
\begin{equation}
  \regret{z}(z^*) := \sum_{t=1}^T \alpha_t \ell_t(z_t) -  \sum_{t=1}^T \alpha_t \ell_t(z^*),
\end{equation}
where $z^{*} \in \ZZ$ is a comparator that the online learner is compared to.
Oftentimes we will want to refer to the \emph{average regret}, or the regret normalized by the time weight $A_T := \sum_{t=1}^T \alpha_t$, which we will denote $\avgregret{z}(z^*) := \frac{\regret{z}(z^*)}{A_T}$.
Note that in online learning literature,  
what has become a cornerstone of online learning research has been the existence of \emph{no-regret algorithms}, i.e. learning strategies that guarantee $\avgregret{z}(z^*) \to 0$ as $A_T \to \infty$.


\subsection{Framework: optimization as \emph{Fenchel Game}} \label{sec:fgnrd_framework}

In this paper, we consider Fenchel game (\ref{eq:fenchelgame}) with weighted losses depicted in Protocol~\ref{alg:game}. In this game, one of the players play first, and the other player would see what its opponent plays before choosing its action.
What Protocol~\ref{alg:game} describes is the case when the $y$-player plays before the $x$-player plays, but we emphasize that we can swap the order and let the $x$-player plays first, which will turn out to be helpful when designing and analyzing certain algorithms. 
The $y$-player receives loss functions $\alpha_{t} \ell_{t}(\cdot)$ in round $t$, in which $\ell_{t}(y):= f^{*}(y)- \langle x_t, y \rangle $,
while the $x$-player receives its loss functions $\alpha_{t} h_{t}(\cdot)$ in round $t$, in which $h_{t}(x):= \langle x, y_t \rangle - f^{*}(y_t)$.
Consequently, we can define the \textit{weighted regret} of the $x$ and $y$ players as
\begin{eqnarray} 
  \label{eq:yregret}     \regret{y} &   := & 
     \sum_{t=1}^T  \alpha_t  \ell_t(y_t) - \min_{y \in \YY} \sum_{t=1}^T  \alpha_t  \ell_t(y)\\
  \label{eq:xregret}      \regret{x} &   := & 
     \sum_{t=1}^T  \alpha_t  h_t(x_t) - \min_{x \in \XX } \sum_{t=1}^T  \alpha_t  h_t(x),
\end{eqnarray}
where the decision space of the $x$-player $\XX$ is
that of the underlying optimization problem $\min_{{x \in \K}} f(x)$, 
i.e. $\XX = \K \subseteq \reals^{d}$, while the decision space of the $y$-player is the gradient space, i.e. $\YY := \bigcup_{x \in \XX} \partial f(x)$,
which is the union of 
subdifferentials over the domain $\XX$.
One can check that the closure of the set $\YY$ is a convex set. Appendix~\ref{app:show_cvx} describes the proof. 
\begin{theorem} \label{th:cvx}
The closure of the gradient space 
$\bigcup_{x \in \XX} \partial f(x)$ is a convex set.
\end{theorem}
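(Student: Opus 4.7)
My plan is to identify $\mathrm{cl}(\YY)$ with $\mathrm{cl}(\mathrm{dom}(f^{*}))$, which is automatically convex because it is the closure of the effective domain of a convex function. The whole argument then reduces to a two-sided inclusion of closures, together with the elementary fact that $\mathrm{dom}(f^{*})$ is convex. Throughout I would rely on the Fenchel--Young characterization stated earlier in the paper, namely that $y \in \partial f(x)$ is equivalent to $\langle x,y\rangle = f(x)+f^{*}(y)$, and on standard properties of closed convex functions.

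The easy inclusion is $\YY \subseteq \mathrm{dom}(f^{*})$: if $y \in \partial f(x)$ for some $x \in \XX \subseteq \mathrm{dom}(f)$, then Fenchel--Young yields $f^{*}(y) = \langle x, y\rangle - f(x) < \infty$, so $y \in \mathrm{dom}(f^{*})$ and a fortiori $\mathrm{cl}(\YY) \subseteq \mathrm{cl}(\mathrm{dom}(f^{*}))$. For the reverse density, fix $y \in \mathrm{dom}(f^{*})$ and, for each $\epsilon>0$, consider the strongly convex surrogate
\[
x_{\epsilon} \;\in\; \argmin_{x \in \XX}\; \Bigl\{ f(x) - \langle y, x\rangle + \tfrac{\epsilon}{2}\|x\|^{2}\Bigr\}.
\]
Closedness and strong convexity guarantee existence and uniqueness of $x_{\epsilon}$. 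First-order optimality at $x_{\epsilon}$ produces an element $y_{\epsilon} \in \partial f(x_{\epsilon}) \subseteq \YY$ of the form $y_{\epsilon} = y - \epsilon x_{\epsilon} - n_{\epsilon}$, where $n_{\epsilon}$ lies in the normal cone $N_{\XX}(x_{\epsilon})$. One then shows by a routine comparison between the regularized optimum and any fixed $x^{*}$ in $\partial f^{*}(y) \cap \XX$ (or a sequence approaching such an element) that $\epsilon \|x_{\epsilon}\| \to 0$ and $\|n_{\epsilon}\| \to 0$, so $y_{\epsilon} \to y$. This is the classical Brondsted--Rockafellar flavor of argument, which gives density of the range of $\partial f$ in $\mathrm{dom}(f^{*})$. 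Combined with the first inclusion we get $\mathrm{cl}(\YY) = \mathrm{cl}(\mathrm{dom}(f^{*}))$, and convexity of the right-hand side follows immediately from convexity of $\mathrm{dom}(f^{*})$.

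The main obstacle is the boundary behavior in the density step: if $x_{\epsilon}$ lies on the relative boundary of $\XX$, the optimality condition only gives $y - \epsilon x_{\epsilon} \in \partial f(x_{\epsilon}) + N_{\XX}(x_{\epsilon})$ rather than a pure subgradient of $f$, so I need the normal-cone term $n_{\epsilon}$ to vanish (or to be absorbed into the subdifferential of $f$ itself). I would handle this either by first establishing the claim on $\mathrm{ri}(\XX)$ and then passing to the closure, or, more cleanly, by invoking maximal monotonicity of $\partial f$: Minty's theorem implies $\mathrm{cl}(\mathrm{ran}\,\partial f)$ is convex, and the same Moreau-regularization argument then shows $\mathrm{cl}(\YY)$ agrees with this convex set whenever $\XX$ is a convex subset of $\mathrm{dom}(f)$ rich enough to contain a minimizer of each perturbed problem --- which is exactly the standing assumption of the paper that a minimizer of $f$ lies in a bounded region of $\K$.
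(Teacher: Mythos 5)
Your proposal is correct in outline but follows a genuinely different route from the paper. The paper's proof is essentially a two-line citation: $\partial f$ is maximal monotone for a proper lsc convex $f$ (Rockafellar--Wets, Thm.~12.17), and the range of any maximal monotone operator is ``nearly convex,'' i.e.\ sandwiched between a convex set and its closure (Thm.~12.41), so $\mathrm{cl}\bigl(\bigcup_{x\in\XX}\partial f(x)\bigr)$ is convex. Your \emph{fallback} (Minty's theorem applied to $\partial f$) is exactly this argument. Your \emph{primary} route is more constructive and more informative: it identifies the closure explicitly as $\mathrm{cl}(\mathrm{dom}(f^*))$ via Fenchel--Young (giving $\YY\subseteq\mathrm{dom}(f^*)$) plus a Tikhonov-regularization density argument in the Br{\o}ndsted--Rockafellar spirit (giving $\mathrm{dom}(f^*)\subseteq\mathrm{cl}(\mathrm{ran}\,\partial f)$). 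This buys you a description of \emph{which} convex set you get, at the cost of more work. Two details need care. First, your comparison point $x^*\in\partial f^*(y)\cap\XX$ need not exist when $y\in\mathrm{dom}(f^*)\setminus\mathrm{dom}(\partial f^*)$; the fix is to compare $x_\epsilon$ against near-minimizers of $x\mapsto f(x)-\langle y,x\rangle$ (which is bounded below by $-f^*(y)$), yielding $\epsilon\|x_\epsilon\|^2\le 2\delta+\epsilon\|x_\delta\|^2$ and hence $\epsilon\|x_\epsilon\|\to 0$ without ever assuming attainment. Second, the normal-cone obstruction you worry about disappears under the paper's standing convention that $f\equiv+\infty$ outside $\XX$: then $f+\iota_{\XX}=f$, the first-order condition reads $y-\epsilon x_\epsilon\in\partial f(x_\epsilon)$ directly, and $\bigcup_{x\in\XX}\partial f(x)$ coincides with $\mathrm{ran}(\partial f)$ --- a reduction both your proof and the paper's implicitly rely on (without it the restricted union can genuinely fail to have convex closure).
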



\begin{algorithm}[t]
\floatname{algorithm}{Protocol}
   \caption{Fenchel Game No-Regret Dynamics} \label{alg:game}
\begin{algorithmic}[1]
\normalsize
\State \textbf{Input:} number of rounds $T$
\State \textbf{Input:} Convex decision sets $\XX, \YY \subseteq \reals^d$
\State \textbf{Input:} Convex-concave payoff function $g : \XX \times \YY \to \reals$
\State \textbf{Input:} weights $\alpha_1, \alpha_2, \ldots, \alpha_T > 0$ \Comment{Weights determined in advance}
\State \textbf{Input:} algorithms $\alg^Y, \alg^X$ \Comment{Learning algorithms for both players}
\For{$t=1, 2, \ldots, T$}
\State \textbf{Return:} $y_t \gets \alg^Y$ \Comment{$y$-player returns a point $y_t$}
\State \textbf{Update:} $\alpha_t, h_t(\cdot) \to \alg^X$ \Comment{$x$-player updates with $\alpha_t$ and loss $g(\cdot,y_t)$}
\State \quad \quad where $h_t(\cdot) := g(\cdot,y_t)$
\State \textbf{Return:} $x_t \gets \alg^X$ \Comment{$x$-player returns a point $x_t$}
\State \textbf{Update:} $\alpha_t, \ell_t(\cdot) \to \alg^Y$ \Comment{$y$-player updates with $\alpha_t$ and loss $-g(x_t,\cdot)$}
\State \quad \quad where $\ell_t(\cdot) := -g(x_t,\cdot)$
\EndFor
\State Output $(\xav_T,\yav_T) := \left(\frac{ \sum_{s=1}^T \alpha_s x_s  }{ A_T }, \frac{ \sum_{s=1}^T \alpha_s y_s  }{ A_T }\right)$.
\end{algorithmic}
\end{algorithm}

At times when we want to refer to the regret on another sequence $y_1', \ldots, y_T'$ we may refer to this as $\regret{}(y_1', \ldots, y_T')$.
We also denote $A_t$ as the cumulative sum of the weights $A_t:=\sum_{s=1}^t \alpha_s$ and the weighted average regret $\avgregret{} := \frac{\regret{}}{A_T}$. 
\begin{theorem}\label{thm:meta} 
  Assume a $T$-length sequence $\balpha$ are given. Suppose in Protocol~\ref{alg:game} the online learning algorithms $\alg^x$ and $\alg^y$ have the $\balpha$-weighted average regret $\avgregret{x}$ and $\avgregret{y}$ respectively. Then the output  $(\bar{x}_{T},\bar{y}_{T})$ is an $\epsilon$-equilibrium for $g(\cdot, \cdot)$, with
$    \epsilon = \avgregret{x} + \avgregret{y}$. Moreover, if $f(x) := \sup_{y \in \YY} g(x,y)$, then it follows that
\[
  f(\bar x_T) - \min_{x \in \XX} f(x) \leq \avgregret{x} + \avgregret{y}.
\]
\end{theorem}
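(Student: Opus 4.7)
The plan is to start from the two regret expressions and combine them, using the bilinear/convex-concave structure of $g$ to pass from sums over the iterates to the averaged iterates via Jensen's inequality.

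First, I would substitute the specific losses $h_t(x) = g(x, y_t)$ and $\ell_t(y) = -g(x_t, y)$ into the definitions \eqref{eq:xregret} and \eqref{eq:yregret} of weighted regret. This gives
\[
\regret{x} = \sum_{t=1}^T \alpha_t g(x_t, y_t) - \min_{x \in \XX} \sum_{t=1}^T \alpha_t g(x, y_t),
\qquad
\regret{y} = \max_{y \in \YY} \sum_{t=1}^T \alpha_t g(x_t, y) - \sum_{t=1}^T \alpha_t g(x_t, y_t).
\]
Adding the two expressions the common $\sum_t \alpha_t g(x_t,y_t)$ term cancels, and dividing by $A_T$ yields
\[
\avgregret{x} + \avgregret{y} = \max_{y \in \YY} \frac{1}{A_T}\sum_{t=1}^T \alpha_t g(x_t, y) - \min_{x \in \XX} \frac{1}{A_T}\sum_{t=1}^T \alpha_t g(x, y_t).
\]

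Next I would exploit the convex-concave structure of $g$. Since $g(\cdot,y)$ is convex in $x$ for each fixed $y$, Jensen's inequality gives $g(\bar x_T, y) \le \frac{1}{A_T}\sum_t \alpha_t g(x_t, y)$, and taking the sup over $y$ on both sides preserves the inequality. Symmetrically, since $g(x,\cdot)$ is concave in $y$, we get $g(x, \bar y_T) \ge \frac{1}{A_T}\sum_t \alpha_t g(x, y_t)$ for each $x$, and taking the inf over $x$ preserves this. Combining:
\[
\sup_{y \in \YY} g(\bar x_T, y) - \inf_{x \in \XX} g(x, \bar y_T) \le \avgregret{x} + \avgregret{y}.
\]

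To conclude the $\epsilon$-equilibrium claim I only need the trivial inequalities $\inf_x g(x, \bar y_T) \le V^*$ and $\sup_y g(\bar x_T, y) \ge V^*$ (both follow from $V^* = \inf_x \sup_y g(x,y) = \sup_y \inf_x g(x,y)$, where the last equality uses the assumption that a saddle point exists, as noted just before the definition of the Fenchel game). Chaining these with the display above yields the two-sided bounds
\[
V^* - (\avgregret{x}+\avgregret{y}) \le \inf_{x\in\XX} g(x,\bar y_T) \le V^* \le \sup_{y\in\YY} g(\bar x_T,y) \le V^* + (\avgregret{x}+\avgregret{y}),
\]
matching the definition \eqref{def:equi} of an $\epsilon$-equilibrium with $\epsilon = \avgregret{x} + \avgregret{y}$.

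Finally, for the optimization statement, if $f(x) = \sup_{y\in\YY} g(x,y)$ then $\min_{x\in\XX} f(x) = V^*$, and the rightmost inequality in the display above reads $f(\bar x_T) = \sup_y g(\bar x_T,y) \le V^* + \avgregret{x}+\avgregret{y}$, which is the desired optimization bound. There is no real obstacle here; the only subtle point is that the chain relies on the existence of a saddle value (so that $V^*$ is well-defined and equals both the inf-sup and sup-inf), which is guaranteed by the standing assumption that $f$ attains its minimum on a bounded region of $\K$ and by the properties of the Fenchel conjugate established in the preliminaries.
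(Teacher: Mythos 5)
Your proposal is correct and follows essentially the same route as the paper's proof: substituting the game losses into the weighted-regret definitions, applying Jensen's inequality to pass to the averaged iterates, and closing with weak minimax duality. The only organizational difference is that you sum the two regrets up front so the common term $\sum_{t}\alpha_t\, g(x_t,y_t)$ cancels, whereas the paper sandwiches that quantity between the two players' bounds (and re-runs the $x$-player's chain with the comparator $x^*=\argmin_x f(x)$ for the final optimization claim, which your equilibrium-based derivation renders unnecessary); the mathematical content is the same.
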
 

\begin{proof}
Suppose that the loss function of the $x$-player in round $t$ is $\alpha_t h_t(\cdot) : \XX \to \reals$, where $h_t(\cdot) := g(\cdot, y_t)$. The $y$-player, on the other hand, observes her own sequence of loss functions $\alpha_t  \ell_t(\cdot) : \YY \to \reals$, where $\ell_t(\cdot) := -  g(x_t, \cdot)$.\\

\begin{eqnarray}
\frac{1}{\sum_{s=1}^T \alpha_s}   \sum_{t=1}^T \alpha_t g(x_t, y_t) 
  & = & \frac{1}{\sum_{s=1}^T \alpha_s} \sum_{t=1}^T  - \alpha_t  \ell_t(y_t)  \notag \\
  \text{} \; & \geq & 
    - \frac{1}{\sum_{s=1}^T \alpha_s} \inf_{y \in \YY} \left( \sum_{t=1}^T  \alpha_t  \ell_t(y) \right) - \frac{ \regret{y} }{  \sum_{s=1}^T \alpha_s } \notag \\
  \; & = &
    \sup_{y \in \YY} \left( \frac{1}{\sum_{s=1}^T \alpha_s} \sum_{t=1}^T  \alpha_t g( x_t , y ) \right) - \avgregret{y}  \notag \\
  \text{(Jensen)} \;  & \geq & 
    \sup_{y \in \YY} g\left({ \frac{1}{\sum_{s=1}^T \alpha_s} \sum_{t=1}^T  \alpha_t x_t }, y \right)  - \avgregret{y} 
    \notag   \\
  \text{} \;  & = & 
    \sup_{y \in \YY} g\left({ \xav_T }, y \right)  - \avgregret{y} 
     \label{eq:ylowbound1}  \\
  & \geq & \inf_{x \in \XX} \sup_{y \in \YY} g\left( x , y \right) - \avgregret{y}  \label{eq:ylowbound2} .
\end{eqnarray}

Let us now apply the same argument on the right hand side, where we use the $x$-player's regret guarantee.
\begin{eqnarray}
\frac{1}{\sum_{s=1}^T \alpha_s}  \sum_{t=1}^T  \alpha_t g(x_t, y_t) & = & \frac{1}{\sum_{s=1}^T \alpha_s} \sum_{t=1}^T  \alpha_t h_t(x_t) \notag \\
  & \leq & \inf_{x \in \XX} \left( 
    \sum_{t=1}^T \frac{1}{\sum_{s=1}^T \alpha_s} \alpha_t g(x,y_t) \right) + \frac{ \regret{x} }{  \sum_{s=1}^T \alpha_s }  \notag \\
  & = & \inf_{x \in \XX} \left(  \sum_{t=1}^T \frac{1}{\sum_{s=1}^T \alpha_s} \alpha_t g(x, y_t) \right) + \avgregret{x} \notag \\
\text{(Jensen)}  & \leq &   \inf_x  
    g\left(x,{  \sum_{t=1}^T \frac{1}{\sum_{s=1}^T \alpha_s} \alpha_t y_t}\right)  + \avgregret{x} \notag \\
  & = & 
  \inf_x  g\left(x,{  \yav_T}\right) + \avgregret{x} \label{eq:xupbound1}
    \\
  & \leq & \sup_{y \in \YY} \inf_{x \in \XX}  g(x,y) + \avgregret{x}  \label{eq:xupbound2}  .
\end{eqnarray}
By \eqref{eq:xupbound2} and \eqref{eq:ylowbound1}, we have
$\sup_{y \in \YY} g\left(\bar{x}_T, y \right) \leq V^* + \avgregret{x} + \avgregret{y}.$
By \eqref{eq:xupbound1} and \eqref{eq:ylowbound2}, we have
$V^* - \avgregret{x} - \avgregret{y} \leq \inf_{x \in \XX} g\left(x, \bar{y}_T \right)$. Thus
$(\bar{x}_{T}, \bar{y}_{T})$ is an $\epsilon =\avgregret{x} + \avgregret{y} $ equilibrium.

We also have
\begin{eqnarray}
\frac{1}{\sum_{s=1}^T \alpha_s}  \sum_{t=1}^T  \alpha_t g(x_t, y_t) & = & \frac{1}{\sum_{s=1}^T \alpha_s} \sum_{t=1}^T  \alpha_t h_t(x_t) \notag \\
  & \leq & \label{rb} \left( 
    \sum_{t=1}^T \frac{1}{\sum_{s=1}^T \alpha_s} \alpha_t h_t(x^*) \right) + \frac{ \regret{x} }{  \sum_{s=1}^T \alpha_s }   \\
  & = &   \left(  \sum_{t=1}^T \frac{1}{\sum_{s=1}^T \alpha_s} \alpha_t g(x^*, y_t) \right) + \avgregret{x} \notag \\
\text{(Jensen)}  & \leq &   
    g\left(x^*,{  \sum_{t=1}^T \frac{1}{\sum_{s=1}^T \alpha_s} \alpha_t y_t}\right) + \avgregret{x} 
    \notag \\
  & = & 
    g\left(x^*,{  \yav_T}\right) + \avgregret{x} \notag
    \\
  & \leq & \sup_{y \in \YY}  g(x^*,y) + \avgregret{x}  \label{eq:xupbound4},
\end{eqnarray}
where we denote $x^* \leftarrow \argmin_{x\in \K} f(x)$ and \eqref{rb} is because
$ \sum_{t=1}^T  \alpha_t h_t(x_t) - \sum_{t=1}^T  \alpha_t h_t(x^*) 
\leq \sum_{t=1}^T  \alpha_t h_t(x_t) - \min_{x \in \XX} \sum_{t=1}^T  \alpha_t h_t(x) := \regret{x}$.
Combining (\ref{eq:ylowbound1}) and (\ref{eq:xupbound4}), we see that
$  f(\bar x_T) - \min_{x \in \XX} f(x) \leq \avgregret{x} + \avgregret{y}.$
 \end{proof}

In order to utilize minimax duality, we have to define decision sets for two players, and we must produce a convex-concave payoff function. First we will assume, for convenience, that $f(x) := \infty$ for any $x \notin \XX$. That is, it takes the value $\infty$ outside $\XX$, 
 which ensures that $f(\cdot)$ is lower semi-continuous and convex. 
We will also assume that the solution(s) to the underlying problem
 $x^* \leftarrow \argmin_{x \in \K} f(x)$ has a finite size.
As we will see in Theorem~\ref{thm:metaAcc} and Theorem~\ref{thm:proximal}, the convergence rate of certain algorithms depend on the initial distance to $x^{*}$, and hence this assumption guarantees that the initial distance is finite.

\paragraph{Faster rates for min-max problems.} 
We conclude this section by noting that many of the accelerated rates that we prove in this work emerge from a new set of tools that have been developed within the past decade. The use of no-regret learning to solve zero-sum games, as described in Theorem~\ref{thm:meta}, was popularized by Freund and Schapire \cite{freund1996game} in 1996, although the idea goes back to Blackwell \cite{blackwell1956analog} and Hannan \cite{hannan1957approximation} in the 1950s. While this basic tool has been used in many settings and applications, a key trick was discovered by Rakhlin and Sridharan \cite{RS13} in 2013, showing that the rate for solving a simple matrix game could be improved from $O(1/\sqrt{T})$ to $O(1/T)$ via a more sophisticated type of algorithm which we herein refer to as \emph{optimistic}; this will be explained precisely in Section~\ref{sec:oco_algs}. The first known presentation of such an algorithm was given by \cite{CJ12}, who showed that a better regret bound is achievable if the sequence of loss functions changes slowly. As explained in \cite{RS13}, this is very useful when used in Protocol~\ref{alg:game}. This new trick is a core piece of many of the tools presented in this paper, and we emphasize that all of the \emph{accelerated} optimization algorithms we describe through the FGNRD framework rely on one of the players using an optimistic online learning algorithm.

On the other hand, classic optimization algorithms like mirror prox of Nemirovski \cite{nemirovski2004prox} and dual extrapolation of Nesterov \cite{nesterov2007dual} were invented earlier than the aforementioned no-regret learning approaches for solving min-max problems \cite{RS13,CJ12,SALS15}.
Both mirror prox and dual extrapolation can be applied to convex-concave problems and obtain a $O(1/T)$ fast rate, compared to a $O(1/\sqrt{T})$ slow rate \cite{nedic2009subgradient}.
Furthermore, both can be more broadly applied to solve variational inequalities in monotone operators \cite{auslender2005interior,eckstein1992douglas,chen2014optimal,chen2017accelerated,he2016accelerated,juditsky2011solving}, where finding a saddle point of a convex-concave function is a special case, see also \cite{tseng2000modified,malitsky2020golden,iusem2017extragradient} for related treatments.
We note that there are some recent advancements based on mirror prox and dual extrapolation, e.g., \cite{CST20},
and there are also some progress on deriving lower bounds for solving certain classes of min-max problems, e.g., \cite{ouyang2021lower,zhang2022lower}.


\paragraph{Min-max formulations of a primal problem} 

The idea of casting a primal problem as a saddle point problem via Fenchel conjugate is not new.
For example, Chambolle and Pock \cite{chambolle2011first} consider solving a primal problem of the form,
$ \min_{{x \in \K}} f( M x) + \psi(x)$, 
via solving a saddle point problem
$\min_{x \in \K} \max_{y \in \Theta} \langle M x, y \rangle + \psi(x) - f^*(y) $,
where $M: \K \to \Theta$ is a continuous linear operator.
When $Mx=x$ and $\psi(\cdot)=0$, it reduces to our Fenchel Game formulation. 
Other examples include \cite{pock2009algorithm,esser2010general,chambolle2016ergodic,zhu2008efficient,drori2015simple,combettes2014forward}, to name just a few.
However, the algorithmic approaches and the focuses in these related works are quite different
from ours.
For example,
the algorithms of Chambolle and Pock \cite{chambolle2011first}
assume that both $f(\cdot)$ and $\psi(\cdot)$ are simple in the sense that the corresponding resolvent operators can be efficient computed, i.e., problems of the form $\argmin_{{x \in \K}} f(x) + \frac{ \|x - z\|^2 }{2 \eta}$ can be efficiently solved, while our work does not need this assumption.
Moreover, we focus on demonstrating how no-regret learning together with the reformulation technique help design and analyze convex optimization algorithms systematically, enable to capture many existing algorithms and results, as well as to give rise to new ones, which is very different from existing works of primal-dual approaches. 
It is also interesting to note that in our approach, the $y$-player always outputs a gradient of the underlying function, which is not necessarily the case in other works, e.g., \cite{chambolle2011first,pock2009algorithm,esser2010general,chambolle2016ergodic,zhu2008efficient,drori2015simple,combettes2014forward}. 

Other relevant works include Gutman and Pe{\~n}a \cite{gutman2022perturbed,gutman2019convergence}. In \cite{gutman2022perturbed}, they consider solving convex optimization problems of the form, 
$\min_{x\in \K} f(Mx) + \psi(x)$, where $M: \K \to \text{dom}(f)$ is a linear operator.
The design and analysis of their work focuses on minimizing the corresponding Fenchel duality gap,
$$ \Delta = f(Mx) + \psi(x) + f^*(z) + \psi^*( - M^\top z ),$$ where $z$ is the dual variable.
Their algorithmic framework also considers outputs a weighted average of points and allows different weights on each iteration. From their framework, they derive generalized Frank-Wolfe, proximal gradient method and its variants. Our work differs from \cite{gutman2022perturbed} because we consider applying no-regret learning and the corresponding analysis, which is not present in \cite{gutman2022perturbed}. In \cite{gutman2019convergence}, they provide a proof of $O(1/T^2)$ rate of accelerated proximal gradient method \cite{BT09}, where the analysis is done by weighting a certain duality gap. On the other hand, we will show a simpler proof of the accelerated proximal method by plugging in and summing the regret bounds of a pair of online learning strategies, see Theorem~\ref{thm:proximal}. Interestingly, our proof of the accelerated proximal method and that of Nesterov's $1$-memory method (Theorem~\ref{thm:metaAcc}) is the same from our Fenchel Game interpretation.


\paragraph{The use of the weighted regret} 

We note that the idea of the weighted regret can also be found in 
the works of Ho-Nguyen and K{\i}l{\i}n{\c{c}}-Karzan 
\cite{ho2019exploiting,ho2018primal}.
They consider applying no-regret learning algorithms to the associated Lagrangian of a constrained optimization problem, while ours is for the Fenchel Game.
Specifically,
\cite{ho2018primal} consider solving a convex problem with functional constraints,
$$\min_{x \in \K} f(x), \qquad \text{ s.t. } h_{i}(x) \leq 0, \forall i \in [m],$$
via solving the min-max problem, $\min_{{x \in \K}} \max_{{y_i \geq 0 , \forall i \in [m]}} f(x)+ y_i h_i(x)$.
Their algorithm obtains an $O(1/T)$ rate 
and one of the techniques is emphasizing the per-round regret on the later rounds.
In \cite{ho2019exploiting}, they consider optimization under uncertainty sets:
$ \min_{x \in \K} f(x), \text{ s.t.}\, \sup_{u_i \in U^i} h_i(x,u_i), \forall i \in [m],$
where each $U^i$ is a fixed uncertainty set.
Their algorithm achieves a fast $O(1/T)$ rate when the weight is $\alpha_{t}=t$, 
while other works of online saddle-point optimization have a slower $O(1/\sqrt{T})$ rate, e.g., \cite{koppel2015saddle,mahdavi2012trading}.

We also note that a monograph \cite{Nlec} has a subsection regarding an analysis of the weighted average regret of Online Mirror Descent.

\subsection{Online Convex Optimization: An Algorithmic Menu} \label{sec:oco_algs}
In this section, we introduce and analyze several core online learning algorithms.  
We also refer the reader to some tutorial for more online learning algorithms (see e.g. \cite{OO19,RS16,hazan2016introduction,shalev2012online}). 
Later in Sections~\ref{sec:ExistingAlgs}~\&~\ref{sec:NewAlgs}, we will show how composing different online learning algorithm within the Fenchel Game No-Regret Dynamics (Protocol~\ref{alg:game}) enables to easily recover known results and methods for convex optimization (Section~\ref{sec:ExistingAlgs}), as well as to design new algorithm with novel guarantees (Section~\ref{sec:NewAlgs}).

We present two families of online learning strategies: the ``batch-style'' family of algorithms, and the ``update'' family, although we note that one can relate these two. Within each family, we have algorithms that require a regularization function $R(\cdot)$, or a ``link'' or ``distance-generating'' function $\phi(\cdot)$. In some cases, the algorithm may peek at the current loss function $\ell_t$ before selecting $z_t$ (these are denoted with a superscript $+$), or they may only have access to $\ell_1, \ldots, \ell_{t-1}$. We also present ``optimistic'' variants, where the algorithm can not see $\ell_t$ but can try to ``guess'' this function with some $m_t$, and use this guess in it's optimization to compute $z_t$. We note, importantly, that we have given different names to these algorithms, keeping up with historical precedents in the online learning community, but we observe that with the appropriate parameters every one of the batch-style algorithms is in fact a special case of \OFTRL, which is in some sense the Master Algorithm. We utilize this fact when proving regret bounds, as a Master Bound is provided in Lemma~\ref{regret:Opt-FTRL} of Subsection~\ref{sec:oftrl}, and nearly all of the remaining bounds of the ``batch-style'' family of algorithms in Algorithm~\ref{alg:oco_batch_family} follow as an easy corollary. Therefore, in the following, we describe the online learning strategies
with their regret bounds first, and we defer the proofs in Subsection~\ref{lem:various}.
It is noted that the online learning algorithms that will be introduced in this subsection do not need the loss function $\ell_{t}(\cdot)$ to be differentiable.

Before diving into the two families, we begin with one simple algorithm \BR that stands as something of an outlier.

\begin{algorithm} 
   \caption{ $\BR$ - Best Response} \label{alg:oco_br}
   \begin{align*}
    \text{Parameters:} \quad & \text{convex set $\ZZ$}\\
    \text{Receives:} \quad& \alpha_1, \ldots, \alpha_T > 0, \ell_1, \ldots, \ell_T  : \ZZ \to \reals\\
    \text{Action:} \quad&  z_t \leftarrow \argmin_{z \in \ZZ} \left\{\ell_t(z)\right\}
   \end{align*}
\end{algorithm}

\begin{algorithm} 
   \caption{A family of batch-style online learning strategies} \label{alg:oco_batch_family}
{\small
   \begin{align}
    \nonumber \text{Parameters:} \quad & \text{convex set $\ZZ$, initial point $\xinit \in \ZZ$}\\
    \nonumber \text{Receives:} \quad& \alpha_1, \ldots, \alpha_T > 0, \ell_1, \ldots, \ell_T  : \ZZ \to \reals\\
         \label{eq:alg_oco_ftl} \text{\FTL}[\xinit] \quad& z_t \leftarrow \xinit \text{ if } t=1 \text{ otherwise } \\
     \nonumber & z_t \leftarrow     \argmin_{{z \in \ZZ}} \left( \sum_{s=1}^{t-1} \alpha_{s} \ell_{s}(z) \right)
 \\
    \label{eq:alg_oco_btl} \text{\BTL:} \quad&  z_t \leftarrow 
     \argmin_{{z \in \ZZ}} \left( \sum_{s=1}^{t} \alpha_{s} \ell_{s}(z)  \right ) \\
    \label{eq:alg_oco_ftrl} \text{\FTRL}[R(\cdot), \eta]: \quad &  
     z_t \leftarrow 
      \argmin_{{z \in \ZZ}} \left( \sum_{s=1}^{t-1} \alpha_{s} \ell_{s}(z) + \frac{1}{\eta} R(z)  \right  ) 
    \\
    \label{eq:alg_oco_btrl} \text{\BTRL}[R(\cdot), \eta]: \quad&  z_t \leftarrow 
      \argmin_{{z \in \ZZ}} \left( \sum_{s=1}^{t} \alpha_{s} \ell_{s}(z) + \frac{1}{\eta} R(z)  \right  ) 
    \\
    \label{eq:alg_oco_oftl} \text{\OFTL}: \quad&  
     z_t \leftarrow
      \argmin_{{z \in \ZZ}} \left(\alpha_t m_t(z) +  \sum_{s=1}^{t-1} \alpha_{s} \ell_{s}(z) \right)
    \\
    \nonumber & \text{Requires ``guesses'' } m_1, \ldots, m_T : \ZZ \to \reals\\
    \label{eq:alg_oco_oftrl} \text{\OFTRL}[R(\cdot), \eta]: \quad&  
     z_t \leftarrow \argmin_{{z \in \ZZ}} \left( \alpha_t m_t(z) +  \sum_{s=1}^{t-1} \alpha_{s} \ell_{s}(z)  + \frac{R(z)}{\eta} \right)
    \\
    \nonumber & \text{Requires ``guesses'' } m_1, \ldots, m_T : \ZZ \to \reals
       \end{align}
}
\end{algorithm}
\begin{algorithm}
   \caption{A family of update-style online learning strategies} \label{alg:oco_update_family}
{\small 
   \begin{align}
     \nonumber \text{Parameters:} \quad & \text{convex set $\ZZ$, initial point $z_0 = z_{-\frac{1}{2}}  \in \ZZ$}\\
     \nonumber \text{Receives:} \quad& \alpha_1, \ldots, \alpha_T > 0, \ell_1, \ldots, \ell_T  : \ZZ \to \reals\\
     \nonumber \text{Set:} \quad& \alpha_0 := 0, \ell_0(\cdot) := 0\\
\OMD[\phi(\cdot),z_0,\gamma]: \quad
     \label{eq:alg_oco_omd}  & 
      z_t  \leftarrow  \argmin_{z \in \ZZ}  \left( \alpha_{t-1} \ell_{t-1}(z) +  \frac{1}{\gamma} \V{z_{t-1}}(z) \right)\\
\MD[\phi(\cdot),z_0,\gamma]: \quad
      \label{eq:alg_oco_md} & z_t  \leftarrow  \argmin_{z \in \ZZ}  \left( \alpha_{t} \ell_{t}(z) +  \frac{1}{\gamma} \V{z_{t-1}}(z) \right)
      \\
    \label{eq:alg_oco_optmd} \OPTMD[\phi(\cdot),z_0, \gamma]: \quad
      \nonumber & z_t \leftarrow \argmin_{z \in \ZZ}   \left( \alpha_{t} 
\langle z,  m_t \rangle + \frac{1}{\gamma} \V{z_{t-\frac{1}{2}}}(z)\right)
\\   \nonumber  & z_{t+\frac{1}{2}} \leftarrow   \argmin_{z \in \ZZ}   \left( \alpha_{t}
\langle z,  \nabla \ell_{t}(z_t) \rangle +  \frac{1}{\gamma} \V{z_{t-\frac{1}{2}}}(z)\right) \\
    \nonumber & \text{Requires ``guesses'' } m_1, \ldots, m_T : \ZZ \to \reals
\end{align}
}
\end{algorithm}

\subsubsection{Best Response -- \BR} \label{section:BR}
Perhaps the most trivial strategy for a prescient learner is to ignore the history of the $\ell_s$'s, and simply play the best choice of $z_t$ on the current round. We call this algorithm \BR. The ability to see the cost of your decisions in advance guarantees, quite naturally, that you will not suffer positive regret. We note that algorithms with a $+$ superscript may ``peek'' at the loss $\ell_t$ before choosing $z_t$.
\begin{lemma}{(\BR)}\label{lem:BRregret}
For any sequence of loss functions $\{ \alpha_t \ell_t(\cdot)\}_{t=1}^T$, if $z_1, \ldots, z_T$ is chosen according to \BR 
ensures,
 \begin{equation} \label{reg:BR}
\displaystyle 
\avgregret{z}(z^*)\leq \sum_{t=1}^T \alpha_t \ell_t(z_t) - \min_{z \in \ZZ} \sum_{t=1}^T \alpha_t \ell_t(z)  \leq 0,
\end{equation}
for any comparator $z^{*}$.
\end{lemma}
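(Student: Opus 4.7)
The statement is essentially immediate from the definition of \BR, so the plan is very short. The key observation is that on each round $t$ the player selects $z_t \in \argmin_{z \in \ZZ} \ell_t(z)$, which means that $z_t$ achieves the minimum value of $\ell_t$ over $\ZZ$. In particular, for the fixed comparator $z^* \in \ZZ$ (and indeed any $z \in \ZZ$), we have the pointwise inequality $\ell_t(z_t) \leq \ell_t(z^*)$.

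The next step is to weight and sum. Since each $\alpha_t > 0$, multiplying the pointwise inequality by $\alpha_t$ preserves the direction of the inequality, giving $\alpha_t \ell_t(z_t) \leq \alpha_t \ell_t(z^*)$ for every $t$. Summing from $t=1$ to $T$ yields
\[
\sum_{t=1}^T \alpha_t \ell_t(z_t) \;\leq\; \sum_{t=1}^T \alpha_t \ell_t(z^*).
\]
Taking the infimum over $z^* \in \ZZ$ on the right-hand side (or just noting that this holds in particular at the minimizer of $\sum_t \alpha_t \ell_t$), we obtain
\[
\sum_{t=1}^T \alpha_t \ell_t(z_t) - \min_{z \in \ZZ} \sum_{t=1}^T \alpha_t \ell_t(z) \;\leq\; 0,
\]
which is exactly the claimed bound on the weighted regret. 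Dividing by $A_T = \sum_{t=1}^T \alpha_t > 0$ gives $\avgregret{z}(z^*) \leq 0$ as well.

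There is essentially no obstacle: the only thing to be careful about is that the minimizer $z_t \in \argmin_z \ell_t(z)$ is well-defined (i.e., that the minimum is attained on $\ZZ$), which we may take as part of the setup of the prescient \BR learner. No regularity on $\ell_t$ beyond convexity is used, and the weights $\alpha_t$ being positive is the only structural fact invoked. The argument does not rely on any relation between consecutive $\ell_t$'s, which is the intuitive reason why \BR is able to achieve non-positive regret despite being so simple.
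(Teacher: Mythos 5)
Your proof is correct and follows essentially the same route as the paper: observe that $z_t$ minimizes $\ell_t$ over $\ZZ$, so $\ell_t(z_t)\leq \ell_t(z)$ pointwise, then multiply by the positive weights and sum. The paper's proof is exactly this two-line argument, so there is nothing to add beyond your (reasonable) side remark that the $\argmin$ should be attained.
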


\begin{proof}
Since $ z_t = \argmin_{z \in \ZZ} \ell_t(z)$, we have 
$\ell_t(z_t) \leq \ell_t(z)$ for any $z \in \ZZ$.
The result follows by summing the inequalities from $t=1,\dots, T$, and recalling that the $\alpha_t$'s are non-negative.
\end{proof}

\subsubsection{A meta online learning algorithm: \OFTRL} \label{sec:oftrl}
Here we describe \OFTRL.
As can be seen from Equation~\eqref{eq:alg_oco_oftrl} in Algorithm~\ref{alg:oco_batch_family},  \OFTRL employs a regularization term (similar to \FTRL and \BTRL), and makes use of a hint sequence $m_t(\cdot)$ (similar to \OFTL).
In Lemma~\ref{regret:Opt-FTRL} below, we state the regret guarantees of \OFTRL. Later, we will show 
that all the other online learning strategies in Algorithm~\ref{alg:oco_batch_family} 
--- \FTL, \BTL, \FTRL, \BTRL, \OFTL are special cases of \OFTRL and will demonstrate how their guarantees follow as corollaries of this lemma in Subsection~\ref{lem:various}.

\begin{lemma}{({\OFTRL}$[R(\cdot),\eta]$)} \label{regret:Opt-FTRL}
Given $\eta>0$ and a $\beta$-strongly convex $R : \ZZ \to \reals$, assume we have $\{ \alpha_t \ell_t(\cdot)  \}_{t=1}^T$ a sequence of weights and loss functions such that each $\ell_t(\cdot)$ is $\mu_t$-strongly convex for some $\mu_t \geq 0$.
Let $m_1, \ldots, m_T : \ZZ \to \reals$ be the sequence of \emph{hint} functions given to \OFTRL, where each $m_t(\cdot)$ is $\hat{\mu}_t$-strongly convex function over $\ZZ$.
Then {\OFTRL}$[R(\cdot),\eta]$ satisfies
\begin{align*}
& \regret{z}(z^*) \leq \stepcounter{equation}\tag{\theequation}\label{eq:uni}
\\ & \qquad \quad \sum_{t=1}^{T}  \alpha_t \left( \ell_t(z_t) - \ell_t(w_{t+1})  -    m_t(z_t) +  m_t(w_{t+1}) \right) \tag{\text{term (A)}}
\\
&  \qquad
+ \frac{1}{\eta} \left( R(z^*) - R(w_1) \right) \tag{\text{term (B)}}
\\
& \qquad -
   \frac{1}{2} \sum_{t=1}^T \left( \frac{\beta}{\eta} + \sum_{s=1}^{t-1} \alpha_s \mu_s  \right) \| z_t - w_t \|^2  
\tag{\text{term (C)}}  
\\
& \qquad -
\frac{1}{2}  \sum_{t=1}^T \left( \frac{\beta}{\eta} + \alpha_t \hat{\mu}_t + \sum_{s=1}^{t-1} \alpha_s \mu_s  \right) \| z_t - w_{t+1} \|^2  
\tag{\text{term (D)}},
\end{align*}
where $w_1, \ldots, w_T$ is the alternative sequence chosen according to {\FTRL}$[R(\cdot),\eta]$
$w_{t}:= \argmin_{z \in \ZZ} \sum_{s=1}^{t-1} \alpha_s \ell_s(z)  + \frac{1}{\eta} R(z)$, and $z^* \in \ZZ$ is arbitrary.

\end{lemma}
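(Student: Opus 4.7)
My plan is to set up two auxiliary potentials and exploit strong convexity at their minimizers. Let $F_t(z) := \frac{1}{\eta}R(z) + \sum_{s<t}\alpha_s \ell_s(z)$ and $\Phi_t(z) := F_t(z) + \alpha_t m_t(z)$, so that $w_t = \argmin_{z\in\ZZ} F_t(z)$ and $z_t = \argmin_{z\in\ZZ} \Phi_t(z)$. Writing $\sigma_t := \frac{\beta}{\eta} + \sum_{s<t}\alpha_s\mu_s$, the function $F_t$ is $\sigma_t$-strongly convex and $\Phi_t$ is $(\sigma_t + \alpha_t\hat{\mu}_t)$-strongly convex, by adding up strong convexity parameters. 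Note also that $w_{t+1} = \argmin_{z\in\ZZ} F_{t+1}(z)$ where $F_{t+1} = F_t + \alpha_t\ell_t$.

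The first step is the algebraic identity
\[
\regret{z}(z^*) \;=\; \text{term (A)} \;+\; \sum_{t=1}^T \alpha_t\bigl[m_t(z_t) - m_t(w_{t+1})\bigr] \;+\; \sum_{t=1}^T \alpha_t\bigl[\ell_t(w_{t+1}) - \ell_t(z^*)\bigr],
\]
obtained by adding and subtracting $\sum_t \alpha_t \ell_t(w_{t+1})$ and $\sum_t \alpha_t m_t(z_t)-\sum_t \alpha_t m_t(w_{t+1})$. The remainder of the proof is to bound the last two sums by term (B) + term (C) + term (D).

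Next, I would apply strong convexity twice at a single round. Since $z_t$ minimizes $\Phi_t$, strong convexity gives
\[
\alpha_t\bigl[m_t(z_t)-m_t(w_{t+1})\bigr] \;\leq\; F_t(w_{t+1}) - F_t(z_t) \;-\; \tfrac{1}{2}(\sigma_t+\alpha_t\hat{\mu}_t)\|z_t-w_{t+1}\|^2 ,
\]
and since $w_t$ minimizes $F_t$, strong convexity gives $-F_t(z_t) \leq -F_t(w_t) - \tfrac{\sigma_t}{2}\|z_t-w_t\|^2$. Substituting the second into the first and summing over $t$ yields exactly $-\text{(C)}$ and $-\text{(D)}$ as negative slack, plus the telescoping remnant $\sum_t[F_t(w_{t+1}) - F_t(w_t)]$.

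Finally, adding $\sum_t \alpha_t\ell_t(w_{t+1})$ folds $F_t(w_{t+1})$ into $F_{t+1}(w_{t+1})$, so the sum telescopes to $F_{T+1}(w_{T+1}) - F_1(w_1) = F_{T+1}(w_{T+1}) - \frac{R(w_1)}{\eta}$. Because $w_{T+1}$ minimizes $F_{T+1}$, we have $F_{T+1}(w_{T+1}) \leq F_{T+1}(z^*) = \sum_t \alpha_t\ell_t(z^*) + \frac{R(z^*)}{\eta}$, which after subtracting $\sum_t \alpha_t \ell_t(z^*)$ gives precisely term (B). Chaining the inequalities produces the claimed bound. The main obstacle, to my mind, is purely organizational: keeping the three strong-convexity parameters ($\sigma_t$ at $w_t$, $\sigma_t+\alpha_t\hat{\mu}_t$ at $z_t$, and the inductive use of $w_{T+1}$'s optimality for $F_{T+1}$) correctly aligned so that the positive term $F_t(w_{t+1})-F_t(w_t)$ telescopes cleanly into term (B) without interfering with the two negative $\|z_t-w_t\|^2$ and $\|z_t-w_{t+1}\|^2$ contributions.
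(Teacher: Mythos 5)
Your proposal is correct and is essentially the paper's own argument: the same three-term decomposition of the regret, the same two strong-convexity inequalities at the minimizers $w_t$ of $F_t$ and $z_t$ of $\Phi_t$ (yielding terms (C) and (D)), and the same final appeal to the optimality of $w_{T+1}$ for $F_{T+1}$ to produce term (B). The only difference is presentational — you telescope the sum $\sum_t [F_{t+1}(w_{t+1}) - F_t(w_t)]$ directly, whereas the paper packages the identical chain of inequalities as an induction on $T$ (applying the inductive hypothesis at the comparator $w_T$); unrolling that induction recovers exactly your telescoping.
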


\begin{proof}[Proof of Lemma~\ref{regret:Opt-FTRL}]
We can re-write the regret as
\begin{equation} \label{eq:term}
\begin{aligned}
\displaystyle  \regret{z}(z^*) & := \sum_{t=1}^T  \alpha_t  \ell_t(z_t) - \sum_{t=1}^T  \alpha_t  \ell_t(z^*)
\\ & 
=
\underbrace{
\sum_{t=1}^{T} \alpha_t \left( \ell_t(z_t) - \ell_t(w_{t+1}) \right) - \alpha_t \left(  m_t(z_t) -  m_t(w_{t+1}) \right)
}_{ \text{first term} }
\\& \quad + 
\underbrace{
\sum_{t=1}^T \alpha_t \left( m_t(z_t) - m_t(w_{t+1}) \right)
}_{ \text{second term} }
\quad + 
\underbrace{
\sum_{t=1}^T \alpha_t \left(  \ell_t(w_{t+1})  - \ell_t(z^*) \right)
}_{ \text{third term} }.
\end{aligned}
\end{equation}
In the following, we will denote 
\[
D_T:=\frac{1}{2} \left( \sum_{t=1}^T \left( \frac{\beta}{\eta} + \sum_{s=1}^{t-1} \alpha_s \mu_s  \right) \| z_t - w_t \|^2 
+ 
\sum_{t=1}^T \left( \frac{\beta}{\eta} + \sum_{s=1}^{t-1} \alpha_s \mu_s  + \alpha_t \hat{\mu}_t \right) \| z_t - w_{t+1} \|^2 \right) 
\]
 for brevity, and define $D_{0}:=0$.
Let us first deal with the second term and the third term on \eqref{eq:term}.
We will use induction to show that
\begin{equation} \label{inq:0}
\begin{aligned}
& \underbrace{
\sum_{t=1}^T \alpha_t \left( m_t(z_t) - m_t(w_{t+1}) \right)
}_{ \text{second term} }
\quad + 
\underbrace{
\sum_{t=1}^T \alpha_t \left(  \ell_t(w_{t+1})  - \ell_t(z^*) \right)
}_{ \text{third term} }
\\ & \leq \frac{1}{\eta} \left( R(z^*) - R(w_1) \right) 
- D_T
\end{aligned}
\end{equation}
for any point $z^* \in \ZZ$. 
For the base case $T=0$, we have 
$\sum_{t=1}^0 \alpha_t ( m_t(z_t) - m_t(w_{t+1}) )
+
\sum_{t=1}^0 \alpha_t (  \ell_t(w_{t+1})  - \ell_t(z^*) )
= 0 \leq \frac{1}{\eta} ( R(z^*) - R(w_1) ) - 0,$
as $w_1 := \argmin_{z \in \ZZ} R(z)$. So the base case trivially holds.

Let us assume that the inequality (\ref{inq:0}) holds for $t=0,1, \dots, T-1$.
Now consider round $T$. We have
\begin{equation*}
\begin{aligned}
& \sum_{t=1}^T \alpha_t \left( m_t(z_t) - m_t(w_{t+1}) + \ell_t(w_{t+1}) \right)
\\ &
\overset{(a)}{\leq} 
\alpha_T \left( m_T(z_T) - m_T(w_{T+1}) + \ell_T(w_{T+1}) \right)
+ \frac{1}{\eta} \left( R(w_T) - R(w_1) \right)
-  D_{T-1}
\\ & \quad +  \sum_{t=1}^{T-1} \alpha_t \ell_t(w_T).
\\ &
\overset{(b)}{\leq} 
\alpha_T \left( m_T(z_T) - m_T(w_{T+1}) + \ell_T(w_{T+1}) \right)
+ \frac{1}{\eta} \left( R(z_T) - R(w_1) \right)
-  D_{T-1}
\\ & \quad 
- \frac{1}{2} \left( \frac{\beta}{\eta} + \sum_{s=1}^{T-1} \alpha_s \mu_s \right) \| z_T - w_T \|^2
+  \sum_{t=1}^{T-1} \alpha_t \ell_t(z_T)
\\ & =
\alpha_T \left( \ell_T(w_{T+1}) - m_T(w_{T+1}) \right)
+ \frac{1}{\eta} \left( R(z_T) - R(w_1) \right)
-  D_{T-1}
\\ & \quad 
- \frac{1}{2} \left( \frac{\beta}{\eta} + \sum_{s=1}^{T-1} \alpha_s \mu_s \right) \| z_T - w_T \|^2
+  \sum_{t=1}^{T-1} \alpha_t \ell_t(z_T) + \alpha_T m_T(z_T)
\\ & \overset{(c)}{\leq}
\alpha_T \left( \ell_T(w_{T+1}) - m_T(w_{T+1}) \right)
+ \frac{1}{\eta} \left( R(w_{T+1}) - R(w_1) \right)
-  D_{T-1}
\\ & \quad 
- \frac{1}{2} \left( \frac{\beta}{\eta} + \sum_{s=1}^{T-1} \alpha_s \mu_s \right) \| z_T - w_T \|^2
- \frac{1}{2} \left( \frac{\beta}{\eta} + \sum_{s=1}^{T-1} \alpha_s \mu_s + \alpha_T \hat{\mu}_T \right) \| z_T - w_{T+1} \|^2
\\ & \quad
+  \sum_{t=1}^{T-1} \alpha_t \ell_t(w_{T+1}) + \alpha_T m_T(w_{T+1})
\\ & =
\sum_{t=1}^T \alpha_t \ell_t (w_{T+1}) 
+ \frac{1}{\eta} \left( R(w_{T+1}) - R(w_1) \right)
- D_T
\\ & \overset{(d)}{ \leq } 
\sum_{t=1}^T \alpha_t \ell_t (z^*) 
+ \frac{1}{\eta} \left( R(z^*) - R(w_1) \right)
- D_T,
\end{aligned}
\end{equation*}
where (a) we use the induction such that the inequality (\ref{inq:0}) holds for any $z^* \in \ZZ$ including $z^* = w_T$, and (b) is because
\begin{equation*}
\sum_{t=1}^{T-1} \alpha_t \ell_t(w_T) + \frac{1}{\eta} R(w_T)
\leq 
\sum_{t=1}^{T-1} \alpha_t \ell_t(z_T) + \frac{1}{\eta} R(z_T)
- \frac{1}{2} \left( \frac{\beta}{\eta} + \sum_{s=1}^{T-1} \alpha_s \mu_s \right) \| z_T - w_T \|^2,
\end{equation*}
as $w_T$ is the minimizer of a $\frac{\beta}{\eta} + \sum_{t=1}^{T-1} \alpha_t \mu_t$ strongly convex function since
$w_{T}:= \argmin_{z \in \ZZ}  \sum_{s=1}^{T-1} \alpha_s \ell_s(z) + \frac{1}{\eta} R(z)$,
and (c) is because 
\begin{equation*}
\begin{aligned}
 \sum_{t=1}^{T-1} \alpha_t \ell_t(z_T) + \alpha_T m_T(z_T)  + \frac{1}{\eta} R(z_T)
& \leq 
\sum_{t=1}^{T-1} \alpha_t \ell_t(w_{T+1}) + \alpha_T m_T(w_{T+1}) + \frac{1}{\eta} R(w_{T+1})
\\ & \quad 
- \frac{1}{2} \left( \frac{\beta}{\eta} + \sum_{s=1}^{T-1} \alpha_s \mu_s 
+ \alpha_T \hat{\mu}_T
\right) \| z_T - w_{T+1} \|^2,
\end{aligned}
\end{equation*}
as $z_T$ is the minimizer of a $\frac{\beta}{\eta} + \sum_{t=1}^{T-1} \alpha_t \mu_t + \alpha_T \hat{\mu}_T$ strongly convex function 
since
$z_T := \argmin_{{z \in \ZZ}} \left( \sum_{s=1}^{T-1} \alpha_{s} \ell_{s}(z) \right)
+ \alpha_T m_T(z)  + \frac{1}{\eta} R(z)$,
and (d) is due to $w_{T+1}:=\argmin_{z \in \ZZ} \sum_{t=1}^{T} \alpha_t \ell_t(z) +\frac{1}{\eta} R(z).$

\end{proof}


\subsubsection{Follow The Leader -- \FTL} \label{section:FTL}

\FTL, defined in \eqref{eq:alg_oco_ftl}, is perhaps the simplest strategy in online learning, it plays the best fixed action for the cumulative (weighted) loss seen in each previous round. It has been shown in various places (e.g. \cite{OO19,shalev2012online}) that \FTL exhibits logarithmic regret, at least when the loss functions are strongly convex. Our proof will ultimately be a consequence of a larger meta-result which we lay out in Section~\ref{sec:oftrl}.
\begin{lemma}{ ({\FTL}$[\xinit]$)} \label{regret:FTL}
Let $\alpha_1, \ldots, \alpha_T > 0$ be arbitrary, and assume we have a sequence of $\mu$-strongly convex loss functions $\ell_1(\cdot), \ldots, \ell_T(\cdot)$ for some $\mu \geq 0$. Then for any initial point $\xinit\in\ZZ$ the OCO procedure
{\FTL}$[\xinit]$ 
satisfies
\begin{equation} \label{reg_FTL}
\regret{z}(z^*) 
\leq 
\sum_{t=1}^T
\frac{2 \alpha_t^2}{ \big( \sum_{s=1}^t \alpha_s \mu \big)  }
 \| \delta_t \|_*^2,
\end{equation}
where $\delta_{t} \in \partial \ell_{t}(z_t)$ and $z^* \in \ZZ$ is arbitrary.
\end{lemma}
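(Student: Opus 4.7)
The plan is to view \FTL as a degenerate instance of \OFTRL, obtained by setting the hint $m_t\equiv 0$ (so $\hat\mu_t=0$) and the regularizer $R(\cdot)\equiv 0$ (equivalently, $1/\eta\to 0$). Under these choices, the update in \eqref{eq:alg_oco_oftrl} collapses to the \FTL rule \eqref{eq:alg_oco_ftl}, and the auxiliary \FTRL sequence $w_t$ of Lemma~\ref{regret:Opt-FTRL} coincides with the \FTL iterates, so $w_t=z_t$ and $w_{t+1}=z_{t+1}$. Plugging into the master bound \eqref{eq:uni}, terms (B) and (C) vanish and term (D) is nonpositive, leaving the classical \emph{be-the-leader} inequality
\[
\regret{z}(z^*)\;\leq\;\sum_{t=1}^{T}\alpha_t\bigl(\ell_t(z_t)-\ell_t(z_{t+1})\bigr).
\]

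Next I would bound each summand sharply using strong convexity. Let $F_t(z):=\sum_{s=1}^{t-1}\alpha_s\ell_s(z)$, so that $F_{t+1}=F_t+\alpha_t\ell_t$ is $\bigl(\sum_{s=1}^{t}\alpha_s\mu\bigr)$-strongly convex and is minimized at $z_{t+1}$. Writing the strong-convexity inequality at this minimizer and invoking $F_t(z_t)\leq F_t(z_{t+1})$ (which holds for $t\geq 2$ since $z_t=\argmin F_t$, and trivially for $t=1$ since $F_1\equiv 0$) yields
\[
\alpha_t\bigl(\ell_t(z_t)-\ell_t(z_{t+1})\bigr)\;\geq\;\tfrac{1}{2}\Bigl(\textstyle\sum_{s=1}^{t}\alpha_s\mu\Bigr)\|z_t-z_{t+1}\|^2.
\]
Pairing this lower bound with the subgradient upper bound $\alpha_t\bigl(\ell_t(z_t)-\ell_t(z_{t+1})\bigr)\leq \alpha_t\|\delta_t\|_*\|z_t-z_{t+1}\|$ for $\delta_t\in\partial\ell_t(z_t)$ and solving for $\|z_t-z_{t+1}\|$ gives the stability estimate $\|z_t-z_{t+1}\|\leq \frac{2\alpha_t\|\delta_t\|_*}{\sum_{s=1}^{t}\alpha_s\mu}$. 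Substituting back produces the per-round bound $\alpha_t\bigl(\ell_t(z_t)-\ell_t(z_{t+1})\bigr)\leq \frac{2\alpha_t^{2}\|\delta_t\|_*^{2}}{\sum_{s=1}^{t}\alpha_s\mu}$, and summing over $t$ delivers the claimed estimate \eqref{reg_FTL}.

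The only nonroutine point is the mismatch between the modulus that Lemma~\ref{regret:Opt-FTRL} supplies through term (D), namely $\sum_{s=1}^{t-1}\alpha_s\mu$, and the tighter modulus $\sum_{s=1}^{t}\alpha_s\mu$ appearing in the target bound. I would therefore not try to extract the per-round estimate directly from term (D); instead, I would discard it to recover the be-the-leader form and then re-apply strong convexity freshly at each round, exploiting that $\ell_t$ itself contributes its full $\alpha_t\mu$ to the modulus of the cumulative loss $F_{t+1}$. This is what yields $\sum_{s=1}^{t}$ rather than $\sum_{s=1}^{t-1}$ in the denominator, and simultaneously takes care of the degenerate first round where $z_1=\xinit$ is not the minimizer of anything.
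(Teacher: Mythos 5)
Your proposal is correct and follows essentially the same route as the paper: reduce \FTL to the \OFTRL master bound (Lemma~\ref{regret:Opt-FTRL}) with $m_t\equiv 0$ and $R\equiv 0$ to obtain the be-the-leader inequality $\regret{z}(z^*)\leq\sum_{t=1}^T\alpha_t(\ell_t(z_t)-\ell_t(z_{t+1}))$, then bound each summand by $\frac{2\alpha_t^2\|\delta_t\|_*^2}{\sum_{s=1}^t\alpha_s\mu}$ using the strong convexity of the cumulative loss. The only cosmetic difference is that the paper obtains the per-round stability estimate by citing Lemma~\ref{lem:diff} (Lemma~5 of \cite{koren2015fast}), whereas you re-derive that same estimate inline from the strong-convexity lower bound at the minimizer paired with the subgradient upper bound.
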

%
By Lemma~\ref{regret:FTL},
when we set the weights uniformly, i.e. $\alpha_t = 1~\forall t$, and assume a bound on the gradient norms, i.e.  $\| \delta_t \|^2_* \leq G$, 
the uniform regret is
\begin{equation} \label{reg:FTL_unit}
\textsc{Reg:=}\sum_{t=1}^T \ell_t(z_t) - \sum_{t=1}^T \ell_t(z^*) \leq \frac{G \log(T+1)}{2\mu},
\end{equation}
for any comparator $z_{*} \in \ZZ$.

It has been observed \cite{kalai2005efficient} that following the leader does not always lead to vanishing regret, for example when the loss function is linear, $\ell_t(\cdot):= \langle \theta_t, \cdot \rangle$, the regret can grow linearly with $T$ (see e.g. Example 2.2 \cite{shalev2012online}). 
On the other hand, if the constraint set $\ZZ$ is ``suitably round'', then logarithmic regret is achievable even for linear losses.

\begin{lemma}[Theorem 3.3 in \cite{HLGS16}] \label{thm:FTL}
Let $\{ \ell_t(\cdot):= \langle \theta_t, \cdot \rangle \}_{t=1}^T$
be any sequence of linear loss functions.
Denote $G: =  \max_{t\leq T} \| \theta_t \|$ and assume that 
the support function
$\Phi( \cdot):= \max_{z \in \ZZ}(z, \cdot)$ has a unique maximizer for each cumulative loss vector $L_{t}:= \sum_{s=1}^t  \theta_s$ at round $t$.
Define $\nu_T:= \min_{1\leq t \leq T} \| L_t \|$. 
Let $\ZZ \subset \reals^d$ be a $\lambda$-strongly convex set.  
Choose $\xinit \in \text{boundary}(\ZZ)$. 
Then {\FTL}$[\xinit]$ ensures,
\begin{equation}
\textsc{Reg:=}
\sum_{t=1}^T \ell_t(z_t) - \sum_{t=1}^T \ell_t(z^*)
\leq \frac{2 G^2}{\lambda \nu_T} \left( 1 + \log (T) \right),
\end{equation}
for any comparator $z^{*} \in \ZZ$.
\end{lemma}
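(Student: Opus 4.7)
The plan is to combine the classical ``be-the-leader'' (BTL) telescoping lemma with a strong-convexity-of-the-set inequality that plays the role of a local smoothness estimate for the support function. Throughout, write $z_t := \argmin_{z\in \ZZ}\langle L_{t-1}, z\rangle$ (with $z_1 = \xinit$) and $z_u := \argmax_{z \in \ZZ}\langle u, z\rangle$ for any nonzero $u\in\reals^d$.

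First I would reduce the regret to a sum of consecutive differences. A standard induction (the Kalai--Vempala BTL lemma) shows that $\sum_{t=1}^T \langle \theta_t, z_{t+1}\rangle \le \langle L_T, z\rangle$ for every $z \in \ZZ$, which gives
\[
\textsc{Reg} \;\le\; \sum_{t=1}^T \langle \theta_t,\, z_t - z_{t+1}\rangle.
\]

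Next, I would extract a quadratic lower bound from the strong convexity of $\ZZ$. Applying the definition at $\theta=1/2$ to the pair $(z_u, z)$ for any $z\in\ZZ$: the point $\tfrac{z_u+z}{2} + \tfrac{\lambda}{8}\|z_u-z\|^2 \cdot \tfrac{u}{\|u\|}$ lies in $\ZZ$, and pairing with $u$ and using that $z_u$ maximizes $\langle u,\cdot\rangle$ gives
\[
\langle u,\, z_u - z\rangle \;\ge\; \tfrac{\lambda\|u\|}{4}\,\|z_u - z\|^2 \qquad \forall\, z\in\ZZ.
\]
Instantiating with $u = -L_t$ (so $z_u = z_{t+1}$) and $z = z_t$ produces a quadratic lower bound on $\langle L_t, z_t - z_{t+1}\rangle$. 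Because $z_t$ minimizes $\langle L_{t-1},\cdot\rangle$ on $\ZZ$, we also have $\langle L_{t-1}, z_t - z_{t+1}\rangle \le 0$, and therefore $\langle L_t, z_t - z_{t+1}\rangle \le \langle \theta_t, z_t - z_{t+1}\rangle \le \|\theta_t\|\cdot\|z_t - z_{t+1}\|$ by Cauchy--Schwarz. Chaining these and canceling one factor of $\|z_t - z_{t+1}\|$ yields the per-round estimate
\[
\langle \theta_t,\, z_t - z_{t+1}\rangle \;\le\; \tfrac{4\,\|\theta_t\|^2}{\lambda\,\|L_t\|} \;\le\; \tfrac{4\,G^2}{\lambda\,\|L_t\|}.
\]

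Summing gives $\textsc{Reg} \le \tfrac{4G^2}{\lambda}\sum_{t=1}^T 1/\|L_t\|$, and the logarithmic factor emerges by reading $\nu_T$ as a lower bound on the \emph{average} cumulative gradient (consistent with Table~\ref{tab:summary_new}, where the analogous hypothesis is $\|\tfrac{1}{t}\sum_s \partial f(x_s)\|\ge \rho$), i.e.\ $\|L_t\|\ge t\,\nu_T$ for every $t$; then $\sum_{t=1}^T 1/\|L_t\|\le (1+\log T)/\nu_T$ gives the stated bound up to absolute constants. This final step is where I expect the main subtlety to lie: with only the literal reading ``$\|L_t\|\ge \nu_T$ for every $t$'' the harmonic sum degrades to $O(T/\nu_T)$, so obtaining the $\log T$ rate genuinely requires the growth-type interpretation of $\nu_T$. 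The initialization $\xinit\in \text{boundary}(\ZZ)$ is what lets the strong-convexity argument apply uniformly from round one, sidestepping any edge case before a clear leader has emerged.
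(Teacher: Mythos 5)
The paper gives no proof of this lemma at all---it is imported verbatim as Theorem~3.3 of \cite{HLGS16}---and your argument is precisely the standard one behind that result: the FTL--BTL telescoping reduction, the inequality $\langle u,\, z_u - z\rangle \ge \tfrac{\lambda\|u\|}{4}\|z_u - z\|^2$ extracted from the strong convexity of $\ZZ$ at $\theta=\tfrac12$, and the chaining through $\langle L_{t-1}, z_t - z_{t+1}\rangle \le 0$ to get the per-round bound $\langle \theta_t, z_t - z_{t+1}\rangle \le 4G^2/(\lambda\|L_t\|)$; every one of these steps checks out (the uniqueness-of-maximizer hypothesis is only needed so that $z_{t+1}$ is well defined). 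You have also correctly isolated the one genuine subtlety: with $\nu_T := \min_t \|L_t\|$ for the \emph{unnormalized} sums $L_t$, the harmonic sum only yields $O(T/\nu_T)$, and the $\log T$ rate requires $\nu_T$ to lower-bound $\|L_t\|/t$. That is how \cite{HLGS16} actually state the theorem and how this paper uses it downstream---in Theorem~\ref{thm:equiv2} the relevant quantity $L_T$ is explicitly $\min_t \|\tfrac1t\sum_{s\le t}\delta_s\|$---so the mismatch is a transcription slip in the lemma statement, not a gap in your proof. The only remaining discrepancy is the constant ($4G^2$ versus the quoted $2G^2$), which traces to the normalization in the definition of a strongly convex set and is immaterial to every application in the paper.
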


\subsubsection{Be The Leader -- \BTL} \label{section:BTL}
\BTL (a.k.a. Be The Leader), defined in \eqref{eq:alg_oco_btl}, is similar to \FTL, except that the learner plays the best fixed action for the weighted cumulative loss seen thus far \emph{including the current round}. \footnote{The term ``Be The Leader'' was coined by \cite{kalai2005efficient}, who also showed it exhibits non-positive regret for linear loss functions.} Of course, in practical scenarios, we are unable to see the the loss on the current round before we select our action, but it is nevertheless useful for analytical purposes to consider such ``one-step lookahead'' algorithms; in each such case we superscript the algorithm's name with a $+$ symbol. In the the FGNRD framework, we can design dynamics in where the player that acts second in the protocol can indeed observe the loss function prior to choosing its action, and thus it is quite natural to consider such one-step lookahead OCO algorithms.

\begin{lemma}{{(}\BTL )} \label{regret:BTL}
Let $\alpha_1, \ldots, \alpha_T > 0$ be arbitrary, and assume we have a sequence of $\mu$-strongly convex loss functions $\ell_1(\cdot), \ldots, \ell_T(\cdot)$ for some $\mu \geq 0$. Then the regret of \BTL satisfies
\begin{equation} \label{reg_BTL} 
\regret{z}(z^*) 
\leq - \sum_{t=1}^T \frac{ \mu A_{t-1} }{2} \|z_{t-1} - z_t \|^2 \leq 0,
\end{equation}
for any comparator $z^* \in \ZZ$. 
\end{lemma}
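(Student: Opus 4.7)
The plan is to derive Lemma~\ref{regret:BTL} as an immediate corollary of the master bound for \OFTRL stated in Lemma~\ref{regret:Opt-FTRL}, following the paper's stated strategy of reducing every ``batch-style'' algorithm to \OFTRL. The key observation is that \BTL is exactly the special case of \OFTRL$[R(\cdot),\eta]$ obtained by setting the hint $m_t(\cdot) := \ell_t(\cdot)$ (which is legitimate because \BTL is prescient and observes $\ell_t$ before committing to $z_t$) together with the trivial regularizer $R \equiv 0$ (equivalently $\eta = +\infty$, so $1/\eta = 0$ and $\beta/\eta = 0$). Under these choices the update rule \eqref{eq:alg_oco_oftrl} collapses verbatim to~\eqref{eq:alg_oco_btl}.

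First I would identify the auxiliary \FTRL sequence $w_t$ appearing in Lemma~\ref{regret:Opt-FTRL}: with $R \equiv 0$ we have $w_t = \argmin_{z \in \ZZ} \sum_{s=1}^{t-1} \alpha_s \ell_s(z) = z_{t-1}$ and, similarly, $w_{t+1} = z_t$. Second, I would plug the substitutions $m_t = \ell_t$, $\mu_t = \hat\mu_t = \mu$, $\beta/\eta = 0$, $w_t = z_{t-1}$, $w_{t+1} = z_t$ into the four-term bound \eqref{eq:uni}. Term (A) vanishes pointwise because the integrand $\ell_t(z_t) - \ell_t(w_{t+1}) - m_t(z_t) + m_t(w_{t+1})$ is identically zero when $m_t = \ell_t$; Term (B) vanishes because $R \equiv 0$; Term (D) vanishes because $\|z_t - w_{t+1}\|^2 = \|z_t - z_t\|^2 = 0$. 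Only Term (C) survives, and it becomes exactly
\[
-\frac{1}{2}\sum_{t=1}^T \mu A_{t-1}\,\|z_t - z_{t-1}\|^2 \;=\; -\sum_{t=1}^T \frac{\mu A_{t-1}}{2}\,\|z_{t-1} - z_t\|^2,
\]
which is the claimed bound; non-positivity is then immediate from $\mu, A_{t-1} \geq 0$.

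The only potential obstacle is a formality: Lemma~\ref{regret:Opt-FTRL} is stated for a $\beta$-strongly convex regularizer $R$, whereas I want to use $R \equiv 0$. I would resolve this either by a direct limiting argument (pick any fixed $\beta$-strongly convex $R$ on $\ZZ$ and send $1/\eta \to 0$, noting that the substitutions above make $R$ disappear entirely from the right-hand side before the limit is taken) or, more transparently, by inspecting the induction inside the proof of Lemma~\ref{regret:Opt-FTRL}: its only structural requirement is that the objectives $\sum_{s=1}^{t-1}\alpha_s\ell_s + \frac{1}{\eta}R$ and $\sum_{s=1}^{t-1}\alpha_s\ell_s + \alpha_t m_t + \frac{1}{\eta}R$ be strongly convex with moduli $\tfrac{\beta}{\eta} + \sum_{s<t}\alpha_s\mu_s$ and $\tfrac{\beta}{\eta} + \alpha_t\hat\mu_t + \sum_{s<t}\alpha_s\mu_s$ respectively. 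With $m_t = \ell_t$ and $\mu_t = \hat\mu_t = \mu$, these objectives are already $\mu A_{t-1}$- and $\mu A_t$-strongly convex even when $R \equiv 0$ (provided $\mu A_{t-1} > 0$; the degenerate case $\mu = 0$ or $t = 1$ simply recovers the classical Kalai--Vempala ``Be The Leader'' lemma), so the same induction goes through unchanged and no separate argument is needed.
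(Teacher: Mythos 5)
Your proposal is correct and follows essentially the same route as the paper: the paper also reduces to the \OFTRL master bound with $m_t = \ell_t$ and $R \equiv 0$, merely factoring the reduction through \BTRL as an intermediate step, and arrives at the identical surviving term (C). Your extra care about the formality of taking $R \equiv 0$ (which the paper glosses over by simply setting $\beta = 0$) is a valid refinement but does not change the argument.
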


%

\subsubsection{Follow The Regularized Leader -- \FTRL}
FTRL, also called dual averaging in optimization literature \cite{X10} is a classic algorithm in online learning (see e.g. \cite{OO19,hazan2016introduction}). 
Looking at Equation~\eqref{eq:alg_oco_ftrl} in Algorithm~\ref{alg:oco_batch_family} one can notice that \FTRL is similar to \FTL with an additional \emph{Regularization term} $R(\cdot)$ scaled by parameter $1/\eta$.
The regularization term induces \emph{stability} into the decisions of the player, ensuring that consecutive decisions are close to each other, and this property is often crucial in order to ensure regret guarantees.
For example, in the case of linear loss functions, \FTRL, with appropriate choices of $\eta$ and $R(\cdot)$, can guarantee sublinear regret guarantees, while \FTL cannot.
In what follows we assume that $R(\cdot)$ is a $\beta$-strongly-convex function over $\ZZ$.


\begin{lemma}{({\FTRL}$[R(\cdot),\eta]$)} \label{regret:FTRL}
Given $\eta>0$ and a $\beta$-strongly convex $R : \ZZ \to \reals$, assume we have $\{ \alpha_t \ell_t(\cdot)  \}_{t=1}^T$ be a  sequence of loss functions  such that each $\ell_t(\cdot)$ is $\mu$-strongly convex for some $\mu \geq 0$.
Then {\FTRL}$[R(\cdot),\eta]$, which generates the sequence $z_1, \ldots, z_T \in \ZZ$, satisfies the following regret bound for any comparator $z^* \in \ZZ$,
 \begin{equation} \label{reg_FTRL}
\displaystyle 
\regret{z}(z^*) 
\leq 
\sum_{t=1}^T
\frac{2 \alpha_t^2}{ \big( \sum_{s=1}^t \alpha_s \mu \big) + \beta }
 \| \delta_t\|_*^2
 + 
 \frac{1}{\eta} \left( R(z^*) - R(z_1) \right),
\end{equation}
where $\delta_{t} \in \partial \ell_{t}(z_t)$.

\end{lemma}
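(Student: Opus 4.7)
The plan is to derive this as a direct corollary of the master regret bound for \OFTRL (Lemma~\ref{regret:Opt-FTRL}). \FTRL is precisely the special case of \OFTRL obtained by setting the hint functions $m_t(\cdot)\equiv 0$ for all $t$ (hence $\hat\mu_t=0$). With this choice, the \OFTRL update rule collapses to the \FTRL update, so the sequence $z_1,\dots,z_T$ returned by \OFTRL coincides with the auxiliary sequence $w_1,\dots,w_T$ appearing in the statement of Lemma~\ref{regret:Opt-FTRL}, i.e. $z_t=w_t$ for every $t$. This immediately kills term (C) in the master bound and simplifies term (A), since the $m_t$-terms vanish.

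Once that reduction is made, the only real work is bounding
\[
\text{term (A)} \;=\; \sum_{t=1}^T \alpha_t\bigl(\ell_t(z_t)-\ell_t(w_{t+1})\bigr)
\]
by the quadratic penalty surviving in term (D). I would pick a subgradient $\delta_t\in\partial\ell_t(z_t)$ and apply $\mu$-strong convexity of $\ell_t$ to get
\[
\ell_t(z_t)-\ell_t(w_{t+1}) \;\leq\; \langle \delta_t, z_t-w_{t+1}\rangle -\tfrac{\mu}{2}\|z_t-w_{t+1}\|^2.
\]
The linear piece is then handled by Cauchy--Schwarz (in the primal/dual norm pair), giving $\alpha_t\langle\delta_t,z_t-w_{t+1}\rangle\le \alpha_t\|\delta_t\|_*\|z_t-w_{t+1}\|$. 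Combining the resulting linear-times-norm term with the surviving quadratic coefficient from term (D), namely $\tfrac{1}{2}\bigl(\tfrac{\beta}{\eta}+\sum_{s=1}^{t-1}\alpha_s\mu\bigr)\|z_t-w_{t+1}\|^2$, together with the $-\tfrac{\alpha_t\mu}{2}\|z_t-w_{t+1}\|^2$ piece from strong convexity, produces a one-dimensional expression of the form $a u - \tfrac{C_t}{2}u^2$ with $u=\|z_t-w_{t+1}\|$ and $C_t=\tfrac{\beta}{\eta}+\sum_{s=1}^{t}\alpha_s\mu$. Completing the square (Fenchel--Young) yields the per-round bound $\tfrac{\alpha_t^2\|\delta_t\|_*^2}{2C_t}$, which, summed over $t$, gives exactly the first term in \eqref{reg_FTRL}. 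Finally, term (B) of Lemma~\ref{regret:Opt-FTRL} is already the desired $\tfrac{1}{\eta}(R(z^*)-R(w_1))=\tfrac{1}{\eta}(R(z^*)-R(z_1))$.

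The main obstacle is purely bookkeeping: keeping track of which strong-convexity coefficients come from the regularizer $R$ (contributing $\beta/\eta$) versus the losses themselves (contributing the cumulative $\sum_{s\le t}\alpha_s\mu$), and making sure the quadratic coefficient that dominates the Cauchy--Schwarz step is precisely what appears in the denominator of the stated bound. There is a mild cosmetic mismatch between the denominator produced by this calculation ($\tfrac{\beta}{\eta}+\sum_{s=1}^{t}\alpha_s\mu$, with a factor $1/2$ absorbed into the constant in the numerator) and the form $\sum_{s=1}^{t}\alpha_s\mu+\beta$ written in the lemma; I would resolve this by noting the convention $\eta=1$ in the relevant regime (or equivalently by folding the $1/\eta$ factor consistently through term (B) and the denominators), so that the displayed bound \eqref{reg_FTRL} is recovered. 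No further argument is needed --- everything else is already encapsulated in Lemma~\ref{regret:Opt-FTRL}.
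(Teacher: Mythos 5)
Your proposal is correct in substance, and it shares the paper's first move --- specializing the \OFTRL{} master bound (Lemma~\ref{regret:Opt-FTRL}) to $m_t(\cdot)\equiv 0$, so that $z_t=w_t$ and term (C) vanishes --- but it diverges in how the surviving term (A) is handled, and the divergence is a genuinely different argument. The paper discards term (D) entirely and instead invokes the argmin-stability result Lemma~\ref{lem:diff} (applied to $\psi_1=\sum_{s<t}\alpha_s\ell_s+\tfrac{1}{\eta}R$ and $\psi_2=\sum_{s\le t}\alpha_s\ell_s+\tfrac{1}{\eta}R$) to bound $\alpha_t\bigl(\ell_t(z_t)-\ell_t(z_{t+1})\bigr)$ directly by $\tfrac{2\alpha_t^2}{\sigma}\|\delta_t\|_*^2$. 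You instead keep the negative quadratic in term (D), lower-bound $\ell_t(w_{t+1})$ via $\mu$-strong convexity at $z_t$, apply H\"older, and complete the square against the coefficient $\tfrac{\beta}{\eta}+\sum_{s\le t}\alpha_s\mu$. Your route is self-contained (it needs no external lemma) and in fact yields the sharper per-round constant $\tfrac{\alpha_t^2}{2C_t}$ rather than $\tfrac{2\alpha_t^2}{\sigma}$; what it costs you is the bookkeeping discrepancy you flag at the end, namely that your denominator carries $\beta/\eta$ where the lemma states $\beta$. You should not paper over this with an ``$\eta=1$ convention,'' since the lemma is stated for general $\eta>0$; but note that the paper's own proof has exactly the same wrinkle (it sets $\sigma=\sum_{s\le t}\alpha_s\mu+\beta$ even though $\tfrac{1}{\eta}R$ is only $\beta/\eta$-strongly convex), so the honest resolution in both cases is that the clean form of the bound has $\beta/\eta$ in the denominator, and your constant $\tfrac12$ versus the stated $2$ gives you ample slack whenever $\eta\le 4$. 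Everything else --- the identification of term (B) with $\tfrac{1}{\eta}(R(z^*)-R(z_1))$ and the validity of the completing-the-square step for $C_t>0$ --- is fine.
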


\subsubsection{Be The Regularized Leader -- \BTRL} \label{section:BTRL}
\BTRL (\eqref{eq:alg_oco_btrl}) is very similar to \FTRL, with the difference that the former has an access to all past loss functions up to and \emph{including} the current round. 
Recall that in our FGNRD template, one of the players is allowed to viewed the current loss prior to playing, 
and can therefore apply \BTRL.

\begin{lemma}{({\BTRL}$[R(\cdot),1/\eta]$)} \label{regret:BTRL}
Given $\eta>0$ and a $\beta$-strongly convex $R : \ZZ \to \reals$, assume we have $\{ \alpha_t \ell_t(\cdot)  \}_{t=1}^T$ be a  sequence of loss functions such that each $\ell_t(\cdot)$ is $\mu$-strongly convex for some $\mu \geq 0$. 
Then {\BTRL}$[R(\cdot),\eta]$ 
satisfies the following regret bound for any comparator $z^* \in \ZZ$,
\begin{equation} \label{regBTRL}
\begin{aligned}
\displaystyle
\regret{z}(z^*) 
\leq \frac{R(z^*) - R(z_0)}{\eta} - \sum_{t=1}^T \left(  \frac{ \mu A_{t-1} }{2} + \frac{\beta}{2 \eta} \right) \|z_{t-1} - z_t \|^2,
\end{aligned}
\end{equation} 
where $z_0 = \min_{z \in \ZZ} R(z)$.
\end{lemma}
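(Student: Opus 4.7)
The natural plan is to derive the bound as a direct corollary of the meta-lemma for \OFTRL (Lemma~\ref{regret:Opt-FTRL}). The key observation is that \BTRL coincides with \OFTRL when the hint is chosen to equal the true loss: setting $m_t(\cdot) := \ell_t(\cdot)$ in \eqref{eq:alg_oco_oftrl} reproduces \eqref{eq:alg_oco_btrl}. With this identification, the \BTRL iterate
\[
z_t \;=\; \argmin_{z\in\ZZ}\Bigl\{\textstyle\sum_{s=1}^t \alpha_s \ell_s(z) + \tfrac{1}{\eta}R(z)\Bigr\}
\]
is precisely the auxiliary iterate $w_{t+1}$ defined in Lemma~\ref{regret:Opt-FTRL}; equivalently, $w_t = z_{t-1}$ for all $t \ge 1$. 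This shifted identification is the crux of the argument, and once spotted, everything else is bookkeeping.

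Given this, I would simply invoke Lemma~\ref{regret:Opt-FTRL} and read off each of the four terms. Term~(A) vanishes identically: substituting $m_t=\ell_t$ and $w_{t+1}=z_t$ makes each summand $\ell_t(z_t)-\ell_t(w_{t+1})-m_t(z_t)+m_t(w_{t+1})$ equal to zero. Term~(B) equals $\tfrac{1}{\eta}(R(z^*)-R(w_1))$, and since $w_1=\argmin_{z\in\ZZ} R(z)$ matches the definition of $z_0$ in the statement, this is exactly $\tfrac{1}{\eta}(R(z^*)-R(z_0))$. Term~(D) also vanishes, because $w_{t+1}=z_t$ forces $\|z_t-w_{t+1}\|^2=0$ for every $t$, regardless of the value of $\hat\mu_t$ (here $\hat\mu_t=\mu$ since $m_t=\ell_t$ is $\mu$-strongly convex).

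Term~(C) then survives as
\[
-\tfrac{1}{2}\sum_{t=1}^T\Bigl(\tfrac{\beta}{\eta}+\textstyle\sum_{s=1}^{t-1}\alpha_s\mu_s\Bigr)\|z_t-w_t\|^2,
\]
and substituting $w_t=z_{t-1}$ together with $\mu_s=\mu$ (so $\sum_{s=1}^{t-1}\alpha_s\mu_s = \mu A_{t-1}$) converts this into the desired expression $-\sum_{t=1}^T\bigl(\tfrac{\mu A_{t-1}}{2}+\tfrac{\beta}{2\eta}\bigr)\|z_{t-1}-z_t\|^2$. Summing (A)+(B)+(C)+(D) delivers exactly the bound \eqref{regBTRL}.

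I do not anticipate any real obstacle: the meta-lemma was tailored to absorb precisely the kind of strong-convexity/stability arguments that would otherwise have to be redone from scratch via a Be-The-Leader induction (of the sort used for Lemma~\ref{regret:BTL}), and the only ``work'' is to verify the index shift $w_{t+1}\leftrightarrow z_t$ and to check that the strong convexity parameters $\mu_s$ and $\hat\mu_t$ line up correctly. If one preferred a self-contained argument that avoided Lemma~\ref{regret:Opt-FTRL}, one could instead mimic the induction used in Lemma~\ref{regret:BTL}: show by induction on $T$ that $\sum_{t=1}^T \alpha_t \ell_t(z_t) + \tfrac{1}{\eta}R(z_T) - \tfrac{1}{\eta}R(z_0) \le \sum_{t=1}^T \alpha_t \ell_t(z^*) + \tfrac{1}{\eta}R(z^*) - \tfrac{1}{\eta}R(z_0) - \sum_{t=1}^T(\tfrac{\mu A_{t-1}}{2}+\tfrac{\beta}{2\eta})\|z_{t-1}-z_t\|^2$, using strong convexity of the regularized cumulative loss at its minimizer $z_t$ to produce the quadratic deviations. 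Either route works; reducing to the meta-lemma is cleaner and matches the stated intent of Subsection~\ref{sec:oftrl}.
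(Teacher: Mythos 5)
Your proposal is correct and follows essentially the same route as the paper: the paper also derives the bound by observing that \BTRL is \OFTRL with $m_t(\cdot)=\ell_t(\cdot)$, identifying $w_{t+1}$ in Lemma~\ref{regret:Opt-FTRL} with the \BTRL iterate $z_t$, noting that terms (A) and (D) vanish, and reading off terms (B) and (C) to obtain \eqref{regBTRL}. Your write-up just spells out the index shift and the parameter matching in more detail than the paper does.
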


%
%
%

\subsubsection{Optimistic Follow the Leader -- \OFTL} \label{section:OFTL}
In the previous subsection, we have seen that \BTL uses the knowledge of the loss function at rounds $t$ in order to ensure  negative regret. While this knowledge is oftentimes unavailable, one can often access a ``hint" or ``guess''  function $m_t(\cdot)$  that approximates $\ell_t(\cdot)$ prior to choosing an action $z_t$. As can be seen from Equation~\eqref{eq:alg_oco_oftl} in Algorithm~\ref{alg:oco_batch_family} and Lemma~\ref{regret:Opt-FTL}, \OFTL makes use of the availability of such hints in order to provide better guarantees.
The next statement shows that when we have ``good" hints, in the sense that $m_t(\cdot)\approx \ell_t(\cdot)$, then \OFTL obtains improved guarantees compared to standard \FTL.

\begin{lemma}{({\OFTL})} \label{regret:Opt-FTL}
Let $\{ \alpha_t \ell_t(\cdot)  \}_{t=1}^T$ be a  sequence of loss functions  such that each $\ell_t(\cdot)$ is $\mu_t$-strongly convex, let $m_1(\cdot), \ldots, m_T(\cdot): \ZZ \to \reals$ be an arbitrary sequence of convex functions.
Assume that we select our sequence of actions $z_1, \ldots, z_T$ according to OCO algorithm {\OFTL}, then for any $z^{*} \in \ZZ$, we have the following regret bound 
\begin{equation} \label{eq:OTFL}
\begin{aligned}
& \regret{z}(z^*)   \leq
 \sum_{t=1}^{T}  \alpha_t \left( \ell_t(z_t) - \ell_t(w_{t+1}) \right) - \alpha_t \left(  m_t(z_t) -  m_t(w_{t+1}) \right),
\end{aligned}
\end{equation}
where we assume the sequence $w_2, \ldots, w_T$ satisfies
$w_{t} \gets \argmin_{z \in \ZZ} \sum_{s=1}^{t-1} \ell_s(z)$.
\end{lemma}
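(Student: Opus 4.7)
The plan is to observe that \OFTL is exactly the unregularized specialization of \OFTRL: dropping the $\frac{1}{\eta}R(z)$ term in the update rule \eqref{eq:alg_oco_oftrl} recovers the \OFTL update \eqref{eq:alg_oco_oftl}, and in parallel the auxiliary ``leader'' sequence from Lemma~\ref{regret:Opt-FTRL}, namely $w_t = \argmin_{z\in\ZZ}\sum_{s=1}^{t-1}\alpha_s \ell_s(z) + \frac{1}{\eta}R(z)$, collapses to the sequence $w_t = \argmin_{z\in\ZZ}\sum_{s=1}^{t-1}\alpha_s\ell_s(z)$ appearing in the statement of Lemma~\ref{regret:Opt-FTL}. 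So the strategy is simply to invoke Lemma~\ref{regret:Opt-FTRL} and discard whatever terms are no longer informative.

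Concretely, I would apply the master bound \eqref{eq:uni} with $R \equiv 0$ (equivalently $\beta = 0$; or, if one prefers a strictly convex regularizer for definiteness, keep a $\beta$-strongly convex $R$ and take $\eta \to \infty$, which sends every factor of $1/\eta$ to zero while the sequence $w_t$ converges to the unregularized leader). Under this specialization term (B), $\frac{1}{\eta}(R(z^*) - R(w_1))$, vanishes identically, and terms (C) and (D) become sums of non-positive quantities which we upper bound by zero. What remains is precisely term (A), which is the right-hand side of \eqref{eq:OTFL}. Thus the regret bound for \OFTL follows in one line from the meta lemma.

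The only point that merits checking is that the inductive argument underlying Lemma~\ref{regret:Opt-FTRL} still goes through when $\beta/\eta = 0$. Inspecting that proof, the strong-convexity factors used to compare $\sum_{s<t}\alpha_s \ell_s(w_T)$ to $\sum_{s<t}\alpha_s \ell_s(z_T)$, and analogously $z_T$ to $w_{T+1}$, remain valid when $\beta/\eta = 0$, because the extracted quadratic penalties only appear on the ``benign'' side (terms (C), (D)) of the inductive inequality and no lower bound on their coefficients is required to close the induction. The main obstacle to a fully self-contained direct proof would be redoing this induction from scratch — essentially a Be-The-Leader-style telescoping combined with a per-round ``prediction gap'' $\ell_t(z_t) - \ell_t(w_{t+1}) - m_t(z_t) + m_t(w_{t+1})$ arising from the use of $m_t$ in place of $\ell_t$ in the $t$-th update — but the appeal to Lemma~\ref{regret:Opt-FTRL} sidesteps that entirely and is the cleanest route.
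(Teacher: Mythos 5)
Your proposal is correct and is essentially identical to the paper's own proof: the paper likewise observes that \OFTL is \OFTRL with $R(\cdot)=0$, invokes Lemma~\ref{regret:Opt-FTRL}, notes that term (B) vanishes and terms (C) and (D) are non-positive, and retains term (A) as the stated bound. Your additional remark that the induction in Lemma~\ref{regret:Opt-FTRL} remains valid when $\beta/\eta=0$ is a sensible sanity check but does not change the argument.
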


%

\subsubsection{Proof of \FTL, \BTL, \FTRL, \BTRL, \OFTL} \label{lem:various}

Having described all the online learning algorithms in Algorithm~\ref{alg:oco_batch_family},
we are ready to show how their guarantees can be naturally obtained from that of \OFTRL.
The way that we derive them, i.e., showing the regret bound of \OFTRL and then obtaining the rest of them as special cases, to our knowledge, is new, while we acknowledge that various versions of proofs of them can be found in the literature, e.g., \cite{S07,OO19,hazan2016introduction}.

\begin{proof}[Proof of Lemma~\ref{regret:Opt-FTL} on  \OFTL]
Observe that \OFTL is actually \OFTRL when $R(\cdot)$ is a zero function. 
Therefore, let $R(\cdot)=0$ in Lemma~\ref{regret:Opt-FTRL}
and drop term (C) and (D) in (\ref{eq:uni}) as they are non-positive, we obtain the result.

\end{proof}

\begin{proof}[Proof of Lemma~\ref{regret:BTRL} on  \BTRL]
Observe that the \BTRL update is exactly equivalent to
\OFTRL with $m_t(\cdot) = \ell_t(\cdot)$.
Furthermore, 
$w_{t+1}$ in Lemma~\ref{regret:Opt-FTRL} is actually $z_t$ of \BTRL
shown on \eqref{eq:alg_oco_btrl}.
So term (A) and term (D) on (\ref{eq:uni})
in Lemma~\ref{regret:Opt-FTRL} is $0$,
Therefore, \BTRL regret satisfies
\begin{equation} \label{reg_x_acc0}
\begin{aligned}
\displaystyle
\regret{z}(z^*) 
\leq \frac{ R(z^*) - R(z_0) }{\eta} - \sum_{t=1}^T \left(  \frac{ \mu A_{t-1} }{2} + \frac{\beta}{2 \eta} \right) \|z_{t-1} - z_t \|^2.
\end{aligned}
\end{equation} 
\end{proof}

\begin{proof}[Proof of Lemma~\ref{regret:BTL} on  \BTL]
Observe that $\BTL$ is actually $\BTRL$ with $R(\cdot)=0$. 
Therefore, let $R(\cdot)=0$ and $\beta=0$ in Equation~\eqref{reg_x_acc0}, we obtain the regret of $\BTL$
\begin{equation*} 
\begin{aligned}
\displaystyle
\regret{z}(z^*) 
\leq - \sum_{t=1}^T \frac{ \mu A_{t-1} }{2} \|z_{t-1} - z_t \|^2 \leq 0.
\end{aligned}
\end{equation*} 
\end{proof}

For the regret guarantee of \FTRL, we will need the following supporting lemma.
\begin{lemma} [Lemma 5 in \cite{koren2015fast}] \label{lem:diff}
Let $\psi_1(\cdot), \psi_2(\cdot): \ZZ \rightarrow \reals$ be two convex functions defined over a closed and convex domain. 
Denote $u_1 := \argmin_{z \in \ZZ} \psi_1(z)$ and 
$u_2 := \argmin_{z \in \ZZ} \psi_2(z)$.
Assume that $\psi_2$ is $\sigma$-strongly convex with respect to a norm $\| \cdot \|$.
Define $\phi(\cdot):= \psi_2(\cdot) - \psi_1(\cdot)$.
Then,
\begin{equation}
\| u_1 - u_2 \| \leq \frac{2}{\sigma} \| \delta \|_*, 
\text{ where } \delta \in \partial \phi(u_1).
\end{equation}
Furthermore, if $\phi(\cdot)$ is convex, then,
\begin{equation}
0 \leq \phi(u_1) - \phi(u_2) \leq \frac{2}{\sigma} \| \delta \|_*^2, 
\text{ where } \delta \in \partial \phi(u_1).
\end{equation}

\end{lemma}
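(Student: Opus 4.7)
The plan is to combine two ingredients: the $\sigma$-strong convexity of $\psi_2$ together with the optimality of $u_2$, and the optimality of $u_1$ for $\psi_1$, to first establish a quadratic lower bound on $\phi(u_1) - \phi(u_2)$. Then, assuming convexity of $\phi$, I will use a subgradient inequality at $u_1$ to obtain a linear upper bound on the same quantity, and pinching these two estimates together will give both assertions.

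More concretely, first I would note that because $u_2 = \argmin_{z \in \ZZ} \psi_2(z)$ and $\psi_2$ is $\sigma$-strongly convex over $\ZZ$, the standard strong-convexity-plus-optimality argument (the first-order term from the subgradient at $u_2$ is non-negative by optimality of $u_2$) yields
\[
\psi_2(u_1) \;\geq\; \psi_2(u_2) + \tfrac{\sigma}{2}\|u_1 - u_2\|^2.
\]
Similarly, from $u_1 = \argmin_{z \in \ZZ} \psi_1(z)$ and convexity of $\psi_1$, we get $\psi_1(u_1) \leq \psi_1(u_2)$. Subtracting these two inequalities and using $\phi = \psi_2 - \psi_1$ gives
\[
\phi(u_1) - \phi(u_2) \;\geq\; \tfrac{\sigma}{2}\|u_1 - u_2\|^2,
\]
which already establishes the non-negativity $\phi(u_1) - \phi(u_2) \geq 0$ from the second claim.

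Next, assuming $\phi$ is convex so that $\delta \in \partial \phi(u_1)$ is an ordinary subgradient, the subgradient inequality gives $\phi(u_2) \geq \phi(u_1) + \langle \delta, u_2 - u_1\rangle$, hence
\[
\phi(u_1) - \phi(u_2) \;\leq\; \langle \delta, u_1 - u_2\rangle \;\leq\; \|\delta\|_* \|u_1 - u_2\|
\]
by the Cauchy–Schwarz–type bound for dual norms. Combining this with the quadratic lower bound from the previous step, we get $\tfrac{\sigma}{2}\|u_1-u_2\|^2 \leq \|\delta\|_* \|u_1-u_2\|$, which after dividing by $\|u_1-u_2\|$ (the case $u_1 = u_2$ is trivial) yields $\|u_1 - u_2\| \leq \tfrac{2}{\sigma}\|\delta\|_*$. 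Plugging this back into the linear upper bound gives $\phi(u_1) - \phi(u_2) \leq \tfrac{2}{\sigma}\|\delta\|_*^2$, completing the second claim.

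There isn't really a hard step: the key observation is simply to route the argument through the scalar quantity $\phi(u_1) - \phi(u_2)$, which admits a quadratic lower bound from strong convexity of $\psi_2$ and a linear upper bound from convexity of $\phi$. The only subtle point is being careful that the first claim $\|u_1-u_2\|\leq \tfrac{2}{\sigma}\|\delta\|_*$ implicitly relies on $\delta$ being a genuine subgradient of a convex $\phi$ so that the linear upper bound is valid; otherwise one must reinterpret $\delta$ as $g_2 - g_1$ for appropriate subgradients of $\psi_2$ and $\psi_1$ at $u_1$ and run essentially the same argument directly using strong convexity of $\psi_2$ at $u_1$ and the optimality inequality $\langle g_1, u_2 - u_1\rangle \geq 0$.
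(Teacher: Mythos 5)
Your proof is correct. Note that the paper does not prove this lemma at all --- it is imported verbatim as Lemma~5 of the cited reference \cite{koren2015fast} --- so there is no internal argument to compare against; your derivation (quadratic lower bound on $\phi(u_1)-\phi(u_2)$ from strong convexity of $\psi_2$ plus optimality of $u_1$, pinched against the linear subgradient upper bound) is a valid self-contained proof, and your closing remark correctly identifies the one subtlety, namely that the first inequality as stated presupposes an interpretation of $\partial\phi(u_1)$ when $\phi$ need not be convex, which your $g_2-g_1$ reformulation handles.
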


\begin{proof}[Proof of Lemma~\ref{regret:FTRL} on  \FTRL]
Observe that \FTRL is actually \OFTRL where 
$m_t(\cdot)=0~\forall t$.
Therefore, let $m_t(\cdot)=0$ in Lemma~\ref{regret:Opt-FTRL}, we obtain the regret of \FTRL,
\begin{equation} \label{tmpeq:FTRL0}
\begin{aligned}
\regret{z}(z^*) 
 \leq
 \sum_{t=1}^{T}  \alpha_t \left( \ell_t(z_t) - \ell_t(z_{t+1}) \right)  + \frac{1}{\eta} \left( R(z^*) - R(z_1) \right), 
\end{aligned}
\end{equation}
where we have dropped term (C) and term (D) on (\ref{eq:uni}) since they are non-positive, and we also note that $w_t$ in Lemma~\ref{regret:Opt-FTRL} is the same as $z_t$ here.

To continue, we use Lemma~\ref{lem:diff}.
Specifically, in Lemma~\ref{lem:diff}, we let $\psi_1(\cdot) \leftarrow \sum_{s=1}^{t-1} \alpha_s \ell_s(\cdot) + \frac{1}{\eta} R(\cdot)$ and $\psi_2(\cdot) \leftarrow \sum_{s=1}^t \alpha_s \ell_s(\cdot) + \frac{1}{\eta} R(\cdot)$.
Then, we have $\phi(\cdot) = \alpha_t \ell_t(\cdot)$,
$u_1 = z_t$, $u_2 = z_{t+1}$ and that $\sigma = \sum_{s=1}^t \alpha_s \mu + \beta$. So by Lemma~\ref{lem:diff} below, we have 
\begin{equation} \label{tmpeq:FTRL1}
 \alpha_t \left( \ell_t(z_t) - \ell_t(z_{t+1}) \right)  
 \leq \frac{2 \alpha_t^2}{ \big( \sum_{s=1}^t \alpha_s \mu \big) + \beta }
 \| \delta_t \|_*^2,
\end{equation}
where $\delta_{t} \in \partial \ell_t(z_t ) $.
Combining (\ref{tmpeq:FTRL0}) and (\ref{tmpeq:FTRL1}) leads to the result.
\end{proof}

\begin{proof}[Proof of Lemma~\ref{regret:FTL} on  \FTL]
 Observe that FTL is actually FTRL with $R(\cdot)=0$ . 
Therefore, let $R(\cdot)=0$ and $\beta=0$ in Lemma~\ref{regret:FTRL}, we obtain the result.

\end{proof}



\subsubsection{Mirror Descent -- \OMD} \label{section:OMD}

\OMD (\eqref{eq:alg_oco_omd} in Algorithm~\ref{alg:oco_update_family}) 
is another popular algorithm in online learning.
Compared to those in Algorithm~\ref{alg:oco_batch_family},
\OMD updates the current iterate using only the latest loss function instead of the cumulative loss functions seen so far. Yet, it still has a nice regret guarantee.

\begin{lemma}{(\OMD$[\phi(\cdot),z_0, \gamma]$)}\label{lem:OMD}
Assume that $\phi(\cdot)$ is $\beta$-strongly convex w.r.t $\| \cdot \|$.
For any sequence of proper lower semi-continuous convex 
loss functions $\{ \alpha_t \ell_t(\cdot) \}_{t=1}^T$,
\OMD$[\phi(\cdot),z_0,\gamma]$ satisfies the following regret bound
for any comparator $z^* \in \ZZ$,
\begin{align}
   \regret{z}(z^*) 
\leq 
\frac{1}{\gamma} \V{z_1}(z^*) 
+ \frac{\gamma}{2 \beta}   \sum_{t=1}^T \| \alpha_t \delta_t \|^2_*,
\end{align}
where $\delta_{t} \in  \partial \ell_t(z_t )$.
\end{lemma}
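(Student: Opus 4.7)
The plan is to prove the \OMD regret bound via the standard one-step analysis, telescoping the Bregman divergences. Fix a round $t$ and a comparator $z^{*}\in\ZZ$. The natural starting point is to bound the per-round weighted regret $\alpha_t(\ell_t(z_t)-\ell_t(z^{*}))$. I would split it as
\[
\alpha_t(\ell_t(z_t)-\ell_t(z^{*})) \;=\; \alpha_t(\ell_t(z_t)-\ell_t(z_{t+1})) \;+\; \alpha_t(\ell_t(z_{t+1})-\ell_t(z^{*})),
\]
where $z_{t+1}$ is the next iterate produced by \OMD (i.e.\ the minimizer of $\alpha_t\ell_t(\cdot)+\tfrac{1}{\gamma}\V{z_t}(\cdot)$ over $\ZZ$). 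This is the standard trick so that both terms can be controlled separately.

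For the second term, I would apply convexity of $\ell_t$ to obtain
$\alpha_t(\ell_t(z_{t+1})-\ell_t(z^{*}))\le \alpha_t\langle\delta'_t,z_{t+1}-z^{*}\rangle$ for some $\delta'_t\in\partial\ell_t(z_{t+1})$, and then invoke the first-order optimality condition of $z_{t+1}$ together with the three-point identity
$\langle\nabla\phi(z_{t+1})-\nabla\phi(z_t),z^{*}-z_{t+1}\rangle = \V{z_t}(z^{*}) - \V{z_{t+1}}(z^{*}) - \V{z_t}(z_{t+1})$
to conclude
\[
\alpha_t\langle\delta'_t,z_{t+1}-z^{*}\rangle \;\le\; \tfrac{1}{\gamma}\bigl(\V{z_t}(z^{*}) - \V{z_{t+1}}(z^{*}) - \V{z_t}(z_{t+1})\bigr).
\]
For the first term, I use subgradient convexity at $z_t$ to write $\ell_t(z_t)-\ell_t(z_{t+1})\le\langle\delta_t,z_t-z_{t+1}\rangle$ and then Fenchel--Young:
\[
\alpha_t\langle\delta_t,z_t-z_{t+1}\rangle \;\le\; \tfrac{\gamma}{2\beta}\|\alpha_t\delta_t\|_{*}^{2} \;+\; \tfrac{\beta}{2\gamma}\|z_t-z_{t+1}\|^{2}.
\]
The $\beta$-strong convexity of $\phi$ gives $\tfrac{\beta}{2\gamma}\|z_t-z_{t+1}\|^{2}\le \tfrac{1}{\gamma}\V{z_t}(z_{t+1})$, which exactly cancels the $-\tfrac{1}{\gamma}\V{z_t}(z_{t+1})$ term produced above.

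After the cancellation, the per-round bound simplifies to
\[
\alpha_t(\ell_t(z_t)-\ell_t(z^{*})) \;\le\; \tfrac{\gamma}{2\beta}\|\alpha_t\delta_t\|_{*}^{2} + \tfrac{1}{\gamma}\bigl(\V{z_t}(z^{*}) - \V{z_{t+1}}(z^{*})\bigr),
\]
and summing over $t=1,\dots,T$ telescopes the Bregman terms to yield $\tfrac{1}{\gamma}\V{z_1}(z^{*}) - \tfrac{1}{\gamma}\V{z_{T+1}}(z^{*}) \le \tfrac{1}{\gamma}\V{z_1}(z^{*})$, giving the claimed inequality. The main subtle point is that the \OMD update in the paper uses the full loss $\ell_t$ rather than a linearization, so the subgradient appearing in the optimality condition of $z_{t+1}$ lives at $z_{t+1}$, not $z_t$; this is precisely why I split the regret around $z_{t+1}$ and then handle the discrepancy $\ell_t(z_t)-\ell_t(z_{t+1})$ separately using the subgradient $\delta_t$ at $z_t$, which is the one that shows up in the final bound. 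A minor technicality is that $\V{\cdot}$ is well-defined on proper lower semi-continuous convex $\phi$, which lets the optimality/three-point identity go through without further regularity assumptions on $\ell_t$.
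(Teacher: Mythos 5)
Your proof is correct, and it is the same telescoping-Bregman argument as the paper's, differing only in where the decomposition is made. The paper first linearizes the entire per-round regret at $z_t$, writing $\alpha_t(\ell_t(z_t)-\ell_t(z^*))\leq\langle\alpha_t\delta_t,z_t-z^*\rangle$ with $\delta_t\in\partial\ell_t(z_t)$, and then splits that inner product into three pieces, discarding one via the optimality of $z_{t+1}$; you instead split the function values around $z_{t+1}$ and bound $\alpha_t(\ell_t(z_{t+1})-\ell_t(z^*))$ by the three-point inequality (this piece is exactly the paper's Lemma~\ref{lem:newMD} applied with $\theta=\gamma\alpha_t\ell_t$, $c=z_t$, $z^+=z_{t+1}$), handling $\alpha_t(\ell_t(z_t)-\ell_t(z_{t+1}))$ separately with the subgradient at $z_t$ and Fenchel--Young. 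Your variant is arguably the cleaner of the two for non-differentiable losses: the first-order optimality condition of $z_{t+1}$ involves a subgradient of $\ell_t$ at $z_{t+1}$, whereas the paper's displayed chain inserts $\delta_t\in\partial\ell_t(z_t)$ into that optimality step, a mismatch your splitting sidesteps entirely. Both routes produce the identical per-round bound $\tfrac{\gamma}{2\beta}\|\alpha_t\delta_t\|_*^2+\tfrac{1}{\gamma}\bigl(\V{z_t}(z^*)-\V{z_{t+1}}(z^*)\bigr)$ and telescope to the stated result. One small correction to your closing remark: the regularity needed is differentiability of the distance-generating function $\phi$ (so that $\nabla\phi$ and hence $\V{\cdot}$ are defined), which the paper assumes implicitly; lower semi-continuity is the hypothesis on the losses $\ell_t$, not on $\phi$.
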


The result in Lemma~\ref{lem:OMD} is quite well-known, 
see e.g. \cite{OO19,hazan2016introduction},
nevertheless, we replicate the proof in Appendix~\ref{app:lem:MD} for completeness.

\subsubsection{Prescient Mirror Descent -- \MD} \label{section:MD}

\MD (\eqref{eq:alg_oco_md} in Algorithm~\ref{alg:oco_update_family}) assumes that the online player is \emph{prescient}, i.e., it knows the loss functions $\ell_t$ prior to choosing $z_t$. Recall that in FGNRD (protocol~\ref{alg:game}) one of the players is allowed to view the current loss prior to playing, and can therefore apply \MD.  


\begin{lemma}{(\MD$[\phi(\cdot),z_0,\gamma]$)}\label{lem:MD}
Assume that $\phi(\cdot)$ is $\beta$-strongly convex.
For any sequence of proper lower semi-continuous convex 
loss functions $\{ \alpha_t \ell_t(\cdot) \}_{t=1}^T$,
\MD$[\phi(\cdot),z_0,\gamma]$
satisfies the following regret bound
for any comparator $z^* \in \ZZ$,
$$
   \regret{z}(z^*) \leq  
   \frac{1}{\gamma} \V{z_{0}}(z^*) - \sum_{t=1}^{T} \frac{\beta}{2 \gamma} \| z_{t-1} - z_t \|^2
 $$
\end{lemma}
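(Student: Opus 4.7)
The plan is to exploit the first-order optimality condition of the prescient update $z_t = \argmin_{z\in\ZZ} \left( \alpha_t \ell_t(z) + \frac{1}{\gamma}\V{z_{t-1}}(z) \right)$ and combine it with the three-point identity for Bregman divergences, which is the standard workhorse behind mirror descent analyses. Because the player sees $\ell_t$ before acting, a subgradient $\delta_t \in \partial \ell_t(z_t)$ can be plugged directly into the optimality condition, avoiding the usual ``one-step error'' term $\|\alpha_t \delta_t\|_*^2$ that appears in the analysis of \OMD (Lemma~\ref{lem:OMD}). This is why the bound has no gradient-norm residual and only a non-positive stability term.

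First I would write the variational inequality characterizing $z_t$: there exists $\delta_t \in \partial \ell_t(z_t)$ such that for every $z^* \in \ZZ$,
\[
\left\langle \alpha_t \delta_t + \tfrac{1}{\gamma}\bigl(\nabla\phi(z_t) - \nabla\phi(z_{t-1})\bigr),\, z_t - z^* \right\rangle \le 0.
\]
Rearranging yields $\alpha_t \langle \delta_t, z_t - z^*\rangle \le \tfrac{1}{\gamma} \langle \nabla\phi(z_{t-1}) - \nabla\phi(z_t),\, z_t - z^*\rangle$. Convexity of $\ell_t$ then gives $\alpha_t(\ell_t(z_t) - \ell_t(z^*)) \le \alpha_t \langle \delta_t, z_t - z^*\rangle$.

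Next I would invoke the three-point identity for the Bregman divergence, namely
\[
\langle \nabla\phi(z_{t-1}) - \nabla\phi(z_t),\, z_t - z^*\rangle = \V{z_{t-1}}(z^*) - \V{z_t}(z^*) - \V{z_{t-1}}(z_t),
\]
which follows immediately from the definition \eqref{eq:BregmanDef}. Summing over $t=1,\dots,T$ telescopes the first two terms on the right-hand side into $\V{z_0}(z^*) - \V{z_T}(z^*) \le \V{z_0}(z^*)$, and leaves $-\sum_t \V{z_{t-1}}(z_t)$ as a residual. Finally, $\beta$-strong convexity of $\phi$ gives $\V{z_{t-1}}(z_t) \ge \tfrac{\beta}{2}\|z_{t-1}-z_t\|^2$, which converts that residual into the claimed $-\sum_t \tfrac{\beta}{2\gamma}\|z_{t-1}-z_t\|^2$ term.

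I do not expect any serious obstacle: the main subtlety is simply ensuring the optimality condition is written correctly for a \emph{sum} of a (possibly non-smooth) $\ell_t$ and the smooth Bregman term, so that a subgradient $\delta_t \in \partial\ell_t(z_t)$ (rather than the gradient of a smooth function) is what appears, and being careful with the direction of the inequality at the boundary of $\ZZ$. Once that is in place, the telescoping sum and the strong-convexity inequality are mechanical.
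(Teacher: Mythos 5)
Your proposal is correct and follows essentially the same route as the paper: the paper packages your per-round inequality (optimality condition plus the three-point identity, yielding $\gamma\alpha_t(\ell_t(z_t)-\ell_t(z^*)) \le \V{z_{t-1}}(z^*)-\V{z_t}(z^*)-\V{z_{t-1}}(z_t)$) as a citation to Lemma~\ref{lem:newMD}, then telescopes and applies $\beta$-strong convexity exactly as you do. The only difference is that you derive that key inequality from first principles rather than quoting it.
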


We will need the following supporting lemma for proving Lemma~\ref{lem:MD} below.
\begin{lemma}[Property 1 in \cite{T08}]  \label{lem:newMD}
For any proper lower semi-continuous convex function $\theta(z)$,
let $z^+ =  \argmin_{z \in \ZZ} \theta(z) + \V{c}(z)$. 
Then, it satisfies that for any $z^* \in \ZZ$,
\begin{equation} 
\displaystyle \theta(z^+) - \theta(z^*) \leq \V{c}(z^*) - \V{z^+}(z^*) - \V{c}(z^+).
\end{equation}
\end{lemma}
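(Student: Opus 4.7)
My plan is to prove this three-point lemma by combining the first-order optimality condition at $z^+$ with the convexity of $\theta$, and then verifying a purely algebraic identity that rewrites the resulting linear expression in terms of three Bregman divergences.

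First, since $z^+$ minimizes the convex function $\theta(z) + \V{c}(z)$ over the convex set $\ZZ$, standard constrained optimality (noting that $\V{c}(z)$ is differentiable in $z$ with gradient $\nabla \phi(z) - \nabla \phi(c)$) guarantees the existence of a subgradient $g \in \partial \theta(z^+)$ such that
\begin{equation*}
\langle g + \nabla \phi(z^+) - \nabla \phi(c),\, z^* - z^+ \rangle \;\geq\; 0 \qquad \forall z^* \in \ZZ.
\end{equation*}
Rearranging, $\langle g,\, z^+ - z^* \rangle \leq \langle \nabla \phi(c) - \nabla \phi(z^+),\, z^+ - z^* \rangle$. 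At the same time, convexity of $\theta$ gives $\theta(z^*) \geq \theta(z^+) + \langle g, z^* - z^+\rangle$, i.e., $\theta(z^+) - \theta(z^*) \leq \langle g,\, z^+ - z^* \rangle$. Chaining these two inequalities yields
\begin{equation*}
\theta(z^+) - \theta(z^*) \;\leq\; \langle \nabla \phi(c) - \nabla \phi(z^+),\; z^+ - z^* \rangle.
\end{equation*}

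The remaining step is the algebraic identity
\begin{equation*}
\V{c}(z^*) - \V{z^+}(z^*) - \V{c}(z^+) \;=\; \langle \nabla \phi(c) - \nabla \phi(z^+),\; z^+ - z^* \rangle,
\end{equation*}
which follows directly from the definition $\V{z}(x) = \phi(x) - \phi(z) - \langle \nabla \phi(z), x - z\rangle$: expanding each of the three divergences, the $\phi(z^*)$, $\phi(z^+)$, and $\phi(c)$ terms cancel, leaving only $-\langle \nabla\phi(c), z^*-c\rangle + \langle \nabla\phi(z^+), z^*-z^+\rangle + \langle \nabla\phi(c), z^+-c\rangle$, which regroups as $\langle \nabla\phi(c)-\nabla\phi(z^+),\, z^+ - z^*\rangle$. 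Combining the two displays finishes the proof.

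The only subtle point will be justifying the first-order optimality condition cleanly when $\theta$ is only proper lower semi-continuous and the minimization is over $\ZZ$; this is standard (e.g., by rewriting with the indicator $\iota_\ZZ$ and invoking the subdifferential sum rule under a mild qualification, which holds since $\V{c}$ is finite-valued and differentiable on the interior of the domain of $\phi$), so the main content of the proof is really the two short inequalities and the mechanical Bregman expansion. I do not anticipate a serious obstacle; the work is essentially a one-line optimality condition plus a bookkeeping identity.
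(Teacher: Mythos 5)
Your argument is correct and is essentially the same as the paper's (the paper cites this as Property~1 of \cite{T08} and its replicated proof follows the identical route): the first-order optimality condition at $z^+$, the subgradient inequality for $\theta$, and the three-point Bregman identity $\V{c}(z^*) - \V{z^+}(z^*) - \V{c}(z^+) = \langle \nabla\phi(c) - \nabla\phi(z^+),\, z^+ - z^*\rangle$. No gaps; the qualification issue you flag at the end is handled exactly as you suggest.
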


\begin{proof}[Proof of Lemma~\ref{lem:MD}]
The key inequality we need is Lemma~\ref{lem:newMD};
using the lemma with $\theta(z) = \gamma \alpha_t \ell_t(z) $, $z^+= z_t$ and $c=z_{t-1}$  we have 
\begin{equation} \label{ttb1}
\displaystyle \gamma \alpha_t \ell_t(z_t)  - \gamma  \alpha_t \ell_t(z^*)  =  \theta(z_t) - \theta(z^*) \leq \V{z_{t-1}}(z^*) - \V{z_t}(z^*) - \V{z_{t-1}}(z_t).
\end{equation}
Therefore, the regret with respect to any comparator $z^* \in \ZZ$ can be bounded as
\begin{equation*}
\begin{aligned}
 \displaystyle  \regret{z}(z^*) &:= \sum_{t=1}^T  \alpha_t  \ell_t(z_t) -  \sum_{t=1}^T  \alpha_t  \ell_t(z^*)
\\& \displaystyle  \overset{(\ref{ttb1})}{\leq}  \sum_{t=1}^{T} \frac{1}{\gamma} \big( \V{z_{t-1}}(z^*) - \V{z_t}(z^*) - \V{z_{t-1}}(z_t) \big) 
\\ & \displaystyle
=  \frac{1}{\gamma} \V{z_{0}}(z^*) - \frac{1}{\gamma} \V{z_T}(z^*)  + \sum_{t=1}^{T-1} \left( \frac{1}{\gamma} - \frac{1}{\gamma} \right) \V{z_t}(z^*) - \frac{1}{\gamma} \V{z_{t-1}}(z_t) 
\\ & \displaystyle \leq \frac{1}{\gamma} \V{z_{0}}(z^*) - \sum_{t=1}^{T} \frac{\beta}{2 \gamma} \| z_{t-1} - z_t \|^2,
\end{aligned}
\end{equation*}
where the last inequality uses 
the strong convexity of $\phi$, which grants $\V{z_{t-1}}(z_t) \geq \frac \beta 2 \| z_t - z_{t-1}\|^2$. 
\end{proof}

\subsubsection{Optimistic Mirror Descent -- \OPTMD} \label{section:OPTMD}

When a hint of the loss function at the beginning of each round is available,
the online learner can exploit the hint in the hope to get a better regret.
Denote $m_{t}$ a vector that represents the hint before the learner outputs a point at $t$. \OPTMD in Algorithm~\ref{alg:oco_update_family} has the following regret guarantee.
 
\begin{lemma}{(\OPTMD$[\phi(\cdot),z_0,\gamma]$)}\label{lem:OPTMD}
Assume that $\phi(\cdot)$ is $\beta$-strongly convex w.r.t $\| \cdot \|$.
For any sequence of proper lower semi-continuous convex 
loss functions $\{ \alpha_t \ell_t(\cdot) \}_{t=1}^T$,
\OPTMD$[\phi(\cdot),z_0,\gamma]$ 
satisfies the following regret bound
for any comparator $z^* \in \ZZ$,
\begin{align}
   \regret{z}(z^*) 
\leq 
\frac{1}{\gamma} \V{z_0}(z^*) 
+ \frac{\gamma}{2 \beta}   \sum_{t=1}^T \alpha_t^2 \| \delta_t -  m_t \|^2_*,
\end{align}
where $\delta_{t} \in \partial \ell_{t}(z_t)$.
\end{lemma}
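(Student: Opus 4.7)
The plan is to mimic the standard optimistic mirror descent analysis, which combines two applications of the three-point inequality from Lemma~\ref{lem:newMD} with convexity of the $\ell_t$'s. First, applying Lemma~\ref{lem:newMD} to the $z_{t+\frac12}$-update (with $\theta(z) = \gamma \alpha_t \langle z, \delta_t \rangle$ where $\delta_t \in \partial \ell_t(z_t)$, $c = z_{t-\frac12}$, $z^+ = z_{t+\frac12}$) against the comparator $z^*$ yields
\[
\gamma \alpha_t \langle z_{t+\frac12} - z^*, \delta_t \rangle \leq \V{z_{t-\frac12}}(z^*) - \V{z_{t+\frac12}}(z^*) - \V{z_{t-\frac12}}(z_{t+\frac12}).
\]
Applying the same lemma to the $z_t$-update (with $\theta(z) = \gamma \alpha_t \langle z, m_t \rangle$, $c = z_{t-\frac12}$, $z^+ = z_t$) against the comparator $z_{t+\frac12}$ gives
\[
\gamma \alpha_t \langle z_t - z_{t+\frac12}, m_t \rangle \leq \V{z_{t-\frac12}}(z_{t+\frac12}) - \V{z_t}(z_{t+\frac12}) - \V{z_{t-\frac12}}(z_t).
\]

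Next, by convexity of $\ell_t$, $\ell_t(z_t) - \ell_t(z^*) \leq \langle z_t - z^*, \delta_t \rangle$, and I split this inner product as
\[
\langle z_t - z^*, \delta_t \rangle = \langle z_{t+\frac12} - z^*, \delta_t \rangle + \langle z_t - z_{t+\frac12}, m_t \rangle + \langle z_t - z_{t+\frac12}, \delta_t - m_t \rangle.
\]
The first two summands are controlled by the two three-point inequalities above, while the ``error'' term is handled by Fenchel--Young / AM--GM:
\[
\gamma \alpha_t \langle z_t - z_{t+\frac12}, \delta_t - m_t \rangle \leq \frac{\gamma^2 \alpha_t^2}{2\beta} \|\delta_t - m_t\|_*^2 + \frac{\beta}{2} \|z_t - z_{t+\frac12}\|^2.
\]

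The crucial step is invoking $\beta$-strong convexity of $\phi$ to obtain $\V{z_t}(z_{t+\frac12}) \geq \tfrac{\beta}{2} \|z_t - z_{t+\frac12}\|^2$, which exactly absorbs the Fenchel--Young penalty. After this cancellation, the two $\V{z_{t-\frac12}}(z_{t+\frac12})$ contributions (one positive, one negative) also cancel, and the leftover $-\V{z_{t-\frac12}}(z_t)$ is non-positive and can be dropped. What survives is the per-round inequality
\[
\gamma \alpha_t \bigl(\ell_t(z_t) - \ell_t(z^*)\bigr) \leq \V{z_{t-\frac12}}(z^*) - \V{z_{t+\frac12}}(z^*) + \frac{\gamma^2 \alpha_t^2}{2\beta} \|\delta_t - m_t\|_*^2 ,
\]
which telescopes in $t$, leaving only the initial divergence $\V{z_{\frac12}}(z^*) = \V{z_0}(z^*)$ by the initialization convention $z_0 = z_{-\frac12}$. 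Dividing by $\gamma$ produces the claimed bound.

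The main obstacle is simply the bookkeeping: I need to pair the two three-point inequalities so that the single Fenchel--Young slack $\tfrac{\beta}{2}\|z_t-z_{t+\frac12}\|^2$ is dominated by exactly one copy of $\V{z_t}(z_{t+\frac12})$ arising from strong convexity, and so that the intermediate divergences $\V{z_{t-\frac12}}(z_{t+\frac12})$ cancel. This is precisely why the Fenchel--Young step is taken with the primal norm $\|\cdot\|$ in which $\phi$ is $\beta$-strongly convex and its dual $\|\cdot\|_*$ on the gradient gap $\delta_t - m_t$.
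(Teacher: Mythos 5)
Your proposal is correct and follows essentially the same route as the paper's proof in Appendix~B: the same two applications of the three-point inequality (Lemma~\ref{lem:newMD}) to the $z_t$- and $z_{t+\frac{1}{2}}$-updates, the same three-way split of $\langle \delta_t, z_t - z^*\rangle$, the same H\"older/AM--GM treatment of the $\delta_t - m_t$ term absorbed by the strong convexity of $\phi$, and the same cancellation and telescoping. No gaps.
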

\OPTMD were proposed by \cite{CJ12,RK13}, to our knowledge.
We replicate the proof of Lemma~\ref{lem:OPTMD} in Appendix~\ref{app:lem:MD} for completeness.

\subsubsection{Discussion about the decision space}

Before closing this section, we would like to discuss whether the decision space $\ZZ$ of the aforementioned online learning algorithms can indeed be unconstrained without incurring a potentially vacuous regret.
For those algorithms that are prescient (\BR,\BTL,\BTRL,\MD), their decision spaces can be unconstrained, i.e. $\ZZ = \reals^{d}$, while enjoying either a non-positive regret or a constant regret that is independent of the horizon $T$.
On the other hand, for those that are unable to see their loss functions before outputting an action, e.g, \FTL, \FTRL, or \OMD, their decision spaces might need to be a compact convex set to avoid a vacuous regret bound. Take Lemma~\ref{lem:OMD} of \OMD as an example, its regret bound consists of a sum of square gradient norms, and hence one might require the decision space $\ZZ$ to be a compact convex set instead of the unconstrained one $\reals^{d}$ to further bound the gradient norms.


\section{Recovery of existing algorithms}
\label{sec:ExistingAlgs}

What we are now able to establish, using the tools developed above, is that several iterative first order methods to minimize a convex function can be cast as simple instantiations of the Fenchel game no-regret dynamics. But more importantly, using this framework and the various regret bounds stated above, we able to establish a convergence rate for each via a unified analysis.

For every one of the optimization methods we explore below we provide the following:
\begin{enumerate}
  \item We state the update method described in its standard iterative form, alongside an equivalent formulation given as a no-regret dynamic. To provide the FGNRD form, we must specify the payoff function $g(\cdot,\cdot)$--typically the Fenchel game, with some variants---as well as the sequence of weights $\alpha_t$, and the no-regret algorithms $\alg^Y,\alg^X$ for the two players.
  \item We provide a proof of this equivalence, showing that the FGNRD formulation does indeed produce the same sequence of iterates as the iterative form; this is often deferred to the appendix.
  \item Leaning on Theorem~\ref{thm:meta}, we prove a convergence rate for the method.
\end{enumerate}

\subsection{Frank-Wolfe method and its variants}

The \emph{Frank-Wolfe method} (FW) \cite{frank1956algorithm}, also known as \emph{conditional gradient}, is known for solving constrained optimization problems. FW is entirely first-order, while requiring access to a linear optimization oracle.
Specifically,
given a compact and convex constraint set $\K \subset \reals^d$, FW relies on the ability to (quickly) answer queries of the form $\argmin_{x \in \K} \langle x, v\rangle$, for any vector $v \in \reals^d$.
In many cases this linear optimization problem is much faster for well-behaved constraint sets; e.g. simple convex polytopes, the PSD cone, and various balls defined by vector and matrix norms 
\cite{D16a,CP21}. 
When the constraint set is the nuclear norm ball, which arises in matrix completion problems,
then the linear optimization oracle corresponds to computing a top singular vector, which requires time roughly linear in the size of the matrix 
\cite{CP21}.

\begin{algorithm}[H] 
   \caption{ Frank-Wolfe \cite{frank1956algorithm} } \label{alg:fw}
Given: $L$-smooth $f(\cdot)$, convex domain $\K \subset \reals^d $, arbitrary $w_0$, iterations $T$.
\begin{center} 
\begin{tabular}{c c}
    $
      \boxed{\small
      \begin{array}{rl}
        \gamma_t & \gets \frac{2}{t+1}\\
        v_t  & \gets \displaystyle \argmin_{v \in \K} \langle v, \nabla f(w_{t-1})  \rangle \\
        w_{t} & \gets (1 - \gamma_t) w_{t-1} + \gamma_t v_t
      \end{array}
      }
    $
    & \small
    $\boxed{\small
    \begin{array}{rl}
      g(x,y) & := \langle x, y \rangle - f^*(y)\\
      \alpha_t & \gets t\\
      \alg^Y & := \FTL[\nabla f(w_0)] \\
      \alg^X & := \BR \\
    \end{array}
    }$
  \\
    \small Iterative Description & 
    \small FGNRD Equivalence
\end{tabular}
\end{center}
Output: $w_T = \bar x_T$
\end{algorithm}

We describe the Frank-Wolfe method precisely in Algorithm~\ref{alg:fw}, in both its iterative form and its FGNRD interpretation. We begin by showing that these two representations are equivalent.



\begin{proposition} \label{thm:equivFW}
  The two interpretations of Frank-Wolfe, as described in Algorithm~\ref{alg:fw}, are equivalent. That is, for every $t$, the iterate $w_t$ computed iteratively on the left hand side is identically the weighted-average point $\bar x_t$ produced by the dynamic on the right hand side.
\end{proposition}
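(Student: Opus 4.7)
The plan is to argue by induction on $t$ that the weighted average $\bar x_t$ produced by the FGNRD equals the Frank-Wolfe iterate $w_t$, carrying as a side induction hypothesis that the $y$-player's output satisfies $y_t = \nabla f(w_{t-1})$, so that the $x$-player's \BR immediately recovers the linear-minimization vertex $v_t$. Throughout I would adopt the convention $\bar x_0 := w_0$ so that the base case $t=1$ fits the same template as the inductive step.

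First I would unpack the two players' strategies in closed form. The $x$-player's \BR against the loss $h_t(x) = \langle x, y_t\rangle - f^*(y_t)$ returns $x_t = \argmin_{x\in\K}\langle x, y_t\rangle$. The $y$-player's \FTL, with weights $\alpha_s = s$ on losses $\ell_s(y) = f^*(y) - \langle x_s, y\rangle$ and seed $\nabla f(w_0)$, returns for $t\geq 2$
\[
   y_t \;=\; \argmin_{y\in\YY} \Bigl\{ A_{t-1}\, f^*(y) - \bigl\langle {\sum_{s=1}^{t-1}\alpha_s x_s},\, y \bigr\rangle \Bigr\} \;=\; \argmax_{y\in\YY}\bigl\{ \langle \bar x_{t-1}, y\rangle - f^*(y)\bigr\} \;=\; \nabla f(\bar x_{t-1}),
\]
where the last equality is the Fenchel conjugate identity recalled in Section~2, applicable because $f$ is closed, convex, and (under the $L$-smoothness hypothesis) differentiable. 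Combined with the inductive assumption $\bar x_{t-1} = w_{t-1}$, this yields $y_t = \nabla f(w_{t-1})$, and hence $x_t = v_t$.

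Next I would verify that the weighted averaging step reproduces Frank-Wolfe's convex combination. With $\alpha_t = t$ one has $A_t = t(t+1)/2$, so $\alpha_t/A_t = 2/(t+1) = \gamma_t$ and $A_{t-1}/A_t = 1 - \gamma_t$. The recursive form of the weighted average then reads
\[
   \bar x_t \;=\; \frac{A_{t-1}}{A_t}\,\bar x_{t-1} + \frac{\alpha_t}{A_t}\,x_t \;=\; (1-\gamma_t)\,\bar x_{t-1} + \gamma_t\, x_t \;=\; (1-\gamma_t)\,w_{t-1} + \gamma_t\, v_t \;=\; w_t,
\]
closing the induction. The base case $t=1$ is automatic from the \FTL seed: $y_1 = \nabla f(w_0)$ gives $x_1 = v_1$, and since $\gamma_1 = 1$ we obtain $\bar x_1 = x_1 = v_1 = w_1$.

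The only non-trivial point is the Fenchel conjugate identity used to collapse the \FTL minimization to $\nabla f(\bar x_{t-1})$; this step is where the whole equivalence hinges and it relies on $f$ being closed and convex so that $f^{**} = f$. The remainder is arithmetic bookkeeping on the weights $\alpha_t = t$ and on the definitions of \BR and \FTL, so I do not anticipate any substantive obstacle.
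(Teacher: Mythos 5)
Your proof is correct and follows essentially the same route as the paper's: an induction establishing $y_t=\nabla f(w_{t-1})$, $x_t=v_t$, and $\bar x_t=w_t$, using the Legendre-transform identity to collapse the \FTL update and the weight arithmetic $\alpha_t/A_t = 2/(t+1)$ to match the convex-combination step. No gaps.
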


\begin{proof}
We show, via induction, that the following three equalities are maintained for every $t$. Note that three objects on the left correspond to the iterative description given in Algorithm~\ref{alg:fw} whereas the three on the right correspond to the FGNRD description.
  \begin{eqnarray}
     \nabla f(w_{t-1})  & = & y_t \label{eq:ygradfw}\\
     v_t  & = & x_t \label{eq:yv}\\
     w_t & = &\bar{x}_t  \label{eq:xfw}.
  \end{eqnarray}
  To start, we observe that since the $\alg^Y$ is set as $\FTL[\nabla f(w_0)]$, we have the base case for \eqref{eq:ygradfw}, $y_1 = \nabla f(w_0)$, holds by definition. Furthermore, we observe that for any $t$ we have \eqref{eq:ygradfw} $\implies$ \eqref{eq:yv}. This is because, if $y_t = \nabla f(w_{t-1})$, the definition of \BR implies that 
  \begin{equation}\label{get_x}
    x_t =  \argmin_{x \in \XX} \alpha_t \left( \langle x, y_t \rangle - f^*(y_t) \right)
     = \argmin_{x \in \XX} \langle x, \nabla f(w_{t-1}) \rangle = v_t
  \end{equation}
  Next, we can show that \eqref{eq:yv} $\implies$ \eqref{eq:xfw} for any $t$ as well using induction. Assuming that $w_{t-1} = \frac{\sum_{s=1}^{t-1} \alpha_s x_s}{\sum_{s=1}^{t-1} \alpha_s} = \frac{\sum_{s=1}^{t-1} s v_s}{\sum_{s=1}^{t-1} s}$, a bit of algebra verifies
  \begin{eqnarray*}
    w_t & := & (1 - \gamma_t) w_{t-1} + \gamma_t v_t  = \left(\frac{t-1}{t+1}\right)\frac{\sum_{s=1}^{t-1} s v_s}{\sum_{s=1}^{t-1} s} + \left(\frac{2}{t+1}\right) v_t \\
    & = & \frac{\sum_{s=1}^{t} s v_s}{\sum_{s=1}^{t} s} = \frac{\sum_{s=1}^{t} \alpha_s x_s}{\sum_{s=1}^{t} \alpha_s} =: \bar x_t
  \end{eqnarray*}
  Finally, we show that \eqref{eq:ygradfw} holds for $t > 1$ via induction.
  Recall that $y_t$ is selected via \FTL against the sequence of loss functions $\alpha_t \ell_t(\cdot) := - \alpha_t g(x_t, \cdot)$
   Precisely this means that, for $t > 1$, 
  \begin{eqnarray*} 
    y_t & := & \displaystyle \argmin_{y \in \YY} \left\{ \frac{ 1}{ A_{t-1} } \sum_{s=1}^{t-1} \alpha_s \ell_s(y) \right\} \label{h}\\
   &  = & \argmin_{y \in \YY} \left\{ \frac {1}{ A_{t-1} } \sum_{s=1}^{t-1} (\alpha_s \langle -x_s, y\rangle + f^*(y) ) \right\} \label{g}\\ 
    & = & \argmax_{y \in \YY} \left\{ \langle \bar x_{t-1}, y \rangle - f^*(y)  \right\} 
     =  \nabla f ({\bar x_{t-1}}). \label{get_y}
  \end{eqnarray*}  
The final line follows as a result of the Legendre transform \cite{B04}. Finally, by induction, we have ${\bar x_{t-1}} = w_{t-1}$, and hence we have established \eqref{eq:ygradfw}.
This completes the proof.

\end{proof}

Now that we have established Frank-Wolfe as an instance of Protocol~\ref{alg:game}, we can now prove a bound on convergence using the tools established in Section~\ref{sec:onlinelearning}.

\begin{theorem}\label{thm:fwconvergence}
  Let $w_T$ be the output of Algorithm~\ref{alg:fw}. Let $f$ be $L$-smooth and let $\K$ have square $\ell_2$ diameter no more than $D$. Then we have
  \[
    f(w_T) - \min_{w \in \K} f(w) \leq \frac{8LD}{T+1}.
  \]
\end{theorem}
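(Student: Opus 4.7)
The plan is to combine the equivalence of Frank-Wolfe with the FGNRD form (Proposition~\ref{thm:equivFW}) together with the meta-theorem (Theorem~\ref{thm:meta}), and then plug in the regret bounds for the two players' strategies (\BR for the $x$-player and \FTL for the $y$-player). By Proposition~\ref{thm:equivFW}, $w_T = \bar x_T$, so Theorem~\ref{thm:meta} yields $f(w_T) - \min_{x \in \K} f(x) \leq \avgregret{x} + \avgregret{y}$, and it suffices to bound each of the two weighted-average regrets when $\alpha_t = t$.

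For the $x$-player, I would appeal directly to Lemma~\ref{lem:BRregret} for \BR, which immediately gives $\avgregret{x} \leq 0$, so that term disappears. For the $y$-player, the loss functions are $\ell_t(y) = f^*(y) - \langle x_t, y \rangle$. Here is the key structural observation: since $f$ is $L$-smooth, its conjugate $f^*$ is $(1/L)$-strongly convex (by the fact quoted after the Bregman divergence definition), so each $\ell_t$ is $(1/L)$-strongly convex. Applying Lemma~\ref{regret:FTL} with $\mu = 1/L$ and $\alpha_t = t$, and using $\sum_{s=1}^t s = t(t+1)/2$, gives
\begin{equation*}
\regret{y} \leq \sum_{t=1}^T \frac{2 t^2}{(1/L) \cdot t(t+1)/2} \|\delta_t\|_2^2 = \sum_{t=1}^T \frac{4Lt}{t+1}\|\delta_t\|_2^2 \leq 4L \sum_{t=1}^T \|\delta_t\|_2^2,
\end{equation*}
where $\delta_t \in \partial \ell_t(y_t)$.

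The remaining step is to bound $\|\delta_t\|_2^2$ by $D$. Since $\delta_t = \nabla f^*(y_t) - x_t$ and the proof of Proposition~\ref{thm:equivFW} identifies $y_t = \nabla f(\bar x_{t-1})$, the Legendre duality gives $\nabla f^*(y_t) = \bar x_{t-1} \in \K$; combined with $x_t \in \K$, we have $\|\delta_t\|_2^2 = \|\bar x_{t-1} - x_t\|_2^2 \leq D$. Hence $\regret{y} \leq 4LDT$, and dividing by $A_T = T(T+1)/2$ yields $\avgregret{y} \leq \frac{8LD}{T+1}$, which closes the argument.

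The whole proof is essentially a two-line calculation once the right regret lemmas are invoked; the only mild subtlety — and the spot where a reader might stumble — is the identification of $\delta_t$ as the difference $\bar x_{t-1} - x_t$ of two points in $\K$, which requires both the Legendre transform identity $\nabla f^*(\nabla f(w)) = w$ (valid for strictly convex smooth $f$) and the bookkeeping $y_t = \nabla f(w_{t-1}) = \nabla f(\bar x_{t-1})$ already established in the equivalence proof. Once this is in hand, the diameter assumption converts the $\|\delta_t\|_2^2$ terms into the constant $D$ and the claimed $O(LD/T)$ rate drops out immediately.
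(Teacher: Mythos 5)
Your proposal is correct and follows essentially the same route as the paper's proof: decompose via Proposition~\ref{thm:equivFW} and Theorem~\ref{thm:meta}, kill the $x$-player's regret with Lemma~\ref{lem:BRregret}, and bound the $y$-player's \FTL regret using the $(1/L)$-strong convexity of $f^*$ together with the identification $\delta_t = x_t - \bar x_{t-1}$ and the diameter bound. The arithmetic matches the paper's line for line, so there is nothing to add.
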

\begin{proof}
By Proposition~\ref{thm:equivFW} and Theorem~\ref{thm:meta}, 
we obtain 
  \[
    f(w_T) - \min_{w \in \K} f(w) \leq \avgregret{x}[\BR] + \avgregret{y}[\FTL].
  \]
  Recall that we have $\avgregret{x}[\BR] \leq 0$ by Lemma~\ref{lem:BRregret}. Let us then turn our attention to the regret of $\alg^{Y}$.

  First note that, since $f(\cdot)$ is $L$-smooth, its conjugate $f^*(\cdot)$ is $\frac 1 L$-strongly convex, and thus the function $-g(x,\cdot)$ is also $\frac 1 L$-strongly convex in its second argument. Next, if we define $\ell_t(\cdot) := -g(x_t, \cdot)$, then we can bound the norm of the gradient as
  \[
    \| \nabla \ell_t(y_t) \|^2 = \| x_t - \nabla f^*(y_t) \|^2 = \| x_t - \bar{x}_{t-1} \|^2 \leq D.
  \]
  Combining with Lemma~\ref{regret:FTL} we see that
  \begin{align*} \label{eq:FWy2}
    \avgregret{y}[\FTL] 
      & \leq \frac{1}{A_T}  \sum_{t=1}^T \frac{2 \alpha_t^2 \| \nabla \ell_t(y_t) \|^2}{\sum_{s=1}^{t} \alpha_s (1/L)} 
   = \frac{8L}{T(T+1)}
  \sum_{t=1}^T \frac{t^2 D}{t(t+1)} \leq \frac{8LD}{T+1}.
  \end{align*}
This completes the proof.

\end{proof}

\subsubsection{Variant 1: a linear rate Frank-Wolfe over strongly convex set}

\cite{LP66,DR70,D79} show that Frank-Wolfe for smooth convex function (\emph{not necessarily a strongly convex function}) for strongly convex sets has linear rate under certain conditions.
Algorithm~\ref{alg:Ada-fw} shows that a similar result can be derived from the game framework, in which the $y$-player uses a variant of \FTL called \textsc{AFTL} (defined in Algorithm~\ref{alg:SC-AFTL} of Appendix~\ref{app:linearFW}).


\begin{algorithm}[H] 
   \caption{ Adaptive Frank-Wolfe } \label{alg:Ada-fw}
Given: $L$-smooth convex $f(\cdot)$, arbitrary $\bar{x}_0=x_0=w_0 \in \K \subset \reals^d  $, iterations $T$, $A_0=0$.
\begin{center} 
\begin{tabular}{c c}
    $
      \boxed{\small
      \begin{array}{rl}
        x_t  & \gets \argmin_{x \in \K} \langle x, \nabla f(w_{t-1})  \rangle\\
        \alpha_t &  \gets \frac{1}{ \| x_t - \bar{x}_{t-1} \| }\\
        A_t  & \gets A_{t-1} + \alpha_t \\
        w_t  & \gets \frac{1}{A_t} \sum_{s=1}^{t} \alpha_s x_s \\
      \end{array}
      }
    $
    & \small
    $\boxed{\small
    \begin{array}{rl}
      g(x,y) & := \langle x, y \rangle - f^*(y)\\
      \alg^Y & := \text{AFTL}[\nabla f(x_0)] \\
      \alg^X & := \BR \\
    \end{array}
    }$
  \\
    \small Iterative Description & 
    \small FGNRD Equivalence
\end{tabular}
\end{center}
Output: $w_T = \bar x_T$
\end{algorithm}

\begin{theorem}\label{thm:linearFW} 
Suppose that $f(\cdot)$ is $L$-smooth convex.
and that $\K$ is a $\lambda$-strongly convex set. Also assume that the gradients of the $f(\cdot)$ in $\K$ are bounded away from $0$, i.e., $\max_{w \in \K}\|\nabla f(w)\|\geq B$.
Then, Algorithm~\ref{alg:Ada-fw} has
\[
f(w_T) - \min_{w \in \K} f(w) = O\left( \exp\left(-\frac{\lambda B }{L} T\right)  \right).
\]
\end{theorem}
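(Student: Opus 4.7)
The plan is to mirror the strategy used for the $O(1/T)$ rate of vanilla Frank--Wolfe in Theorem~\ref{thm:fwconvergence}, invoking the meta-theorem to reduce convergence to regret bounds, but now leveraging the strong convexity of $\K$ together with the adaptive weights $\alpha_t=1/\|x_t-\bar x_{t-1}\|$ to amplify the $y$-player's regret into a linear rate.

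\textbf{Step 1 (Reduction to regret).} By Proposition~\ref{thm:equivFW}-style equivalence (which I would establish for \textsc{AFTL} analogously, using that the weighted average $\bar{x}_t$ coincides with $w_t$), Theorem~\ref{thm:meta} gives
\[
 f(w_T)-\min_{w\in\K}f(w)\;\leq\;\avgregret{x}+\avgregret{y}.
\]
Since $\alg^X=\BR$, Lemma~\ref{lem:BRregret} gives $\avgregret{x}\leq 0$. So I only need to control $\avgregret{y}$.

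\textbf{Step 2 (Regret of \textsc{AFTL}).} Since $f$ is $L$-smooth, $f^*$ is $\tfrac{1}{L}$-strongly convex, so each $\ell_t(y)=f^*(y)-\langle x_t,y\rangle$ is $\mu=\tfrac{1}{L}$-strongly convex. Applying the FTL-style bound Lemma~\ref{regret:FTL} (the paper's appendix version for \textsc{AFTL} should deliver the same shape), with gradient $\delta_t = x_t-\nabla f^*(y_t)=x_t-\bar{x}_{t-1}$, yields
\[
\regret{y}\;\leq\; \sum_{t=1}^{T}\frac{2L\,\alpha_t^{2}\|x_t-\bar{x}_{t-1}\|^{2}}{A_t}\;=\;2L\sum_{t=1}^{T}\frac{1}{A_t},
\]
where the crucial cancellation $\alpha_t^{2}\|x_t-\bar{x}_{t-1}\|^{2}=1$ comes precisely from the adaptive choice $\alpha_t=1/\|x_t-\bar{x}_{t-1}\|$. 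Hence $\avgregret{y}\leq \tfrac{2L}{A_T}\sum_{t\leq T}\tfrac{1}{A_t}$.

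\textbf{Step 3 (Strong convexity of $\K$ $\Rightarrow$ exponential growth of $A_T$).} The standard scaling inequality for a $\lambda$-strongly convex set asserts that, for the linear-minimization point $x_t\in\argmin_{x\in\K}\langle x,\nabla f(\bar{x}_{t-1})\rangle$,
\[
 \langle \nabla f(\bar{x}_{t-1}),\,\bar{x}_{t-1}-x_t\rangle \;\geq\; \tfrac{\lambda}{2}\,\|\nabla f(\bar{x}_{t-1})\|\,\|x_t-\bar{x}_{t-1}\|^{2}\;\geq\;\tfrac{\lambda B}{2}\|x_t-\bar{x}_{t-1}\|^{2}.
\]
Combined with convexity ($\text{FW-gap}_t\geq \epsilon_{t-1}:=f(\bar{x}_{t-1})-f(w^*)$) this lets me convert the $y$-regret tail into a multiplicative decrease of $\epsilon_t$. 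Concretely, using smoothness on $w_t=(1-\gamma_t)w_{t-1}+\gamma_t x_t$ with $\gamma_t=\alpha_t/A_t$,
\[
\epsilon_t \;\leq\; \epsilon_{t-1} - \gamma_t\,\text{FW-gap}_t + \tfrac{L\gamma_t^{2}}{2}\|x_t-\bar{x}_{t-1}\|^{2},
\]
and the geometric inequality lets me replace $\|x_t-\bar{x}_{t-1}\|^{2}$ by $\frac{2\,\text{FW-gap}_t}{\lambda B}$, so that both terms on the right scale with $\text{FW-gap}_t$ and a constant factor of $\lambda B/L$ can be extracted. Iterating gives $\epsilon_t\leq(1-c)\epsilon_{t-1}$ with $c=\Theta(\lambda B/L)$, hence $\epsilon_T=O(\exp(-c T))$.

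\textbf{Main obstacle.} The non-trivial part is Step~3: the adaptive weight $\alpha_t$ is not a clean $\Theta(\lambda B/L)$ step size in the traditional Frank--Wolfe sense, so I cannot directly optimize $\gamma_t$. Instead I must either (a) show that the self-consistent recursion on $A_t$ produces $A_t\geq c'(1+c)^t$ (so that $\sum 1/A_t$ is a geometric series and $\avgregret{y}=O(\exp(-cT))$), or (b) bypass $A_T$ entirely and derive the per-step contraction $\epsilon_t\leq(1-c)\epsilon_{t-1}$ directly from the smoothness inequality above. Option (b) looks cleaner, and reduces the problem to verifying that the particular $\gamma_t$ induced by $\alpha_t=1/\|x_t-\bar{x}_{t-1}\|$ still yields a contraction of magnitude $\Omega(\lambda B/L)$; care is needed in the early iterations where $\gamma_t$ may not lie in the regime where the simple one-shot minimization argument works, and I expect to handle that by a warm-up lemma asserting $\gamma_t\leq 1$ (and hence $w_t\in\K$) throughout.
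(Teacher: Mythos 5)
Your overall skeleton is right and matches the paper's architecture: reduce to $\avgregret{x}+\avgregret{y}$ via Theorem~\ref{thm:meta}, kill the $x$-player's regret with Lemma~\ref{lem:BRregret}, and argue that the adaptive weights together with the strong convexity of $\K$ make $A_T$ grow exponentially so that $\avgregret{y}$ decays like $e^{-\Theta(\lambda B T/L)}$. But the proof has a genuine gap exactly where you flag the ``main obstacle'': you never establish the exponential growth of $A_T$, and neither of your two proposed routes closes it. Route (b) is essentially circular: the contraction $\epsilon_t\leq(1-c)\epsilon_{t-1}$ with $c=\Theta(\lambda B/L)$ requires a lower bound $\gamma_t=\alpha_t/A_t=\Omega(\lambda B/L)$, but $\gamma_t\to 0$ is precisely what happens when $A_t$ grows, so bounding $\gamma_t$ below by a constant is equivalent to (in fact stronger than) the statement you are trying to prove; the classical Levitin--Polyak argument gets its contraction from line search or from the minimizing step size, neither of which this algorithm uses. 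Route (a) is what the paper actually does, but you have not supplied the mechanism that makes it work.

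That mechanism is a dual argument you are missing. The paper introduces $s(y):=\max_{x\in\K}\left(-\langle x,y\rangle\right)+f^*(y)$ and shows two things: (i) by Proposition~\ref{sameGrad}, $\nabla\ell_t(y_t)=x_t-\bar x_{t-1}$ is a subgradient of $s$ at $y_t$; and (ii) by Lemma~\ref{lm:lip} (Lipschitzness of the linear-optimization argmax over a $\lambda$-strongly convex set) together with the assumption $\|\nabla f\|\geq B$ on the gradient space, $s$ is $L'$-smooth with $L'=\tfrac{1}{\lambda B}$. The self-bounding inequality $\|\nabla s(y_t)\|^2\leq 2L'\left(s(y_t)-s(y^*)\right)\leq 2L'\left(\ell_t(y_t)-\ell_t(y^*)\right)$ then turns the trivial identity $T=\sum_{t=1}^T\alpha_t\|\nabla\ell_t(y_t)\|^2$ (with the paper's weights $\alpha_t=\|\nabla\ell_t(y_t)\|^{-2}$) into $T\leq L'\cdot\regret{y}$, and the FTL bound of Lemma~\ref{regret:FTL} plus the log-sum Lemma~\ref{lem:Log_sum} give $\regret{y}=O\!\left(L\log A_T\right)$; combining forces $A_T=\Omega\!\left(e^{\lambda BT/L}\right)$, after which $\avgregret{y}=O\!\left(L\log(A_T)/A_T\right)$ is exponentially small. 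Without steps (i)--(ii) and the resulting lower bound on $A_T$, your Step 2 bound $\regret{y}\leq 2L\sum_t 1/A_t$ (or, with the squared weights, $2L\sum_t\alpha_t/A_t$) cannot be converted into a linear rate. A secondary point: your primal inequality $\langle\nabla f(\bar x_{t-1}),\bar x_{t-1}-x_t\rangle\geq\tfrac{\lambda}{4}\|\nabla f(\bar x_{t-1})\|\,\|x_t-\bar x_{t-1}\|^2$ is a correct consequence of set strong convexity, but the paper never uses it; the strong convexity of $\K$ enters only through the dual smoothness of $s$.
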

Note that the weights $\alpha_t$ are not predefined but rather depend on the queries of the algorithm. The proof of Theorem~\ref{thm:linearFW} is described in full detail in Appendix~\ref{app:linearFW}. 

\subsubsection{Variant 2: an incremental Frank-Wolfe}

Recently, \cite{Netal20} and \cite{LF20} propose stochastic Frank-Wolfe algorithms
for optimizing smooth convex finite-sum functions, i.e. $\min_{x \in \K} f(x) := \frac{1}{n}\sum_{i=1}^n f_i(x)$,
where each $f_i(x) := \phi( x^\top z_i)$ represents a loss function $\phi(\cdot)$ associated with sample $z_i$. 
In each iteration the algorithms only require a gradient computation of a single component. 
\cite{Netal20} show that their algorithm has $O(\frac{ c_{\kappa}}{T})$ expected convergence rate, where $c_{\kappa}$ is a number that depends on the underlying data matrix $z$ and in worst case is bounded by the number of components $n$. 
Algorithm~\ref{alg:newStoFW}
shows a similar algorithm which picks a sample in each iteration by cycling through the data points. It is obtained when the $y$-player uses a variant of \FTL called 
\textsc{LazyFTL} defined in Appendix~\ref{app:equivStoFW}. 
We have the following theorem 
that shows an $\tilde{O}(\frac{n}{T})$ \emph{deterministic} convergence rate.
Its proof is in Appendix~\ref{app:equivStoFW}.

\begin{algorithm}[H] 
   \caption{ Incremental Frank-Wolfe } \label{alg:newStoFW}
Given: $L$-smooth $f(\cdot)$, convex domain $\K$, arbitrary $w_0 \in \K \subset \reals^d $, iterations $T$. \\ 
Init: For each sample $i$, compute $g_{i,0} := \frac{1}{n} \nabla f_{i}( w_0 ) \in \reals^d$ so that $\nabla f(w_0) = \sum_{i=1} g_{i,0}$.
\begin{center} 
\begin{tabular}{c c}
    $
      \boxed{\small
      \begin{array}{rl}
 \text{Select} & i_t \text{ via by cycling through } [n]\\
g_{i_t,t} &\gets \frac{1}{n} \nabla f_{{i_t}}( w_t ) \text{ and } \\
g_{j,t} & \gets  g_{j,t-1}, \text{ for } j \neq i_t \\
g_t & \gets \sum_{i=1}^n g_{i,t}\\
v_t &\gets \argmin_{x \in \K} \langle x ,  g_t \rangle\\
w_{t} & \gets (1 - \gamma_t) w_{t-1} + \gamma_t v_t
      \end{array}
      }
    $
    & \small
    $\boxed{\small
    \begin{array}{rl}
      g(x,y) & := \langle x, y \rangle - f^*(y)\\
      \alpha_t & \gets 1\\
      \alg^Y & := \textsc{LazyFTL}[\nabla f(w_0)] \\
      \alg^X & := \BR \\
    \end{array}
    }$
  \\
    \small Iterative Description & 
    \small FGNRD Equivalence
\end{tabular}
\end{center}
Output: $w_T = \bar x_T$
\end{algorithm}

\begin{theorem} \label{thm:equivStoFW}
   Assume that $f(\cdot)$ is $L$-smooth convex
and that its conjugate is $L_0$-Lipschitz.
    Algorithm~\ref{alg:newStoFW} has
    \[
f(w_T) - \min_{w \in \K} f(w) = O\left( \frac{ \max\{ L R, L(L_0+r) n r \}  \log T }{T} \right),
    \]
  where $r$ is a bound of the length of any point $x$ in the constraint set $\K$, i.e. $\max_{x \in \K} \| x \| \leq r$, and $R$ is the square of the diameter of $\K$.
\end{theorem}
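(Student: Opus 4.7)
The plan is to follow the same three-step template used for the vanilla Frank-Wolfe analysis in Theorem~\ref{thm:fwconvergence}: first invoke Theorem~\ref{thm:meta} to reduce convergence to a pair of weighted regret bounds, next dispose of the $x$-player's regret, and finally put the real work into analyzing the $y$-player, who here uses \textsc{LazyFTL} instead of \FTL. Applying Theorem~\ref{thm:meta} with $\alpha_t=1$, the payoff $g(x,y)=\langle x,y\rangle - f^*(y)$, and the identification $f(\bar x_T)=\sup_{y\in\YY}g(\bar x_T,y)$ gives
\[
 f(w_T)-\min_{w\in\K}f(w)\;\le\;\avgregret{x}[\BR]+\avgregret{y}[\textsc{LazyFTL}].
\]
Because the $x$-player plays \BR, Lemma~\ref{lem:BRregret} yields $\avgregret{x}[\BR]\le 0$ at no cost.

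The substance of the proof is to bound $\avgregret{y}[\textsc{LazyFTL}]$. The $y$-player's loss is $\ell_t(y):=f^*(y)-\langle x_t,y\rangle$, which by the Fenchel duality between smoothness and strong convexity is $(1/L)$-strongly convex. If the $y$-player were running plain \FTL it would play $\hat y_t=\nabla f(\bar x_{t-1})$, and Lemma~\ref{regret:FTL} would give a regret of order $L\log T$ once one bounds $\|\nabla \ell_t(\hat y_t)\|^2=\|x_t-\bar x_{t-1}\|^2\le R$. What \textsc{LazyFTL} actually plays is $y_t=g_t=\sum_{i=1}^n g_{i,t}$, where each stored $g_{i,t}=\tfrac1n \nabla f_i(w_{\tau_i(t)})$ was computed at some round $\tau_i(t)\ge t-n$ due to the cyclic schedule through $[n]$. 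I would therefore decompose
\[
 \regret{y}[\textsc{LazyFTL}]
 \;\le\; \underbrace{\sum_{t=1}^T \alpha_t\bigl(\ell_t(y_t)-\ell_t(\hat y_t)\bigr)}_{\text{staleness}}
 \;+\; \underbrace{\sum_{t=1}^T \alpha_t\bigl(\ell_t(\hat y_t)-\ell_t(y^*)\bigr)}_{\text{ideal-FTL regret}},
\]
and handle the two pieces separately; the second is exactly the \FTL analysis yielding the $O(LR\log T)$ contribution in the $\max$.

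For the staleness term, the key is that $\ell_t(\cdot)$ is Lipschitz on the relevant region: since $f^*$ is $L_0$-Lipschitz and $\|x_t\|\le r$, we have $|\ell_t(y)-\ell_t(\hat y_t)|\le (L_0+r)\|y-\hat y_t\|$. It remains to estimate $\|y_t-\hat y_t\|=\|g_t-\nabla f(\bar x_{t-1})\|$. Since the iterate $w_t$ is a uniform average (weights $\alpha_t=1$), successive iterates differ by $\|w_t-w_{t-1}\|=O(r/t)$, so any sample last refreshed at time $\tau\ge t-n$ contributes an error $\tfrac1n \|\nabla f_{i_t}(w_t)-\nabla f_{i_t}(w_{\tau})\|$, which after summing the $n$ components and using $L$-smoothness of the total $f$ is controlled by $O(Lnr/t)$. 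Combining with the Lipschitz bound and summing yields a staleness contribution of order $L(L_0+r)nr\log T$. Dividing through by $A_T=T$ and folding both pieces into the $\max$ gives the claimed $O((\max\{LR,L(L_0+r)nr\}\log T)/T)$ rate.

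The main obstacle will be the staleness estimate, because the component functions $f_i$ need not be individually smooth with a useful constant; only the sum $f$ is $L$-smooth. I therefore expect to avoid individual per-component smoothness and instead control $\sum_i \|\nabla f_i(w_t)-\nabla f_i(w_{\tau_i})\|$ by carefully pairing the telescoping movement $\sum_{s=\tau_i}^{t-1}(w_{s+1}-w_s)$ with the aggregate smoothness of $f$, much as in the Appendix~\ref{app:equivStoFW} analysis of \textsc{LazyFTL}. The $\log T$ factor then arises by summing $1/t$ across rounds, exactly as in the $(1/L)$-strongly convex \FTL bound.
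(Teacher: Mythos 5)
Your proposal is correct and follows essentially the same route as the paper's proof in Appendix~\ref{app:equivStoFW}: the identical decomposition of the \textsc{LazyFTL} regret into the ideal-\FTL\ regret (bounded via Lemma~\ref{regret:FTL} using the $1/L$-strong convexity of $\ell_t$ and $\|x_t-\bar x_{t-1}\|^2\le R$) plus a staleness term controlled by the $(L_0+r)$-Lipschitzness of $\ell_t$, the smoothness of the components, and the $O(nr/t)$ drift $\|\bar x_{\tau_t(i)}-\bar x_t\|\le 2nr/t$ of the uniformly averaged iterates. Your closing remark about per-component versus aggregate smoothness flags a point the paper's step (c) treats somewhat loosely, but it does not change the argument or the final bound.
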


\subsubsection{More related works}

There has been growing interest in Frank-Wolfe in recent years, e.g.
\cite{J13,K08}.
For the constraint set that belongs to a certain class of convex polytopes, there
are FW-like algorithms achieving linear convergence rate, see e.g. \cite{D16a,D16b,W70,S15,FG16,BPTW19}. 
There are also works that study Frank-Wolfe on various aspects, e.g.
minimizing some structural norms \cite{H13},
reducing the number of gradient evaluations \cite{LZ16}.
Bach \cite{B15} shows that for certain types of objectives, subgradient descent applied to the primal domain is equivalent to FW applied to the dual domain. 

\subsection{Vanilla gradient descent, with averaging} \label{thm:GD}

In Algorithm~\ref{alg:GD} we describe a variant of gradient descent. It outputs a weighted average of iterates of the gradient descent steps.
The algorithm is obtained by swapping the ordering of the players. The $x$-player plays first according to $\OMD$ and then the $y$-player plays $\BR$.
Theorem~\ref{thm:GD} below shows that the algorithm has a guarantee even for non-smooth convex functions.
It is noted that Algorithm~\ref{alg:GD} with the convergence rate guarantee shown in Theorem~\ref{thm:GD} can also be obtained from the classical online-to-batch conversion (e.g., \cite{cesa2004generalization} or Theorem 9.5 in \cite{hazan2016introduction}). Nevertheless, we provide an analysis based on our framework.

\begin{algorithm}[h] 
   \caption{ Vanilla gradient descent, with averaging  } \label{alg:GD}
Given: Convex $f(\cdot)$ and iterations $T$.\\
Init: $w_{0} = x_{0} \in \K \subseteq \reals^d$.
Set:  $\eta = \gamma = \begin{cases} \frac{R}{G \sqrt{T} },  &  \text{ if } f(\cdot) \text{ is non-smooth,}  \\ \frac{1}{2L},  & \text{ if } f(\cdot) \text{ is $L$-smooth}.  \end{cases}$
\begin{center} 
\begin{tabular}{c c}
    $
      \boxed{\small
      \begin{array}{rl} 
        w_t & \gets w_{t-1} - \eta \delta_{t-1} , \text{where } \delta_{t-1} \in \partial f(w_{t-1}) \\
        \bar w_t & \gets \frac 1 t \sum_{s=1}^t w_s
      \end{array}
      }
    $
    & \small
    $\boxed{\small
    \begin{array}{rl}
      g(x,y) & := \langle x, y \rangle - f^*(y)\\
      \alpha_t & := 1\text{ for } t=1, \ldots, T\\
      \alg^X & := \OMD[\frac 1 2 \| \cdot \|^2_2, x_0, \gamma ] \\ 
      \alg^Y & := \BR
    \end{array}
    }$
  \\
    \small Iterative Description & 
    \small FGNRD Equivalence
\end{tabular}
\end{center}
Output: $\bar w_T = \bar x_T$
\end{algorithm}

\begin{theorem} \label{thm:GD}
Denote the constant $G = \max_{y \in \partial f(w), w \in \K} \| y \|^{2}$, and let $R$ be an upper bound on $\|w_0-w^*\|$, where $w^{*}:= \argmin_{w \in \K} f(w)$.
The output $\bar w_T = \bar{x}_{T}$ of Algorithm~\ref{alg:GD} satisfies 
\[
  f(\bar w_T) - \min_{w \in \K} f(w) =
  O\left( \frac{ GR  }{\sqrt{T}} \right).
\]
\end{theorem}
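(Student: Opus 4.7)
The plan is to reduce the analysis to the master bound of Theorem~\ref{thm:meta} and then plug in the regret guarantees of the two online learners, which is exactly the template the paper advocates. First, however, I would verify that the iterative description on the left of Algorithm~\ref{alg:GD} really coincides with the FGNRD description on the right, so that $\bar{w}_T = \bar{x}_T$ as asserted. For this, recall that in Protocol~\ref{alg:game} the $x$-player plays first and $\alg^X = \OMD[\tfrac{1}{2}\|\cdot\|_2^2,x_0,\gamma]$, with $\alpha_0 := 0$ and $\ell_0 := 0$, so $x_1 = x_0$. The $y$-player then answers with $\BR$, which by the conjugate characterization yields $y_t \in \partial f(x_t)$. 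The $\OMD$ update in Algorithm~\ref{alg:oco_update_family} with Euclidean distance becomes $x_{t+1} = x_t - \gamma \, y_t$, so the sequence $\{x_t\}$ is exactly a gradient descent trajectory with step size $\gamma$, matching the $\{w_t\}$ in the iterative description (modulo the standard index shift that the averaging absorbs).

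Next, I would apply Theorem~\ref{thm:meta} to conclude
\[
  f(\bar{x}_T) - \min_{w\in\K} f(w) \;\leq\; \avgregret{x} + \avgregret{y}.
\]
The $y$-term is handled immediately by Lemma~\ref{lem:BRregret}: since $\alg^Y = \BR$ the weighted regret is non-positive, so $\avgregret{y} \leq 0$. The $x$-term is controlled by Lemma~\ref{lem:OMD} applied with $\phi(\cdot) = \tfrac{1}{2}\|\cdot\|_2^2$ (so $\beta = 1$), weights $\alpha_t = 1$, and loss functions $h_t(x) = \langle x, y_t\rangle - f^*(y_t)$, for which a subgradient at $x_t$ is simply $y_t$. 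Because $y_t \in \partial f(x_t)$ and $G$ is the uniform squared-norm bound on subgradients of $f$ over $\K$, we obtain $\|\delta_t\|_*^2 \leq G$. With $\V{x_1}(w^*) = \tfrac{1}{2}\|x_0 - w^*\|^2 \leq R^2/2$, Lemma~\ref{lem:OMD} yields
\[
  \regret{x} \;\leq\; \frac{R^2}{2\gamma} + \frac{\gamma}{2}\sum_{t=1}^T G \;=\; \frac{R^2}{2\gamma} + \frac{\gamma T G}{2}.
\]

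Finally, dividing by $A_T = T$ and plugging in $\gamma = R/(G\sqrt{T})$ (or equivalently the optimal $\gamma \asymp R/\sqrt{GT}$) balances the two terms and gives $\avgregret{x} = O(GR/\sqrt{T})$. Combined with $\avgregret{y} \leq 0$, Theorem~\ref{thm:meta} delivers the claimed rate. The routine calculations (smoothing over the index shift in the averaging and verifying the exact constant) are the bookkeeping pieces; the only conceptual step that needs care is the equivalence between the iterative gradient step and the Euclidean $\OMD$ update, because the players in the FGNRD protocol do the first round with the dummy loss $\ell_0 = 0$, which is what I expect to be the minor obstacle to state cleanly.
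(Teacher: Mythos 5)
Your proposal is correct and follows essentially the same route as the paper's own proof: establish the equivalence of the iterative gradient step with the Euclidean \OMD update, invoke Theorem~\ref{thm:meta}, use Lemma~\ref{lem:BRregret} to kill the $y$-player's regret, and bound the $x$-player's regret via Lemma~\ref{lem:OMD} with the stated choice of $\gamma$. The only differences are cosmetic (you carry the factor $\tfrac{1}{2}$ in the Bregman term explicitly, which is if anything slightly sharper), so nothing further is needed.
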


\begin{proof}
The equivalence $w_{t} = x_{t}$ between the two displays can be easily shown by noting that
the update of \OMD is $x_{{t}} = x_{{t-1}} - \alpha_t \gamma y_{t-1}$,
where $\gamma = \frac{R}{G\sqrt T}$ and $\alpha_{t}=1$,
and the update of $\BR$
is $y_{t} = \argmin_{{y \in \YY}} f^{*}(y) - \langle x_t, y \rangle \in \partial f(x_t)$.

By Lemma~\ref{lem:OMD}, the regret of the $x$-player is,
$   \regret{x} 
 \leq 
\frac{1}{\gamma} \V{x_0}(x^*) 
+ \frac{\gamma}{2}   \sum_{t=1}^T \| \alpha_t y_t \|^2_2.
$
On the other hand, by Lemma~\ref{lem:BRregret}, the regret of the $y$-player is $0$.
By setting $\alpha_{t} = 1$, adding average regrets of both players, and plugging in $\gamma = \frac{R}{G\sqrt T}$,
\begin{align}
 \avgregret{x}[\OMD] + \avgregret{y}[\BR]
& \leq \frac{1}{A_t}
\left (  
\frac{1}{\gamma} \V{x_0}(x^*)
+ \sum_{t=1}^T \frac{\gamma}{2}  \| \alpha_t y_t \|^2
\right ) \label{ygb}
\\ & \leq 
\frac 1 T \left(\frac{R^2}{\gamma} + \frac{\gamma TG^2}{2} \right) \notag
\\ &=
O\left( \frac{ GR  }{\sqrt{T}} \right).  \notag
\end{align}
\end{proof}

We should note that \BR is not necessarily a well-defined algorithm, when the required $\argmin$ doesn't have a unique solution. This is relevant in the case when $f(\cdot)$ is not a differentiable function, which is why we have set $y_t = \delta_{t}$ for any $\delta_t \in \partial f(x_t)$. On the other hand, the convergence in Theorem~\ref{thm:GD} holds for any choice of subgradients.

We can also recover the known $O\left(\frac{1}{T}\right)$ convergence rate of GD when the function $f(\cdot)$ is smooth.
To achieve this, we will use the following known result to upper-bound the r.h.s. of \eqref{ygb}.
\begin{lemma}(see e.g., Lemma~A.2 in \cite{L17}) \label{lem:smooth}
Assume $f(\cdot)$ is $L$-smooth. Denote $x^* = \argmin_{x \in \reals^d} f(x)$. Then,
$$ \| \nabla f(x) \|^2 \leq 2 L \left( f(x) - f(x^*)  \right), \forall x \in \reals^d. $$
\end{lemma}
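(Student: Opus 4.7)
The plan is to use the descent lemma for $L$-smooth functions together with the fact that $x^*$ is a global minimizer. The statement has nothing to do with convexity per se; it is a consequence of smoothness alone plus the fact that the minimum is attained over all of $\reals^d$.

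First I would invoke the standard quadratic upper bound implied by $L$-smoothness: for every $x, y \in \reals^d$,
\begin{equation*}
  f(y) \leq f(x) + \langle \nabla f(x), y - x \rangle + \tfrac{L}{2}\|y - x\|^2.
\end{equation*}
This is the only nontrivial ingredient and it follows from integrating $\nabla f$ along the segment from $x$ to $y$ combined with the Lipschitz condition $\|\nabla f(y)-\nabla f(x)\|_* \leq L \|y-x\|$ used in the paper's smoothness definition.

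Next I would choose the ``gradient step'' point $y := x - \tfrac{1}{L}\nabla f(x)$ and plug it into the descent inequality. The linear and quadratic terms collapse neatly, yielding
\begin{equation*}
  f\bigl(x - \tfrac{1}{L}\nabla f(x)\bigr) \leq f(x) - \tfrac{1}{2L}\|\nabla f(x)\|^2.
\end{equation*}

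Finally, since $x^* = \argmin_{z \in \reals^d} f(z)$ is an unconstrained minimizer, we have $f(x^*) \leq f\bigl(x - \tfrac{1}{L}\nabla f(x)\bigr)$ for any $x$. Chaining this with the previous display and rearranging gives the claimed bound $\|\nabla f(x)\|^2 \leq 2L(f(x) - f(x^*))$. There is no real obstacle here; the only subtlety to flag is that the result uses the unconstrained minimizer, so the ``gradient step'' point is automatically feasible and the comparison $f(x^*) \leq f(x - \tfrac{1}{L}\nabla f(x))$ is valid without any projection argument.
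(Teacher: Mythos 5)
Your proof is correct and is the standard argument: the paper does not prove Lemma~\ref{lem:smooth} itself but cites it from the literature, and the cited proof is precisely this one (apply the quadratic upper bound at the gradient step $y = x - \tfrac{1}{L}\nabla f(x)$ and use that $x^*$ is an unconstrained global minimizer). No gaps; your remark that feasibility of the gradient step is automatic in the unconstrained setting is exactly the right point to flag.
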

\begin{theorem} \label{thm:GD02}
Assume $f(\cdot)$ is $L$-smooth.
Then,
the output $\bar w_T = \bar{x}_{T}$ of Algorithm~\ref{alg:GD} satisfies 
\[
  f(\bar w_T) - \min_{w \in \reals^d} f(w) \leq
\frac{2L \| w_0 - w^*\|^2 }{  T }.
\]
\end{theorem}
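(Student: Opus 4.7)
The plan is to mirror the proof of Theorem~\ref{thm:GD}, but to exploit smoothness via Lemma~\ref{lem:smooth} in order to avoid the explicit $\gamma$-vs-$1/\sqrt{T}$ tuning and instead use a constant step size $\gamma=1/(2L)$. The equivalence of the two descriptions in Algorithm~\ref{alg:GD} (which I would not re-prove) again follows from unfolding \MDb with $\phi=\tfrac12\|\cdot\|_2^2$ and noting that \BRb against $-g(x_t,\cdot)$ returns $y_t=\nabla f(x_t)$, so that the \OMDb step becomes $x_t = x_{t-1}-\gamma\,\nabla f(x_{t-1})$. Theorem~\ref{thm:meta} then gives
\[
  f(\bar{x}_T)-\min_{w\in\reals^d}f(w)\;\leq\;\avgregret{x}[\OMD]+\avgregret{y}[\BR],
\]
and Lemma~\ref{lem:BRregret} kills the second term.

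Next, I would bound the $x$-player's regret against the comparator $x^*:=\argmin f$. Lemma~\ref{lem:OMD} (with $\beta=1$, $\alpha_t=1$, $\delta_t=y_t=\nabla f(x_t)$) yields
\[
  \regret{x}(x^*)\;\leq\;\frac{\|x_0-x^*\|^2}{2\gamma}+\frac{\gamma}{2}\sum_{t=1}^{T}\|\nabla f(x_t)\|^2.
\]
Applying Lemma~\ref{lem:smooth} componentwise gives $\sum_t\|\nabla f(x_t)\|^2\leq 2L\sum_t(f(x_t)-f(x^*))$, so
\[
  \regret{x}(x^*)\;\leq\;\frac{\|x_0-x^*\|^2}{2\gamma}+\gamma L\sum_{t=1}^{T}\bigl(f(x_t)-f(x^*)\bigr).
\]

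The key step, and in my opinion the main (only) subtlety of the argument, is to close the loop by observing that the sum appearing on the right is itself controlled by $\regret{x}(x^*)$. Indeed, because $y_t=\nabla f(x_t)$ we have the Fenchel--Young identity $h_t(x_t)=\langle x_t,y_t\rangle-f^*(y_t)=f(x_t)$, while $h_t(x^*)=\langle x^*,y_t\rangle-f^*(y_t)\leq f(x^*)$ by the Fenchel--Young inequality. Summing and subtracting yields
\[
  \sum_{t=1}^{T}\bigl(f(x_t)-f(x^*)\bigr)\;\leq\;\sum_{t=1}^{T}\bigl(h_t(x_t)-h_t(x^*)\bigr)\;=\;\regret{x}(x^*).
\]

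Finally, plugging this into the previous display and substituting $\gamma=1/(2L)$ makes the coefficient of $\regret{x}(x^*)$ on the right equal to $1/2$, which can be absorbed into the left-hand side to obtain $\regret{x}(x^*)\leq 2L\|x_0-x^*\|^2$. Dividing by $A_T=T$ and invoking Theorem~\ref{thm:meta} (together with $\avgregret{y}\leq 0$) then gives the claimed $f(\bar{w}_T)-\min_w f(w)\leq 2L\|w_0-w^*\|^2/T$. This argument is a clean illustration of how the smoothness-based self-bounding of $\sum_t\|\nabla f(x_t)\|^2$ converts the $O(1/\sqrt{T})$ bound of Theorem~\ref{thm:GD} into $O(1/T)$ without any change to the underlying iterative description other than the choice of step size.
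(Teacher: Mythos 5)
Your proposal is correct and follows essentially the same route as the paper: both bound the $x$-player's \OMD regret, apply Lemma~\ref{lem:smooth} to self-bound $\sum_t\|\nabla f(x_t)\|^2$ by $2L\sum_t(f(x_t)-f(x^*))$, show this sum is itself dominated by $\regret{x}(x^*)$, and absorb the resulting $\gamma L=\tfrac12$ factor with $\gamma=1/(2L)$. The only cosmetic difference is that you derive $\sum_t(f(x_t)-f(x^*))\leq\regret{x}(x^*)$ via the Fenchel--Young identity and inequality for $h_t$, whereas the paper invokes the first-order convexity inequality $f(x_t)-f(x^*)\leq\langle\nabla f(x_t),x_t-x^*\rangle$ directly; these are the same inequality in different clothing.
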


\begin{proof}
Let $\alpha_{t}= 1, \forall t$ and hence $A_{t} = T$.
We start by upper-bounding the last term on the r.h.s.~of \eqref{ygb}.
We have
\begin{align}
 \avgregret{x}[\OMD] + \avgregret{y}[\BR]
& \leq \frac{1}{A_t}
\left (  
\frac{1}{\gamma} \V{x_0}(x^*)
+ \sum_{t=1}^T \frac{\gamma}{2}  \| \alpha_t y_t \|^2
\right ) \notag
\\ & \overset{(a)}{\leq} 
\frac 1 T \frac{\V{x_0}(x^*)}{\gamma} + \gamma L
 \frac 1 T \sum_{t=1}^T ( f(x_t) - f(x^*) ) \notag
\\ & \overset{(b)}{\leq} 
\frac 1 T \frac{\V{x_0}(x^*)}{\gamma} + \gamma L 
\underbrace{ \frac 1 T \sum_{t=1}^T \langle \nabla f(x_t), x_t - x^* \rangle }_{  =\avgregret{x}[\OMD] } \notag,
\end{align}
where (a) uses Lemma~\ref{lem:smooth} and that
the update of $\BR$
is $y_{t} = \argmin_{y \in \YY} f^{*}(y) - \langle x_t, y \rangle = \nabla f(x_t)$,
and (b) uses convexity.

Rearranging the above inequality and noting that $\avgregret{y}[\BR]=0$, we obtain
$$(1-\gamma L)\avgregret{x}[\OMD] = \frac{\V{x_0}(x^*)}{ \gamma T }.$$
Setting $\gamma = \frac{1}{2L}$ and recognizing that $\V{x_0}(x^*) = \frac{1}{2} \| x_0 - x^*\|^{2}
= \frac{1}{2} \| w_0 - w^*\|^{2}$, we obtain the result.

\end{proof}

\subsection{Single-gradient-call extra-gradient, with averaging }

Extra-gradient method \cite{korpelevich1976extragradient,mokhtari2020convergence,mertikopoulos2018optimistic}
and its single-gradient-call variants \cite{hsieh2019convergence,popov1980modification,chambolle2011first,gidel2018variational,daskalakis2017training,peng2020training} have drawn significant interest in recent years.
In Algorithm~\ref{alg:extra} below, we show how a single-gradient-call variant 
can be obtained from the Fenchel game by pitting \OPTMD against \BR.
More precisely, we obtain an $O(1/T)$ convergence rate in the unconstrained setting.
We remark that an $O(1/T)$ convergence rate of single-call extra-gradient has been established in the literature, see e.g., \cite{hsieh2019convergence}, without restricting to the unconstrained setting $\K= \reals^{d}$.

\begin{algorithm}[h] 
   \caption{Single-gradient-call extra-gradient, with averaging  } \label{alg:extra}
Given: $L$-smooth $f(\cdot)$, a $1$-strongly convex $\phi(\cdot)$, iterations $T$ \\
Init: arbitrary $w_{-\frac{1}{2}} = w_{0}  = x_0 = x_{-\frac{1}{2} }\in \reals^d  $.
\begin{center} 
\begin{tabular}{c c}
    $
      \boxed{\small
      \begin{array}{rl} 
\gamma  & \gets \frac{1}{L} \\
  w_t  & \gets  \displaystyle \argmin_{w \in \reals^d } \left( \vphantom{\V{w_{t-\frac{1}{2}}}(w)}
   \alpha_{t} 
\langle w,  \nabla f(w_{t-1}) \rangle \right.
 \\ &\qquad \qquad \quad
\left. + \frac{1}{\gamma} \V{w_{t-\frac{1}{2}}}(w) \right)
\\   w_{t+\frac{1}{2}} & \gets \displaystyle  \argmin_{w \in \reals^d }  \left(  \vphantom{\V{w_{t-\frac{1}{2}}}(w)} 
\alpha_{t}
\langle w,  \nabla f(w_t) \rangle \right. \\ &\qquad \qquad \quad + \left. \frac{1}{\gamma} \V{w_{t-\frac{1}{2}}}(w) \right)
      \end{array}
      }
    $
    & \small
    $\boxed{\small
    \begin{array}{rl}
      g(x,y) & := \langle x, y \rangle - f^*(y)\\
      \alpha_t & := 1\text{ for } t=1, \ldots, T\\
      \alg^X & := \OPTMD[\phi(\cdot), x_0, \frac{1}{2L}] \\ 
      \alg^Y & := \BR
    \end{array}
    }$
  \\
    \small Iterative Description & 
    \small FGNRD Equivalence
\end{tabular}
\end{center}
Output: $\bar w_T = \bar x_T$
\end{algorithm}

\begin{theorem} \label{thm:extra}
The output $\bar w_T = \bar{x}_{T}$ of Algorithm~\ref{alg:extra} satisfies 
\[
  f(\bar{w}_T) - \min_{w \in \reals^d } f(w) =  \frac{8 L \V{w_0}(w^*) +
\frac{1}{8L}  \| \nabla f(w_0) \|^2  
    }{T},
\]
where $w^{*}:= \argmin_{w \in \reals^d} f(w)$.
\end{theorem}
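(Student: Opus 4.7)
}
The strategy is to invoke the meta-theorem and reduce everything to a careful analysis of \OPTMD against \BR. First, by Theorem~\ref{thm:meta} applied to the Fenchel game $g(x,y)=\langle x,y\rangle-f^*(y)$ with weights $\alpha_t=1$, we have $f(\bar w_T)-\min_w f(w)\leq \avgregret{x}[\OPTMD]+\avgregret{y}[\BR]$. Since the $y$-player plays \BR, Lemma~\ref{lem:BRregret} immediately yields $\avgregret{y}[\BR]\leq 0$, so the entire task reduces to bounding the $x$-player's average regret. Observing that \BR's output is $y_t=\nabla f(x_t)$ and that the guess passed to \OPTMD at round $t$ is $m_t=y_{t-1}=\nabla f(x_{t-1})$ (with $x_0=w_0$ as the initialization, matching the iterative description in which $w_t$ uses $\nabla f(w_{t-1})$), the loss gradient $\delta_t\in\partial h_t(x_t)$ equals $y_t$.

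Next, I would apply Lemma~\ref{lem:OPTMD} to bound $\regret{x}[\OPTMD](x^{*})$, but then sharpen it: inspecting the proof of Lemma~\ref{lem:OPTMD} in Appendix~\ref{app:lem:MD} (via the three-point inequality of Lemma~\ref{lem:newMD} applied to both the $z_t$ and $z_{t+1/2}$ updates, Cauchy--Schwarz and AM-GM on the cross term, and the $\beta$-strong convexity of $\phi$) one actually obtains the strengthened bound
\begin{equation*}
\regret{x}[\OPTMD](x^*)\;\leq\;\tfrac{1}{\gamma}\V{x_0}(x^*)+\tfrac{\gamma}{2}\sum_{t=1}^T\|y_t-m_t\|^2-\tfrac{1}{2\gamma}\sum_{t=1}^T\|x_{t-\frac{1}{2}}-x_t\|^2.
\end{equation*}
This retained negative term is the crux: the weaker statement of Lemma~\ref{lem:OPTMD} is too lossy on its own, since we cannot otherwise control $\sum\|y_t-m_t\|^2$ in the non-constrained setting.

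From here, $L$-smoothness of $f$ gives $\|y_t-m_t\|^2=\|\nabla f(x_t)-\nabla f(x_{t-1})\|^2\leq L^2\|x_t-x_{t-1}\|^2$. For $t\geq 2$, I would decompose $\|x_t-x_{t-1}\|^2\leq 2\|x_t-x_{t-\frac12}\|^2+2\|x_{t-\frac12}-x_{t-1}\|^2$, and then use the explicit identity $x_{t-\frac12}-x_{t-1}=-\gamma(y_{t-1}-m_{t-1})$ from subtracting the two \OPTMD updates at round $t-1$. With $\gamma=\tfrac{1}{2L}$, this yields $2L^2\gamma^2=\tfrac12$, so setting $S:=\sum_{t=1}^T\tfrac{\gamma}{2}\|y_t-m_t\|^2$, a short telescoping gives a self-referential inequality $S\leq \tfrac{\gamma}{2}\|y_1-m_1\|^2+\gamma L^2\sum_{t\geq 2}\|x_t-x_{t-\frac12}\|^2+\tfrac12 S$, which rearranges to $S\leq \gamma\|y_1-m_1\|^2+2\gamma L^2\sum_{t}\|x_t-x_{t-\frac12}\|^2$. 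The coefficient $2\gamma L^2=\tfrac{1}{2\gamma}$ then exactly cancels the negative prox term in the regret bound.

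The only residual piece is the boundary correction from $t=1$: since $x_0=w_0$ and $x_1=x_0-\gamma\nabla f(w_0)$, we have $\|y_1-m_1\|^2\leq L^2\gamma^2\|\nabla f(w_0)\|^2=\tfrac14\|\nabla f(w_0)\|^2$, so $\gamma\|y_1-m_1\|^2\leq\tfrac{1}{8L}\|\nabla f(w_0)\|^2$. Putting everything together and dividing by $T$ (with $A_T=T$) gives the claimed rate $O\!\bigl(\tfrac{L\V{w_0}(w^*)+\tfrac{1}{8L}\|\nabla f(w_0)\|^2}{T}\bigr)$. The main obstacle is Step~4---recognizing that the stated Lemma~\ref{lem:OPTMD} must be used in its proof-form with the negative prox term intact---together with the subsequent bookkeeping of the $t=1$ boundary term, since without the prox term the $\sum\|x_t-x_{t-1}\|^2$ terms cannot be absorbed and the argument falls apart.
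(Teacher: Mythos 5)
Your proposal is correct in substance but takes a genuinely different route from the paper. The paper's proof uses Lemma~\ref{lem:OPTMD} exactly as stated (no negative proximal term), bounds each $\|\nabla f(w_t)-\nabla f(w_{t-1})\|^2$ by $2\|\nabla f(w_t)\|^2+2\|\nabla f(w_{t-1})\|^2$, invokes Lemma~\ref{lem:smooth} to get $\|\nabla f(w_t)\|^2\leq 2L(f(w_t)-f(w^*))$ and convexity to get $f(w_t)-f(w^*)\leq\langle\nabla f(w_t),w_t-w^*\rangle$, and then recognizes the resulting sum as the $x$-player's regret itself, yielding the self-bounding inequality $(1-4\gamma L)\avgregret{x}\leq\frac{1}{T}\bigl(\frac{1}{\gamma}\V{x_0}(x^*)+\gamma\|\nabla f(w_0)\|^2\bigr)$ with $\gamma=\frac{1}{8L}$. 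You instead reopen the proof of Lemma~\ref{lem:OPTMD} to retain the term $-\frac{1}{2\gamma}\sum_t\|x_t-x_{t-\frac12}\|^2$, bound $\|y_t-m_t\|\leq L\|x_t-x_{t-1}\|$ by smoothness, route the recursion through the half-iterates, and close a self-bounding inequality in $S=\sum_t\frac{\gamma}{2}\|y_t-m_t\|^2$ rather than in the regret. Both arguments are valid; yours is the classical extragradient-style analysis and buys a larger step size ($\gamma=\frac{1}{2L}$) and a slightly sharper constant ($2L\V{w_0}(w^*)$ in place of $8L\V{w_0}(w^*)$), while the paper's buys a fully black-box use of the stated regret lemma. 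Two small corrections to your write-up: your assertion that the stated Lemma~\ref{lem:OPTMD} ``is too lossy on its own'' is refuted by the paper's own proof, which controls $\sum_t\|y_t-m_t\|^2$ without the proximal term via the $\|\nabla f\|^2\leq 2L(f-f^*)$ trick; and the identity $x_{t-\frac12}-x_{t-1}=-\gamma(y_{t-1}-m_{t-1})$ is exact only for the Euclidean prox---for a general $1$-strongly convex $\phi$ you should instead use $\|x_{t-\frac12}-x_{t-1}\|\leq\gamma\|y_{t-1}-m_{t-1}\|_*$ from the first-order optimality conditions, which is all your argument needs.
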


\begin{proof}[Proof of Theorem~\ref{thm:extra}]
The equivalence of the two displays, namely, for all $t$,
  \begin{eqnarray*}
     w_t  & =  & x_t \\
     w_{t-\frac{1}{2}} &= & x_{t-\frac{1}{2}} \\
     \nabla f(w_{t})  & = & y_t 
  \end{eqnarray*}
can be trivially shown by induction. Specifically, the first two relations hold in the beginning by letting the initial point $w_{0} = w_{-\frac{1}{2}} = x_{0} = x_{-\frac{1}{2}} \in \reals^{d}$. To show the last one, we apply the definition of \BR,
\begin{equation*}
y_t \gets \argmax_{y \in \YY}  \ell_t(y) 
=  \argmax_{y \in \YY} f^*(y) -\langle x_t, y \rangle 
= \nabla f( x_t).
\end{equation*}
By induction, we have $w_{t}=x_{t}$; consequently, $y_{t} = \nabla f(w_t)$.


By summing the regret bound of each player, i.e. Lemma~\ref{lem:OPTMD} and Lemma~\ref{regret:BTL}, we get
\begin{align}
& \avgregret{x}[\OPTMD] + \avgregret{y}[\BR] \notag
\\ & 
\overset{(a)}{\leq} \frac{1}{A_t}
\left \{   
\frac{1}{\gamma} \V{x_0}(x^*) 
+ \frac{\gamma}{2}   \sum_{t=1}^T \alpha_t^2 \| \nabla f (w_t ) -   \nabla f (w_{t-1} ) \|^2_*\right \} \label{ldiff}
\\ &
\overset{(b) }{\leq} \frac{1}{A_t}
\left \{   
\frac{1}{\gamma} \V{x_0}(x^*) 
+ \gamma  \sum_{t=1}^T \alpha_t^2  \left( \| \nabla f (w_t ) \|^2  + \|\nabla f (w_{t-1} ) \|^2  \right) \right \} \notag
\\ & 
\overset{(c) }{\leq} \frac{1}{A_t}
\left \{   
\frac{1}{\gamma} \V{x_0}(x^*) 
+ \gamma \alpha_1^2 \| \nabla f(w_0) \|^2
+ 4 \gamma  L \sum_{t=1}^T \alpha_t^2  \left( f(w_t) - f(w_*)  \right) \right
 \} \notag
 \\ & 
\overset{(d) }{\leq} \frac{1}{A_t}
\left \{   
\frac{1}{\gamma} \V{x_0}(x^*) 
+ \gamma \alpha_1^2 \| \nabla f(w_0) \|^2
+ 4 \gamma  L \sum_{t=1}^T \alpha_t^2  \langle \nabla f(w_t), w_t - w_* \rangle \right
 \} \notag
 \\ & 
\overset{(e) }{=} \frac{1}{T}
\left \{   
\frac{1}{\gamma} \V{x_0}(x^*) 
+ \gamma  \| \nabla f(w_0) \|^2
\right \}
+ 4 \gamma  L \underbrace{  \frac{1}{T} \sum_{t=1}^T  \langle \nabla f(w_t), w_t - w_* \rangle  
}_{ := \avgregret{x}[\OPTMD] }, \notag
\end{align}
where in (a) we used the relation $m_{t} = y_{t-1} = \nabla f(w_{t-1})$ when
we applied Lemma~\ref{lem:OPTMD},
(b) is by the triangle inequality,
(c) is by Lemma~\ref{lem:smooth},
(d) uses the convexity, and
(e) uses $\alpha_{t}=1$.

Rearranging the above inequality and noting that $\avgregret{y}[\BR]=0$, we obtain
$$(1-4\gamma L)\avgregret{x}[\OMD] = \frac{\frac{1}{\gamma}\V{x_0}(x^*) + \gamma  \| \nabla f(w_0) \|^2 }{ T }.$$
Setting $\gamma = \frac{1}{8L}$, we obtain the result.

\end{proof}

We remark that we can get an $O(1/\sqrt{T})$ convergence rate when we replace $\reals^{d}$ with $\K \subset \reals^d$ in Algorithm~\ref{alg:extra}. 
Specifically, we can simply bounding the square norm of the difference of the gradients on the r.h.s.~of \eqref{ldiff}
by $G^2$, and then follow the proof of Theorem~\ref{thm:GD} to get a $O(1/\sqrt{T})$ rate by setting the parameters appropriately. Recovering the $O(1/T)$ rate for the constrained setting using the regret analysis is left open in this work.

\subsection{Cumulative gradient descent} \label{thm:GD2}

We describe yet another algorithm from the game for non-smooth optimization.
Algorithm~\ref{alg:GD2} below has the last-iteration guarantee,
while subgradient descent method, i.e. $w_{t} = w_{t-1} - \eta f_x(w_{t-1})$, does not enjoy the last-iteration guarantee, see e.g. Chapter 2 of \cite{S85} or \cite{S14}. This highlights the benefit of the proposed algorithm.

\begin{algorithm}[h] 
   \caption{Cumulative gradient descent} \label{alg:GD2}
Given: Convex $f(\cdot)$ and iterations $T$.\\
Init: $w_{0} = x_{0} \in \reals^{d}$.
\begin{center} 
\begin{tabular}{c c}
    $
      \boxed{\small
      \begin{array}{rl} 
        \eta & \gets \frac{(R/G)}{\sqrt{T} } \\
w_t & = (1-\frac{1}{t}) w_{t-1} - \frac{1}{t} \eta \sum_{s=1}^{t-1} \delta_s,
\\ & \qquad \text{where } \delta_s \in \partial f(w_{s} )
\\ & = (1- \frac{1}{t}) w_{t-1} - \frac{1}{t} \eta \delta_s
\\ & \quad - \frac{t-1}{t} \eta \left( w_{t-1} - (1-\frac{1}{t-1}) w_{t-2}  \right)
\end{array}
      }
    $
    & \small
    $\boxed{\small
    \begin{array}{rl}
      g(x,y) & := \langle x, y \rangle - f^*(y)\\
      \alpha_t & := 1\text{ for } t=1, \ldots, T\\
      \alg^X & := \OMD[\frac 1 2 \| \cdot \|^2_2, x_0, \frac{(R/G)}{\sqrt{T}}] \\ 
      \alg^Y & := \BTL
    \end{array}
    }$
  \\
    \small Iterative Description & 
    \small FGNRD Equivalence
\end{tabular}
\end{center}
Output: $w_T = \bar x_T$
\end{algorithm}

\begin{theorem} \label{thm:GD2}
Denote the constant $G = \max_{\delta\in \partial f(w), w \in \K} \| \delta \|^{2}$, and let $R$ be an upper bound on $\|w_0-w^*\|$.
The output $w_T = \bar{x}_{T}$ of Algorithm~\ref{alg:GD2} satisfies 
\[
  f(w_T) - \min_{w \in \K} f(w) =
  O\left( \frac{ GR  }{\sqrt{T}} \right), 
\]
where $w^{*}:= \argmin_{w \in \K} f(w)$.
\end{theorem}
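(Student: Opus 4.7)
The plan is to mirror the proof of Theorem~\ref{thm:GD}, but with \BTL playing the role of \BR for the $y$-player. The overall argument has three stages: (i) establish the equivalence between the iterative description and the FGNRD dynamics, so that $w_T = \bar{x}_T$; (ii) invoke Theorem~\ref{thm:meta} to reduce bounding $f(w_T)-f(w^*)$ to bounding the sum of the two players' average regrets; and (iii) plug in Lemma~\ref{lem:OMD} for \OMD and Lemma~\ref{regret:BTL} for \BTL, then tune $\gamma$ to balance the two terms.

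For (i), the key identifications are $w_t = \bar{x}_t$ and $\delta_s = y_s$ with $y_s \in \partial f(\bar{x}_s)$. Since the $y$-player runs \BTL against losses $\ell_s(y) = f^*(y) - \langle x_s, y\rangle$ with unit weights, its iterate satisfies $y_t = \argmax_{y \in \YY}\{\langle \bar{x}_t, y\rangle - f^*(y)\}$, which by the Fenchel relation lies in $\partial f(\bar{x}_t)$. The $x$-player runs \OMD with $\phi(\cdot)=\tfrac{1}{2}\|\cdot\|_2^2$ and $\alpha_t = 1$, so the update unrolls to $x_t = x_0 - \gamma \sum_{s=1}^{t-1} y_s$. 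Writing $\bar{x}_t = \tfrac{1}{t}\sum_{s=1}^t x_s$ and expanding yields $\bar{x}_t = x_0 - \tfrac{\gamma}{t}\sum_{s=1}^{t-1}(t-s)\,y_s$, and a straightforward induction matches this with the recursion for $w_t$ in the iterative description once we identify $\gamma = \eta$ and $\delta_s = y_s$ (both being valid subgradients of $f$ at the common point $w_s = \bar{x}_s$).

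For (ii) and (iii), Theorem~\ref{thm:meta} gives $f(\bar{x}_T) - \min_{w\in\K} f(w) \leq \avgregret{x}[\OMD] + \avgregret{y}[\BTL]$. Lemma~\ref{regret:BTL} with $\mu = 0$ yields $\avgregret{y}[\BTL] \leq 0$, and Lemma~\ref{lem:OMD} with $\beta = 1$ gives $\regret{x} \leq \tfrac{1}{\gamma} V_{x_0}(x^*) + \tfrac{\gamma}{2}\sum_{t=1}^T \|y_t\|^2$. Using $\|y_t\|^2 \leq G$ (since $y_t \in \partial f(\bar{x}_t)$) and $V_{x_0}(x^*) \leq \tfrac{R^2}{2}$, the average regret bound becomes $\tfrac{1}{T}\bigl(\tfrac{R^2}{2\gamma} + \tfrac{\gamma T G}{2}\bigr)$, and plugging in $\gamma = \eta = (R/G)/\sqrt{T}$ gives the claimed $O(GR/\sqrt{T})$ rate.

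The main obstacle is the equivalence in step (i): unlike Theorem~\ref{thm:GD}, where \BR makes $y_t$ the subgradient at the current \OMD iterate $x_t$, here \BTL makes $y_t$ the subgradient at the \emph{running average} $\bar{x}_t$. Consequently the correct identification is $w_t = \bar{x}_t$ rather than $w_t = x_t$, and one must verify by induction that the recursion in the iterative description faithfully expresses this averaged update in terms of the raw subgradients $\delta_s \in \partial f(w_s)$. The payoff is that the theorem gives a last-iterate guarantee for a non-smooth objective, which is unavailable for plain subgradient descent, precisely because the last iterate $w_T$ is itself a weighted average of past \OMD iterates; once the equivalence is in hand, the rest of the proof is a mechanical assembly of the two regret lemmas and a choice of $\gamma$ that balances the bias and variance terms.
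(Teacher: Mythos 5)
Your proposal is correct and follows essentially the same route as the paper's proof: establish the equivalence $w_t=\bar{x}_t$ with $y_t\in\partial f(\bar{x}_t)$ coming from \BTL and the unrolled \OMD update, then apply Theorem~\ref{thm:meta} with Lemma~\ref{lem:OMD} and the non-positive \BTL regret, and set $\gamma=(R/G)/\sqrt{T}$. The only cosmetic difference is that you unroll $x_t$ explicitly before averaging while the paper expands $\bar{x}_t$ recursively, and you cite Lemma~\ref{regret:BTL} for the $y$-player where the paper (apparently by typo) points to the \BR lemma.
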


\begin{proof}

Let $\gamma = \frac{(R/G)}{\sqrt{T}}$ and $\alpha_{t}=1$.
The update of \OMD is $x_{{t}} = x_{{t-1}} - \gamma y_{t-1}$,
while the update of $\BTL$
is $y_{t} = \argmax_{{y\in \YY}} \sum_{s=1}^t \alpha_s \left( f^{*}(y) - \langle x_s, y \rangle \right)  \in \partial f(\bar x_t)$.
Recursively expanding $\bar{x}_{t}$ leads to
\begin{equation*}
\begin{split}
\bar{x}_t & = \frac{(t-1) \bar{x}_{t-1} + x_t  }{t}
= \frac{ (t-1)\bar{x}_{t-1} + x_{t-1} -\gamma y_{t-1}  }{t} 
\\ &
= \left(1-\frac{1}{t}\right) \bar{x}_{t-1} - \frac{1}{t} \gamma \sum_{s=1}^{t-1} y_{s}.
\end{split}
\end{equation*}
We now see the equivalence of the two displays in Algorithm~\ref{alg:GD2}.

By Lemma~\ref{lem:OMD}, the regret of the $x$-player is,
$   \regret{x} 
 \leq 
\frac{1}{\gamma} \V{x_0}(x^*) 
+ \frac{\gamma}{2}   \sum_{t=1}^T \| \alpha_t y_t \|^2_2.
$
On the other hand, by Lemma~\ref{lem:BRregret}, the regret of the $y$-player is not greater than $0$.
Adding average regrets of both players, we get 
\begin{align*}
 \avgregret{x}[\OMD] + \avgregret{y}[\BTL]
& \leq \frac{1}{A_t}
\left \{   
\frac{1}{\gamma} \V{x_0}(x^*)
+ \sum_{t=1}^T \frac{\gamma}{2}  \| \alpha_t y_t \|^2 
\right \}
\\ &=
O\left( \frac{ GR  }{\sqrt{T}} \right). 
\end{align*}
\end{proof}

Let us comment that the same algorithm can be produced from Algorithm 1 in \cite{C19} with the online gradient descent strategy via a so called ``anytime online-to-batch conversion'', though the algorithm is not explicitly stated in the paper. 
We also note that while Algorithm~\ref{alg:GD2} depicts the equivalence when the problem is unconstrained, i.e., $\K = \reals^{d}$, we can also derive the projected version of the algorithm
for its iterative description when the constraint set is a compact convex set.
Specifically, 
the left column (iterative description) in Algorithm~\ref{alg:GD2} would become
\begin{align*}
w_{{t}} & = \left( 1 - \frac{1}{t}  \right) w_{{t-1}}  + \frac{1}{t} x_t,
\text{ where }
x_t     = \Pi_{\K}\left[ x_{t-1} - \gamma \delta_{t-1} \right], \delta_{t-1} \in \partial f(w_{t-1}).
\end{align*}

\subsection{Accelerated methods for smooth convex optimization}

In this subsection, we are going to introduce several \emph{accelerated} algorithms. An optimization procedure is generally referred to as accelerated if it achieves a faster convergence rate relative to a vanilla method, e.g. gradient descent. The convergence of gradient descent on smooth convex optimization problems was known to be $O(1/T)$, and it was believed by many up until the early 1980s that this was the fastest rate achievable despite the only known lower bound of $\Omega(1/T^2)$. It was shown by Nesterov \cite{N83a} that indeed an upper bound of $O(1/T^2)$ is indeed possible with a slightly more complex update scheme. 

We will discuss \emph{Nesterov Accelerated Gradient Descent}, and its variants, in the context of our FGNRD framework. Nesterov's first acceleration method \cite{N83a,N83b} (see also \cite{SBC14}) for the unconstrained setting is frequently described as follows. First initialize $z_0$ and $w_0$ arbitrarily, and then iterate as follows:

\begin{algorithm}[H]
\caption{Unconstrained Nesterov Accelerated Gradient Descent}
\label{alg:nes_basic}
\begin{algorithmic}[1] \normalsize
\State \textbf{Input:} $w_0, z_0$ arbitrary in $\reals^{d}$.
\For{$t=1, 2, \ldots, $} 
    \State $\theta_t \gets \frac{t}{2(t+1)L}, \quad \beta_t \gets \frac{t-2}{t+1}$
    \State $w_t  \gets z_{t-1} - \theta_t \nabla f(z_{t-1})$
    \State  $z_t  \gets w_t + \beta_t (w_t - w_{t-1}) $
\EndFor
    \State \textbf{return} $ w_T $
\end{algorithmic}
\end{algorithm}

But to get the ball rolling let us first consider a related method of Polyak \cite{P64}.

\subsubsection{The Heavy Ball Method} 
\label{sub:heavy_ball}

In Algorithm~\ref{alg:heavyball} we describe the classical iterative version of the Heavy Ball algorithm in the box on the left. One observes that the update looks quite similar to vanilla gradient descent, but there's the addition of a so-called \emph{momentum} term $v_t$ which is the difference of the previous two iterates. 

\begin{algorithm}[h] 
   \caption{ Heavy Ball  } \label{alg:heavyball}
Given: $L$-smooth $f(\cdot)$, arbitrary $x_0 = w_0 = w_{-1} \in \reals^{d}$, iterations $T$.
\begin{center} 
\begin{tabular}{c c}
    $
      \boxed{\small
      \begin{array}{rl} 
        \eta_t & \gets \frac{t}{4(t+1)L}, \quad \beta_t \gets \frac{t-2}{t+1}\\
        v_t  & \gets w_{t-1} - w_{t-2} \\
        w_{t} & \gets w_{t-1} - \eta_t \nabla f(w_{t-1}) + \beta_t v_t
      \end{array}
      }
    $
    & \small
    $\boxed{\small
    \begin{array}{rl}
      g(x,y) & := \langle x, y \rangle - f^*(y)\\
      \alpha_t & := t\text{ for } t=1, \ldots, T\\
      \alg^Y & := \FTL[\nabla f(x_0)] \\
      \alg^X & := \MD[\frac 1 2 \| \cdot \|^2_2, x_0, \frac{1}{8L}] 
    \end{array}
    }$
  \\
    \small Iterative Description & 
    \small FGNRD Equivalence
\end{tabular}
\end{center}
Output: $w_T = \bar x_T$
\end{algorithm}

We provide a formulation of Heavy Ball in the right box of Algorithm~\ref{alg:heavyball}, and let us begin by stating the equivalence. The proof is deferred to Appendix~\ref{app:thm:Heavy}.
\begin{proposition}
  The two interpretations of Heavy Ball described in Algorithm~\ref{alg:heavyball} are equivalent. 
\end{proposition}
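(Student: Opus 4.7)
The plan is to mimic the structure used in the proof of Proposition~\ref{thm:equivFW} for Frank-Wolfe: set up an induction that simultaneously tracks three identifications between the iterative variables and the FGNRD variables, namely
\[
  y_t = \nabla f(w_{t-1}), \qquad w_t = \bar{x}_t, \qquad x_t = \tfrac{t}{2} w_{t-1} - \tfrac{t-2}{2} w_{t-2},
\]
where the last relation is what one obtains by inverting the weighted-average identity $w_{t-1} = \bar{x}_{t-1} = \frac{t-2}{t} w_{t-2} + \frac{2}{t} x_{t-1}$. The base case is trivial, since $\alg^Y$ is initialized to $\FTL[\nabla f(x_0)]$ and $w_{-1} = w_0 = x_0$.

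For the inductive step, I would first handle the $y$-player: since $\ell_s(y) = f^*(y) - \langle x_s, y\rangle$ and $\alpha_s = s$, the \FTL update is
\[
  y_t = \argmax_{y \in \YY} \left\{ \langle \bar{x}_{t-1}, y\rangle - f^*(y)\right\} = \nabla f(\bar{x}_{t-1}),
\]
by the Legendre transform, so by the inductive hypothesis $y_t = \nabla f(w_{t-1})$. Next I would handle the $x$-player: because $\MD$ is prescient and $\phi(x) = \tfrac{1}{2}\|x\|^2_2$, the update reduces to a single gradient step,
\[
  x_t = x_{t-1} - \alpha_t \gamma\, y_t = x_{t-1} - \tfrac{t}{8L}\,\nabla f(w_{t-1}).
\]

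The main work, and the only real obstacle, is the algebraic check that
\[
  \bar{x}_t = \tfrac{t-1}{t+1}\, \bar{x}_{t-1} + \tfrac{2}{t+1}\, x_t
\]
coincides with the Heavy Ball update $w_t = w_{t-1} - \eta_t \nabla f(w_{t-1}) + \beta_t(w_{t-1} - w_{t-2})$ with the stated $\eta_t$ and $\beta_t$. Substituting the expression for $x_t$ and then eliminating $x_{t-1}$ via the inductive relation $x_{t-1} = \tfrac{t}{2} w_{t-1} - \tfrac{t-2}{2} w_{t-2}$, I would collect terms to get
\[
  \bar{x}_t = \tfrac{2t-1}{t+1}\, w_{t-1} - \tfrac{t-2}{t+1}\, w_{t-2} - \tfrac{t}{4(t+1)L}\, \nabla f(w_{t-1}),
\]
and then verify that the coefficients on $w_{t-1}$, $w_{t-2}$ and $\nabla f(w_{t-1})$ match $1+\beta_t$, $-\beta_t$ and $-\eta_t$ respectively under $\beta_t = (t-2)/(t+1)$ and $\eta_t = t/(4(t+1)L)$. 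This closes the induction and hence establishes $w_t = \bar{x}_t$ for all $t$. The details are routine manipulation with the weights $\alpha_t = t$, $A_t = t(t+1)/2$, so I would relegate the bookkeeping to an appendix along the lines of Appendix~\ref{app:thm:Heavy}.
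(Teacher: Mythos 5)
Your proposal is correct and follows essentially the same route as the paper: a three-part induction maintaining $y_t=\nabla f(w_{t-1})$ and $w_t=\bar x_t$, collapsing the $\MD$ step with $\phi=\tfrac12\|\cdot\|_2^2$ to $x_t=x_{t-1}-\alpha_t\gamma\,y_t$, and then verifying by direct algebra that the weighted-average recursion reproduces the Heavy Ball coefficients $1+\beta_t$, $-\beta_t$, $-\eta_t$ (the paper states this same identity in the equivalent form $\bar x_t=\bar x_{t-1}-\tfrac{\gamma\alpha_t^2}{A_t}\nabla f(\bar x_{t-1})+\tfrac{\alpha_t A_{t-2}}{A_t\alpha_{t-1}}(\bar x_{t-1}-\bar x_{t-2})$). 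The only blemish is an index slip in your displayed third invariant, which should read $x_{t-1}=\tfrac{t}{2}w_{t-1}-\tfrac{t-2}{2}w_{t-2}$ as you in fact use it later.
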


\begin{proof}[Proof (sketch)]
  Similar to the proof of Proposition~\ref{thm:equivFW}, we need to show that the following three equalities are maintained throughout each iteration:
    \begin{eqnarray}
       \nabla f(w_{t-1})  & = & y_t \nonumber \\
       \bar{x}_t & = & \bar{x}_{t-1} - \frac{\gamma \alpha_t^2}{A_t} \nabla f(\bar{x}_{t-1}) + (\bar{x}_{t-1} -  \bar{x}_{t-2} ) \left( \frac{\alpha_t A_{t-2} }{ A_t \alpha_{t-1} }   \right), \label{eq:HBupdate}\\
       w_t & = &\bar{x}_t \nonumber
    \end{eqnarray}
    The first and last lines follow along the same lines as the proof of Proposition~\ref{thm:equivFW}. The expression \eqref{eq:HBupdate} can be established when we see that the $\MD[\frac 1 2 \| \cdot \|^2_2, x_0, \frac{1}{8L}]$ update algorithm can be written as
    \[
      x_{t} = x_{t-1} - \alpha_t \gamma y_t = x_{t-1} - \frac {\alpha_t}{ 8 L} y_t.
    \]
    Plugging this expression into the definition of $\bar{x}_t$ establishes the update \eqref{eq:HBupdate}.
\end{proof}

Denote $\Pi_{\K}[\cdot]$ the projection onto $\K$.
We remark that for the case when $\K \subset \reals^{d}$,
the left column (iterative description) in Algorithm~\ref{alg:heavyball} would become
\begin{align*}
w_{{t}} & = \left( 1 - \frac{2}{t+1}  \right) w_{{t-1}}  + \frac{2}{t+1} x_t,
\text{ where }
x_t     = \Pi_{\K}\left[ x_{t-1} - \frac{t}{8L}\nabla f(w_{t-1}) \right].
\end{align*}

\begin{theorem} \label{thm:Heavy}
The output $\bar{x}_{T}$ of Algorithm~\ref{alg:heavyball} satisfies 
\[
  f(\bar x_T) - \min_{x \in \K} f(x) =
 O\left(  \frac{ 4 L \| x_0 - x^*\|^2 +  \sum_{t=1}^T 4 L \left( \| \bar{x}_{t-1} - x_t  \|^2 -  \| x_{t-1} - x_t \|^2 \right)  }{T^2} \right).
\]
Furthermore, given a convex constraint set $\K \subset \reals^{d}$ with diameter $R$, we have $ f(\bar x_T) - \min_{w \in \K} f(w) = O(\frac{LR}{T})$.
\end{theorem}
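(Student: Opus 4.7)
The plan is to invoke Theorem~\ref{thm:meta} and reduce the convergence guarantee to bounding the weighted average regrets $\avgregret{x}[\MD] + \avgregret{y}[\FTL]$, where $\alpha_t = t$ so $A_T = T(T+1)/2$. The equivalence of the two descriptions in Algorithm~\ref{alg:heavyball} is already established in the preceding proposition, so we may work directly in the FGNRD formulation.

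For the $y$-player, I would invoke Lemma~\ref{regret:FTL}. Since $f$ is $L$-smooth, its conjugate $f^*$ is $(1/L)$-strongly convex, and therefore $\ell_t(\cdot) := -g(x_t,\cdot) = f^*(\cdot) - \langle x_t, \cdot\rangle$ is $(1/L)$-strongly convex. By the Legendre transform identity used in the Frank-Wolfe analysis, $\nabla \ell_t(y_t) = \nabla f^*(y_t) - x_t = \bar x_{t-1} - x_t$. Substituting $\alpha_t = t$, so that $\sum_{s=1}^t \alpha_s(1/L) = t(t+1)/(2L)$, Lemma~\ref{regret:FTL} yields
\[
\regret{y}[\FTL] \leq \sum_{t=1}^T \frac{4Lt}{t+1} \|\bar x_{t-1} - x_t\|^2 \leq 4L \sum_{t=1}^T \|\bar x_{t-1} - x_t\|^2.
\]

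For the $x$-player, I would apply Lemma~\ref{lem:MD} with $\phi(x) = \tfrac{1}{2}\|x\|_2^2$ (so $\beta = 1$) and step-size $\gamma = 1/(8L)$, obtaining
\[
\regret{x}[\MD] \leq \frac{1}{\gamma} \V{x_0}(x^*) - \frac{1}{2\gamma}\sum_{t=1}^T \|x_{t-1} - x_t\|^2 = 4L\|x_0 - x^*\|^2 - 4L \sum_{t=1}^T \|x_{t-1} - x_t\|^2.
\]
Summing the two regret bounds, dividing by $A_T = T(T+1)/2 = \Theta(T^2)$, and feeding the result into Theorem~\ref{thm:meta} produces exactly the first claim of Theorem~\ref{thm:Heavy}.

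For the second claim (compact $\K$ with diameter $R$), I would simply discard the negative $\MD$ term and upper-bound each $\|\bar x_{t-1} - x_t\|^2 \leq R^2$, so that the surviving $\FTL$ contribution is at most $4L T R^2$. Dividing by $A_T = \Theta(T^2)$ gives the advertised $O(LR/T)$ rate (up to interpretation of $R$ as diameter versus squared diameter). The main conceptual obstacle, and the reason the proof does not yield a true $O(L/T^2)$ accelerated rate, is that the \FTL stability term $\|\bar x_{t-1} - x_t\|^2$ is measured against the running average $\bar x_{t-1}$ while the \MD negative term is $\|x_{t-1} - x_t\|^2$, measured against consecutive iterates; these quantities do not telescope and cannot in general be made to cancel without additional structure (such as using \OFTL with a good hint, which is precisely what Nesterov's method accomplishes in Section~\ref{sec:ExistingAlgs}).
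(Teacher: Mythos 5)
Your proposal is correct and follows essentially the same route as the paper's proof: apply Lemma~\ref{regret:FTL} to the $y$-player with $\nabla \ell_t(y_t) = \bar x_{t-1} - x_t$ and $\mu = 1/L$, apply Lemma~\ref{lem:MD} to the $x$-player with $\gamma = 1/(8L)$, sum the regrets via Theorem~\ref{thm:meta}, and observe that the mismatched stability terms $\| \bar{x}_{t-1} - x_t \|^2$ versus $\| x_{t-1} - x_t \|^2$ prevent cancellation, which is also exactly the paper's explanation for why only the non-accelerated rate follows. Your closing remark on the role of \OFTL in achieving cancellation matches the paper's discussion as well.
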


We note that the first bound in Theorem~\ref{thm:Heavy} does not require $\K$ to be a compact convex set. However, when $\K = \reals^{d}$, unfortunately we cannot further upper-bound the trajectory-dependent term $\| \bar{x}_{t-1} - x_t  \|^2 -  \| x_{t-1} - x_t \|^2 $ by a constant. Whether one can use the regret analysis to show an $O(1/T)$ rate of unconstrained Heavy Ball is left open.

\subsubsection{Nesterov's Methods} 
\label{ssub:nestmethods}

Starting from 1983, Nesterov has proposed various accelerated methods for smooth convex problems
(i.e. \cite{N83a,N83b,N88,N05}), 
all of which are often described as accelerated gradient descent.
Our goal now will be to try to understand Nesterov's various algorithms.
We will show that all the methods can be generated from \textit{Fenchel game} with some simple tweaks.

\begin{algorithm}[h] 
   \caption{ Nesterov's 1-memory method \cite{N88,T08} } \label{alg:Nes-1mem}
Given: $L$-smooth $f(\cdot)$, convex domain $\K \subseteq \reals^d $, arbitrary $x_0=v_0 \in \K$, 
1-strongly convex distance generating function $\phi(\cdot)$, iterations $T$.
\begin{center} 
\begin{tabular}{c c}
    $
      \boxed{\small
      \begin{array}{rl}
        \beta_t & \gets \frac{2}{t+1}, \gamma_{t} \gets \frac{t}{4L}\\
        z_{t} & \gets (1 - \beta_t) w_{t-1} + \beta_t v_{t-1}\\
        v_{t} & \gets \underset{x \in \K}{ \argmin} 
          \gamma_t \langle  \nabla f(z_t), x \rangle  + \V{v_{t-1}}(x)
        \\
        w_{t} & \gets (1 - \beta_t) w_{t-1} + \beta_t v_{t}
      \end{array}
      }
    $
    & \small
    $\boxed{\small
    \begin{array}{rl}
      g(x,y) & := \langle x, y \rangle - f^*(y)\\
      \alpha_t & := t\text{ for } t=1, \ldots, T\\
      \alg^Y & := \OFTL \\ 
      \alg^X & := \MD[\phi(\cdot), x_0, \frac{1}{4L}] \\
    \end{array}
    }$
  \\
    \small Iterative Description & 
    \small FGNRD Equivalence
\end{tabular}
\end{center}
Output: $w_T = \bar x_T$,
\end{algorithm}




Let us begin with the version known as the \emph{1-memory method}. 
To be precise, we adopt the presentation of Nesterov’s algorithm given in
Algorithm~1 
of \cite{T08}.

\begin{proposition}\label{thm:Nes_constrained} 
 The two interpretations of Nesterov's $1$-memory method in Algorithm~\ref{alg:Nes-1mem} 
 are equivalent. That is, the iteratively generated points $w_t$ coincide with the sequence $\bar x_t$ produced by the FGNRD dynamic.
\end{proposition}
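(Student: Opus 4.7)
Following the template of Proposition~\ref{thm:equivFW}, I would prove the equivalence by induction on $t$, maintaining three invariants between the two descriptions: (i) $y_t = \nabla f(z_t)$, where $z_t$ is the intermediate point in the iterative description; (ii) $x_t = v_t$; and (iii) the running weighted average $\bar{x}_t$ equals $w_t$. The weight schedule $\alpha_t = t$ plays a crucial role, as it produces $A_t = t(t+1)/2$ so that $\alpha_t/A_t = 2/(t+1) = \beta_t$ and $A_{t-1}/A_t = 1-\beta_t$, exactly matching the convex-combination coefficients appearing in the Nesterov update.

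\textbf{The $y$-player step.} The key computation is identifying $y_t$ produced by \OFTL. The natural choice of hint is $m_t(y) := \ell_{t-1}(y) = f^*(y) - \langle x_{t-1}, y\rangle$ (guessing the current loss by substituting the previous iterate $x_{t-1}$ for the still-unknown $x_t$). Then $y_t$ is the argmin of $\alpha_t m_t(y) + \sum_{s=1}^{t-1} \alpha_s \ell_s(y)$, which regroups as $A_t f^*(y) - \langle \alpha_t x_{t-1} + A_{t-1}\bar{x}_{t-1},\, y\rangle$. By the Legendre transform this yields $y_t = \nabla f(z_t)$ with $z_t = (\alpha_t/A_t)\, x_{t-1} + (A_{t-1}/A_t)\,\bar{x}_{t-1}$. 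Applying the inductive hypotheses $x_{t-1} = v_{t-1}$, $\bar{x}_{t-1} = w_{t-1}$ and the weight identities above gives $z_t = \beta_t v_{t-1} + (1-\beta_t) w_{t-1}$, matching the intermediate point in the iterative description and establishing invariant (i).

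\textbf{The $x$-player step and closing the induction.} Since the loss $h_t(x) = \langle x, y_t\rangle - f^*(y_t)$ is affine in $x$, the \MD update with step size $\gamma = 1/(4L)$ reduces to $x_t = \argmin_{x \in \K}\bigl\{ \gamma_t \langle x, \nabla f(z_t)\rangle + \V{v_{t-1}}(x)\bigr\}$ with $\gamma_t = \alpha_t/(4L) = t/(4L)$, which is exactly the definition of $v_t$; this is invariant (ii). For invariant (iii), the definition of the weighted average gives $\bar{x}_t = (A_{t-1}/A_t)\,\bar{x}_{t-1} + (\alpha_t/A_t)\,x_t = (1-\beta_t)\, w_{t-1} + \beta_t\, v_t = w_t$, closing the induction.

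\textbf{Main obstacle.} The delicate part is correctly identifying the hint $m_t$ for \OFTL and then verifying that the same convex combination coefficient $\beta_t = 2/(t+1)$ emerges in both $z_t$ (driven by the $y$-player's cumulative-plus-hint objective) and $w_t$ (driven by the running weighted average); this forced consistency is precisely what pins down the weight schedule $\alpha_t = t$. Beyond this bookkeeping, the proof is a straightforward adaptation of Proposition~\ref{thm:equivFW}, with \BR replaced by \MD on the $x$-side and \FTL replaced by \OFTL on the $y$-side.
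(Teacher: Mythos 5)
Your proposal is correct and follows essentially the same route as the paper's proof: the same three inductive invariants ($y_t=\nabla f(z_t)$, $x_t=v_t$, $\bar x_t=w_t$), the same choice of hint $m_t=\ell_{t-1}$ for \OFTL, the same identification of the \OFTL argmin with $\nabla f$ evaluated at the tilted average $(\alpha_t x_{t-1}+A_{t-1}\bar x_{t-1})/A_t=z_t$ via the Legendre transform, and the same weight identities $\beta_t=\alpha_t/A_t$ closing the induction. No gaps.
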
 

\begin{proof}

Allow us define some notations first.
\begin{eqnarray} 
  \nonumber \xav_t  & := &  \displaystyle \frac{1}{A_t}\sum_{s=1}^t \alpha_s x_s \\
  \label{def:tilde_x} \xof_t  & := &  \displaystyle \frac{1}{A_t}\left(\alpha_t x_{t-1} + \sum_{s=1}^{t-1} \alpha_s x_s \right),\\
  \nonumber \yftl_{t+1} & := & \argmin_{y \in \YY} \sum_{s=1}^t \alpha_s \left( f^*(y) - \langle x_s, y \rangle\right)
\end{eqnarray}
Let us make a few observations about these sequences.
\begin{eqnarray}
  \label{eq:haty}
  \yftl_{t+1} & = & \argmax_{y \in \YY}  \left \langle \xav_t, y \right \rangle - f^*(y) = \nabla f(\xav_{t}) \\
  \label{eq:xdiffs} \xof_t - \xav_t & = & \frac{\alpha_t}{A_t}(x_{t-1} - x_{t}).
\end{eqnarray}
Equations~\eqref{eq:haty} and~\eqref{eq:xdiffs} follow from simple arithmetic as well as elementary properties of Fenchel conjugation and the Legendre transform \cite{R96}. Equation~\eqref{eq:xdiffs} is by a simple algebraic calculation.

We show, via induction, that the following equalities are maintained for every $t$. Note that three objects on the left correspond to the iterative description given in Algorithm~\ref{alg:accLinear} whereas the three on the right correspond to the FGNRD description.

  \begin{eqnarray}
    \nabla f(z_t) & = & y_t \label{eq:ygradAcc}\\
    v_t & = & x_t \label{eq:zAcc}\\
    w_t & = & \bar{x}_t \label{eq:xAcc}.
  \end{eqnarray}
  We first note that 
  the initialization ensures that \eqref{eq:ygradAcc} holds for $t=1$.
  Also, observe that we have
  $\beta_t = \frac{\alpha_t}{A_t}.$
  Therefore,
  \begin{equation} \label{eq:w_mixAcc2}
  \begin{split}
  w_t &  = \left(  1 - \beta_t \right) w_{t-1} +\beta_t v_t
=  \left( 1 - \frac{\alpha_t}{A_t} \right) w_{t-1} + \frac{\alpha_t}{A_t} v_t
=  \frac{A_{t-1}}{A_t} w_{t-1} + \frac{\alpha_t}{A_t} v_t
  \end{split}
  \end{equation}
From \eqref{eq:w_mixAcc2}, we see that \eqref{eq:zAcc} and $w_{{t-1}}=\bar{x}_{{t-1}}$ implies \eqref{eq:xAcc}, as $w_t$ is always an average of the updates $v_t$. It remains to establish \eqref{eq:ygradAcc} and \eqref{eq:zAcc} via induction.

Let us first show \eqref{eq:ygradAcc}.
But before that, we show 
$y_t = \nabla f(\tilde{x}_t)$ as follows.

In Algorithm~\ref{alg:Nes-1mem} the $y$-player is using the \OFTL algorithm to choose the sequence of $y_t$'s, which is given by
\begin{equation} \label{60}
\yof_t = \argmin_{{y \in \YY}} \left \{\displaystyle \alpha_t \ell_{t-1}(\cdot) +  \sum_{s=1}^{t-1} \alpha_{s} \ell_{s}(y) \right\},
\end{equation}
where the guess $m_{t}(\cdot)$ to be $m_{t}(\cdot) \gets \ell_{{t-1}}(\cdot)$, $\ell_s(y) := f^*(y) - \langle x_s, y\rangle$, and $\ell_{0}(x_0):= f^{*}(y) - \langle x_{0}, y \rangle$ so that $y_0=\nabla f(x_0)$. 
Then, we have 
\begin{equation} \label{eq:tildey}
   \yof_t  = 
\argmin_{y \in \YY} \left \{
 A_T (f^*(y) - \langle \xof_t, y \rangle) \right \} =\nabla f(\xof_t).
\end{equation}

Given that $y_t = \nabla f(\tilde{x}_t)$ shown in \eqref{eq:tildey},
it suffices to show that $\tilde{x}_t = z_t$ for establishing \eqref{eq:ygradAcc}.
We have 
\begin{equation*}
\begin{split}
z_t & = (1- \beta_t) w_{t-1} + \beta_t v_{t-1}
= (1 - \frac{\alpha_t}{A_t} ) \bar{x}_{t-1} + \frac{a_t}{A_t} x_{t-1} 
\\ &= \frac{A_{t-1}}{A_t} \frac{ \sum_{s=1}^{t-1} \alpha_s x_s }{A_{t-1}} + \frac{a_t}{A_t} x_{t-1}
=  \frac{ \sum_{s=1}^{t-1} \alpha_s x_s + \alpha_t x_{t-1}}{A_t} 
= \xof_{t}.
\end{split}
\end{equation*}
Hence, \eqref{eq:ygradAcc} holds.

To show (\ref{eq:zAcc}), observe that the update of $v_t$  is exactly equivalent to \MD
 for which $\gamma \leftarrow \frac{1}{4L}$, $\alpha_{t} \leftarrow t$, and $\ell_t(\cdot) \leftarrow \langle y_t , \cdot \rangle$. 
 We thus have completed proof.
\end{proof}

We remark that when the problem is unconstrained, i.e. $\K = \reals^{d}$, Algorithm~\ref{alg:Nes-1mem}
with the distance generating function $\phi(\cdot)=\frac{1}{2}\| \cdot\|^{2}_{2}$ can be rewritten as Algorithm~\ref{alg:nes_basic} --- the 
Nesterov’s first acceleration method.

\begin{lemma} \label{lem:yregretbound}
Consider the FGNRD implementation in Algorithm~\ref{alg:Nes-1mem}, where the function $f(\cdot)$ is convex and $L$-smooth with respect to the norm $\| \cdot \|$, whose dual norm is $\| \cdot \|_*$. Regardless of the sequence of points $x_1, \ldots, x_T$ generated from the protocol, we have
\begin{equation} \label{wregret_y}
\displaystyle
\regret{y}[\OFTL] \leq L \sum_{t=1}^T \frac{\alpha_t^2}{A_t} \|x_{t-1} - x_t \|^2.
\end{equation}
\end{lemma}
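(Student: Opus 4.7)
The plan is to apply the \OFTL regret bound from Lemma~\ref{regret:Opt-FTL} to the specific loss functions and hints used in Algorithm~\ref{alg:Nes-1mem}, and then to exploit the smoothness of $f$ together with the relationship between $\tilde x_t$ and $\bar x_t$ that was already derived in the equivalence proof.

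First, I would record the key structural facts. The $y$-player's loss at round $t$ is $\ell_t(y) = f^*(y) - \langle x_t, y\rangle$ and the hint chosen in \OFTL (see \eqref{60} in the proof of Proposition~\ref{thm:Nes_constrained}) is $m_t(\cdot) := \ell_{t-1}(\cdot)$. Therefore $\ell_t(y) - m_t(y) = \langle x_{t-1} - x_t, y\rangle$, a linear function of $y$. Plugging into the \OFTL regret bound \eqref{eq:OTFL}, the $\ell_t(\cdot) - m_t(\cdot)$ terms collapse, giving
\begin{equation*}
\regret{y}[\OFTL] \;\leq\; \sum_{t=1}^T \alpha_t \langle x_{t-1} - x_t,\; y_t - w_{t+1}\rangle,
\end{equation*}
where $w_{t+1} = \argmin_{y \in \YY}\sum_{s=1}^{t}\alpha_s \ell_s(y)$ is the \FTL iterate on the realized losses.

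Next, I would identify $y_t$ and $w_{t+1}$ using the Legendre transform. From the proof of Proposition~\ref{thm:Nes_constrained} we already have $y_t = \nabla f(\tilde x_t)$ (equation~\eqref{eq:tildey}), and the same computation applied to $w_{t+1}$ gives $w_{t+1} = \nabla f(\bar x_t)$. Moreover, equation~\eqref{eq:xdiffs} yields $\tilde x_t - \bar x_t = \frac{\alpha_t}{A_t}(x_{t-1} - x_t)$.

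Now I would apply the dual-norm Cauchy--Schwarz inequality and then $L$-smoothness of $f$:
\begin{equation*}
\alpha_t\langle x_{t-1}-x_t, y_t - w_{t+1}\rangle \;\leq\; \alpha_t \|x_{t-1}-x_t\|\, \|\nabla f(\tilde x_t) - \nabla f(\bar x_t)\|_* \;\leq\; \alpha_t L \|x_{t-1}-x_t\|\,\|\tilde x_t - \bar x_t\|.
\end{equation*}
Substituting $\|\tilde x_t - \bar x_t\| = \tfrac{\alpha_t}{A_t}\|x_{t-1}-x_t\|$ and summing over $t$ yields exactly the claimed bound $L\sum_{t=1}^T \tfrac{\alpha_t^2}{A_t}\|x_{t-1}-x_t\|^2$.

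The only step that requires any care is the identification of $w_{t+1}$ as $\nabla f(\bar x_t)$ and the derivation of $\tilde x_t - \bar x_t = \tfrac{\alpha_t}{A_t}(x_{t-1}-x_t)$; both are essentially already in hand from the equivalence proof of Proposition~\ref{thm:Nes_constrained}, so nothing genuinely new is needed. The main ``obstacle'' is purely bookkeeping: making sure the hint is correctly identified as the previous loss so that the $\ell_t - m_t$ term telescopes into a pure linear function of $x_{t-1}-x_t$, which is what makes the smoothness estimate produce the squared increment in $x$ rather than a first-order term.
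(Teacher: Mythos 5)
Your proposal is correct and follows essentially the same route as the paper's proof: both apply the \OFTL meta-bound with the hint $m_t(\cdot)=\ell_{t-1}(\cdot)$ so that the difference collapses to the linear form $\langle x_{t-1}-x_t,\cdot\rangle$, identify $y_t=\nabla f(\xof_t)$ and the \FTL comparator as $\nabla f(\xav_t)$ via the Legendre transform, and finish with H\"older's inequality, $L$-smoothness, and the identity $\xof_t-\xav_t=\frac{\alpha_t}{A_t}(x_{t-1}-x_t)$. No gaps.
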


\begin{proof}

By using Lemma~\ref{regret:Opt-FTL}
with the guess $m_t(\cdot) \leftarrow \ell_{t-1}(\cdot)$, $w_t \leftarrow \hat{y}_{t}$ of 
\eqref{eq:haty}, and $z_t \leftarrow y_t$ of \eqref{eq:tildey},
and that $
\alpha_t \left( \ell_t(y) - \ell_{t-1}(y) \right) = \alpha_t \langle x_{t-1} - x_{t}, y \rangle$ in Fenchel Game, we have
    \begin{eqnarray*}
         \displaystyle \sum_{t=1}^{T} \alpha_t \ell_t(\yof_t) - \alpha_t \ell_t(y^*) 
        & \leq &  \displaystyle \sum_{t=1}^T \alpha_t
     \left(   \ell_t( \yof_t ) - \ell_{t-1}( \yof_t )
      - \left( \ell_t(\yftl_{t+1}) - \ell_{t-1}(\yftl_{t+1})  \right) \right)\\
        \text{(Eqns. \ref{eq:haty}, \ref{eq:tildey})} \quad \quad
        & = & \displaystyle  \sum_{t=1}^T \alpha_t \langle x_{t-1} - x_{t}, \nabla f(\xof_t) - \nabla f(\xav_t) \rangle \\
        \text{(H\"older's Ineq.)} \quad \quad
        & \leq & \displaystyle  \sum_{t=1}^T \alpha_t \| x_{t-1} - x_{t}\| \| \nabla f(\xof_t) - \nabla f(\xav_t) \|_* \\
        \text{($L$-smoothness of $f$)} \quad \quad
        & \leq & \displaystyle   L \sum_{t=1}^T \alpha_t \| x_{t-1} - x_{t}\| \|\xof_t - \xav_t \| \\
        \text{(Eqn. \ref{eq:xdiffs})} \quad \quad
        & = & \displaystyle  L \sum_{t=1}^T \frac{\alpha_t^2}{A_t} \| x_{t-1} - x_{t}\| \|x_{t-1} - x_{t} \|
    \end{eqnarray*}
    as desired, where the first inequality is because that $m_t(\cdot)= \ell_{t-1}(\cdot)$.
 \end{proof}

\begin{theorem}\label{thm:metaAcc}
   
Let us consider the output $\xav_T \equiv w_T$ of Algorithm~\ref{alg:Nes-1mem}. Given that $f$ is $L$-smooth, $w_T$ satisfies
\[
  \displaystyle f(w_T) - \min_{w \in \K} f(w) 
  \leq \frac{8 L  D_{w_0}^{\phi}(w^*)} {T^2}.
\]
\end{theorem}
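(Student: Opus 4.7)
The plan is to invoke the master bound of Theorem~\ref{thm:meta}, which gives
\[
 f(w_T)-\min_{w\in\K}f(w)\;\le\;\avgregret{x}[\MD]+\avgregret{y}[\OFTL],
\]
and then to bound each term using the regret lemmas we have already proved. The key point will be that the negative trajectory-dependent term coming from the $x$-player's \MD regret exactly absorbs the positive trajectory-dependent term coming from the $y$-player's \OFTL regret, leaving only the initial distance term.

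More concretely, I would first apply Lemma~\ref{lem:MD} to the $x$-player with $\beta=1$ (the strong-convexity modulus of $\phi$) and $\gamma=\tfrac{1}{4L}$, giving
\[
 \regret{x}[\MD]\;\le\;4L\,\V{x_0}(x^*)\;-\;2L\sum_{t=1}^T\|x_{t-1}-x_t\|^2.
\]
Next, I would apply Lemma~\ref{lem:yregretbound} to the $y$-player, which yields
\[
 \regret{y}[\OFTL]\;\le\;L\sum_{t=1}^T\frac{\alpha_t^2}{A_t}\|x_{t-1}-x_t\|^2.
\]
Now I plug in the weights $\alpha_t=t$, so that $A_t=t(t+1)/2$ and hence $\alpha_t^2/A_t=2t/(t+1)\le 2$. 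Summing the two bounds, the coefficient of $\|x_{t-1}-x_t\|^2$ is $L\alpha_t^2/A_t-2L\le 0$, so this telescoping-like term is non-positive and can be dropped. This leaves $\regret{x}+\regret{y}\le 4L\,\V{x_0}(x^*)$.

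Finally, I would divide by $A_T=T(T+1)/2\ge T^2/2$ to obtain
\[
 \avgregret{x}+\avgregret{y}\;\le\;\frac{4L\,\V{x_0}(x^*)}{A_T}\;\le\;\frac{8L\,\V{x_0}(x^*)}{T^2},
\]
which is the desired bound upon substituting $w_0=x_0$ and $w^*=x^*$. The only delicate point is verifying the cancellation of the two $\|x_{t-1}-x_t\|^2$ sums, which is exactly why the weights $\alpha_t=t$ and step size $\gamma=1/(4L)$ are chosen in Algorithm~\ref{alg:Nes-1mem}; this is the ``magic'' of acceleration in the FGNRD framework, and I would emphasize that no further calculation beyond bounding $\alpha_t^2/A_t$ is needed once the two regret lemmas are combined.
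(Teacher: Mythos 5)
Your proposal is correct and follows essentially the same route as the paper's proof: invoke Theorem~\ref{thm:meta}, bound $\regret{x}[\MD]$ via Lemma~\ref{lem:MD} with $\gamma=\tfrac{1}{4L}$, $\beta=1$, bound $\regret{y}[\OFTL]$ via Lemma~\ref{lem:yregretbound}, and observe that with $\alpha_t=t$ the coefficient $L\alpha_t^2/A_t-2L$ of each $\|x_{t-1}-x_t\|^2$ term is non-positive, leaving $4L\,\V{x_0}(x^*)/A_T\le 8L\,\V{w_0}(w^*)/T^2$. The only ingredient you leave implicit is the equivalence $w_T=\bar x_T$ from Proposition~\ref{thm:Nes_constrained}, which the paper cites explicitly before applying Theorem~\ref{thm:meta}.
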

\begin{proof}
The equivalence $\bar x_T \equiv w_T$ was established in Proposition~\ref{thm:Nes_constrained}. We may now appeal to Theorem~\ref{thm:meta} to prove this result, which gives us that
\[
  f(w_T) - \min_{w \in \K} f(w) 
  \leq\frac{1}{A_T}(\regret{y}[\OFTL] + \regret{x}[\MD]).
\]
We have a bound for $\regret{x}$ from Lemma~\ref{lem:MD}, with parameters $\gamma = \frac 1 {4L}$ and $\beta = 1$, and a bound on $\regret{y}$ in Lemma~\ref{lem:yregretbound}. Combining these we obtain
\begin{equation} \label{eq:genericbound}
\displaystyle  f(w_T) - \min_{w \in \K} f(w)
  \leq \frac 1 {A_T} \left( 4L D_{w_0}^{\phi}(w^*)   
    + \sum_{t=1}^{T} \left(\frac{\alpha_t^2}{A_t} L - 2L \right)
         \| x_{t-1} - x_t \|^2    \right).
\end{equation}
Given the choice of weights $\alpha_t = t$, we have $A_t = \frac{t(t+1)}{2}$ and therefore $\frac{\alpha_t^2}{A_t} = \frac{2t^2}{t(t+1)} \leq 2$. With this in mind, the sum on the right hand side of \eqref{eq:genericbound} is non-positive. Noticing that $\frac{1}{A_T} \leq \frac 2 {T^2}$ completes the proof. 


\end{proof}


It is worth dwelling on exactly how we obtained the above result. A less refined analysis of the \MD algorithm would have simply ignored the negative summation term in Lemma~\ref{lem:MD}, and simply upper bounded this by 0. But the negative terms $\| x_t - x_{t-1} \|^2$ in this sum happen to correspond \textit{exactly} to the positive terms one obtains in the regret bound for the $y$-player, but this is true \textit{only as a result of} using the \OFTL algorithm. To obtain a cancellation of these terms, we need a $\gamma_t$ which is roughly constant, and hence we need to ensure that $\frac{\alpha_t^2}{A_t} = O(1)$. The final bound, of course, is determined by the inverse quantity $\frac 1 {A_T}$, and a quick inspection reveals that the best choice of $\alpha_t = \theta(t)$. This is not the only choice that could work, and we conjecture that there are scenarios in which better bounds are achievable for different $\alpha_t$ tuning. We show in Subsection~\ref{sub:acclinear} that a \emph{linear rate} is achievable when $f(\cdot)$ is also strongly convex, and there we tune $\alpha_t$ to grow exponentially in $t$ rather than linearly.

\paragraph{Infinite memory method.} We finish this section by mentioning one additional algorithm. Nesterov has proposed another first-order method with $O(1/T^2)$ convergence,
which requires maintaining a weighted average of gradients along the algorithm's path.
This method also has a natural and and simple interpretation in our FGNRD framework, and we include it here for completeness.

\begin{algorithm}[h] 
   \caption{ Nesterov's $\infty$-memory method \cite{N05,T08} } \label{alg:Nes-infmem}
Given: $L$-smooth $f(\cdot)$, convex domain $\K \subseteq \reals^d $, arbitrary $v_0=x_0 \in \K$, 
1-strongly convex regularizer $R(\cdot)$, iterations $T$.
\begin{center} 
\begin{tabular}{c c}
    $
      \boxed{\small
      \begin{array}{rl}
        \beta_t & \gets \frac{2}{t+1}, \gamma_{t} \gets \frac{t}{4L} \\\
        z_{t} & \gets (1 - \beta_t) w_{t-1} + \beta_t v_{t-1}\\
        v_{t} & \displaystyle \gets \argmin_{x \in \K} 
          \sum_{s=1}^{t} \gamma_s \langle \nabla f(z_s), x \rangle + R(x)
        \\
        w_{t} & \gets (1 - \beta_t) w_{t-1} + \beta_t v_{t}
      \end{array}
      }
    $
    & \small
    $\boxed{\small
    \begin{array}{rl}
      g(x,y) & := \langle x, y \rangle - f^*(y)\\
      \alpha_t & := t\text{ for } t=1, \ldots, T\\
      \alg^Y & := \OFTL    \\  
      \alg^X & := \BTRL[R(\cdot), \frac{1}{4L}] \\
    \end{array}
    }$
  \\
    \small Iterative Description & 
    \small FGNRD Equivalence
\end{tabular}
\end{center}
Output: $w_T = \bar x_T$,
\end{algorithm}

We leave it to the reader to establish the equivalence described in Algorithm~\ref{alg:Nes-infmem}, and here we state the convergence rate whose proof is deferred to Appendix~\ref{app:thm:smoothAcc}. 

\begin{theorem} \label{thm:smoothAcc}
Assume that $f(\cdot)$ is $L$-smooth convex. Nesterov's $\infty$-memory method (Algorithm~\ref{alg:Nes-infmem}) satisfies
\[
   \displaystyle f(w_T) - \min_{w \in \K} f(w) 
  \leq O\left( \frac{ L \big( R(w^*) - \min_{x\in \K} R(x) \big) } {T^2} \right).
\]
\end{theorem}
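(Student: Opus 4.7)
The plan is to follow the same blueprint as the proof of Theorem~\ref{thm:metaAcc}, replacing the \MD regret bound with the \BTRL regret bound. First I would establish the equivalence stated in Algorithm~\ref{alg:Nes-infmem}, namely that the iteratively produced points satisfy $w_t = \bar x_t$, $v_t = x_t$, and $\nabla f(z_t) = y_t$. The argument is essentially identical to that of Proposition~\ref{thm:Nes_constrained}: the relations $\nabla f(z_t) = y_t$ and $w_t = \bar x_t$ go through unchanged because the $y$-player still uses \OFTL with the guess $m_t(\cdot)=\ell_{t-1}(\cdot)$ (giving $y_t = \nabla f(\tilde x_t)$ and then $\tilde x_t = z_t$ via the same algebra), and $w_t$ is still the convex combination corresponding to the weighted average. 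The only novelty is $v_t = x_t$: this now follows because \BTRL with $\eta=\tfrac{1}{4L}$, regularizer $R(\cdot)$, and linear losses $\ell_s(x)=\langle x,y_s\rangle$ produces exactly $x_t = \argmin_{x\in\K}\sum_{s=1}^t \alpha_s\langle x,y_s\rangle + 4L\, R(x)$, which after substituting $\alpha_s = s$ and $y_s = \nabla f(z_s)$ and rescaling matches the definition of $v_t$ with $\gamma_s = s/(4L)$.

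Once the equivalence is in hand, Theorem~\ref{thm:meta} gives
\[
f(w_T) - \min_{w \in \K} f(w) \leq \frac{1}{A_T}\bigl(\regret{y}[\OFTL] + \regret{x}[\BTRL]\bigr).
\]
The $y$-player's regret is bounded exactly as in Lemma~\ref{lem:yregretbound}, since the lemma's proof only uses the \OFTL update and smoothness of $f$, neither of which depend on what the $x$-player does: $\regret{y}[\OFTL] \leq L\sum_{t=1}^T \frac{\alpha_t^2}{A_t}\|x_{t-1}-x_t\|^2$.

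For the $x$-player, I apply Lemma~\ref{regret:BTRL} with $\mu=0$ (the losses $\ell_t(x)=\langle x,y_t\rangle - f^*(y_t)$ are linear in $x$), $\beta=1$ (strong convexity of $R$), and $\eta=\tfrac{1}{4L}$:
\[
\regret{x}[\BTRL] \leq 4L\bigl(R(w^*) - \min_{x\in\K}R(x)\bigr) - \sum_{t=1}^T 2L\|x_{t-1}-x_t\|^2.
\]
Summing the two regret bounds, the telescoping/cancellation step mirrors the end of Theorem~\ref{thm:metaAcc}: with $\alpha_t=t$ we have $\alpha_t^2/A_t = 2t^2/(t(t+1)) \leq 2$, so
\[
\frac{\alpha_t^2}{A_t}L - 2L \leq 0,
\]
which makes the combined coefficient of $\|x_{t-1}-x_t\|^2$ non-positive for every $t$. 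Dropping that non-positive sum and using $A_T = T(T+1)/2 \geq T^2/2$ yields the claimed $O(L(R(w^*) - \min R)/T^2)$ bound.

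I do not expect a serious obstacle: the main subtlety is just being careful that \BTRL here is applied to \emph{linear} loss functions (so $\mu=0$) and that the strong convexity constant $\beta$ of $R$ combined with the step size $\eta=1/(4L)$ produces exactly the $-2L\|x_{t-1}-x_t\|^2$ term needed to cancel the $+L\|x_{t-1}-x_t\|^2$ coming from the $y$-side. Everything else is a direct transcription of the Theorem~\ref{thm:metaAcc} argument with \MD replaced by \BTRL.
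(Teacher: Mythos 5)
Your proposal is correct and follows essentially the same route as the paper's own proof in Appendix~\ref{app:thm:smoothAcc}: bound the $y$-player via Lemma~\ref{lem:yregretbound}, bound the $x$-player via Lemma~\ref{regret:BTRL} with $\mu=0$, $\beta=1$, $\eta=\tfrac{1}{4L}$, observe that $\tfrac{\alpha_t^2}{A_t}L - 2L \le 0$ so the distance terms cancel, and use $A_T \ge T^2/2$. The only addition is that you spell out the equivalence of the two descriptions of Algorithm~\ref{alg:Nes-infmem}, which the paper explicitly leaves to the reader; your argument for it is sound.
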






\subsubsection{Accelerated proximal methods}

\begin{algorithm}[H] 
   \caption{ Accelerated proximal method.} \label{alg:AccProx}
Given: $L$-smooth $f(\cdot)$, arbitrary $v_0 = x_0 \in \reals^d$, iterations $T$.
\begin{center} 
\begin{tabular}{c c}
    $
      \boxed{\small
      \begin{array}{rl}
        \beta_t & \gets \frac{2}{t+1}, \gamma_{t} \gets \frac{t}{4L}\\
        z_{t} & \gets (1 - \beta_t) w_{t-1} + \beta_t v_{t-1}\\
        v_{t} & \gets
\textbf{prox}_{t \gamma \psi} ( x_{t-1}- t \gamma \nabla f(z_t) ) 
        \\
        w_{t} & \gets (1 - \beta_t) w_{t-1} + \beta_t v_{t}
      \end{array}
      }
    $
    & \small
    $\boxed{\small
    \begin{array}{rl}
      g(x,y) & := \langle x, y \rangle - f^*(y)+ \psi(x)\\
      \alpha_t & := t\text{ for } t=1, \ldots, T\\
      \alg^Y & := \OFTL \\
      \alg^X & := \MD[\phi(\cdot)= \frac{1}{2} \| \cdot \|^2_2, \frac{1}{4L}] \\
    \end{array}
    }$
  \\
    \small Iterative Description & 
    \small FGNRD Equivalence
\end{tabular}
\end{center}
Output: $w_T = \bar x_T$
\end{algorithm}

In this section, we consider solving composite optimization problems
\begin{equation}
\min_{x \in \reals^d} f(x) + \psi(x),
\end{equation}
where $f(\cdot)$ is smooth convex but $\psi(\cdot)$ is possibly non-differentiable convex. Examples of $\psi(\cdot)$ include $\ell_1$ norm ($\|\cdot\|_1$), $\ell_{{\infty}}$ norm ($\| \cdot\|_{{\infty}}$), and elastic net ($\| \cdot \|_{1} + \gamma \| \cdot \|^2$ for $\gamma > 0$).
We would like to show that the game analysis still applies to this problem.
We just need to change the payoff function $g$ to account for $\psi(x)$.
Specifically, we consider the following two-player zero-sum game,
\begin{equation} \label{eq:gnew}
 \min_{x \in \reals^d} \max_{y \in \YY} \left \{ g(x,y):= \langle x, y \rangle - f^*(y) + \psi(x)  \right\}.
\end{equation} 
Notice that the minimax value of the game is $\min_{x \in \reals^d} f(x) + \psi(x)$, which is exactly the  optimum value of the composite optimization problem.
Let us denote the proximal operator 
\footnote{It is known that for some $\psi(\cdot)$, their corresponding proximal operations have closed-form solutions, see e.g. \cite{PB14} for details.}
as 
$\textbf{prox}_{\lambda \psi} (v) := \argmin_{x \in \reals^d} \big( \psi(x)+ \frac{1}{2\lambda} \| x - v \|^2_2 \big).$

We remark that the FGNRD interpretation in Algorithm~\ref{alg:AccProx} is essentially the same as Algorithm~\ref{alg:Nes-1mem}, except here the payoff function $g$ is defined differently \eqref{eq:gnew}. The weighting scheme $\alpha_t$ and the players' strategies remain the same, and we thus omit the equivalence between the two interpretations of Algorithm~\ref{alg:AccProx}.

In this new game, the $x$-player plays $\MD$ with the distance generating function $\phi(x)= \frac{1}{2} \| x \|^2_2$
and receives the loss functions $\alpha_t h_t(x) := \alpha_t \{  \langle x , y_t \rangle + \psi(x)   \} $, which leads to the following update,
\begin{equation*}
\begin{split}
x_{t} & = \argmin_{x \in \reals^d}   \gamma ( \alpha_t h_t(x) ) + D_{x_{t-1}}^{\phi}(x)
= \argmin_{x \in \reals^d}   \gamma ( \alpha_t \{  \langle x , y_t \rangle + \psi(x)   \} ) + \V{x_{t-1}}(x)
\\ & = \argmin_{x \in \reals^d } \psi(x) + \frac{1}{2 \alpha_t \gamma} ( \| x \|^2_2 + 2 \langle \alpha_t \gamma y_t - x_{t-1} , x \rangle ) 
\\ & = \textbf{prox}_{\alpha_t \gamma \psi} \left( x_{t-1}- \alpha_t \gamma \nabla f( \xof_{t}) \right),
\end{split}
\end{equation*}
where the last equality follows that
$y_t = \nabla f(\tilde{x}_t)$ and the definition of the operator $\textbf{prox}(\cdot)$.
One can view Algorithm~\ref{alg:AccProx} as a variant of the so called ``Accelerated Proximal Gradient''in \cite{BT09}.
Yet, the design and analysis of our algorithm is simpler than that of \cite{BT09}.

\begin{theorem} \label{thm:proximal}
The update $w_T = \bar{x}_T$ in Algorithm~\ref{alg:AccProx}
satisfies 
\[
f(w_T) - \min_{w \in \reals^d} f(w) \leq O\left(\frac{ L \| w_0 - w^*\|^2 }{T^2}\right).
\] 
\end{theorem}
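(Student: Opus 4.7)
The plan is to essentially replicate the proof of Theorem~\ref{thm:metaAcc}, exploiting the key observation that augmenting the payoff $g(x,y)$ with the $\psi(x)$ term affects only the $x$-player's loss, not the $y$-player's. First, I will apply Theorem~\ref{thm:meta} to the modified game $g(x,y) = \langle x,y\rangle - f^*(y) + \psi(x)$, whose saddle value equals $\min_{w \in \reals^d} F(w)$ with $F(w) := f(w) + \psi(w)$. This gives
\[
  F(\bar x_T) - \min_{w\in\reals^d} F(w) \leq \avgregret{x}[\MD] + \avgregret{y}[\OFTL].
\]
(As a side note, the statement in the theorem should read $F$ rather than $f$; the argument bounds the composite gap.)

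Next, I will analyze the $y$-player's regret. Its loss function is $\ell_t(y) = f^*(y) - \langle x_t, y\rangle - \psi(x_t)$, where $\psi(x_t)$ is a constant in $y$ and therefore has no effect on either the \OFTL iterate $y_t = \nabla f(\tilde x_t)$ or on the regret. Consequently, Lemma~\ref{lem:yregretbound} applies verbatim, yielding $\regret{y}[\OFTL] \leq L\sum_{t=1}^T \frac{\alpha_t^2}{A_t}\|x_{t-1}-x_t\|^2$. For the $x$-player, its loss is $h_t(x) = \langle x,y_t\rangle + \psi(x)$ (ignoring $y$-dependent constants), which is proper convex and lower semi-continuous, so Lemma~\ref{lem:MD} applies with $\phi(\cdot) = \tfrac12\|\cdot\|_2^2$ (hence $\beta=1$) and $\gamma = 1/(4L)$:
\[
  \regret{x}[\MD] \leq 4L\cdot\tfrac12\|x_0 - x^*\|^2 - 2L\sum_{t=1}^T \|x_{t-1}-x_t\|^2.
\]
The crucial point here is that Lemma~\ref{lem:MD} holds for \emph{any} proper convex loss, so the presence of the non-smooth regularizer $\psi$ inside $h_t$ does not disturb the bound; only the concrete update changes, from a plain gradient step to a proximal step $\mathbf{prox}_{\alpha_t\gamma\psi}(x_{t-1} - \alpha_t\gamma y_t)$.

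Finally, I will combine the two bounds. Choosing $\alpha_t = t$ gives $A_T = T(T+1)/2$ and $\alpha_t^2/A_t = 2t/(t+1) \leq 2$, so the quadratic ``path length'' terms cancel:
\[
  F(\bar x_T) - \min F \leq \frac{1}{A_T}\left(2L\|w_0-w^*\|^2 + \sum_{t=1}^T\Big(\tfrac{L\alpha_t^2}{A_t} - 2L\Big)\|x_{t-1}-x_t\|^2\right) \leq \frac{4L\|w_0-w^*\|^2}{T^2}.
\]

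There is no real obstacle beyond verifying the two items above; the only subtlety to highlight carefully is that the $y$-player's \OFTL update and its regret guarantee (Lemma~\ref{lem:yregretbound}) are unchanged by the addition of $\psi(x)$ to the payoff, so the cancellation mechanism between the positive $L\alpha_t^2/A_t$ terms and the negative $1/(2\gamma) = 2L$ terms from \MD still produces the accelerated $1/T^2$ rate. This shows that the accelerated proximal method is conceptually ``the same algorithm'' as Nesterov's 1-memory method within the FGNRD framework, differing only in the one-step prox computation performed by the $x$-player.
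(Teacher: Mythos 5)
Your proof is correct and takes essentially the same approach as the paper, whose own proof of Theorem~\ref{thm:proximal} consists of a single remark that the argument follows the same lines as Theorem~\ref{thm:metaAcc}; you have simply filled in the details (that $\psi(x_t)$ is a constant in $y$ and so leaves the \OFTL regret of Lemma~\ref{lem:yregretbound} untouched, and that Lemma~\ref{lem:MD} applies to the composite loss $h_t(x)=\langle x,y_t\rangle+\psi(x)$ since it only requires proper lower semi-continuous convexity). Your side remark that the stated bound should really concern the composite objective $f+\psi$ rather than $f$ alone is also accurate.
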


\begin{proof}
Even though the payoff function $g(\cdot,\cdot)$ is a bit different,
the proof still essentially follows the same line as
Theorem~\ref{thm:metaAcc}, as $y$-player plays $\OFTL$ and the $x$-player plays $\MD$. 
\end{proof}

\subsubsection{Related works}
In recent years, there are growing interest in giving new interpretations of Nesterov’s accelerated algorithms or proposing new varaints. For example, \cite{T08} gives a unified analysis for some Nesterov’s accelerated algorithms
\cite{N88,N04,N05}, using the standard techniques and analysis in optimization literature.
\cite{LRP16,HL17} connects the design of accelerated algorithms with dynamical systems and control theory. \cite{BLS15,DFR18} gives a geometric interpretation of the Nesterov’s method for unconstrained optimization, inspired by the ellipsoid method.
\cite{FB15} studies the Nesterov’s methods and the Heavy Ball method for quadratic non-strongly convex problems by analyzing the eigen-values of some linear dynamical systems. 
\cite{AO17} proposes a variant of accelerated algorithms by mixing the updates of gradient descent and mirror descent and showing the updates are complementary.
\cite{DO18,DO19} propose a primal-dual view that recovers several first-oder algorithms with careful discretizations of a continuous-time dynamic, which also leads to a new accelerated extra-gradient descent method. 
\cite{CST20} show a simple acceleration proof of mirror prox \cite{Nemi04} 
and dual extrapolation \cite{N07} based on solving the Fenchel game.
\cite{SBC14,wibisono2016variational,SDJS18,KBB15,SRBA17,WRJ21,ACPR18} analyze the acceleration algorithms via the lens of ordinary differential equations and the numerical analysis.
See also a monograph \cite{ADP21} for various methods of acceleration,
and a recent work \cite{wang2022provable} that shows acceleration of Heavy Ball compared to standard gradient descent for minimizing a class of Polyak-\L{}ojasiewicz functions when the non-convexity is averaged-out. 
We also note an independent work \cite{LZ17,L20} provide a game interpretation of Nesterov's accelerated method but does not have the idea of the regret analysis. In our work, we show a deeper connection between online learning
and optimization, and propose a modular framework that is not limited to Nesterov's method. 

Finally, 
the accelerated algorithms in this work require the function to be smooth.
For minimizing general non-smooth convex functions, the lower-bound of the convergence rate is $O(1/\sqrt{T})$ for any first-order gradient-based methods, see e.g., Subsection 3.2.1 in \cite{N04}.
However, Nesterov \cite{N05} show that when a function has a specific structure, an $O(1/T)$ rate can be obtained via a smoothing technique.
Specifically, the class of non-smooth functions is in the following form:
\begin{equation} \label{fff}
f(x) =  \max_{z \in \Theta} \langle M x, z \rangle - \psi(z),
\end{equation}
where
$\Theta$ is a compact convex set, $\psi(\cdot): \Theta \to \reals$ is convex, and $M: \K \to \Theta$ is a linear operator.  
The smoothing technique that Nesterov introduced is first by constructing a smooth approximation of $f(\cdot)$ in \eqref{fff}:
\begin{equation}
f_{\nu}(x) = \max_{z \in \Theta} \langle M x, z \rangle - (\psi(z) + \nu d(z) ),
\end{equation}
where $d(\cdot)$ is $1$-strongly convex on $\Theta$.
Since $\psi(z) + \nu d(z)$ is a strongly convex function, one can therefore show that
the approximated function $f_{\nu}(\cdot)$ is smooth.
Then, one can apply those accelerated methods for smooth convex functions to $f_{{\nu}}(\cdot)$.
By tuning the parameter $\nu$ to trade-off the smoothness constant of $f_{\nu}(\cdot)$ and the approximation error of $f_{{\nu}}(\cdot)$ to $f(\cdot)$ appropriately, one can get an $O(1/T)$ rate for minimizing the original function $f(\cdot)$. We refer the reader to the original paper \cite{N05} for the details.
We also note that there are other smoothing techniques in the literature, e.g., Moreau envelope  \cite{beck2012smoothing}.

\subsection{Proving Accelerated Linear Rates} \label{sub:acclinear}

\begin{algorithm}[h] 
   \caption{ Accelerated Gradient with Linear Convergence } \label{alg:accLinear}
Given: $L$-smooth, $\mu$-strongly convex $f(\cdot)$, convex domain $\K \subseteq \reals^d $,
 iterations $T$, and a distance generating function $\phi(\cdot)$ that is
$1$-strongly convex, $L_{\phi}$-smooth, and differentiable.
Finally, let $R(x) = \phi(x)$. \\
Init: $w_0=v_0 = x_0 = \argmin_{x \in \K} \phi(x)$.
\begin{center} 
\begin{tabular}{c c}
    $
      \boxed{\small
      \begin{array}{rl}
        \beta & \gets \frac{1}{2} \sqrt{ \frac{\mu}{L(1+L_{\phi})} } , \gamma_{t} \gets \alpha_t \\
        z_{t} & \gets (1 - \beta) w_{t-1} + \beta v_{t-1}\\
        \Phi_t(x) & := \displaystyle \sum_{s=1}^{t} \gamma_s \big( \langle \nabla \tilde{f}(z_s), x \rangle  + \mu \phi(x) \big) \\
        v_{t} & \gets \argmin_{x \in \K} R(x) + \Phi_t(x)
         \\ & 
        \\
        w_{t} & \gets (1 - \beta) w_{t-1} + \beta v_{t}
      \end{array}
      }
    $
    &
    $\boxed{\small
    \begin{array}{rl}
      \tilde f(x) & := f(x) - \mu \phi(x)  \\
      g(x,y) & := \langle x, y \rangle - \tilde f^*(y) + \mu \phi(x) \\
      \alpha_1 & = \frac{1}{2L(1+L_{\phi})}  \\
      \frac{\alpha_t}{A_t} & = \frac{1}{2} \sqrt{ \frac{\mu}{L(1+L_{\phi})} }\\
      & \text{ for } t=2, \ldots, T
       \\
      \alg^Y & := \OFTL \\
      \alg^X & := \BTRL[\phi, 1] \\
    \end{array}
    }$
  \\
    \small Iterative Description & 
    \small FGNRD Equivalence
\end{tabular}
\end{center}
Output: $w_T = \bar x_T$
\end{algorithm}

Nesterov observed that for strongly convex smooth functions,
one can achieve an accelerated linear rate (e.g. page 71-81 of \cite{N04}). 
It is natural to ask if the zero-sum game and regret analysis in the present work
also recovers this faster rate in the same fashion. We answer this in the affirmative.
Assume $f(\cdot)$ is $\mu$-strongly convex in the following sense (see also Section 3.3 of \cite{L20}),
\begin{equation} \label{new_sc}
f(z) \geq f(x) + \langle \nabla f(x), z - x \rangle + \mu \V{x}(z), 
\end{equation}
for all $z, x \in \K$, where $\V{x}(z)$ is the Bregman divergence.
When the distance generating function is
$\phi(x):= \frac{1}{2} \| x \|^2_2$, 
the strong convexity condition (\ref{new_sc}) becomes
\begin{equation}
f(z) \geq f(x) + \langle \nabla f(z), z - x \rangle + \frac{1}{2} \| z - x \|^2_2.
\end{equation}

It is known that the function $\tilde{f}(x):= f(x) - \mu \phi(x)$ is a convex function for all $x \in \K$ (see e.g. \cite{LFN18}). Based on this property, we consider a new game
\begin{equation} \label{split2}
\displaystyle \tilde g(x,y):= \langle x, y \rangle - \tilde{f}^*(y) + \mu \phi(x),
\end{equation}
where the minimax vale of the game is
$V^* := \min_{x \in \K} \max_{y \in \YY} \tilde g(x, y) = \min_{x \in \K} \tilde{f}(x) + \mu \phi(x) = \min_{x \in \K} f(x)$.
We begin by showing the following in this new game.

\begin{proposition}\label{prop:Nes_linrate_equiv} 
 The two interpretations of Algorithm~\ref{alg:accLinear} 
 are equivalent. That is, the iteratively generated points $w_t$ coincide with the sequence $\bar x_t$ produced by the FGNRD dynamic.
\end{proposition}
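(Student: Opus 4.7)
The plan is to mirror the inductive structure used in the proof of Proposition~\ref{thm:Nes_constrained}. I will set up three invariants to be maintained for every $t \geq 1$, namely
\begin{eqnarray*}
\nabla \tilde f(z_t) & = & y_t, \\
v_t & = & x_t, \\
w_t & = & \bar x_t,
\end{eqnarray*}
and argue by induction. The base case at $t=1$ follows from the initialization $w_0 = v_0 = x_0 = \argmin_{x \in \K} \phi(x)$ together with the choice of $\alpha_1$ and the standard convention for \OFTL when $t=1$.

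For the first invariant, I would expand the $y$-player's loss in the modified game,
\[
\ell_t(y) = -\tilde g(x_t, y) = \tilde f^{*}(y) - \langle x_t, y \rangle - \mu \phi(x_t),
\]
noting that the $\mu\phi(x_t)$ term is constant in $y$ and thus irrelevant to the \OFTL argmin. Using the hint $m_t(\cdot) = \ell_{t-1}(\cdot)$, the OFTL update becomes
\[
y_t = \argmin_{y \in \YY}\Big\{\tilde f^{*}(y) - \langle \tilde x_t, y \rangle\Big\} = \nabla \tilde f(\tilde x_t),
\]
where $\tilde x_t := \frac{1}{A_t}\big(\alpha_t x_{t-1} + \sum_{s=1}^{t-1}\alpha_s x_s\big)$, by the Legendre transform. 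To identify this with $\nabla \tilde f(z_t)$, I need the iterative quantity $z_t = (1-\beta) w_{t-1} + \beta v_{t-1}$ to equal $\tilde x_t$. Using the inductive hypothesis $w_{t-1} = \bar x_{t-1}$ and $v_{t-1} = x_{t-1}$, and the relation $\beta = \alpha_t/A_t$ (see below), a short algebraic manipulation gives $z_t = \tilde x_t$, as in equation~(\ref{def:tilde_x}) in the proof of Proposition~\ref{thm:Nes_constrained}.

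For the second invariant, I observe that \BTRL$[\phi,1]$ applied to the $x$-player's losses $\alpha_s h_s(x) = \alpha_s\big(\langle x, y_s\rangle - \tilde f^{*}(y_s) + \mu \phi(x)\big)$ produces precisely
\[
x_t = \argmin_{x\in \K}\Big\{\phi(x) + \sum_{s=1}^{t}\alpha_s\big(\langle x, y_s\rangle + \mu\phi(x)\big)\Big\},
\]
which, using $\alpha_s = \gamma_s$ and the already-established $y_s = \nabla \tilde f(z_s)$, is exactly the iterative definition of $v_t$. The third invariant then follows because, as in the proof of Proposition~\ref{thm:Nes_constrained}, the update $w_t = (1-\beta)w_{t-1} + \beta v_t$ combined with $\beta = \alpha_t/A_t$ is just the recursive form of the $\alpha$-weighted average $\bar x_t$.

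The main obstacle will be verifying the weight consistency: the FGNRD specification prescribes $\alpha_t/A_t = \beta$ for all $t \geq 2$, which forces $A_t = A_{t-1}/(1-\beta)$ and hence $\alpha_t$ grows geometrically. I will need to check carefully that the prescribed value $\alpha_1 = 1/(2L(1+L_\phi))$ together with this recursion is consistent with $\alpha_t = \gamma_t$ in the iterative description, and that the resulting constant $\beta = \tfrac{1}{2}\sqrt{\mu/(L(1+L_\phi))}$ matches both formulations. Once this arithmetic is pinned down, the remainder of the equivalence follows the same template as Proposition~\ref{thm:Nes_constrained} with $f$ replaced by $\tilde f$ and the extra $\mu\phi(x)$ term absorbed into the $x$-player's loss.
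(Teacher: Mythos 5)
Your proposal is correct and follows essentially the same route as the paper's proof, which simply invokes the argument of Proposition~\ref{thm:Nes_constrained} with $\beta_t = \beta = \alpha_t/A_t$ and replaces the final step by noting that the $v_t$ update is the \BTRL step on the losses $\alpha_t\left(\langle \nabla \tilde f(z_t), \cdot\rangle + \mu\phi(\cdot)\right)$. Your more explicit treatment of the three invariants and the geometric weight recursion fills in details the paper leaves implicit, but the underlying argument is identical.
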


\begin{proof}
The proof follows the same lines as that of Proposition~\ref{thm:Nes_constrained} until the last line
by setting $\beta_{t}= \beta = \frac{1}{2} \sqrt{ \frac{\mu}{L(1+L_{\phi})} }   = \frac{\alpha_t}{A_t}$. 
We only need to replace the last line of the proof. Specifically, the update of $v_t$ is exactly equivalent to \BTRL
on the loss sequences $\alpha_t \ell_t(\cdot) \leftarrow \alpha_t \left(  \langle \nabla \tilde{f}(z_t) , \cdot \rangle  + \mu \phi(\cdot) \right) $. Hence, the proof is completed. 
\end{proof}

We now prove the accelerated linear convergence rate of Algorithm~\ref{alg:accLinear}. First, we denote the \emph{condition number} $\kappa := \frac{L}{\mu}$.
\begin{theorem} \label{thm:acc_linear2}
Assume we are given a norm $\| \cdot \|$, and that we have a distance-generating function $\phi(\cdot)$ that is differentiable, $L_{\phi}$-smooth, and $1$-strongly convex with respect to $\| \cdot \|$.
Assume we also have a function $f(\cdot)$ that is $L$-smooth with respect to
$\| \cdot \|$ and $\mu$-strongly convex in the sense of (\ref{new_sc}).
Then the output of Algorithm~\ref{alg:accLinear} satisfies
\[
  f(w_T) - \min_{w \in \K} f(w) \leq c \exp\left(-\frac{1}{2\sqrt{\kappa}\sqrt{1 + L_\phi}} T \right) \left( \phi(w^*) - \phi(w_0) \right) 
\]
for some constant $c := 2 L (1+L_{\phi}) > 0$.
\end{theorem}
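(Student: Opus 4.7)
By Proposition~\ref{prop:Nes_linrate_equiv} we have $w_T = \bar x_T$, so the plan is to invoke Theorem~\ref{thm:meta} applied to the modified game $\tilde g(x,y) = \langle x,y\rangle - \tilde f^*(y) + \mu\phi(x)$ and bound the weighted average regret of each player. Note first that since $f$ is $L$-smooth and $\phi$ is $L_\phi$-smooth (both w.r.t.\ $\|\cdot\|$), the function $\tilde f = f - \mu\phi$ is $L(1+L_\phi)$-smooth (using $\mu \le L$), and therefore $\tilde f^*$ is $\frac{1}{L(1+L_\phi)}$-strongly convex w.r.t.\ $\|\cdot\|_*$. Also, the $x$-player's per-round loss $h_t(x) := \langle x, y_t\rangle + \mu\phi(x) - \tilde f^*(y_t)$ inherits $\mu$-strong convexity from the term $\mu\phi(x)$.

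For the $y$-player's \OFTL regret I would repeat the analysis of Lemma~\ref{lem:yregretbound} verbatim, but now with smoothness constant $L(1+L_\phi)$ in place of $L$; combining the telescoping-of-hints step with H\"older's inequality and the identity $\tilde x_t - \bar x_t = \tfrac{\alpha_t}{A_t}(x_{t-1}-x_t)$ gives
\begin{equation*}
\regret{y}[\OFTL] \ \le\ L(1+L_\phi)\sum_{t=1}^T \frac{\alpha_t^2}{A_t}\|x_{t-1}-x_t\|^2 .
\end{equation*}
For the $x$-player, Lemma~\ref{regret:BTRL} applied with $R=\phi$ (so $\beta_{\mathrm{reg}}=1$), $\eta=1$, and per-round strong-convexity constant $\mu$, yields
\begin{equation*}
\regret{x}[\BTRL]\ \le\ \phi(x^*) - \phi(x_0) \ - \ \sum_{t=1}^T\Bigl(\tfrac{\mu A_{t-1}}{2}+\tfrac{1}{2}\Bigr)\|x_{t-1}-x_t\|^2.
\end{equation*}

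The key algebraic step is to verify that the positive terms from $\regret{y}$ are absorbed by the negative terms from $\regret{x}$, i.e., that for every $t$,
\begin{equation*}
L(1+L_\phi)\,\frac{\alpha_t^2}{A_t}\ \le\ \frac{\mu A_{t-1}}{2}+\frac{1}{2}.
\end{equation*}
With $\beta := \tfrac{1}{2}\sqrt{\mu/(L(1+L_\phi))}$ and the stated schedule ($\alpha_1 = 1/(2L(1+L_\phi))$ and $\alpha_t/A_t = \beta$ for $t\ge 2$), one has $A_{t-1} = (1-\beta)A_t$ for $t\ge 2$ and $\alpha_t^2/A_t = \beta\alpha_t$. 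Plugging this in and using $\mu/(2\beta) = \sqrt{\mu L(1+L_\phi)}$ reduces the desired inequality to $\tfrac{1}{2}\sqrt{\mu L(1+L_\phi)} \le \sqrt{\mu L(1+L_\phi)}-\tfrac{\mu}{2}$, which is immediate from $\mu\le L$. The base case $t=1$ (where $A_0=0$) is exactly the condition $L(1+L_\phi)\alpha_1\le 1/2$, which is why $\alpha_1$ is chosen as $1/(2L(1+L_\phi))$.

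Consequently $\regret{x}+\regret{y}\le \phi(x^*)-\phi(x_0)$, and Theorem~\ref{thm:meta} gives
\begin{equation*}
f(w_T)-\min_{w\in\K} f(w)\ \le\ \frac{\phi(w^*)-\phi(w_0)}{A_T}.
\end{equation*}
The final step is to show $1/A_T$ decays at the advertised geometric rate. Since $A_t = A_{t-1}/(1-\beta)$ for $t\ge 2$, we obtain $A_T = \alpha_1 (1-\beta)^{-(T-1)} = \tfrac{1}{2L(1+L_\phi)}(1-\beta)^{-(T-1)}$, and $(1-\beta)^{T-1}\le e^{-\beta(T-1)} \le e^{\beta}\,e^{-\beta T}$. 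Recognising $\beta = 1/(2\sqrt{\kappa}\sqrt{1+L_\phi})$ and absorbing $e^\beta$ into the constant yields the claimed $c\,\exp(-T/(2\sqrt{\kappa}\sqrt{1+L_\phi}))$ bound with $c = 2L(1+L_\phi)$ (up to an absolute constant).

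The main technical obstacle is the per-round cancellation: it is not automatic, and it forces both the specific value $\beta = \tfrac{1}{2}\sqrt{\mu/(L(1+L_\phi))}$ (from the interaction between the $L(1+L_\phi)$-smoothness of $\tilde f$ and the $\mu$-strong convexity contributed by $\mu\phi$ in the $x$-player's loss) and the initial weight $\alpha_1$ to handle the $t=1$ boundary. The rest is bookkeeping: verifying that the weighted-average scheme is self-consistent and converting the recursion $A_t = A_{t-1}/(1-\beta)$ into an exponential rate.
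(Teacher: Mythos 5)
Your proposal is correct and follows essentially the same route as the paper's proof: bound the \OFTL regret of the $y$-player using the $L(1+L_\phi)$-Lipschitzness of $\nabla\tilde f$, bound the \BTRL regret of the $x$-player via Lemma~\ref{regret:BTRL} with $R=\phi$, cancel the distance terms through the weight schedule, and telescope $A_t = A_{t-1}/(1-\beta)$ into the exponential rate. If anything, your explicit treatment of the $t=1$ boundary via the $+\tfrac12$ term from the regularizer's strong convexity is slightly more careful than the paper's, which states the cancellation condition without that term even though the choice of $\alpha_1$ relies on it.
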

\begin{proof}
As the proof of Lemma~\ref{lem:yregretbound},
we first bound the regret of the $y$-player as follows. 
    \begin{eqnarray*}
         \displaystyle \sum_{t=1}^{T} \alpha_t \ell_t(\yof_t) - \alpha_t \ell_t(y^*) 
        & \leq & \displaystyle  \sum_{t=1}^T \alpha_t \langle x_{t-1} - x_{t}, \yof_t - \yftl_{t+1} \rangle \\
        \text{(Eqns. \ref{eq:haty}, \ref{eq:tildey})} 
        & = & \displaystyle  \sum_{t=1}^T \alpha_t \langle x_{t-1} - x_{t}, \nabla \tilde{f}(\xof_t) - \nabla \tilde{f}(\xav_t) \rangle \\
        \text{(H\"older's Ineq.)}  
        & \leq & \displaystyle  \sum_{t=1}^T \alpha_t \| x_{t-1} - x_{t}\| \| \nabla \tilde{f}(\xof_t) - \nabla \tilde{f}(\xav_t) \|_* \\
        & = & \displaystyle  \sum_{t=1}^T \alpha_t \| x_{t-1} - x_{t}\| \| \nabla f(\xof_t) -\mu \nabla \phi( \xof_t ) - \nabla f(\xav_t) + \mu \nabla \phi( \xav_t) \|_* \\
                        \text{\footnotesize (Triangle Ineq. $\&$ Smooth $\phi$)}   
        & \leq & \displaystyle  \sum_{t=1}^T \alpha_t \| x_{t-1} - x_{t}\| ( \| \nabla f(\xof_t) - \nabla f(\xav_t) \|_* +  \mu L_{\phi} \| \xav_t - \xof_t \| ) \\
                \text{\footnotesize (Smoothness of $f$ $\&$ $\mu \leq L$)} 
        & \leq & \displaystyle  L (1+L_{\phi}) \sum_{t=1}^T \alpha_t \| x_{t-1} - x_{t}\| \|\xof_t - \xav_t \| \\
        \text{(Eqn. \ref{eq:xdiffs})}  
        & = & \displaystyle L (1+L_{\phi}) \sum_{t=1}^T \frac{\alpha_t^2}{A_t} \| x_{t-1} - x_{t}\| \|x_{t-1} - x_{t} \|.
    \end{eqnarray*}
So the regret satisfies
\begin{equation} \label{reg_y_accH-2}
\displaystyle
\regret{y} \leq L(1+L_{\phi}) \sum_{t=1}^T \frac{\alpha_t^2}{A_t} \|x_{t-1} - x_t \|^2.
\end{equation}
For the $x$-player, 
according to Lemma~\ref{regret:BTRL}, its regret is 
\begin{equation} \label{reg_x_acc0-2}
\begin{aligned}{}
\displaystyle
\regret{x} 
\leq R(x^*) - R(x_0) - \sum_{t=1}^T \left( \frac{\mu A_{t-1} }{2} + \frac{1}{2}  \right)\|x_{t-1} - x_t \|^2_2,
\end{aligned}
\end{equation} 
where $x_{0} = \argmin_{x} R(x)$.
Summing (\ref{reg_y_accH-2}) and (\ref{reg_x_acc0-2}), we have
\begin{equation*}
\begin{aligned}
& \regret{y}[\OFTL] + \regret{x}[\BTRL]
\\ & \leq R(x^*) - R(x_0) + \sum_{t=1}^T \left( \frac{L(1+L_{\phi}) \alpha_t^2}{A_t} - \frac{\mu A_{t-1} + 1}{2} \right) \|x_{t-1} - x_t \|^2_2.
\end{aligned}
\end{equation*}
By choosing the weight $\{\alpha_t\}$ to satisfy $\alpha_1 = \frac{1}{ 2 L (1+L_{\phi})}$ and that for $t \geq 2$, $\frac{\alpha_t}{A_t} = \frac{1}{2} \sqrt{ \frac{\mu}{L(1+L_{\phi})} }$, the coefficient of the distance terms will be non-positive for all $t$, i.e.
$(\frac{ L(1+L_{\phi}) \alpha_t^2}{A_t} - \frac{\mu A_{t-1}}{2} ) \leq 0 $, which means that the distance terms will cancel out.
Therefore, the optimization error $\epsilon$ after $T$ iterations satisfies that
\begin{equation*}
\begin{aligned}
& \epsilon \leq
\frac{ \regret{y} + \regret{x} }{A_T} 
\leq \frac{1}{A_1} \frac{A_1}{A_2} \cdots \frac{A_{T-1}}{A_T}
( R(x^*) - R(x_0)  ) 
\\ & =  \frac{1}{A_1} \left(1 - \frac{\alpha_2}{A_2} \right) \cdots \left(1 - \frac{\alpha_T}{A_T} \right)
( R(x^*) - R(x_0)  ) 
\\ & \leq \frac{1}{A_1} \left(1 - \frac{\alpha_2}{\tilde{A}_2} \right) \cdots \left(1 - \frac{\alpha_T}{\tilde{A}_T} \right)
( R(x^*) - R(x_0)  ) 
\\ & \leq \left(1- \frac{1}{2 \sqrt{1+L_{\phi}} \sqrt{\kappa}} \right)^{T-1} \frac{R(x^*) - R(x_0) }{A_1}
= O\left( \exp\left( - \frac{1}{2 \sqrt{1+L_{\phi}} \sqrt{\kappa}} T \right) \right).
\end{aligned}
\end{equation*}
\end{proof}

\section{New algorithms}
\label{sec:NewAlgs}

In Section~\ref{sec:ExistingAlgs} we established that the FGNRD framework can be used to describe a set of known algorithms for convex optimization under various assumptions. But we have argued that the framework is indeed quite generic, given that we have flexibility to choose different no-regret update algorithms, to change the game's payoff function, and to modify the protocol. In the present section we use FGNRD to describe two novel algorithms, and to provide strong convergence rates using the same methodology.

\subsection{Boundary Frank-Wolfe}

In original formulation of the Fenchel Game No-Regret Dynamics, Protocol~\ref{alg:game}, we specified that the two players act in a particular order, first the $y$-player followed by the $x$-player. This ordering is important, as it allows us to consider scenarios in which $x_t$ can be chosen in response to $y_t$, as was the case in nearly all of the algorithms proposed in Section~\ref{sec:ExistingAlgs}. But we can instead reverse the order of the two players, allowing the $x$-player to act before the $y$-player, while otherwise following the same protocol. This will give us a new algorithm for constrained convex optimization, \emph{Boundary Frank-Wolfe}, which has the unusual property that it only computes gradients of $f$ on the boundary of the constraint set $\K$. We will show the algorithm has a $O(\log T/T)$ convergence rate for any convex Lipschitz $f$, without any smoothness nor strong-convexity requirement for $f$, yet under two conditions: (1) the true minimizer of $f$ in $\K$ must exist on the boundary, and $\K$ must be a \emph{strongly convex set}. The result relies on recent work in online convex optimization under strongly convex constraints \cite{HLGS16}.
(Notably, there is prior work that develops a Frank-Wolfe-like algorithm that  strongly convex decision sets \cite{D15} with  $O( 1 / T^2 )$ convergence, but this result requires that $f$ is both smooth and strongly convex as well.)

\begin{algorithm}[H] 
   \caption{ Boundary Frank-Wolfe } \label{alg:new_fw}
Given: possibly non-smooth convex $f(\cdot)$, convex domain $\K \subset \reals^d $, arbitrary $w_0 = z_0$, iterations $T$.
\begin{center} 
\begin{tabular}{c c}
    $
      \boxed{\small
      \begin{array}{rl}
 z_t & \gets \argmin_{z \in \K} \frac{1}{t-1} \sum_{s=1}^{t-1} \langle z , \delta_s  \rangle,
\delta_s \in \partial f(z_s)
  \\
 w_t & \gets  \frac{(t-1)w_{t-1} + z_t}{t}
      \end{array}
      }
    $
    & \small
    $\boxed{\small
    \begin{array}{rl}
      g(x,y) & := \langle x, y \rangle - f^*(y)\\
      \alpha_t & \gets 1\\
      \alg^X & := \FTL[z_0] \\
      \alg^Y & := \BR \\
    \end{array}
    }$
  \\
    \small Iterative Description & 
    \small FGNRD Equivalence
\end{tabular}
\end{center}
Output: $w_T := \bar x_T$.
\end{algorithm}

The equivalence between the two forms of Algorithm~\ref{alg:new_fw} can be proven in the usual fashion, and so we omit the proof of the following proposition.
\begin{proposition}\label{prop:alg_new_fw_equiv} 
 The two interpretations of Algorithm~\ref{alg:new_fw} 
 are equivalent. That is, the iteratively generated points $w_t$ coincide with the sequence $\bar x_t$ produced by the FGNRD dynamic.
\end{proposition}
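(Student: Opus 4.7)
The plan is to mirror the induction used for Proposition~\ref{thm:equivFW}, but accounting for the swapped player order: here the $x$-player moves first via \FTL, so it chooses $x_t$ based only on past $y_s$'s, and then the $y$-player responds via \BR. The three identities I will maintain by induction on $t$ are
\begin{eqnarray*}
  z_t & = & x_t, \\
  \delta_t & = & y_t \quad \text{(as elements of } \partial f(z_t) = \partial f(x_t)\text{)}, \\
  w_t & = & \bar x_t.
\end{eqnarray*}

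First I would handle the base case: by definition of $\FTL[z_0]$, the $x$-player outputs $x_1 = z_0$, which matches the iterative initialization $z_1 = z_0$ (the empty-sum $\argmin$ is taken to be $z_0$). Since $\alpha_1 = 1$ and $w_0 = z_0$, the averaging step gives $w_1 = z_1$ and $\bar x_1 = x_1$, so $w_1 = \bar x_1$.

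Next, for the inductive step I would first verify $\delta_t = y_t$: since $\alg^Y = \BR$ receives the loss $\ell_t(y) = -g(x_t,y) = f^*(y) - \langle x_t, y \rangle$, its output is
\[
y_t \in \argmax_{y \in \YY} \left\{\langle x_t, y\rangle - f^*(y)\right\} = \partial f(x_t),
\]
by the Legendre/Fenchel correspondence. Since by induction $x_t = z_t$, both $y_t$ and $\delta_t$ lie in $\partial f(z_t)$, and we may consistently choose $\delta_t = y_t$. Then for $x_{t+1}$, the \FTL update on the loss sequence $h_s(x) = \langle x, y_s\rangle - f^*(y_s)$ simplifies (the $f^*(y_s)$ terms are constant in $x$) to
\[
x_{t+1} = \argmin_{x \in \K} \sum_{s=1}^{t} \langle x, y_s\rangle = \argmin_{z \in \K} \sum_{s=1}^{t}\langle z, \delta_s\rangle = z_{t+1},
\]
which matches the iterative description.

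Finally, the averaging identity $w_t = \bar x_t$ follows by unrolling the recursion $w_t = \tfrac{(t-1)w_{t-1} + z_t}{t}$ and using $z_s = x_s$ together with $\alpha_s = 1$ to obtain $w_t = \tfrac{1}{t}\sum_{s=1}^{t} z_s = \tfrac{1}{t}\sum_{s=1}^{t} x_s = \bar x_t$. There is no substantive obstacle here; the only delicate point is the potential non-uniqueness of both the $\FTL$ $\argmin$ and the subgradient $\delta_s$, which I would address by noting that any consistent selection rule for $\partial f(z_s)$ in the iterative form can be matched by the corresponding tie-breaking rule for \BR in the FGNRD form, so the two descriptions produce the same iterate sequence.
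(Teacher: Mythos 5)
Your proof is correct and follows exactly the route the paper intends: the paper explicitly omits this proof, stating only that "the equivalence \ldots{} can be proven in the usual fashion," i.e., by the same induction used for Proposition~\ref{thm:equivFW}, which is precisely what you carry out (with the player order correctly swapped, the \FTL{} loss reduced to its linear part, and the averaging recursion unrolled). Your added remark about consistent tie-breaking between the subgradient selection $\delta_s \in \partial f(z_s)$ and the \BR{} $\argmax$ is a sensible refinement of a point the paper glosses over.
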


\begin{theorem} \label{thm:equiv2}
Assume that the constraint set $\K$ is $\lambda$-strongly convex, and that $\sum_{s=1}^t \delta_s $, where $\delta_{s}\in \partial f(x_s)$ has non-zero norm, then
Boundary Frank-Wolfe (Algorithm~\ref{alg:new_fw}) has
  \[
  f(w_T) - \min_{w \in \K} f(w) = O\left(\frac{ M \log T}{ \lambda L_T T}\right),
  \]
  where $M:= \sup_{\delta \in \partial f(x), x \in \K} \| \delta \|$, 
$\Theta_t := \frac{1}{t} \sum_{s=1}^t   \delta_s$, $\delta_s \in \partial f( x_s)$,
and  $L_T:= \min_{1\leq t \leq T} \| \Theta_t \|$.
\end{theorem}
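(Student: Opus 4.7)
The plan is to use Theorem~\ref{thm:meta} to reduce convergence to bounding the average regret of both players, then invoke the two appropriate regret lemmas. Since the $y$-player plays \BR after observing $x_t$, Lemma~\ref{lem:BRregret} immediately gives $\avgregret{y}[\BR] \leq 0$, so all the work is in bounding $\avgregret{x}$. Note that the roles are swapped relative to Protocol~\ref{alg:game}: here $x$ is selected first via \FTL, and $y$ is the best response, which turns out to be $y_t = \delta_t \in \partial f(x_t)$. Thus the $x$-player's loss functions are $h_t(x) = \langle x, y_t \rangle - f^*(y_t) = \langle x, \delta_t \rangle + \text{const}$, which are \emph{linear} in $x$.

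Next I would observe that since the losses are linear and $\K$ is a $\lambda$-strongly convex set, the logarithmic-regret bound of Lemma~\ref{thm:FTL} (Huang et al.) applies directly. The quantity $G$ in that lemma equals $M := \sup_{\delta \in \partial f(x), x \in \K} \|\delta\|$, and the cumulative loss vector $L_t = \sum_{s=1}^t \delta_s$ coincides up to scaling with $\Theta_t = \frac{1}{t} L_t$. The non-vanishing hypothesis $\|\sum_{s=1}^t \delta_s\| \neq 0$ is precisely what makes $\nu_T := \min_t \|L_t\|$ positive, and is related to $L_T := \min_t \|\Theta_t\|$ via $\|L_t\| = t\|\Theta_t\|$. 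Applying Lemma~\ref{thm:FTL} therefore yields
\[
\regret{x}[\FTL] \leq \frac{2 M^2}{\lambda \, \nu_T} \bigl(1 + \log T\bigr),
\]
and dividing by $A_T = T$ bounds $\avgregret{x}$ by a quantity of order $\frac{M^2 \log T}{\lambda L_T T}$ (modulo the conversion between $\nu_T$ and $L_T$), which combined with Theorem~\ref{thm:meta} gives the advertised rate.

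The main subtlety I expect is the verification of the hypotheses of Lemma~\ref{thm:FTL}: specifically, that the support function $\Phi(\cdot)$ has a unique maximizer on each of the cumulative loss directions, and that the initial iterate $z_0$ lies on the boundary of $\K$. The uniqueness of the maximizer follows from strong convexity of $\K$ once $\sum_s \delta_s \neq 0$, which is exactly the assumption in the theorem statement, and choosing $z_0$ on the boundary (or noting that after one step $z_1 = \argmin_{z \in \K} \langle z, \delta_0 \rangle$ lies on the boundary by the same strong-convexity argument) handles the initialization. This is also where the curious phenomenon that gradients are evaluated only on the boundary emerges: by strong convexity of $\K$, each $\argmin$ of a nonzero linear objective over $\K$ sits on $\partial \K$, so every $z_t$ — and hence every point at which a subgradient is queried — is a boundary point.

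Finally, I would assemble the pieces: combine $\avgregret{y} \leq 0$ with the $\avgregret{x}$ bound above via Theorem~\ref{thm:meta} to conclude
\[
f(w_T) - \min_{w \in \K} f(w) \leq \avgregret{x}[\FTL] + \avgregret{y}[\BR] = O\!\left(\frac{M \log T}{\lambda L_T T}\right),
\]
matching the statement of Theorem~\ref{thm:equiv2}. No smoothness or strong convexity of $f$ is used anywhere; the ``curvature'' needed to achieve this rate comes entirely from the strong convexity of the constraint set together with the non-degeneracy of the average subgradient direction.
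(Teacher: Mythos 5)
Your proof is correct and follows essentially the same route as the paper's: bound the $y$-player's best-response regret by zero, apply Lemma~\ref{thm:FTL} to the $x$-player's linear losses over the strongly convex set, and combine via Theorem~\ref{thm:meta}. You additionally verify the hypotheses of Lemma~\ref{thm:FTL} (unique maximizer, boundary initialization) and the conversion between $\nu_T$ and $L_T$, details the paper's proof leaves implicit.
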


\begin{proof}
Since $y$-player plays \BR, its regret $\leq 0$.
For the $x$-player, we use Lemma~\ref{thm:FTL},
which grants $\regret{x} \leq O(\frac{ M \log T}{ \lambda L_T})
$.
Using Theorem~\ref{thm:meta} we have
\begin{eqnarray*}
 f(\bar x_T) - \min_{x \in \K} f(x) & \leq & \frac{1}{T} ( \regret{x}[\FTL] + \regret{y}[\BR] ) \\ 
 & = & O\left(\frac{ M \log T}{ \lambda L_T T}\right)
\end{eqnarray*}
as desired.
\end{proof}


Note that the rate depends crucially on $L_T$, which is the smallest averaged-gradient norm computed during the optimization.
According to Theorem~\ref{th:cvx}, the closure of the gradient space is a convex set. This implies the following lemma that ensures $L_{T}$ is bounded away from $0$.
\begin{lemma}  
If $\mathbf 0 \notin \text{closure}\left(\{y : y \in \partial f(x), x \in \K \}\right)$, 
then the average of gradient norm $L_{T}$ satisfies $L_T > 0$.
\end{lemma}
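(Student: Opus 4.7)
The plan is to show that every running average $\Theta_t = \tfrac{1}{t}\sum_{s=1}^t \delta_s$ lies in $\text{closure}(S)$, where $S := \bigcup_{x \in \K} \partial f(x)$, and then invoke the assumption on $\mathbf{0}$ to conclude that these averages are uniformly bounded away from the origin.

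First, by construction each subgradient $\delta_s \in \partial f(x_s)$ is an element of $S$, hence also of $\text{closure}(S)$. By Theorem~\ref{th:cvx}, $\text{closure}(S)$ is a convex set. Since $\Theta_t$ is the uniform convex combination $\tfrac{1}{t}\sum_{s=1}^t \delta_s$ of the points $\delta_1, \ldots, \delta_t \in \text{closure}(S)$, convexity of $\text{closure}(S)$ gives
\[
\Theta_t \in \text{closure}(S) \qquad \text{for every } t \ge 1.
\]

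Next, set $d := \inf_{y \in \text{closure}(S)} \|y\|$. Because $\text{closure}(S)$ is closed, the infimum is attained, and the assumption $\mathbf{0} \notin \text{closure}(S)$ forces $d > 0$. Applying this to $\Theta_t$ yields $\|\Theta_t\| \ge d$ for all $t \in \{1, \ldots, T\}$, and therefore
\[
L_T \;=\; \min_{1 \le t \le T} \|\Theta_t\| \;\ge\; d \;>\; 0,
\]
as claimed. Note that this bound is in fact uniform in $T$, which is the form actually needed to make the convergence rate in Theorem~\ref{thm:equiv2} meaningful.

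There is no real obstacle here; the only thing to double-check is step one, namely the fact that convex combinations of points in $S$ land in $\text{closure}(S)$ rather than possibly escaping it. This is immediate from the convexity of $\text{closure}(S)$ established in Theorem~\ref{th:cvx}, since any convex set containing $S$ must contain $\text{conv}(S)$.
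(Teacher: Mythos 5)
Your proof is correct and follows essentially the same route as the paper, which likewise combines the convexity of $\text{closure}\bigl(\bigcup_{x\in\K}\partial f(x)\bigr)$ from Theorem~\ref{th:cvx} with the fact that a closed set not containing $\mathbf{0}$ is bounded away from it. If anything, your version is slightly more careful than the paper's first argument (which loosely calls $\Theta_t$ "itself a subgradient"), since you only claim $\Theta_t$ lies in the closure of the gradient space, which is all that is needed.
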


\begin{proof}
By Theorem~\ref{th:cvx}, we know $\Theta_{t}$, which is the average of some sub-gradients, is itself a subgradient. Moreover, $\Theta_{t}$ is not $\mathbf{0}$ given that $\mathbf{0}$ 
is not in the gradient space, hence we have $L_T>0$.

Alternatively, as $\mathbf{0}$ is not in the closure of the gradient space, which is a closed convex set by Theorem~\ref{th:cvx}, the separation theorem implies that any point in the closure must be bounded away from $\mathbf{0}$. Hence, $L_T>0$.
\end{proof}
We remark that if all the subgradients at the points in the constraint set $\K$ have their sizes be lower-bounded by a constant $G>0$, then we have that $L_{T}$ is lower-bounded by $G$, since the (closure) of the union of the subdifferentials is a convex set (Theorem~\ref{th:cvx}) and therefore $\Theta_{t}$, which is the average of some sub-gradients, is itself a subgradient.

The proposed algorithm may only be useful when it is known, a priori, that the solution $w^*$ will occur not in the interior but on the boundary of $\K$. It is indeed an odd condition, but it does hold in many typical scenarios.
One may add a perturbed vector to the gradient and show that with high probability, $L_T$ is a non-zero number. The downside of this approach is that it would generally grant a slower convergence rate.

\subsection{Gauge Frank-Wolfe}

Using the FGNRD framework further, we propose a new algorithm that is ``\FW-like''  in the following sense: we can optimize a smooth objective function using only linear optimization oracle queries. This method requires the constraint set to be ``suitably round'' but achieves a rate of $O(1/T^2)$ without additional assumptions on $f$. The main trick is to utilize a regularization function that is adapted to the constraint set $\K$. With this in mind, define the ``gauge function'' of $\K$ \cite{F87,FMP14} as 
\begin{equation}
\g(x) := \inf \{ c \geq 0: \frac{x}{c} \in \K \}.
\end{equation}
Notice that, for a closed convex $\K$ that contains the origin, one has 
$\K = \{ x \in \reals^d: \g(x)\leq 1 \}$. Furthermore, the boundary points on $\K$ satisfy $\g(x)=1$.
\smallskip
\smallskip
\\
\noindent
\textbf{Definition: ($\lambda$-Gauge set)} We call a $\frac{\lambda}{2}$-strongly convex set that contains the origin in its interior a $\lambda$-Gauge set.

Examples of the $\lambda$-Gauge set are given in Appendix~\ref{app:betagauge}.

\begin{theorem} (Theorem~4 in \cite{P96} and Theorem 2 in \cite{M20})
Let $\K$ be a $\lambda$-Gauge set.
Then, its squared gauge function, i.e. $\g^2(\cdot)$, is $\lambda$-strongly-convex. 
\end{theorem}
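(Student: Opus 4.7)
The plan is to exploit positive $1$-homogeneity of the gauge $\g$ together with the set-strong-convexity of $\K$. Fix $x, y \in \reals^d$ and $\theta \in [0,1]$; the target is
\begin{equation*}
\g^2(\theta x + (1-\theta) y) \leq \theta \g^2(x) + (1-\theta) \g^2(y) - \tfrac{\lambda}{2}\theta(1-\theta)\|x-y\|^2.
\end{equation*}
I would first dispatch the degenerate cases $x=0$ or $y=0$ by continuity and convexity of $\g^2$, so that one may assume $a := \g(x) > 0$ and $b := \g(y) > 0$, and set $u := x/a$, $v := y/b$. Then $u, v \in \partial \K$ since $\g(u) = \g(v) = 1$. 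The key reparametrization is $\mu := \theta a /(\theta a + (1-\theta) b) \in [0,1]$; a direct computation gives $\mu u + (1-\mu) v = w/c_0$ with $w := \theta x + (1-\theta) y$ and $c_0 := \theta a + (1-\theta) b$, so convex combinations of $x,y$ correspond, after rescaling, to convex combinations of points on $\partial \K$.

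Next, I would invoke set-strong-convexity of $\K$ applied to $u, v$ with weight $\mu$. Since $\K$ is $\lambda/2$-strongly convex, the closed ball centered at $p := \mu u + (1-\mu) v = w/c_0$ of radius $r := \mu(1-\mu)\tfrac{\lambda}{4}\|u-v\|^2$ lies in $\K$, which translates to $\g(p + d) \leq 1$ for every $\|d\|\leq r$. I would then apply the subgradient inequality for the convex function $\g$ and maximize over admissible $d$; combined with Euler's identity $\langle g_p, p\rangle = \g(p)$ (a consequence of positive $1$-homogeneity) and the bound $\|g_p\|_* \geq \g(p)/\|p\|$, this yields the pointwise estimate $\g(p) \leq \|p\|/(\|p\|+r)$, and after rescaling by $c_0$,
\begin{equation*}
\g(w) \leq \frac{c_0 \|w\|}{\|w\| + c_0 r}.
\end{equation*}

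The last step would be algebraic: converting this bound on $\g(w)$ into the target quadratic inequality. Writing
\begin{equation*}
\theta \g^2(x) + (1-\theta) \g^2(y) - \g^2(w) = \theta(1-\theta)(a-b)^2 + \bigl( c_0^2 - \g^2(w) \bigr),
\end{equation*}
it suffices to lower-bound the ``improvement'' term $c_0^2 - \g^2(w)$ by $\tfrac{\lambda}{2}\theta(1-\theta)\|x-y\|^2 - \theta(1-\theta)(a-b)^2$. Squaring the bound on $\g(w)$ gives $c_0^2 - \g^2(w) \geq c_0^2 r(2\|w\| + c_0 r)/(\|w\| + c_0 r)^2$, and I would rewrite $\|x-y\|^2$ using the decomposition $x - y = b(u-v) + (a-b) u$, so that the $\|u-v\|^2$ term appearing in $r$ is matched while the leftover $(a-b)^2$ terms are absorbed by the Jensen gap.

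The hard part will be making the final matching of constants work out cleanly. The radial expansion argument only exploits that $p$ is at distance $\geq r$ from $\partial \K$ along the ray from the origin, whereas the ball condition holds in every direction; tightening the bound on $\g(w)$ by expanding $p$ in the outward normal direction to $\K$ at $p/\g(p)$ rather than radially is probably necessary to recover precisely the constant $\lambda$. An alternative route that sidesteps the algebraic matching is to appeal to Polyak's dual characterization \cite{P96} of $(\lambda/2)$-strongly convex sets as intersections of Euclidean balls of radius $2/\lambda$, representing $\g$ as a supremum of ``shifted-ball gauges'' and deducing strong convexity of $\g^2$ as a supremum of strongly convex quadratics.
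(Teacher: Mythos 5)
First, note that the paper offers no proof of this statement at all: it is imported by citation from \cite{P96} and \cite{M20}, so your attempt can only be judged against those sources, not against anything in the text. Your setup is the standard and correct one --- normalizing $u=x/\g(x)$, $v=y/\g(y)$ to the boundary, reweighting to $\mu=\theta a/(\theta a+(1-\theta)b)$ so that $\mu u+(1-\mu)v=w/c_0$, and isolating the Jensen gap $\theta a^2+(1-\theta)b^2-c_0^2=\theta(1-\theta)(a-b)^2$. The gap is exactly the step you flag, and it is not a bookkeeping issue: the two estimates you propose to finish with provably cannot yield the constant $\lambda$. The radial bound $\g(p)\le\|p\|/(\|p\|+r)$ only gives $1-\g(p)\ge r/(\|p\|+r)$, which degrades with $\|p\|$, and splitting $x-y=b(u-v)+(a-b)u$ forces a Young-type inequality whose error must be absorbed by the Jensen gap $\theta(1-\theta)(a-b)^2$. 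Since the claimed modulus is attained with \emph{equality} when $\K$ is a Euclidean ball and the norm is its gauge norm, any argument with slack in either place is doomed; one must expand $p$ toward the boundary in the direction actually needed (as you suspect) and handle the cross terms exactly. A second, independent problem is that you never fix the norm, and the constants genuinely depend on it: for $\K=B(0,2)\subset\reals^d$ with the Euclidean norm, $\K$ is $\tfrac12$-strongly convex (hence a $1$-Gauge set), yet $\g^2(x)=\|x\|^2/4$ is only $\tfrac12$-strongly convex in $\|\cdot\|_2$, not $1$-strongly convex. The statement is correct only when both strong-convexity moduli are measured with respect to the gauge norm $\g$ itself (which is how the paper uses it in Theorem~\ref{thm:gaugeFW}), and a complete proof must carry this normalization through.

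Your fallback via Polyak's ball-intersection characterization is the more promising direction, and the structural observation underlying it is sound: $\g_{\cap_i K_i}=\sup_i\g_{K_i}$, squares of nonnegative functions commute with suprema, and a pointwise supremum of $\lambda$-strongly convex functions is again $\lambda$-strongly convex. But the base case you would need --- that the squared gauge of an \emph{off-center} Euclidean ball containing the origin in its interior is $\lambda$-strongly convex in the right norm --- is asserted, not proved, and it is not a ``strongly convex quadratic'': for $B=B_r(c)$ with $\|c\|<r$ one gets $\g_B(x)=\big(-\langle x,c\rangle+\sqrt{\langle x,c\rangle^2+(r^2-\|c\|^2)\|x\|^2}\,\big)/(r^2-\|c\|^2)$, whose square requires its own (and norm-sensitive) strong-convexity computation. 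As it stands the proposal is an honest and well-oriented plan with the decisive step left open, not a proof.
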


Algorithmically, we will utilize \BTRL for the $x$-player with regularization $\g^2(\cdot)$ in the Fenchel Game. Naturally, this requires us to solve problems of the form
\begin{equation}\label{eq:gauge_FTRL}
    x = \argmin_{x \in \K}  \langle \zeta , x \rangle + \g^2(x),
\end{equation}
for any arbitrary vector $\zeta$. While at first glance this does not appear to be a linear optimization problem, let us reparametrize the problem as follows. Denote $\text{bndry}(\K)$ as the boundary of the constraint set $\K$, and notice that \eqref{eq:gauge_FTRL} is equivalent to solving
\begin{equation}\label{eq:gauge_FTRL2}
     \min_{\rho \in [0,1], z \in \text{bndry}(\K)}  \langle \zeta , \rho z \rangle + \g^2(\rho z)
     ~=~
     \min_{\rho \in [0,1]} \left[ \left( \min_{z \in \text{bndry}(\K)} \langle \zeta , z \rangle \right) \rho  + \rho^2\right],
\end{equation}
where we are able to remove the dependence on the gauge function since it is homogeneous, i.e.,
$\g(\rho z ) = \rho \g(z)$, and is identically 1 on the boundary of $\K$.
Notice that the inner minimization is now a linear problem, and the outer problem is a 1-parameter constrained quadratic problem which can be solved directly. In summary, we have
\begin{equation}\label{eq:guage_closed_form}
    \argmin_{x \in \K}  \langle \zeta , x \rangle + \g^2(x) = \rho^* z^* \quad \text{where} \; \begin{cases}
    z^* = \argmin_{z \in \text{bndry}(\K)} \langle \zeta , z \rangle , & \\
    \rho^* = \max(0, \min(1, -\langle \zeta, z^* \rangle/2) ). &
  \end{cases}
\end{equation}


\begin{algorithm}[h] 
   \caption{ Gauge Frank-Wolfe (smooth convex $f(\cdot)$) 
   } \label{alg:gaugeFW}
Given: $L$-smooth convex $f(\cdot)$, arbitrary $v_0 = x_0 \in \K \subset \reals^d $, iteration $T$.
\begin{center} 
\begin{tabular}{c c}
    $
      \boxed{\small
      \begin{array}{rl}
        \beta_t & \gets \frac{2}{t+1}, \gamma_{t} \gets \frac{t}{4L}\\
        z_{t} & \gets (1 - \beta_t) w_{t-1} + \beta_t v_{t-1}\\
     z_t^* & \gets {\displaystyle \argmin_{z \in \text{bndry}(\K)}} \langle \sum_{s=1}^t \gamma_s \nabla f(z_s) , z \rangle \\
    \gamma_t &  \gets \frac12 \langle \sum_{s=1}^t \gamma_s \nabla f(z_s), z^* \rangle\\
    \rho_t^* & \gets \max(0, \min(1, -\gamma_t) )\\   v_t & \gets \rho_t^* z_t^* \\                 
        w_{t} & \gets (1 - \beta_t) w_{t-1} + \beta_t v_{t}
      \end{array}
      }
    $
    & \small
    $\boxed{\small
    \begin{array}{rl}
      g(x,y) & := \langle x, y \rangle - f^*(y)\\
      \alpha_t & := t\text{ for } t=1, \ldots, T\\
      \alg^Y & := \OFTL \\ 
      \alg^X & := \BTRL[\g^2(\rho x), \frac{1}{4L}] \\
    \end{array}
    }$
  \\
    \small Iterative Description & 
    \small FGNRD Equivalence
\end{tabular}
\end{center}
Output: $w_T = \bar x_T$,
\end{algorithm}

\begin{theorem} \label{thm:gaugeFW}
Suppose the constraint set $\K$ is a $\lambda$-Gauge set.
Assume that the function $f(\cdot)$ is $L$-smooth convex with respect to the induced gauge norm $\g(\cdot)$. 
Suppose that the step size $\eta$ satisfies
$\frac{1}{CL} \leq \frac{\eta}{\lambda} \leq \frac{1}{4L}$ for some constant $C \geq 4$.
Then, the output $w_T$ of Algorithm~\ref{alg:gaugeFW} satisfies
\[
   \displaystyle f(w_T) - \min_{w \in \K} f(w) 
  \leq \frac{2 C L } {\lambda T^2}.
\]
\end{theorem}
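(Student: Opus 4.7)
The proof plan closely follows the template used for Nesterov's $1$-memory method (Proposition~\ref{thm:Nes_constrained} and Theorem~\ref{thm:metaAcc}), with the key twist that the regularizer for the $x$-player is now the squared gauge function $\g^2(\cdot)$, which is $\lambda$-strongly convex with respect to the norm $\g(\cdot)$ itself, and the smoothness of $f$ is measured in this same norm. First I would establish the equivalence between the iterative and FGNRD descriptions of Algorithm~\ref{alg:gaugeFW}. The $y$-player/\OFTL side is identical to Proposition~\ref{thm:Nes_constrained}, yielding $y_t = \nabla f(\xof_t)$ with $\xof_t = z_t$. On the $x$-player side, \BTRL must solve $\min_{x\in\K}\langle x,\sum_{s=1}^t\alpha_s y_s\rangle + \frac{1}{\eta}\g^2(x)$; the reparametrization $x=\rho z$ with $z\in\mathrm{bndry}(\K)$, together with positive homogeneity of $\g$ and $\g\equiv 1$ on the boundary, reduces this to the linear problem plus scalar quadratic displayed in \eqref{eq:guage_closed_form}, exactly matching the iterative update $v_t=\rho_t^{*} z_t^{*}$.

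Next I would invoke Theorem~\ref{thm:meta} to obtain $f(w_T)-\min_{w\in\K}f(w)\le \avgregret{x}[\BTRL]+\avgregret{y}[\OFTL]$. For the $y$-player, I would repeat the computation in Lemma~\ref{lem:yregretbound} verbatim, but with $\|\cdot\|\leftarrow\g(\cdot)$ (the dual pairing in H\"older's inequality uses the dual gauge norm, and $L$-smoothness of $f$ with respect to $\g$ is exactly what is assumed), giving
\[
\regret{y}[\OFTL]\ \le\ L\sum_{t=1}^{T}\frac{\alpha_t^{2}}{A_t}\,\g^{2}(x_{t-1}-x_t).
\]
For the $x$-player, Lemma~\ref{regret:BTRL} applied with $R(\cdot)=\g^{2}(\cdot)$, which is $\lambda$-strongly convex with respect to $\g$, yields
\[
\regret{x}[\BTRL]\ \le\ \frac{\g^{2}(x^{*})-\g^{2}(x_0)}{\eta}\ -\ \sum_{t=1}^{T}\frac{\lambda}{2\eta}\,\g^{2}(x_{t-1}-x_t).
\]

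The third step is the standard cancellation: summing the two regret bounds gives a coefficient $\bigl(\tfrac{\alpha_t^{2}}{A_t}L-\tfrac{\lambda}{2\eta}\bigr)\g^{2}(x_{t-1}-x_t)$ in each term. With $\alpha_t=t$ we have $\tfrac{\alpha_t^{2}}{A_t}\le 2$, so the coefficient is non-positive exactly when $\eta\le\lambda/(4L)$, which is the upper end of the hypothesized range. Hence all trajectory-dependent quadratic terms can be discarded. Dividing through by $A_T=t(t+1)/2\ge T^{2}/2$ and using $\g(x^{*})\le 1$ for $x^{*}\in\K$ (so $\g^{2}(x^{*})-\g^{2}(x_0)\le 1$), I would conclude
\[
f(w_T)-\min_{w\in\K}f(w)\ \le\ \frac{\g^{2}(x^{*})-\g^{2}(x_0)}{\eta A_T}\ \le\ \frac{2}{\eta T^{2}}\ \le\ \frac{2CL}{\lambda T^{2}},
\]
where the final inequality uses $\eta\ge\lambda/(CL)$.

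The main obstacle is bookkeeping: making sure the same norm $\g(\cdot)$ is used consistently in the smoothness of $f$, the strong convexity of $\g^{2}$, and the H\"older step inside the $y$-player's regret bound, so that the positive $\g^2(x_{t-1}-x_t)$ terms on the $y$-side and the negative $\g^2(x_{t-1}-x_t)$ terms on the $x$-side match and cancel. A secondary care point is the equivalence proof: the reparametrization argument via $x=\rho z$ needs $\K$ to contain the origin in its interior (guaranteed by the $\lambda$-Gauge assumption) and uses positive homogeneity of $\g$, which replaces the explicit projection/Legendre step present in the $\ell_2$-regularized accelerated methods of Section~\ref{sec:ExistingAlgs}.
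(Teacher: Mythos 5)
Your proposal is correct and follows essentially the same route as the paper: the paper's own proof simply observes that Algorithm~\ref{alg:gaugeFW} is an instance of Algorithm~\ref{alg:Nes-infmem} and invokes Theorem~\ref{thm:smoothAcc} (whose proof is exactly your \OFTL-plus-\BTRL regret decomposition with cancellation of the $\g^2(x_{t-1}-x_t)$ terms), noting $\g^2(x^*)\leq 1$. If anything your write-up is more explicit than the paper's one-line reduction, since you track the $\lambda$-strong convexity of $\g^2(\cdot)$ with respect to the gauge norm and the admissible range of $\eta$ needed to make the coefficient $\frac{\alpha_t^2}{A_t}L-\frac{\lambda}{2\eta}$ non-positive.
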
 

\begin{proof}
We have just shown that line 4-5 is due to that the $x$-player plays
\BTRL
with the square of the guage function as the regularizer.
So 
Algorithm~\ref{alg:gaugeFW} is an instance of Algorithm~\ref{alg:Nes-infmem},
and we can invoke Theorem~\ref{thm:smoothAcc} to obtain the convergence rate,
noting that $\g^2(x^*) \leq 1$.
\end{proof}

We want to emphasize again that our analysis does not need the function $f(\cdot)$ to be strongly convex 
to show $O(1/T^2)$ rate. On the other hand, 
\cite{D15} shows the $O(1/T^2)$ rate under the additional assumption that the function is strongly convex.
\cite{D15} raises an open question if an accelerated linear rate is possible under the assumption that the function is smooth and strongly convex. We partially answer the open question through Algorithm~\ref{alg:gaugeFW2} and Theorem~\ref{thm:gaugeFW-linear}.

\begin{algorithm}[h] 
   \caption{ Gauge Frank-Wolfe (smooth and strongly convex $f(\cdot)$) } \label{alg:gaugeFW2}
Given: $L$-smooth and $\mu$-strongly convex $f(\cdot)$, 
arbitrary $w_0 = v_0 = x_0 \in \K \subset \reals^d $, iterations $T$.

\begin{center} 
\begin{tabular}{c c}
    $
      \boxed{\small
      \begin{array}{rl}
        \beta & \gets \frac{1}{2} \sqrt{ \frac{\mu}{L(1+L_{\phi}/\lambda)} } , \gamma_{t} \gets \alpha_t \\
        z_{t} & \gets (1 - \beta) w_{t-1} + \beta v_{t-1}\\
     z_t^* & \gets \underset{ z \in \text{bndry}(\K)}{ \argmin} \langle \sum_{s=1}^t \gamma_s \nabla \tilde{f}(z_s) , z \rangle  \\
     \gamma_t & \gets \frac{\langle \sum_{s=1}^t \gamma_s \nabla \tilde{f}(z_s), z^* \rangle}{ 2 (\frac{1}{\mu}+\sum_{s=1}^t \gamma_s ) \frac{\mu}{\lambda}  } \\
    \rho_t^* & \gets \max\left(0, \min\left(1, - \gamma_t \right)\right) \\   v_t & \gets \rho_t^* z_t^* \\                 
        w_{t} & \gets (1 - \beta) w_{t-1} + \beta v_{t}
      \end{array}
      }
    $
    & \small
    $\boxed{\small
    \begin{array}{rl}
      g(x,y) & := \langle x, y \rangle - \tilde f^*(y) + \frac{\mu}{\lambda} \g^2(\cdot) \\
             & \text{where } \tilde f(x) := f(x) - \frac{\mu}{\lambda} \g^2(\cdot)  \\
      \alpha_1 & := \frac{1}{2L(1+L_{\phi}/\lambda)}  \\
      \frac{\alpha_t}{A_t} & := \frac{1}{2} \sqrt{ \frac{\mu}{L(1+L_{\phi}/\lambda)} }, \\ 
      & \text{ for } t=2, \ldots, T   \\ 
      \alg^Y & := \OFTL \\
      \alg^X & := \BTRL[ \frac{1}{\lambda}\g^2(\cdot), 1] \\
    \end{array}
    }$
  \\
    \small Iterative Description & 
    \small FGNRD Equivalence
\end{tabular}
\end{center}
Output: $w_T = \bar x_T$,
\end{algorithm}

\begin{theorem} \label{thm:gaugeFW-linear}
Suppose the constraint set $\K$ is a $\lambda$-Gauge set 
and the induced square of the gauge function $\g^2(\cdot)$ is differentiable and $L_{\phi}$-smooth. 
Assume that the function $f(\cdot)$ is $L$-smooth with respect to the induced gauge norm $\g(\cdot)$ 
and $f(\cdot)$ is $\mu$-strongly convex satisfying (\ref{new_sc}) with $\frac{1}{\lambda}\g^2(\cdot)$ being the distance generating function. 
Then, the output $w_T$ of Algorithm~\ref{alg:gaugeFW2} satisfies
\[
   \displaystyle f(w_T) - \min_{w \in \K} f(w) 
  = O\left( \exp\left(-\frac{ T }{2 \sqrt{1+L_{\phi}/\lambda} \sqrt{\kappa}}\right) \right).
\]
\end{theorem}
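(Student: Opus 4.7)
The plan is to reduce Theorem~\ref{thm:gaugeFW-linear} to Theorem~\ref{thm:acc_linear2} by choosing the distance generating function $\phi(\cdot) := \tfrac{1}{\lambda}\g^{2}(\cdot)$. Under the hypotheses of the theorem, $\g^{2}(\cdot)$ is $\lambda$-strongly convex (since $\K$ is a $\lambda$-Gauge set) and $L_{\phi}$-smooth with respect to the gauge norm $\g(\cdot)$; consequently $\phi(\cdot)$ is $1$-strongly convex and $(L_{\phi}/\lambda)$-smooth, and the strong-convexity assumption on $f(\cdot)$ in \eqref{new_sc} is stated precisely with this $\phi(\cdot)$ as the distance-generating function. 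So the hypotheses of Theorem~\ref{thm:acc_linear2} are satisfied, with the quantity ``$L_{\phi}$'' appearing there replaced by $L_{\phi}/\lambda$.

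Before invoking Theorem~\ref{thm:acc_linear2}, I would verify that the iterative description in the left-hand box of Algorithm~\ref{alg:gaugeFW2} coincides with the FGNRD implementation on the right. The argument mirrors Proposition~\ref{prop:Nes_linrate_equiv}; the only novelty is the $x$-player's \BTRL step, which (after collecting the loss $\alpha_{t} h_{t}(x) = \alpha_{t}\langle x, y_{t}\rangle + \alpha_{t}\tfrac{\mu}{\lambda}\g^{2}(x)$ from the modified payoff and the regularizer $\tfrac{1}{\lambda}\g^{2}(x)$) must solve
\[
v_{t} = \argmin_{x\in\K}\ \Big\langle x,\sum_{s=1}^{t}\alpha_{s} y_{s}\Big\rangle + \tfrac{\mu}{\lambda}\Big(\tfrac{1}{\mu}+\sum_{s=1}^{t}\alpha_{s}\Big)\g^{2}(x).
\]
Writing $x = \rho z$ with $z\in\text{bndry}(\K)$ and $\rho\in[0,1]$, and using the homogeneity $\g(\rho z) = \rho\g(z) = \rho$, this decouples exactly as in the derivation leading to \eqref{eq:guage_closed_form}: an inner linear minimization over $\text{bndry}(\K)$ yields $z_{t}^{*}$, and the remaining one-dimensional quadratic in $\rho$ has clipped closed-form minimizer $\rho_{t}^{*} = \max(0,\min(1,-\gamma_{t}))$ with $\gamma_{t}$ defined as in the algorithm. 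Combined with the \OFTL identity $y_{t} = \nabla\tilde f(\xof_{t})$ and the usual induction to identify $z_{t}=\xof_{t}$ and $w_{t}=\xav_{t}$, this gives the equivalence.

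Having done this, I apply Theorem~\ref{thm:acc_linear2} directly. Its conclusion, with $L_{\phi}$ replaced by $L_{\phi}/\lambda$ and $c = 2L(1+L_{\phi}/\lambda)$, yields
\[
f(w_{T}) - \min_{w\in\K} f(w) \leq c\exp\!\Big(-\tfrac{1}{2\sqrt{\kappa}\sqrt{1+L_{\phi}/\lambda}}\,T\Big)\bigl(\phi(w^{*}) - \phi(w_{0})\bigr).
\]
Since $w_{0} = \argmin_{x\in\K}\phi(x)$ is the origin (which lies in $\K$), $\phi(w_{0})=0$, and since $w^{*}\in\K$ implies $\g(w^{*})\leq 1$, we have $\phi(w^{*}) \leq 1/\lambda$, so the Bregman term is bounded by an absolute constant. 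This gives the claimed rate.

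The main obstacle will be the bookkeeping in the equivalence step: one must carefully confirm that the weight schedule ($\alpha_{1}=\tfrac{1}{2L(1+L_{\phi}/\lambda)}$, and $\alpha_{t}/A_{t}$ constant thereafter) together with the regularization strength makes the \BTRL update collapse to the clipped gauge line-search displayed in the algorithm, and that the strong-convexity-in-$x$ coefficient $\mu/\lambda$ of $g(\cdot,y)$ matches the $\mu\phi(\cdot)$ term of Theorem~\ref{thm:acc_linear2} so that the cancellation between the positive $\frac{\alpha_{t}^{2}}{A_{t}}L(1+L_{\phi}/\lambda)\|x_{t-1}-x_{t}\|^{2}$ from the $y$-regret and the negative $\tfrac{\mu A_{t-1}+1}{2}\|x_{t-1}-x_{t}\|^{2}$ from the $x$-regret proceeds verbatim. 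Once this is in place, no further analysis is required.
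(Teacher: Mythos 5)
Your proposal is correct and follows essentially the same route as the paper: establish the equivalence of the two displays by reducing the \BTRL update with regularizer $\frac{1}{\lambda}\g^2(\cdot)$ to the boundary linear optimization plus a clipped one-dimensional quadratic, recognize Algorithm~\ref{alg:gaugeFW2} as an instance of Algorithm~\ref{alg:accLinear} with $\phi(\cdot)=\frac{1}{\lambda}\g^2(\cdot)$, and then invoke the proof of Theorem~\ref{thm:acc_linear2} with the smoothness constant of the distance-generating function becoming $L_\phi/\lambda$, which is exactly the substitution the paper makes explicit via the Lipschitz bound $\|\nabla\tilde f(\xof_t)-\nabla\tilde f(\xav_t)\|_* \leq L(1+L_\phi/\lambda)\|\xof_t-\xav_t\|$. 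Your additional observation that $\phi(w^*)-\phi(w_0)$ is bounded by $1/\lambda$ since $\g(w^*)\leq 1$ is a harmless refinement absorbed into the $O(\cdot)$ in the paper's statement.
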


Lemma 2.2 and Theorem A.2 in \cite{KRDP21} implies that if a strongly convex set $\K$ which is centrally-symmetric at the origin satisfies the notion of a smooth set
\footnote{
For a compact convex set $\K$, 
it is smooth if and only if  
$ \# \{ N_{\K}(z) \cap  \partial \K^{\circ} \}  = 1$ for any point $z$ on the boundary of $\K$, i.e. $z \in \partial \K$,
where $N_{\K}(z):= \{ w \in \reals^d : \langle z - x, w \rangle \geq 0, \forall x \in \K\}$
is the normal cone at $z$ and $\K^{\circ} :=\{ w \in \reals^d : \langle w, x \rangle \leq 1 , \forall x \in \K  \}$ is the polar of $\K$, see e.g. Definition 2.1 in \cite{KRDP21}.},
then the induced square of the gauge function $\g^2(\cdot)$ is $L_{\phi}$-smooth and differentiable for some constant $L_{\phi}>0$;
consequently, the condition of Theorem~\ref{thm:gaugeFW-linear} holds.
Examples include $l_p$ balls and Schatten $p$ balls for $1 < p \leq 2$ \cite{KRDP21}.

\begin{proof}

The equivalence of the two displays in Algorithm~\ref{alg:gaugeFW2} can be shown by following that of the proof of Proposition~\ref{prop:Nes_linrate_equiv},
where
the update \BTRL: 
\[
x_t  \leftarrow \argmin_{{x \in \K}} \sum_{s=1}^t \alpha_{s} h_{s}(x) + R(x)
=  \argmin_{{x \in \K}} \sum_{s=1}^t \alpha_{s} \left( \langle y_s, x \rangle  + \frac{\mu}{\lambda} \g^2(x)  \right)+ \frac{1}{\lambda}  \g^2(x)
\]
is equivalent to solving
\begin{equation*}
\begin{aligned}
(\rho_t^*, z_t^*) \gets  
\argmin_{\rho \in [0,1]} \left( \argmin_{z \in \text{bndry}(\K)} \langle \sum_{s=1}^t \alpha_s \nabla\tilde{f} (z_s) , z \rangle \right) \rho  + \left( \frac{1}{\mu} + \sum_{s=1}^t \alpha_s \right) \frac{\mu}{\lambda} \rho^2,
\end{aligned}
\end{equation*}
and setting $v_{t} = \rho_t^* z_t^*$.
Specifically,
Algorithm~\ref{alg:gaugeFW2} is an instance of Algorithm~\ref{alg:accLinear} with the regularization of \BTRL being the square of the gauge function, i.e. $\phi(\cdot) = \frac{1}{\lambda} \g^2(\cdot)$. 
Hence,
the convergence rate can be shown by following the proof of Theorem~\ref{thm:acc_linear2}, where we bound the Lipschitzness of $\nabla \tilde{f}(\cdot)$ as
$ \displaystyle  \| \nabla \tilde{f}(\xof_t) - \nabla \tilde{f}(\xav_t) \| \leq \| \nabla f(\xof_t) -\frac{\mu}{\lambda} \nabla \phi( \xof_t ) - \nabla f(\xav_t) + \frac{\mu}{\lambda} \nabla \phi( \xav_t) \|_* 
\leq \displaystyle   \| \nabla f(\xof_t) - \nabla f(\xav_t) \|_* +  \mu \frac{ L_{\phi} }{\lambda} \| \xav_t - \xof_t \| \leq L\left( 1 + \frac{L_{\phi}}{\lambda}\right)\| \xav_t - \xof_t \| $.
\end{proof}

\subsection{Optimistic mirror descent, with weighted averaging} \label{subsec:OPTACC}

We give another accelerated algorithm for smooth convex optimization in the following. Algorithm~\ref{alg:OPTACC} is obtained when we swap the ordering of the player and let the $x$-player plays first using \OPTMD and then the $y$-player plays \BTL. 
The reader might find some similarity of Algorithm~\ref{alg:OPTACC} and some single-call variants of the extra-gradient methods, see e.g., \cite{popov1980modification,hsieh2019convergence,chambolle2011first,gidel2018variational,daskalakis2017training,peng2020training}.
However, the gradient is taken at some weighted-average points in Algorithm~\ref{alg:OPTACC} and the resulting converge rate is an accelerated rate $O(1/T^2)$, which seems to be different from the single-call extra-gradient methods in the literature, to the best of our knowledge. We note that
the algorithm of Chambolle and Pock (Section 5 in \cite{chambolle2016ergodic}) has a $O(1/T^2)$ for a class of relevant problems with a single gradient call in each iteration, however, their algorithm
needs the distance generating function of the Bregman divergence to be the square $l_{2}$ norm and its update is different from Algorithm~\ref{alg:OPTACC}.

Theorem~\ref{thm:OPTACC} below shows an accelerated rate $O(1/T^2)$. 
Comparing Algorithm~\ref{alg:extra} 
and Algorithm~\ref{alg:OPTACC} here, one can see that the difference is that the $y$-player plays $\BTL$ in Algorithm~\ref{alg:OPTACC}
instead of $\BR$ in Algorithm~\ref{alg:extra}.
This is crucial to get the accelerated rate, as the negative distance terms in the regret bound of $\BTL$ cancel out that of $\OPTMD$.
On the other hand, 
the regret bound of $\BR$ cannot cancel out the distance terms from that of $\OPTMD$,
which results in a slower $O(1/T)$ rate of Algorithm~\ref{alg:extra}.

\begin{algorithm}[t] 
   \caption{ Optimistic mirror descent, with weighted averaging  } \label{alg:OPTACC}
Given: $L$-smooth $f(\cdot)$, a $1$-strongly convex $\phi(\cdot)$, iterations $T$ \\
Init: arbitrary $w_{-\frac{1}{2}} = w_{0}  = x_0 = x_{-\frac{1}{2} }\in \K \subseteq \reals^d  $.
\begin{center} 
\begin{tabular}{c c}
    $
      \boxed{\small
      \begin{array}{rl} 
\gamma  & \gets \frac{1}{L} \\
  w_t  & \gets  \displaystyle \argmin_{w \in \K} \left( \vphantom{\V{w_{t-\frac{1}{2}}}(w)}
   \alpha_{t} 
\langle w,  \nabla f(\bar{w}_{t-1}) \rangle \right.
 \\ &\qquad \qquad \quad
\left. + \frac{1}{\gamma} \V{w_{t-\frac{1}{2}}}(w) \right)
\\   w_{t+\frac{1}{2}} & \gets \displaystyle  \argmin_{w \in \K}  \left(  \vphantom{\V{w_{t-\frac{1}{2}}}(w)} 
\alpha_{t}
\langle w,  \nabla f(\bar{w}_t) \rangle \right. \\ &\qquad \qquad \quad + \left. \frac{1}{\gamma} \V{w_{t-\frac{1}{2}}}(w) \right)
      \end{array}
      }
    $
    & \small
    $\boxed{\small
    \begin{array}{rl}
      g(x,y) & := \langle x, y \rangle - f^*(y)\\
      \alpha_t & := t\text{ for } t=1, \ldots, T\\
      \alg^X & := \OPTMD[\phi(\cdot), x_0, \frac{1}{2L}] \\ 
      \alg^Y & := \BTL
    \end{array}
    }$
  \\
    \small Iterative Description & 
    \small FGNRD Equivalence
\end{tabular}
\end{center}
Output: $\bar w_T = \bar x_T$
\end{algorithm}

\begin{theorem} \label{thm:OPTACC}
The output $\bar w_T = \bar{x}_{T}$ of Algorithm~\ref{alg:OPTACC} satisfies 
\[
  f(\bar{w}_T) - \min_{w \in \K} f(w) =  \frac{2 L \V{w_0}(w^*) + 2 L \| w_0 - w_1   \|^2 }{T^2},
\]
where $w^{*}:= \argmin_{w \in \K} f(w)$.
\end{theorem}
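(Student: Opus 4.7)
The plan is first to establish the equivalence between the iterative description and the FGNRD description in Algorithm~\ref{alg:OPTACC}, and then to invoke Theorem~\ref{thm:meta} combined with the regret bounds of \OPTMD and \BTL. The crucial ingredient, which distinguishes this argument from that of Theorem~\ref{thm:extra}, is that \BTL produces \emph{negative} quadratic distance terms in its regret (Lemma~\ref{regret:BTL}) that cancel the positive quadratic residual produced by \OPTMD (Lemma~\ref{lem:OPTMD}), thereby upgrading the rate from $O(L/T)$ to $O(L/T^2)$.

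For the equivalence, note that the $y$-player plays \BTL against $\alpha_s \ell_s(y)=\alpha_s(f^*(y)-\langle x_s,y\rangle)$, so by the Legendre transform
\[
  y_t=\argmin_{y\in\YY}\sum_{s=1}^t \alpha_s\bigl(f^*(y)-\langle x_s,y\rangle\bigr)=\nabla f(\bar x_t).
\]
Consequently the natural hint for \OPTMD is $m_t:=y_{t-1}=\nabla f(\bar x_{t-1})$. Substituting this into the two update steps of \OPTMD reproduces the $w_t$ and $w_{t+\frac{1}{2}}$ updates in the left box, with the identification $\bar w_t\equiv\bar x_t$; the equalities $w_t=x_t$ and $w_{t+\frac12}=x_{t+\frac12}$ then follow by a straightforward induction starting from $w_0=x_0$ and $w_{-1/2}=x_{-1/2}$.

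Next, apply Theorem~\ref{thm:meta} to get $f(\bar w_T)-\min_{w\in\K} f(w)\leq \avgregret{x}[\OPTMD]+\avgregret{y}[\BTL]$. Lemma~\ref{lem:OPTMD} with $\beta=1$, $\delta_t=y_t$ and hint $m_t=y_{t-1}$ yields
\[
  \regret{x}[\OPTMD]\leq \tfrac{1}{\gamma}\V{w_0}(w^*)+\tfrac{\gamma}{2}\sum_{t=1}^{T}\alpha_t^2\|y_t-y_{t-1}\|^2.
\]
Since $f$ is $L$-smooth, $f^*$ is $(1/L)$-strongly convex, hence each $\ell_t$ is $(1/L)$-strongly convex, and Lemma~\ref{regret:BTL} gives
\[
  \regret{y}[\BTL]\leq -\sum_{t=1}^{T}\tfrac{A_{t-1}}{2L}\|y_{t-1}-y_t\|^2.
\]
Adding these, the coefficient of $\|y_t-y_{t-1}\|^2$ is $\tfrac{\gamma\alpha_t^2}{2}-\tfrac{A_{t-1}}{2L}$. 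With $\alpha_t=t$ and $A_{t-1}=t(t-1)/2$, one checks that for $\gamma\leq 1/(2L)$ this coefficient is non-positive for all $t\geq 2$, so every quadratic term with $t\geq 2$ is absorbed; only the $t=1$ term survives, contributing at most $\tfrac{\gamma}{2}\|y_1-y_0\|^2\leq \tfrac{\gamma L^2}{2}\|w_1-w_0\|^2$ by smoothness of $f$ and the fact that $\bar x_0=w_0$, $\bar x_1=w_1$ (since $\alpha_1 x_1/A_1=x_1$).

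Combining these bounds and dividing by $A_T=T(T+1)/2=\Theta(T^2)$,
\[
  f(\bar w_T)-\min_{w\in\K} f(w)\leq \frac{1}{A_T}\Bigl(\tfrac{1}{\gamma}\V{w_0}(w^*)+\tfrac{\gamma L^2}{2}\|w_1-w_0\|^2\Bigr)=O\!\left(\frac{L\V{w_0}(w^*)+L\|w_0-w_1\|^2}{T^2}\right),
\]
matching the claimed rate up to absolute constants. The main obstacle will be bookkeeping the constants so that the cancellation holds with the specific $\gamma=1/(2L)$ stated in the algorithm; if the borderline $t=2$ term is not strictly non-positive, one may either shrink $\gamma$ slightly or re-absorb one additional boundary term using smoothness, which only rescales the numerical constants and leaves the $O(L/T^2)$ conclusion intact.
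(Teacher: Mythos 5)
Your proposal is correct and follows essentially the same route as the paper: establish the equivalence by induction, invoke Theorem~\ref{thm:meta}, and sum Lemma~\ref{lem:OPTMD} against Lemma~\ref{regret:BTL} so that the negative $\frac{A_{t-1}}{2L}\|y_{t-1}-y_t\|^2$ terms absorb the positive $\frac{\gamma\alpha_t^2}{2}\|y_t-y_{t-1}\|^2$ terms, leaving only the $t=1$ boundary term, which is controlled by smoothness. One bookkeeping caveat you already anticipate: with the coefficient $\frac{\gamma\alpha_t^2}{2}-\frac{A_{t-1}}{2L}$ exactly as you wrote it, taking $\gamma=\frac{1}{2L}$ gives $\frac{t}{4L}>0$ for every $t$, so you genuinely need to shrink to $\gamma\le\frac{1}{4L}$ rather than merely fix a borderline term (the paper sidesteps this only by writing the \BTL term as $\frac{A_{t-1}}{L}$ instead of the $\frac{A_{t-1}}{2L}$ that Lemma~\ref{regret:BTL} actually provides), which affects constants but not the $O(L/T^2)$ conclusion.
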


\begin{proof}[Proof of Theorem~\ref{thm:OPTACC}]
The equivalence of the two displays, namely, for all $t$,
  \begin{eqnarray*}
     w_t  & =  & x_t \\
     w_{t-\frac{1}{2}} &= & x_{t-\frac{1}{2}} \\
     \nabla f(\bar{w}_{t})  & = & y_t 
  \end{eqnarray*}
can be trivially shown by induction. Specifically, the first two relations hold in the beginning by letting the initial point $w_{0} = w_{-\frac{1}{2}} = x_{0} = x_{-\frac{1}{2}} \in \K$. To show the last one, we apply the definition of \BTL,
\begin{equation*}
y_t \gets \argmax_{y \in \YY} \sum_{s=1}^t \alpha_s \ell_s(y) 
= \argmax_{y \in \YY} \sum_{s=1}^t \left( \alpha_s f^*(y) - \alpha_s \langle x_s, y \rangle \right)
= \nabla f(\bar{x}_t).
\end{equation*}
By induction, we have $w_{t}=x_{t}$. So we have $\bar{w}_{t}=\bar{x}_{t}$ and consequently $y_{t} = \nabla f(\bar{w}_t)$.


By summing the regret bound of each player, i.e. Lemma~\ref{lem:OPTMD} and Lemma~\ref{regret:BTL}, we get
\begin{equation*}
\begin{split}
& \avgregret{x}[\OPTMD] + \avgregret{y}[\BTL]
\\ & 
\overset{(a) }{\leq} \frac{1}{A_t}
\left \{   
\frac{1}{\gamma} \V{x_0}(x^*) 
+ \frac{\gamma}{2}   \sum_{t=1}^T \alpha_t^2 \| \nabla f (\bar{w}_t ) -   \nabla f (\bar{w}_{t-1} ) \|^2_*  - \sum_{t=1}^T \frac{A_{t-1}}{L} \| \nabla f(\bar{w}_{t}) - \nabla f(\bar{w}_{t-1})    \|^2_* \right \}
\\ & \overset{(b)}{\leq} \frac{ L \V{x_0}(x^*) + L \| \bar{w}_0 - \bar{w}_1  \|^2 }{ A_T } 
\leq \frac{2 L \V{x_0}(x^*) +2  L \| w_0 - w_1   \|^2 }{T^2}, 
\end{split}
\end{equation*}
where in (a) we used the relation $m_{t} = y_{t-1} = \nabla f(\bar{w}_{t-1})$ when
we applied Lemma~\ref{lem:OPTMD} and the fact that the $y$-player's loss function $\ell_t(y):= f^*(y) - \langle x , y_t \rangle$ is $\frac{1}{L}$-strongly convex w.r.t the dual norm $\| \cdot\|_{*}$ so that we have $\mu = \frac{1}{L}$ when we invoked Lemma~\ref{regret:BTL},
and in (b) we used that $\frac{ \gamma \alpha_{t}^{2} }{2} \leq \frac{A_{t-1}}{L}$ for $\gamma \leq \frac{1}{2L}$ so that the distance terms cancel out except the one at $t=1$.

\end{proof}

\begin{appendices}

\section{Proof of Theorem~\ref{th:cvx} } \label{app:show_cvx}

\begin{proof}
This is a result of the following lemmas.\\

\noindent
\textbf{Definition:} \textit{[Definition 12.1 in \cite{R98}] A mapping $T$ : $\mathcal{R}^n \rightarrow \mathcal{R}^n$ is called monotone if it has the property that }
\[
\langle v_1 - v_0, x_1 - x_0 \rangle \geq 0 \text{ whenever } v_0 \in T(x_0) , v_1 \in T(x_1).
\]
\textit{Moreover, $T$ is maximal monotone if there is no monotone operator that properly contains it.}\\
\noindent
\textbf{Lemma 2:} \textit{[Theorem 12.17 in \cite{R98}] For a proper, lower semi-continuous, convex function $f$, $\partial f$ is a maximum monotone operator.}\\
\noindent
\textbf{Lemma 3:} \textit{[Theorem 12.41 in \cite{R98}] For any maximal monotone mapping $T$, the set ``domain of $T$`` is nearly convex, in the sense that there is a convex set $C$ such that $ C \subset \text{domain of } $T$ \subset cl(C) $. The same applies to the range of $T$.}

Therefore, the closure of the union of the sets
$\bigcup_{x \in \XX} \partial f(x)$
is convex
by the above lemmas. 
\end{proof}

\section{Proof of Lemma~\ref{lem:OMD} and~\ref{lem:OPTMD}} \label{app:lem:MD}

\begin{proof}[Proof of Lemma~\ref{lem:OMD}]
Let $\delta_{t} \in \partial \ell_t(z_t)$.
The per-round regret with respect to any comparator $z^* \in \ZZ$ can be bounded as
\begin{equation} \label{33}
\begin{split}
 \alpha_t \ell_t(z_t)  - \alpha_t \ell_t(z^*)
& \leq 
\langle \alpha_t \delta_t , z_t - z^* \rangle 
\\ & =
\langle \frac{1}{\gamma} \left( \nabla \phi(z_{t+1}) - \nabla \phi(z_t) \right), z^* - z_{t+1} \rangle
+ \langle \alpha_t \delta_t, z_t - z_{t+1} \rangle 
\\ & \quad +\langle \frac{1}{\gamma} \left( \nabla \phi(z_t) - \nabla \phi(z_{t+1} ) \right) - 
\alpha_t \delta_t, z^* - z_{t+1} \rangle   
\\ & \leq 
\langle \frac{1}{\gamma} \left( \nabla \phi(z_{t+1}) - \nabla \phi(z_t) \right), z^* - z_{t+1} \rangle
+ \langle \alpha_t \delta_t, z_t - z_{t+1} \rangle 
\\ & =
 \frac{1}{\gamma} \left( \V{z_t}(z^*) - \V{z_{t+1}}(z^*)
- \V{z_{t+1}}(z_t)
  \right) + \langle \alpha_t  \delta_t, z_t - z_{t+1} \rangle
\\ & \leq
 \frac{1}{\gamma} \left( \V{z_t}(z^*) - \V{z_{t+1}}(z^*) \right) + \frac{\gamma}{2 \beta}  \| \alpha_t \delta_t \|^2_*,
\end{split}
\end{equation}
where the second inequality uses the optimality of 
$z_{{t+1}}$ for 
$\min_{z \in \ZZ}   \alpha_{t} \ell_{t}(z) +  \frac{1}{\gamma} \V{z_{t}}(z)$
and the last one uses that
$\langle \alpha_t  \nabla \ell_t(z_t), z_t - z_{t+1} \rangle \leq \frac{\gamma}{2 \beta} \| \alpha_t \delta_t \|^2_* + \frac{\beta}{2 \gamma} \| z_t -z_{t+1}\|^2 $ and that $\phi(\cdot)$ is $\beta$ strongly convex.

Summing \eqref{33} from $t=1$ to $T$ leads to the bounds.

\end{proof}

\begin{proof}[Proof of Lemma~\ref{lem:OPTMD}]

The proof in the following is based on the proof in \cite{RS13}.
We let $\delta_{t} \in \partial \ell_t(z_t)$ in the following.

By applying Lemma~\ref{lem:newMD} to the update
$$z_t \leftarrow \argmin_{z \in \ZZ}   \left( \alpha_{t} 
\langle z,  m_t \rangle + \frac{1}{\gamma} \V{z_{t-\frac{1}{2}}}(z)\right),$$
we have
\begin{equation} \label{opt1}
\alpha_t \langle  m_t, z_t - z_{t+\frac{1}{2}} \rangle \leq \frac{1}{\gamma}
\left( \V{z_{t-\frac{1}{2}}}(z_{t+\frac{1}{2}}) - \V{z_t}(z_{t+\frac{1}{2}}) - \V{z_{t-\frac{1}{2}}}(z_{t})  \right).
\end{equation}
Also, by applying Lemma~\ref{lem:newMD} to 
$z_{t+\frac{1}{2}} \leftarrow   \argmin_{z \in \ZZ}   \left( \alpha_{t}
\langle z,  \delta_t \rangle +  \frac{1}{\gamma} \V{z_{t-\frac{1}{2}}}(z)\right)$, we get
\begin{equation} \label{opt2}
\alpha_t \langle  \nabla \ell_t(z_t)  ,z_{t+\frac{1}{2}} - z^* \rangle \leq \frac{1}{\gamma}
\left( \V{z_{t-\frac{1}{2}}}(z_{*}) - \V{z_{t+\frac{1}{2}}}(z^*) - \V{z_{t-\frac{1}{2}}}(z_{t+\frac{1}{2}})  \right).
\end{equation}
Then, we can bound the per-round regret with respect to the comparator
$z^*$ as follows:
\begin{equation*}
\begin{split}
& \alpha_t \ell_t(z_t)  - \alpha_t \ell_t(z^*)
 \leq \alpha_t \langle \delta_t, z_t - z^* \rangle  
\\ & = \alpha_t \langle \delta_t, z_{t+\frac{1}{2}} - z^* \rangle
+ \alpha_t \langle m_t, z_{t} - z_{t+\frac{1}{2}} \rangle
+ \alpha_t \langle \delta_t - m_t , z_t - z_{t+\frac{1}{2}} \rangle
\\ &
\leq 
\frac{1}{\gamma}
\left( \V{z_{t-\frac{1}{2}}}(z_{*}) - \V{z_{t+\frac{1}{2}}}(z^*)
- \V{z_t}(z_{t+\frac{1}{2}}) - \V{z_{t-\frac{1}{2}}}(z_{t}) 
   \right)
 + \alpha_t \|  \delta_t - m_t \|_* \| z_t - z_{t+\frac{1}{2}} \|  
\\ &
\leq 
\frac{1}{\gamma}
\left( \V{z_{t-\frac{1}{2}}}(z_{*}) - \V{z_{t+\frac{1}{2}}}(z^*)
- \frac{\beta}{2} \| z_t - z_{t+\frac{1}{2}} \|^2 - \frac{\beta}{2} 
\| z_t - z_{t-\frac{1}{2}} \|^2
   \right)
\\ &   \quad
 + \frac{ \alpha_t^2 \gamma }{2 \beta} \|  \delta_t - m_t \|_*^2 + \frac{\beta}{2 \gamma} \| z_t - z_{t+\frac{1}{2}} \|^2,
\end{split}
\end{equation*}
where the second inequality is due to \eqref{opt1} and \eqref{opt2}
and the last one is by the $\beta$-strong convexity of $\phi(\cdot)$.

Summing the above from $t=1$ to $t=T$
and noting that $ z_{-\frac{1}{2}}= z_0  \in \K$, we obtain the bound.

\end{proof}

\section{Proof of Theorem~\ref{thm:linearFW} } \label{app:linearFW}

Note that in the algorithm that we describe below the weights $\alpha_t$ are not predefined but rather depend on the queries of the algorithm. These adaptive weights are explicitly defined in Algorithm~\ref{alg:SC-AFTL} which is used by the $y$-player.
Note that Algorithm~\ref{alg:SC-AFTL} is equivalent to performing FTL updates over the following loss sequence:
 $\left\{\tilde{\ell}_t(y) :=\alpha_t \ell_t(y) \right\}_{t=1}^T.$ 
The $x$-player plays best response, which only involves the linear optimization oracle.

\begin{algorithm}[h] 
  \caption{\small Adaptive Follow-the-Leader (AFTL)}
  \label{alg:SC-AFTL}
  \begin{algorithmic}[1]
  \For{$t= 1, 2, \dots, T$}
    \State Play $y_t \in \YY$
    \State Receive a strongly convex loss function $\alpha_t \ell_{t}(\cdot)$ with $\alpha_t = \frac{1}{\| \nabla \ell_t(y_t) \|^2} $.
    \State Update $y_{t+1} = \min_{y\in \YY} \sum_{s=1}^t \alpha_y \ell_s(y) $  
   \EndFor 
  \end{algorithmic}
\end{algorithm}

\begin{proof}
We can easily show the equivalence of the two displays shown on Algorithm~\ref{alg:Ada-fw} by using the induction and noting that
$\nabla \ell_t(y_t) = x_t - \nabla f^*(y_t)  =  x_t - \bar{x}_{t-1}$, where in the last equality we used that $y_t = \nabla f(\bar{x}_{t-1})$ if and only if 
$\bar{x}_{t-1} = \nabla f^* (y_t)$. 

Now we switch to show the convergence rate.
Since the $x$-player plays best response, $\avgregret{x}[\BR]=0$, we only need to show that 
$\avgregret{y}[\textsc{SC-AFTL}] = O(\exp(-\frac{\lambda B}{L} T))$, which we do next.

We start by defining a function $s(y) := \max_{x \in \XX} - x^\top y + f^*(y)$, which is a strongly convex function.
We are going to show that $s(\cdot)$ is also smooth.
We have 
\begin{equation*}
\begin{aligned}
& \| \nabla_w s(\cdot) - \nabla_z s(\cdot) \| = 
\| \argmax_{x \in \XX} ( - w^\top w + f^*(w) ) -  \argmax_{x \in \XX} ( - z^\top x + f^*(z) ) \|
\\ &= \| \argmax_{x \in \XX} ( - w^\top x) - (\argmax_{x' \in \XX}  - z^\top x') \|
\leq  \frac{ 2 \| w - z \|}{ \lambda ( \| w\| + \| z\| ) }
\leq  \frac{ \| w - z \|}{ \lambda B},
\end{aligned}
\end{equation*}
where the second to last inequality uses Lemma~\ref{lm:lip} regarding $\lambda$-strongly convex sets,
and the last inequality is by assuming the gradient of $\| \nabla f(\cdot)\| \geq B$
and the fact that $w,z \in \YY$ are gradients of $f$.
This shows that $s(\cdot)$ is a smooth function with smoothness constant $L':=\frac{1}{\lambda B}$.
\begin{equation*}
\begin{aligned}
T &= \sum_{t=1}^T \frac{\| \nabla \ell_t(y_t)^2 \|}{\| \nabla \ell_t(y_t)\|^2}
\overset{Proposition~\ref{sameGrad}}{ = }\sum_{t=1}^T \frac{\| \nabla s(y_t)\|^2}{\| \nabla \ell_t(y_t) \|^2}
\overset{Lemma~\ref{lem:smooth}}{ \leq} \sum_{t=1}^T \frac{L'}{\| \nabla \ell_t(y_t)\| ^2} (s(y_t) - s(y^*))\\
&\le \sum_{t=1}^T \frac{L'}{\| \nabla \ell_t(x_t)\|^2}  (\ell_t(y_t) - \ell_t(y^*))\label{eq:normal},
\end{aligned}
\end{equation*}
where we denote $y^* := \argmin_{y \in \YY} s(y) $ and the last inequality follows from the fact that $s(y_t) := \ell_t(y_t)$ and $\ell_t(y) = - g(x_t,y) \leq  - g(x_y , y) = s(y)$ for any $y$. 

In the following, we will denote $c$ a constant such that $\| \nabla \ell_t(y_t) \| = \| x_t - \nabla f^*(y_t) \| = \| x_t - \bar{x}_{t-1} \|\leq c$.
We have
\begin{equation} \label{eq:ExpRate}
\begin{aligned}
T & \leq
\sum_{t=1}^T \frac{{ L' }}{\| \ell_t( y_t) \|^2} ( \ell_t(y_t)-\ell_t(y^*) )  \nonumber \\
&\overset{(a)}{=}
\sum_{t=1}^T L' ( \tilde{\ell}_t(y_t)-\tilde{\ell}_t(y^*) )  \nonumber \\
&\overset{(b)}{\le}
\frac{L\cdot L'}{2} \sum_{t=1}^T \frac{\| \nabla \ell_t( y_t) \|^{-2}}{\sum_{s=1}^t \|\nabla \ell_s(y_t) \|^{-2}}\nonumber \\
&\overset{(c)}{\le}
\frac{L\cdot L'}{2}\left( 1+\log(c^2 \sum_{t=1}^T\|\nabla \ell_t( y_t)\|^{-2}) \right)~,
\end{aligned}
\end{equation}
where (a) is by the definition of $\tilde{\ell}_t(\cdot)$,
and (b) is shown using Lemma~\ref{regret:FTL} with strong convexity parameter of $\ell_t(\cdot)$ being $\frac{1}{L}$, and (c) is by Lemma~\ref{lem:Log_sum} so that
\begin{align*}
\sum_{t=1}^T \frac{\| \ell_t( y_t) \|^{-2}}{\sum_{s=1}^t \| \ell_s( y_s) \|^{-2}}
&=
\sum_{t=1}^T \frac{c^2\| \ell_t( y_t) \|^{-2}}{\sum_{s=1}^t c^2\| \ell_s( y_s) \|^{-2}}
\leq{}
{1+\log(c^2 \sum_{t=1}^T\| \ell_t( y_t)\|^{-2})}.
\end{align*}
Thus, we get
\begin{align} \label{eq:exp}
c^2 \sum_{t=1}^T \| \nabla \ell_t(y_t) \|^{-2} = O(  e^{\frac{1}{L\cdot L'}T}) =  O(  e^{\frac{\lambda B}{L}T}).
\end{align}
So
\begin{equation*}
\begin{aligned}
& \frac{ \regret{y} }{  A_T } :=
\frac{\sum_{t=1}^T \alpha_t(\ell_t (y_t) - \ell_t(y^*))}{A_T} 
\leq \frac{L}{2 A_T} \sum_{t=1}^T \frac{\|\nabla \ell_t(y_t)\|^{-2}}{\sum_{\tau = 1}^t \| \nabla \ell_t(y_\tau)\|^{-2}}\\
&\overset{(a)}{\le} \frac{L c^2\pr{1 + \log \pr{ c^2 \sum_{t=1}^T \| \nabla \ell_t(y_t)\|^{-2}}}}{2 c^2 \sum_{t=1}^{T} \|\nabla \ell_t(y_t)\|^{-2}}
\overset{(b)}{\le} O( \frac{ L c^2\pr{1 + \pr{\frac{\lambda B T}{L}}}}{ e^{\frac{\lambda B}{L}T}} ) = O\pr{ L c^2 e^{-\frac{\lambda B}{L}T}},
\end{aligned}
\end{equation*}
where $(a)$ is by Lemma~\ref{lem:Log_sum}, $(b)$ is by (\ref{eq:exp}) and the fact that $\frac{1+\log z}{z}$ is monotonically decreasing for $z \ge 1$. 
Thus, we have
\[
    f(w_T) - \min_{w \in \K} f(w) \leq \avgregret{x}[\BR] + \avgregret{y}[\textsc{AFTL}] = O\pr{ L c^2 e^{-\frac{\lambda B}{L}T}}.
  \]
This completes the proof.
\end{proof}

\begin{proposition} \label{sameGrad}
For arbitrary $y$, let $\ell(\cdot) := - g(x_y, \cdot )$. Then $- \nabla_{y} \ell(\cdot) \in \partial_{y} s(\cdot)$, where $x_y$ means that the $x$-player plays $x$ by \BR after observing the $y$-player plays $y$.
\end{proposition}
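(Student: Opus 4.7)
The plan is to apply the envelope theorem (Danskin's theorem), exploiting the fact that $x_y$ serves a dual role. I would first observe that because $g(x,y) = \langle x, y\rangle - f^*(y)$ has its $f^*(y)$ term constant in $x$, the best-response $x_y = \argmin_{x \in \XX} g(x, y) = \argmin_{x \in \XX} \langle x, y\rangle$ coincides with $\argmax_{x \in \XX}(-\langle x, y\rangle + f^*(y))$ --- i.e., with the maximizer in the definition of $s(y)$. This immediately yields the tangency relation $s(y) = -g(x_y, y) = \ell(y)$ at the base point, together with the pointwise inequality $s(y') \geq \ell(y')$ for all $y' \in \YY$ (since $x_y$ is always a feasible candidate in the max defining $s(y')$).

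Then the proof splits into two parallel computations that I plan to match. Directly, the function $\ell(y') := -\langle x_y, y'\rangle + f^*(y')$ is smooth in $y'$ with $x_y$ held fixed at the value determined by the base point $y$, so $\nabla_y \ell(y) = -x_y + \nabla f^*(y)$. Via Danskin applied to $s(y) = \max_{x \in \XX} \tilde g(x, y)$ with $\tilde g(x, y) := -\langle x, y\rangle + f^*(y)$ (smooth in $y$ for each fixed $x$) and with $x_y$ attaining the max, I get $\nabla_y \tilde g(x_y, y) = -x_y + \nabla f^*(y) \in \partial_y s(y)$. These two vectors coincide, establishing the gradient-to-subdifferential identification that the proposition asserts --- the sign in the stated inclusion being dictated by which of $\ell = -g(x_y,\cdot)$ or $\tilde g = -g(\cdot, \cdot)$ one differentiates --- and in particular yielding the norm identity $\|\nabla_y \ell(y)\| = \|\nabla_y s(y)\|$ that is invoked in the proof of Theorem~\ref{thm:linearFW} immediately after citing this proposition.

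The main obstacle is mild and concerns regularity of $f^*$. If $f^*$ is only subdifferentiable rather than differentiable, one replaces $\nabla f^*(y)$ by an arbitrary $\xi \in \partial f^*(y)$ and runs the argument at the level of subdifferential sums, using the tangency $s \geq \ell$ with equality at $y$ to transfer a subgradient of $\ell$ at $y$ into $\partial_y s(y)$ (formally: $s - \ell$ is nonnegative with a minimum at $y$, so $0 \in \partial(s-\ell)(y)$, which combined with $\nabla \ell(y)$ being an element of $\partial \ell(y)$ yields the inclusion). Under the $L$-smoothness hypothesis on $f$ in Theorem~\ref{thm:linearFW}, $f^*$ is $\tfrac{1}{L}$-strongly convex and differentiable, so this subtlety disappears and both computations deliver the same vector $-x_y + \nabla f^*(y) \in \partial_y s(y)$.
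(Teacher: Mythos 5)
Your proposal is correct and is, in substance, the paper's own argument in Danskin/envelope clothing: the paper's chain $s(w)-s(y)=\bigl[\ell(w)-\ell(y)\bigr]+\bigl[s(w)-\ell(w)\bigr]\ge \ell(w)-\ell(y)\ge\langle \nabla_y \ell(y),\,w-y\rangle$ uses exactly your two ingredients, namely that $x_y$ attains the max defining $s(y)$ (so $s\ge\ell$ with equality at $y$) together with convexity of $\ell$. Note that what both arguments genuinely deliver is $+\nabla_y\ell(y)\in\partial s(y)$ rather than the $-\nabla_y\ell(y)$ appearing in the statement (the paper's concavity step has a harmless sign slip); since only the norm identity $\|\nabla \ell_t(y_t)\|=\|\nabla s(y_t)\|$ is invoked in the proof of Theorem~\ref{thm:linearFW}, your version suffices.
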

\begin{proof}
Consider any point $w \in \YY$,
\begin{equation*}
\begin{aligned}
 s(w) - s(y) & = g( x_y, y) - g(x_w,w)   
\\ & = g(x_y, y) - g(x_y, w) + g(x_y, w) -  g(x_w, w)  
\geq   g(x_y, y) - g(x_y, w) + 0
\\ & \geq  \langle g_y( x_y, y) , w - y\rangle = \langle - \nabla_{y} \ell(y) , w - y \rangle,
\end{aligned}
\end{equation*}
where the first inequality is because that $x_w$ is the best response to $w$, the second inequality is due to the concavity of $g(x_y, \cdot)$.
The overall statement implies that $-\nabla_{y} \ell(y)$ is a subgradient of $s$ at $y$.
\end{proof}

\begin{lemma}  \label{lm:lip}
\footnote{\cite{P96} discuss the smoothness of the support function on strongly convex sets.
Here, we state a more general result.}
Denote  
$x_p = \argmax_{x \in \K} \langle p, x\rangle $ and $x_q = \argmax_{x \in \K} \langle q, x\rangle $, where $p,q \in \reals^d$ are any nonzero vectors.  
If a compact set $\K$ is a $\lambda$-strongly convex set,
then 
\begin{equation*}
    \| x_p - x_q \| \leq \frac{2 \|p - q\|}{\lambda ( \| p \| + \|q \| )}.
\end{equation*}
\end{lemma}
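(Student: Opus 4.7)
}

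My approach will be to exploit the definition of a $\lambda$-strongly convex set to derive a ``strong monotonicity'' bound for the linear-optimization oracle, and then invoke H\"older's inequality. The key observation is that for a $\lambda$-strongly convex $\K$, the midpoint $m_\theta := \theta x_p + (1-\theta) x_q$ admits a ball of radius $r_\theta := \theta(1-\theta)\tfrac{\lambda}{2}\|x_p-x_q\|^2$ around it that sits inside $\K$, so I can use this inscribed ball to probe the optimality conditions of $x_p$ and $x_q$.

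Concretely, I would proceed in four steps. First, for any $\theta\in(0,1)$ and any unit vector $u$, the point $m_\theta + r_\theta u$ lies in $\K$, so optimality of $x_p$ yields $\langle p, x_p\rangle \geq \langle p, m_\theta + r_\theta u\rangle$. Choosing $u$ aligned with $p$ (so that $\langle p,u\rangle = \|p\|_*$) and rearranging gives
\[
(1-\theta)\,\langle p,\, x_p - x_q\rangle \;\geq\; \theta(1-\theta)\tfrac{\lambda}{2}\|x_p-x_q\|^2\,\|p\|_*,
\]
which after dividing by $(1-\theta)$ and letting $\theta\uparrow 1$ gives $\langle p, x_p-x_q\rangle \geq \tfrac{\lambda}{2}\|x_p-x_q\|^2\,\|p\|_*$. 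Second, by the symmetric argument with $q$ and the limit $\theta\downarrow 0$, I obtain $\langle q, x_q-x_p\rangle \geq \tfrac{\lambda}{2}\|x_p-x_q\|^2\,\|q\|_*$. Third, adding these two inequalities yields
\[
\langle p-q,\, x_p-x_q\rangle \;\geq\; \tfrac{\lambda}{2}\,\|x_p-x_q\|^2\,(\|p\|_* + \|q\|_*).
\]
Fourth, I apply H\"older's inequality to bound the left-hand side by $\|p-q\|_*\,\|x_p-x_q\|$, cancel one factor of $\|x_p-x_q\|$, and rearrange to obtain the stated bound.

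The only delicate point is handling the boundary cases $\theta = 0$ and $\theta = 1$: using a single $\theta$ (such as $\theta = 1/2$) yields a suboptimal constant, so it is essential to take the two limits $\theta\to 1$ and $\theta\to 0$ independently for the two one-sided inequalities before summing; this is what shaves the constant from $4$ down to $2$. Apart from this mild care in the limiting argument, the proof is essentially a short computation, so I do not foresee a substantive obstacle. Note that the statement implicitly identifies $\|\cdot\|$ with its dual norm $\|\cdot\|_*$, which is harmless under the Euclidean convention used in the places where this lemma is invoked.
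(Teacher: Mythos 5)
Your proof is correct, and it arrives at exactly the same pivotal pair of inequalities as the paper, namely $\langle p, x_p - x_q\rangle \geq \tfrac{\lambda}{2}\|x_p-x_q\|^2\,\|p\|$ and its counterpart for $q$, which are then summed and combined with Cauchy--Schwarz/H\"older in the identical way. The difference is purely in how you obtain those one-sided bounds: the paper invokes the Pshenichnyi--Polyak characterization of a $\lambda$-strongly convex set as an intersection of Euclidean balls of radius $1/\lambda$, places $x_q$ in the ball ``supporting'' $x_p$ (and vice versa), and expands the squared norm; you instead work directly from the paper's definitional inscribed-ball property at the convex combination $\theta x_p + (1-\theta)x_q$, test optimality of $x_p$ against the probe point $m_\theta + r_\theta u$ with $u$ dual-aligned to $p$, and let $\theta \uparrow 1$ (resp.\ $\theta \downarrow 0$ for $q$). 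Your observation that a single fixed $\theta=1/2$ would only give the constant $4$ and that the two separate limits recover the constant $2$ is accurate. What your route buys is self-containedness (no appeal to the external ball-intersection equivalence) and generality: it goes through verbatim for an arbitrary norm with $\|p\|_*, \|q\|_*, \|p-q\|_*$ on the dual side, whereas the paper's argument is intrinsically Euclidean; as you note, this distinction is immaterial where the lemma is applied. The only housekeeping items are the trivial case $x_p = x_q$ (before cancelling a factor of $\|x_p - x_q\|$) and the existence of a norming vector $u$ with $\|u\|=1$, $\langle p,u\rangle = \|p\|_*$, both of which are routine.
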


\begin{proof}
\cite{P96}
show that a strongly convex set $\K$ can be written as intersection of some Euclidean balls. 
Namely,
    \[ \K = \underset{u: \| u\|_2 = 1}{\cap} B_{\frac{1}{\lambda}} \left( x_u - \frac{u}{\lambda} \right) ,\]
where $x_u$ is defined as $x_u = \argmax_{x \in \K} \langle \frac{u}{\|u\|}, x\rangle$.

    Let  ${x_p = \argmax_{x \in \K} \langle \frac{p}{\|p\|}, x\rangle }$ and ${x_q = \argmax_{x \in \K} \langle \frac{q}{\|q\|}, x \rangle}$.
    Based on the definition of strongly convex sets, we can see that
    $x_q \in B_{ \frac{1}{\lambda}  } ( x_p - \frac{p}{\lambda \| p \|})$ and $x_p \in B_{\frac{1}{\lambda}  } ( x_q - \frac{q}{\lambda \| q \|} )$.
    Therefore, 
    \[
        \left \| x_q - x_p -  \frac{p}{\lambda \| p\|} \right \|^2 \leq \frac{1}{\lambda^2},
    \]
    which leads to
    \begin{equation}
        \label{eqn:ineqSumP1}
     \|p \| \cdot  \| x_p - x_q \|^2 \leq \frac{2}{\lambda} \langle x_p - x_q,  p \rangle.
    \end{equation}
    Similarly,
    \[
        \left \| x_p - x_q - \frac{q}{\lambda \|q\|} \right \|^2 \leq \frac{1}{\lambda^2}, 
    \]
    which results in
    \begin{equation}
        \label{eqn:ineqSumP2}
      \| q \| \cdot  \| x_p - x_q \|^2 \leq \frac{2}{\lambda} \langle x_q - x_p,  q \rangle.
    \end{equation}
    Summing (\ref{eqn:ineqSumP1}) and (\ref{eqn:ineqSumP2}), one gets
    $(\|p\| + \| q \|) \| x_p - x_q \|^2 \leq \frac{2}{\lambda} \langle x_p - x_q, p-q \rangle$.
    Applying the Cauchy-Schwarz inequality completes the proof.
\end{proof}

\begin{lemma} (\cite{L17}) \label{lem:Log_sum} 
For any non-negative real numbers $a_1,\ldots, a_n\geq 1$,
\begin{align*}
\sum_{i=1}^n \frac{a_i}{\sum_{j=1}^i a_j} 
\le 
1+\log\left( \sum_{i=1}^n a_i\right) ~.
\end{align*}
\end{lemma}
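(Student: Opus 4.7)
The plan is to establish the inequality by telescoping a logarithmic upper bound on each term $a_i/S_i$, where $S_i := \sum_{j=1}^i a_j$ denotes the $i$-th partial sum. The key observation is that for any $i \geq 2$, since $1/t$ is decreasing, we have
\begin{equation*}
\log S_i - \log S_{i-1} \;=\; \int_{S_{i-1}}^{S_i} \frac{dt}{t} \;\geq\; \frac{S_i - S_{i-1}}{S_i} \;=\; \frac{a_i}{S_i},
\end{equation*}
which is just the standard inequality $1 - x \leq -\log x$ applied to $x = S_{i-1}/S_i \in (0,1]$. This monotonicity of $S_i$ (guaranteed because each $a_i \geq 0$) is what makes the bound clean.

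Next I would telescope from $i=2$ to $n$, giving $\sum_{i=2}^n \frac{a_i}{S_i} \leq \log S_n - \log S_1$. The $i=1$ term must be handled separately, but it is trivial: $\frac{a_1}{S_1} = 1$. Adding this in yields
\begin{equation*}
\sum_{i=1}^n \frac{a_i}{S_i} \;\leq\; 1 + \log S_n - \log S_1.
\end{equation*}

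Finally, the hypothesis $a_1 \geq 1$ (which is where the assumption $a_i \geq 1$ actually gets used; the rest of the argument only needs nonnegativity and $a_1 > 0$) gives $\log S_1 = \log a_1 \geq 0$, so we may drop the $-\log S_1$ term and conclude $\sum_{i=1}^n \frac{a_i}{S_i} \leq 1 + \log S_n = 1 + \log\left(\sum_{i=1}^n a_i\right)$, as desired. There is no real obstacle here — the only subtle point worth flagging is why the lower-bound condition $a_i \geq 1$ is needed (to guarantee $\log S_1 \geq 0$ so that the isolated $i=1$ term can be absorbed into the ``$1+$'' on the right).
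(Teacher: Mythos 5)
Your proof is correct: the telescoping bound $a_i/S_i \le \log S_i - \log S_{i-1}$ for $i \ge 2$ (from $\int_{S_{i-1}}^{S_i} t^{-1}\,dt \ge (S_i-S_{i-1})/S_i$), together with $a_1/S_1 = 1$ and $\log S_1 = \log a_1 \ge 0$, yields exactly the claimed inequality, and your remark that only $a_1 \ge 1$ plus nonnegativity of the remaining terms is genuinely used is accurate. The paper itself gives no proof---it cites \cite{L17}---and your argument is the standard one for this inequality, so there is nothing to flag.
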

\section{Proof of Theorem~\ref{thm:equivStoFW} } \label{app:equivStoFW}

\begin{proof}
The equivalence of the updates follows the proof of Theorem~\ref{thm:equivFW}. Specifically, 
 we have that the objects on the left in the following equalities  correspond to Alg.~\ref{alg:game} and those on the right to Alg.~\ref{alg:newStoFW}.
  \begin{eqnarray*}
    x_t  & = & v_t    \\
    \bar{x}_t  & = & w_t .
  \end{eqnarray*}
To analyze the regret of the $y$-player, we define $\{ \hat{y}_t \}$ as the points if the $y$-player would have played \FTL, which is
\begin{equation*}
\begin{split}
\hat{y}_t &:= \argmin_{y \in \YY} \frac{1}{t-1} \sum_{s=1}^{t-1} \ell_t(y)
= \argmax_{y \in \YY} \frac{1}{t-1} \sum_{s=1}^{t-1} \langle x_s, y \rangle - f^*(y)
\\ & = \nabla f(\bar{x}_t)
 = \frac{1}{n} \sum_{i=1}^n \nabla f_i(\bar{x}_t).
\end{split}
\end{equation*}
On the other hand, we call the actual strategy used by the $y$-player \textsc{LazyFTL} as it only compute the gradient of a single component.
We can show the average regret of \textsc{LazyFTL} as follows.
\begin{equation*}
\begin{split}
\avgregret{y} & = \frac{1}{T} \sum_{t=1}^T  \left( \ell_t(\hat{y}_t) - \ell_t(y_*) \right) + \frac{1}{T}  \sum_{t=1}^T \left( \ell_t(y_t) - \ell_t(\hat{y}_t) \right)
\\ & \overset{(a)}{\leq} 
\frac{4 L R \log T}{T} + \frac{1}{T}  \sum_{t=1}^T \left( \ell_t(y_t) - \ell_t(\hat{y}_t) \right)
\\ & = \frac{4 L R \log T}{T} + \frac{1}{T} 
\sum_{t=1}^T  \left( f^*(y_t) - f^*(\hat{y}_t) +  \langle x_t, \hat{y}_t - y_t \rangle \right) 
\\ & \overset{(b)}{\leq} \frac{4 L R \log T}{T} +
\sum_{t=1}^T \frac{1}{T} (L_0 + r) \| y_t - \hat{y}_t \|
\\ & = \frac{4 L R \log T}{T} + \frac{1}{T} (L_0 + r)
\sum_{t=1}^T \left \| \frac{1}{n} \sum_{i=1}^n g_{i,t} - \frac{1}{n} \sum_{i=1}^n \nabla f_i(\bar{x}_t) \right \|
\\ & = \frac{4 L R \log T}{T} + \frac{1}{T} (L_0 + r)
\sum_{t=1}^T \left \| \frac{1}{n} \sum_{i \neq i_t}^n \big( g_{i,t}  - \nabla f_i(\bar{x}_t) \big)\right \|
\\ & \overset{(c)}{\leq} \frac{4 L R \log T}{T} + \frac{L(L_0 + r)}{Tn} 
\sum_{t=1}^T \sum_{i \neq i_t}^n \| \bar{x}_{\tau_t(i)}  - \bar{x}_t \|
\\ & \overset{(d)}{\leq} \frac{4 L R \log T}{T} + \frac{L(L_0 + r)}{Tn} 
\sum_{t=1}^T \sum_{i \neq i_t}^n \frac{2nr}{t}
\\ & = O\left( \frac{ \max\{ LR , L (L_0+r) n r \} \log T}{T} \right), 
\end{split}
\end{equation*}
where (a) is due to the regret of \FTL (Lemma~\ref{regret:FTL}),
\[
\begin{aligned}
\frac{1}{T} \big( \sum_{t=1}^T \ell_t(\hat{y}_t) - \ell_t(y_*) \big)
      & \leq \frac{1}{T}  \sum_{t=1}^T \frac{2  \| \nabla \ell_t(\hat{y}_t) \|^2}{\sum_{s=1}^{t}  (1/L)} 
   = \frac{4LR \log T}{T},
\end{aligned}
\]
and that $\| \nabla \ell_t(\hat{y}_t) \|^2 = \| x_t - \nabla f^*(\hat{y}_t) \|^2 = \| x_t - \bar{x}_{t-1} \|^2 \leq R$,
(b) we assume that the conjugate is $L_0$-Lipschitz and that $\max_{x \in \K} \| x \| \leq r$,
(c) we denote $\tau_t(i) \in [T]$ as the last iteration that $i_{th}$ sample's gradient is computed at $t$, and (d) is because that
\begin{equation*}
\begin{split}
\| \bar{x}_{\tau_t(i)} - \bar{x}_t \| & =
\left \| \frac{1}{\tau_t(i)}  \sum_{s=1}^{\tau_t(i)} x_s - \frac{1}{t}  \sum_{s=1}^t x_s \right \| \leq \left \| \sum_{s=1}^{\tau_t(i)} x_s \left( \frac{1}{\tau_t(i)} - \frac{1}{t} \right) \right \|
+ \left \| \frac{1}{t} \sum_{s=\tau_t(i) + 1}^{t} x_s \right \|
\\ & 
= \frac{t - \tau_t(i) }{t} \| \bar{x}_{\tau_t(i)} \|
+ \left \| \frac{1}{t} \sum_{s=\tau_t(i) + 1}^{t} x_s \right \|
\\ & 
\leq \frac{n r }{t} 
+ \left \| \frac{1}{t} \sum_{s=\tau_t(i) + 1}^{t} x_s \right \| = 
\frac{n r }{t} 
+  \frac{t - \tau_t(i)}{t} \left \| \frac{1}{t - \tau_t(i)} \sum_{s=\tau_t(i) + 1}^{t} x_s\right \|
\\ &
\leq \frac{2 n r}{t}.
\end{split}
\end{equation*}
For the $x$-player, since it plays \BR, the regret is non-positive, 
i.e. $\avgregret{x}[\BR] = 0$. By combining the regrets of both players, we have
\[
\begin{aligned}
    f(w_T) - \min_{w \in \K} f(w) & \leq \avgregret{x}[\BR] + \avgregret{y}[\textsc{LazyFTL}] \\ & =O\left( \frac{ \max\{ LR , L (L_0+r) n r \} \log T}{T} \right).
\end{aligned}    
  \]
\end{proof}

\section{Proof of Theorem~\ref{thm:Heavy}} \label{app:thm:Heavy}

\begin{proof}
First, we can bound the norm of the gradient as
  \[
    \| \nabla \ell_t(y_t) \|^2 = \| x_t - \nabla f^*(y_t) \|^2 = \| x_t - \bar{x}_{t-1} \|^2. 
  \]
  Combining this with Lemma~\ref{regret:FTL} we see that
  \begin{align*} 
    \avgregret{y}[\FTL] 
      & \leq \frac{1}{A_T}  \sum_{t=1}^T \frac{2 \alpha_t^2 \| \nabla \ell_t(y_t) \|^2}{\sum_{s=1}^{t} \alpha_s (1/L)} 
  = \frac{1}{A_T}  \sum_{t=1}^T \frac{2 \alpha_t^2 \| x_t - \bar{x}_{t-1} \|^2}{\sum_{s=1}^{t} \alpha_s (1/L)}.
  \end{align*}
On the other hand, the $x$-player plays $\MD$, according to Lemma~\ref{lem:MD},
its regret satisfies
\begin{equation*}
\avgregret{x}  
\leq  \frac{ \frac{1}{\gamma} \V{x_0}(x^*) - \sum_{t=1}^{T}  4L  \| x_{t-1} - x_t \|^2 }{A_T}.
\end{equation*}
By adding the average regret of both players, we get
\begin{equation*}
\begin{split}
 &   \avgregret{y}[\FTL]  + \avgregret{x}[\MD]
    \leq 
\\ & \qquad \qquad \qquad \qquad \qquad
    \frac{ 4 L \| x_0 - x^*\|^2 + \sum_{t=1}^T 4 L \left( \| \bar{x}_{t-1} - x_t  \|^2 -  \| x_{t-1} - x_t \|^2 \right)  }{A_T},
\end{split}
\end{equation*}
where we used that $\V{x_0}(x^*) = \frac{1}{2}\|x_0 - x^*\|^{2}$.
Since the distance terms may not cancel out, one can only bound the differences of the distance terms by a constant, which leads to the non-accelerated $O(1/T)$ rate.
\end{proof}

\section{Proof of Theorem~\ref{thm:smoothAcc}} \label{app:thm:smoothAcc}

\begin{proof}

The proof is similar to that of Theorem~\ref{thm:metaAcc}. The only difference is that
the $x$-player is \BTRL instead of \MD. 
By using a bound on $\regret{y}$ in Lemma~\ref{lem:yregretbound}
and the bound in
Lemma~\ref{regret:BTRL} of \BTRL with $\mu=0$ (as the $x$-player sees linear loss functions), we have
\begin{equation} \label{eq:genericbound2}
\begin{split}
  f(w_T) - \min_{w \in \K} f(w) 
&  \leq\frac{1}{A_T}(\regret{x}[\OFTL] + \regret{y}[\BTRL])
\\ & 
  \leq \frac 1 {A_T}
\left( \frac{ R(x^*) - \min_{x \in \K} R(x)}{\eta }  
     + \sum_{t=1}^{T} \left(\frac{\alpha_t^2}{A_t} L - \frac{\beta}{2 \eta} \right)
          \| x_{t-1} - x_t \|^2    \right).
\end{split}
\end{equation}
Choosing $\eta = \frac 1 {4L}$, $\beta = 1$,
and the weight
 $\alpha_t = t$, we have $A_t := \frac{t(t+1)}{2}$ and therefore $\frac{\alpha_t^2}{A_t} = \frac{2t^2}{t(t+1)} \leq 2$. With this in mind, the sum on the right hand side of \eqref{eq:genericbound2} is non-positive. Noticing that $\frac{1}{A_T} \leq \frac 2 {T^2}$ completes the proof. 


\end{proof}

\section{Examples of strongly convex sets}\label{app:betagauge}
\cite{D15} lists three known classes of $\lambda$-strongly convex sets as follows,
which are all gauge sets. 
\begin{enumerate}
  \item{$\ell_p$ balls: $\| x \|_p \leq r, \forall p \in (1,2]$. The strong convexity of the set is $\lambda=\frac{p-1}{r}$ and its sqaure of gauge function is $\frac{1}{r^2} \| x \|_{p}^{2}$, which is a $\frac{2(p-1)}{r^2}$-strongly convex function with respect to norm $\| \cdot \|_p$ by Lemma 4 in \cite{D15}. }
  \item{Schatten $p$ balls: $\| \sigma(X) \|_p \leq r$ for $p \in (1,2]$, where $\sigma(X)$ is the vector consisting of singular values of the matrix $X$. 
    The strong convexity of the set is $\lambda=\frac{p-1}{r}$
    and the sqaure gauge function is $\frac{1}{r^2} \| \sigma(X) \|_{p}^{2}$, which is a $\frac{2(p-1)}{r^2}$-strongly convex function with respect to norm $\| \sigma(\cdot) \|_p $ by Lemma 6 in \cite{D15}.}
  \item{  Group (s,p) balls: $\| X \|_{s,p}  = \| (\| X_1\|_s, \| X_2\|_s, \dots, \| X_m\|_s)  \|_p \leq r$
    where $X \in \reals^{{m \times n}}$, $X_{j}$ represents the $j$-th row of $X$, and $s,p \in (1,2]$.
    The strong convexity of the set is $\lambda=\frac{2(s-1)(p-1)}{r( (s-1)+(p-1) ) }$ and its sqaure gauge function is $\frac{1}{r^2} \| X \|_{s,p}^{2}$, which is a $\frac{2(s-1)(p-1)}{r^2( (s-1)+(p-1) ) }$-strongly convex function with respect to norm $\| \cdot\|_{s,p}$ by Lemma 8 in \cite{D15}.}
\end{enumerate}

\end{appendices}


\bibliography{Fenchel_Journal}


\end{document}